\documentclass[twoside,11pt]{article}
\usepackage[preprint]{jmlr2e}
\usepackage[utf8]{inputenc} 
\usepackage[T1]{fontenc}  
\usepackage{hyperref}    
\usepackage{url}      
\usepackage{booktabs}    
\usepackage{amsfonts}    
\usepackage{nicefrac}    
\usepackage{microtype}   
\usepackage{shortcuts}
\usepackage{collcell}
\usepackage{enumitem}
\usepackage{tabularx, multirow}
\usepackage{amsmath}
\usepackage{amssymb}
\usepackage{pifont}
\usepackage{tikz, pgfplots}
\pgfplotsset{compat=1.10}
\usepackage{tikzsymbols}
\usetikzlibrary{calc}
\usetikzlibrary{patterns,decorations.pathreplacing}
\usepgfplotslibrary{fillbetween}
\newtheorem{assumption}{Assumption}
\newtheorem{condition}{Condition}
\usepackage{url}
\usepackage{color, colortbl}
\definecolor{Gray}{gray}{0.9}
\usepackage{mathtools}
\usepackage{wrapfig}
\usepackage{subcaption}
\usepackage[T1]{fontenc}
\usepackage{lmodern}
\usepackage{multirow}
\usepackage{natbib}
\usepackage[font={small}]{caption}
\usepackage{array, makecell}

\newcommand{\unc}{\textup{Unc}} 
\newcommand{\conf}{\textup{Conf}} 
\newcommand{\cor}{\textup{Cor}} 
\newcommand{\avg}{\textup{Avg}} 
\newcommand{\weight}{\textup{Weight}}

\newcommand{\risk}[2]{\epsilon({#1},{#2})}

\newcommand{\riskC}[2]{\epsilon_{\conf}({#1},{#2})}
\newcommand{\emriskUC}[2]{\hat{\epsilon}_{\unc}({#1},{#2})}
\newcommand{\emerrorUC}[3]{\hat{\epsilon}_{\unc}({#1},{#2},{#3})}

\newcommand{\emriskC}[2]{\hat{\epsilon}_{\conf}({#1},{#2})} 
 
\newcommand{\HC}{\mathcal{H}} 
\newcommand{\RC}{\Phi} 
\newcommand{\GC}[2]{\mathcal{G}_{#1}({#2})} 
\newcommand{\WGC}[2]{\bar{\mathcal{G}}_{#1}({#2})} 
 
\usepackage{cleveref}
\usepackage[titlenumbered,ruled]{algorithm2e}
\crefname{algocf}{algorithm}{algorithms}
\Crefname{algocf}{Algorithm}{Algorithms}
\newcommand{\cmark}{\ding{51}}%
\newcommand{\xmark}{\textcolor{lightgray}{\ding{55}}}%

\makeatletter
\renewcommand{\SetKwInOut}[2]{%
  \sbox\algocf@inoutbox{\KwSty{#2}\algocf@typo:}%
  \expandafter\ifx\csname InOutSizeDefined\endcsname\relax
    \newcommand\InOutSizeDefined{}\setlength{\inoutsize}{\wd\algocf@inoutbox}%
    \sbox\algocf@inoutbox{\parbox[t]{\inoutsize}{\KwSty{#2}\algocf@typo:\hfill}~}\setlength{\inoutindent}{\wd\algocf@inoutbox}%
  \else
    \ifdim\wd\algocf@inoutbox>\inoutsize%
    \setlength{\inoutsize}{\wd\algocf@inoutbox}%
    \sbox\algocf@inoutbox{\parbox[t]{\inoutsize}{\KwSty{#2}\algocf@typo:\hfill}~}\setlength{\inoutindent}{\wd\algocf@inoutbox}%
    \fi%
  \fi
  \algocf@newcommand{#1}[1]{%
    \ifthenelse{\boolean{algocf@inoutnumbered}}{\relax}{\everypar={\relax}}%
    {\let\\\algocf@newinout\hangindent=\inoutindent\hangafter=1\parbox[t]{\inoutsize}{\KwSty{#2}\algocf@typo:\hfill}~##1\par}%
    \algocf@linesnumbered
  }}%
\makeatother

\hypersetup{
    colorlinks = true,
    citecolor  = blue,
    linkcolor  = blue
}

\newcommand{\sigmoid}[1]{#1*(1/(1 + exp(-x))) -(#1/3)}

\begin{document}

%

%





\jmlrheading{1}{2022}{xx-xx}{xx/xx}{xx/xx}{}{}


\ShortHeadings{Combining Observational and Randomized Data}{}
\firstpageno{1}


\title{Combining Observational and Randomized Data\\for Estimating Heterogeneous Treatment Effects}

\author{\name Tobias Hatt  \email thatt@ethz.ch  \\
       \addr ETH Zurich
       \AND
      \name Jeroen Berrevoets \email jeroen.berrevoets@damtp.cam.ac.uk \\
      \name Alicia Curth \email amc253@damtp.cam.ac.uk \\
       \addr University of Cambridge
        \AND
       \name Stefan Feuerriegel \email feuerriegel@lmu.de\\
       \addr ETH Zurich \& LMU Munich
        \AND
       \name Mihaela van der Schaar \email mv472@damtp.cam.ac.uk \\
       \addr University of Cambridge \& The Alan Turing Institute \& UCLA
}

\editor{Draft}

\maketitle

\begin{abstract}
Estimating heterogeneous treatment effects is an important problem across many domains. In order to accurately estimate such treatment effects, one typically relies on data from observational studies \emph{or} randomized experiments. Currently, most existing works rely \emph{exclusively} on observational data, which is often confounded and, hence, yields biased estimates. While observational data is confounded, randomized data is unconfounded, but its sample size is usually too small to learn heterogeneous treatment effects. In this paper, we propose to estimate heterogeneous treatment effects by combining large amounts of observational data \emph{and} small amounts of randomized data via representation learning. In particular, we introduce a two-step framework: first, we use observational data to learn a shared structure (in form of a representation); and then, we use randomized data to learn the data-specific structures. We analyze the finite sample properties of our framework and compare them to several natural baselines. As such, we derive conditions for when combining observational and randomized data is beneficial, and for when it is not. Based on this, we introduce a sample-efficient algorithm, called CorNet. We use extensive simulation studies to verify the theoretical properties of CorNet and multiple real-world datasets to demonstrate our method's superiority compared to existing methods.
\end{abstract}


\begin{keywords}
  Heterogeneous treatment effects, randomized controlled trial, observational data, representation learning, neural networks
\end{keywords}

\section{Introduction}
Estimating heterogeneous treatment effects is of great importance in many domains such as marketing \citep{Brodersen2015a}, economics \citep{Heckman1997}, and epidemiology \citep{Robins2000a}. For instance, in the medical domain, clinicians must know the effect a drug (treatment) has on an individual patient in order to personalize treatment decisions.

When estimating heterogeneous treatment effects, most works rely on data from observational studies (OSs), since it is available in large quantities. This allows to estimate individual-level heterogeneity of treatment effects \citep[\eg,][]{Johansson2016, Shalit2017a, Wager2018a}. However, data from OSs is often subject to unobserved confounding. That is, some confounders (\ie, variables that affect both treatment assignment and outcome) are not measured and, hence, not available in the data. Therefore, we are unable to control for these unobserved confounders, which results in biased estimates of the treatment effect.

In order to circumvent the problem of unobserved confounding, existing methods assume that data from OSs is unconfounded, \ie, all confounders are observed \citep[\eg,][]{Johansson2016, Shalit2017a}, or rely on other assumptions \citep[\eg,][]{wang2019blessings, hatt2021seqdeconf}. However, these assumptions are strong, since they are not testable and often do not hold true in practice \citep[\eg,][]{rosenbaum2010design,kallus2018removing,kallus2018confounding, kallus2018interval,wang2019blessings, zhao2019sensitivity,rosenman2020combining,hatt2021seqdeconf}. As a consequence, without making strong assumptions, the use of observational data for estimating heterogeneous treatment effect is prohibited in practice.

An alternative data source stems from randomized experiments. For instance, randomized controlled trials (RCTs) are widely recognized as the gold standard for estimating treatment effects \citep{Robins2000a}. The reason for this is that, in RCTs, the treatment assignment is controlled by an investigator, and, hence, all variables that affect the treatment assignment are known (\ie, observed). As a result, data from RCTs is {\it unconfounded}, and, therefore, using such data for estimating treatment effects yields unbiased estimates. However, randomized data has been predominantly used to estimate average treatment effects rather than heterogeneous treatment effects. There are two major reasons for this: (i)~The size of randomized data is often to small to estimate treatment effect heterogeneity across patients. This is due to the large cost of RCTs as well as the difficulties of recruiting enough eligible subjects. (ii)~Subjects in RCTs are, by design, highly selected and, therefore, regularly not representative of the population of interest \citep{downs1998feasibility,norris2001effectiveness,willan2004regression,rothwell2005external,cole2010generalizing, stuart2011use,buchanan2018generalizing, hatt2021generalizing,flores2021assessment}. For instance, a review of HIV/AIDS clinical trials found that women are largely underrepresented in these trials \citep{gandhi2005eligibility, greenblatt2011priority}. Hence, findings from such trials can often not be generalized to the population of interest (\ie, the population of HIV-positive patients). In sum, relying \emph{exclusively} on either observational \emph{or} randomized data may not yield reliable estimates of heterogeneous treatment effects.

In this paper, we investigate how to combine: (i)~large but possibly confounded observational data; \emph{and} (ii)~unconfounded but small randomized data, to estimate heterogeneous treatment effects. This is in contrast to most previous works, which rely on one data type exclusively. Combining \textit{both} data types allows us to leverage randomized data to overcome the untestable unconfoundedness assumption \textit{and} observational data to discover effect heterogeneity. This are particularly relevant for medical and pharmaceutical sciences, where both types of data coexist. That is, RCTs are required for a drug's approval and, once approved and used in practice, observational data is routinely collected during post-drug monitoring. Furthermore, the 21st Century Cures Act \citep{21curesact} places additional focus on the use of observational and randomized data to support healthcare decision-making. Most importantly, it enhances the U.S. Food and Drug Administration's ability to modernize RCTs by authorizing the use of OSs.

We propose a two-step framework for combining observational and randomized data which relies on a shared structure between both data types. In the first step, observational data is used to learn a shared structure (in the form of a representation). In the second step, randomized data is used to learn the data-specific structures. We theoretically and empirically compare our proposal to natural baselines making use of either one or both data types. This allows us to derive conditions indicating when it is useful to combine observational and randomized data, and when it is not. In particular, we prove finite sample (\ie, non-asymptotic) learning bounds. A finite sample view is particularly important in our setting, since it offers insights into which factors drive the estimation error when the size of randomized data is small. Our theoretical analysis reveals three driving factors of relative performance: (i)~the size of the observational data; (ii)~the discrepancy between the RCT population and the population of interest; and (iii)~the complexity of the bias function due to unobserved confounding. Guided by the insights obtained through this finite sample view, we design a sample-efficient algorithm of our two-step framework for combining observational and randomized data. The specific instantiation is based on neural networks; hence, we call this algorithm \textbf{CorNet}. We then verify the finite sample properties of our algorithm empirically using simulation studies. Moreover, we use multiple real-world randomized experiments to demonstrate that, compared to existing baselines, our CorNet yields superior performance. A summary of our findings is outlined in \Cref{tbl:outline}.

We summarize our main \textbf{contributions}\footnote{Code available at \url{https://github.com/tobhatt/CorNet}.} as follows:
\begin{enumerate}
	\item We study how to combine observational and randomized data for estimating heterogeneous treatment effects. For this, we propose a two-step framework which relies on a shared structure between both data types.
	\item We derive finite sample learning bounds for our two-step framework and, moreover, for several natural baselines. This yields theoretical insights into the drivers of the estimation error and allows us to derive conditions for when observational and randomized data should be combined and when not.
	\item Guided by these theoretical insights, we propose \textbf{CorNet}, a sample-efficient algorithm of our two-step framework. We verify its finite sample properties empirically using extensive simulation studies and compare to baselines which use one data type exclusively. Moreover, we use multiple real-world randomized experiments to demonstrate that our CorNet outperforms the state-of-the-art by a substantial margin.
\end{enumerate}

\begin{table*}[ht!]
	\caption{\footnotesize Summary of the questions raised and answered in this paper and where to find the details. We use ``obs.'' as abbreviation for ``observational'' and ``rand.'' as abbreviation for ``randomized''.}\label{tbl:outline}
	\vspace{-1em}
	\begin{center}
				\begin{tabularx}{\textwidth}{lXl}
					\toprule
					\makecell[l]{\bf Question} & {\bf Answer} & {\bf Details in}\\
					\midrule
					\addlinespace[0.75ex]
					\rowcolor{black!5}\parbox{.25\textwidth}{\it How to combine obs. and rand. data?} & \parbox{.4\textwidth}{Our two-step framework}&\Cref{sec:combining_data}\\[0.2cm]
					\addlinespace[0.75ex]
					\parbox{.25\textwidth}{\it What factors drive the finite sample error?} &  \parbox{.4\textwidth}{\begin{itemize}[leftmargin=*]\setlength\itemsep{-0.5em}
    					    \item Size of obs. data
    					    \item Distributional discrepancy between obs. and rand. data
    					    \item Complexity of the bias function
    				    \end{itemize}
    				}&
    				\Cref{sec:error_bounds}, \Cref{thm_bound}\\[0.2cm]
					\addlinespace[0.75ex]
					\rowcolor{black!5}\parbox{.25\textwidth}{\it When to combine obs. and rand. data?} & 
					\parbox{.4\textwidth}{
    			    \begin{itemize}[leftmargin=*]\setlength\itemsep{-0.5em}
    					    \item Large obs., but small rand. data
    					    \item Large discrepancy between obs. and rand. data
    				    \end{itemize}
    				}& 
    				\Cref{sec::comparison_baselines}, Prop.~\ref{prop:condition}\\[0.2cm]
    				\addlinespace[0.75ex]
                    \parbox{.25\textwidth}{\it How to efficiently combine obs. and rand. data?}& 
                    \parbox{.4\textwidth}{Our \textbf{CorNet}, which can:       \begin{itemize}[leftmargin=*]\setlength\itemsep{-0.5em}
				        \item balance the covariate distributions
				        \item regularize the bias function
			        \end{itemize}
                    }& 
                    \Cref{sec:algorithms}, \Cref{alg:CORNet}\\[0.2cm]
					\bottomrule
			\end{tabularx}
	\end{center}
\end{table*}

\section{Related Work}
In this section, we give an overview of (i)~methods for estimating heterogeneous treatment effects that rely on observational data whilst making strong assumptions and (ii)~methods that combine observational and randomized data, but which are mostly for estimating average treatment effects. In addition, (iii)~we review related approaches from multi-task and transfer learning.

\textbf{(i)~Estimating heterogeneous treatment effects using observational data.} Many machine learning methods have been adapted for estimating heterogeneous treatment effects. These methods range across random forest-based methods \citep[\eg,][]{Wager2018a}, Bayesian algorithms due to their ability to quantify uncertainty \citep[\eg,][]{Alaa2017d, alaa2018limits, zhang2020}, and deep learning algorithms due to their strong predictive performance \citep[\eg,][]{Johansson2016, Yoon2018a, hatt2021estimating}. Especially deep learning has been used to learned a shared representation and treatment-specific hypotheses for each outcome \citep[\eg,][]{Shalit2017a, Yao2018a, curth2021nonparametric, curth2021inductive}. However, all these methods are based on the assumption that the observational data is unconfounded, which is not testable and usually fails to hold true in practice \citep[\eg,][]{rosenbaum2010design,kallus2018confounding, kallus2018interval,wang2019blessings, zhao2019sensitivity,hatt2021seqdeconf}. Hence, this prohibits the above methods from being applied in practice.

Another stream of research aims towards using latent variable models to recover unobserved confounders. For instance, some approaches try to recover the true confounders from noisy proxies of the true confounders \citep[\eg,][]{Louizos2017, kuzmanovic2021deconfounding}. However, we do not know whether the covariates we observe are indeed proxies of the true confounders. Other attempts try recover unobserved confounders using the treatment assignment of either multiple treatments \citep{wang2019blessings, bica2019time} or sequential treatments \citep{hatt2021seqdeconf}. However, these approaches again assume variants of no unmeasured confounders, so-called single strong ignorability and time-invariant confounders. Neither of the assumptions can be tested in practice, thereby limiting the usefulness of these methods in practice. 

\textit{Difference to our work:} We accept the presence of unobserved confounders in observational data, but recognize the potential use of randomized data. Therefore, we proposed a two-step framework for combine observational \emph{and} randomized data.

\textbf{(ii)~Combining observational and randomized data.} There exist only a few recent methods for combining observational and randomized data. For instance, \citet{kallus2018removing} use observational data to estimate a flexible, but biased function for the heterogeneous treatment effect and then aim to remove the bias using the randomized data. However, they makes strong parametric assumptions on the form of the bias due to the small sample size of randomized data. This restricts their method in its applicability to biases which are linear in the covariates. A recently proposed approach is to estimate two separate estimators: one estimator on observational data, which is biased, and one estimator on randomized data, which is unbiased. Then, a weighted average of these two estimators is used. For estimating average treatment effects, the weight can be chosen by a shrinkage factor determined by Stein’s unbiased risk estimate \citep{rosenman2020combining} or by minimizing the asymptotic variance \citep{yang2020combining}. Concurrent to our work, this averaging approach has been extended to heterogeneous treatment effects. \citet{cheng2021adaptive} propose to estimate one estimator on observational data and one on randomized data. Then, a weighted average of the two estimators is used. However, in order to tune the averaging weight, a large validation set of randomized data is required. We observe in our experiments that this stands in conflict with the small sample size of randomized data.

Several further approaches have been proposed recently, but for different settings than ours. \citet{ilse2021combining} propose a causal reduction method for combining interventional and observational data. However, they consider a discrete treatment \emph{and} discrete outcome and rely on linear-Gaussian models. This is different from our setting, since we consider a discrete treatment and a continuous outcome.
Other approaches rely on outcomes from different time-steps such as before and after the treatment assignment \citep{strobl2021generalizing} or short- and long-term outcomes \citep{athey2020combining, imbens2022long}. This renders both approaches inapplicable to the standard causal inference setting, which is the focus of our work.

\textit{Difference to our work:} We propose to combine observational and randomized data via representation learning. For this, we leverage observational data to learn a shared representation and, then, use the randomized data to learn the data-specific structures. By leveraging a shared representation, we can learn complex and non-linear biases even though the size of the randomized data is small. Moreover, we can leverage representation learning to account for the distributional discrepancy between observational and randomized data.

\textbf{(iii)~Multi-task and transfer learning.} Combining observational and randomized data can be seen as an instance of multi-task learning (MTL) or transfer learning (TL). MTL leverages data from multiple different, but related prediction tasks to estimate similar predictive models for these tasks. To this end, MTL assumes some similarity across the predictive tasks \citep[\eg,][]{caruana1997multitask, argyriou2006multi}. For instance, such similarity can be enforcing the same covariate support (for linear regression) or the same kernel \citep{caruana1997multitask, meier2008group, jalali2010dirty} for all tasks. A similarity can also be achieved by intermediate (neural network) representations (\ie, the same weights are used for intermediate layers for all tasks) \citep{collobert2008unified, maurer2016benefit}. While MTL estimates a model that works well on multiple related tasks, TL focuses on learning a single new task by transferring information from an already learned related task \citep{pan2009survey, tripuraneni2020theory, du2020few}. 

\textit{Difference to our work:} Similar to MTL and TL, we leverage a shared representation between observational and randomized data to transfer information from observational data to (different, but related) randomized data for which we have far fewer samples. However, our setting is distinctly different for two reasons. First, MTL and TL usually do not focus on possible covariate distribution mismatches between tasks. However, in practice, the subjects in an RCT are often not representative for the population of interest, which yields such a distribution mismatch \citep[\eg,][]{rothwell2005external,cole2010generalizing, stuart2011use,buchanan2018generalizing, hatt2021generalizing}. Second, in MTL and TL, task-specific functions are used, since the data types considered are usually not closely related. For instance, \citet{bayati2018statistical} simultaneously estimate logistic regressions with the aim to predict \emph{different} diseases (heart failure, diabetes, dementia, cancer, pulmonary disorder, etc.). In contrast, observational and randomized data are closely related, since both concern the same treatment, but biased due to unobserved confounding. This goes as far as practitioners ignoring randomized data and relying only on observational data \cite[\eg,][]{Shalit2017a, berrevoets2020organite,berrevoets2021learning,  hatt2021estimating}. This allows us to learn the bias more efficiently by exploiting structural similarities.

\section{Preliminaries}
\subsection{Problem Setup}\label{sec::problem_setup}
We consider the standard causal inference setting in which our objective is to estimate the effect of a binary treatment. For this, let $T\in\{0,1\}$ be a treatment, $X\in\R d$ patient covariates, and $Y\in\Rl$ an outcome. Then, our objective is to estimate the conditional average treatment effect (CATE) defined as 
\begin{equation}\label{eq:cate}
	\tau(x) = \E[Y(1)-Y(0)\mid X=x],
\end{equation}
where $Y(1), Y(0)\in\Rl$ are the potential outcomes for treatment and control \citep{Robins2000a}. Estimating the CATE in \labelcref{eq:cate} is non-trivial, since we never observe both potential outcomes, but only the outcome under treatment \emph{or} control. 

When estimating the CATE, two types of data can be considered: (i)~observational data, which originates from an OS, and (ii)~randomized data, which originates from an RCT. Formally, let $R\in\{0,1\}$ be an indicator of participation in the RCT ($R=1$) or OS ($R=0$). Then, randomized data is drawn from $(X^{\unc}, T^{\unc}, Y^{\unc}) \sim (X, T, Y\mid R=1)$ and observational data is drawn from $(X^{\conf}, T^{\conf}, Y^{\conf}) \sim (X, T, Y\mid R=0)$.\footnote{We use the superscript ``Unc'' for the unconfounded randomized data and ``Conf'' for the confounded observational data.} When estimating the CATE from solely (i)~observational data \emph{or} (ii)~randomized data, there may occur the following problems:

\textbf{(i)~Problems with observational data: unobserved confounders.} Observational data, $(X^{\conf}, T^{\conf}, Y^{\conf})$, is likely to be confounded \citep[\eg,][]{rosenbaum2010design,kallus2018removing,kallus2018confounding, wang2019blessings, zhao2019sensitivity}. That is, some confounders, \ie, variables that affect treatment assignment and outcome, are unobserved and, therefore, not contained in the observational data. Hence, when conditioning on the observed covariates, the potential outcomes and the treatment assignment are \emph{not} independent, \ie, $Y(1), Y(0)\not\!\perp\!\!\!\perp T^{\conf}\mid X^{\conf}$. Because of this, in general,
\begin{align}
    \E[Y^{\conf}\mid X^{\conf}=x, T^{\conf}=t]\neq \E[Y(t)\mid X^{\conf}=x],
\end{align}
and, therefore, using $\E[Y^{\conf}\mid X^{\conf}=x, T^{\conf}=t]$ as an estimate for $\E[Y(t)\mid X^{\conf}=x]$ may yield biased estimates of the CATE. 




On the other hand, randomized data, which originated from an RCT, is unconfounded, since the treatment assignment was randomized. Hence, $Y(1), Y(1) \!\perp\!\!\!\perp T^{\unc}\mid X^{\unc}$ holds true for randomized data. As such, different to observational data, we can obtain reliable estimates of $\E[Y(t)\mid X^{\unc}=x]$, since
\begin{align}
    \E[Y^{\unc}\mid T^{\unc}=t, X^{\unc}=x]= \E[Y(t)\mid X^{\unc}=x].
\end{align}

\textbf{(ii)~Problems with randomized data: sample size and generalizability.} RCTs are costly to conduct and the patient recruitment is a tedious undertaking. As a consequence, randomized data admits two different problems than observational data. First, due to the costs of RCTs, the sample size of randomized data is usually very small. This is limiting its usefulness for estimating treatment heterogeneity. Second, due to the difficulty of patient recruitment, many RCTs lack generalizability (also known as external validity) \citep[\eg,][]{rothwell2005external,cole2010generalizing,buchanan2018generalizing, hatt2021generalizing}. This means that the covariate distribution in the randomized data, may differ from the actual covariate distribution of the population of interest, \ie, 
\begin{equation}
    p(X^{\unc})\neq p(X).
\end{equation}

Fortunately, observational data is usually available in large quantities and, hence, can be used to estimate treatment effect heterogeneity. Moreover, observational data shares the covariate distribution of the population of interest, \ie, $p(X^{\conf})=p(X)$. This is due to the nature of OSs, in which information about treatment outcomes is collected in the actual population of interest.

In this paper, we argue that these two data types are complementary and study how to combine observational data, $(X^{\conf}, T^{\conf}, Y^{\conf})$, and randomized data, $(X^{\unc}, T^{\unc}, Y^{\unc})$, to estimate the CATE, $\tau(x)$. Moreover, we study the influence of the different properties of observational and randomized data when estimating the CATE.





\subsection{Assumptions for Identification}\label{sec::identification_assum}
Identification of the CATE, $\tau(x)$, requires assumptions, which relate the potential outcomes to the observed data. For this, we first state the standard assumptions for causal inference \citep[\eg,][]{rosenbaum2010design} that hold globally and then consider additional assumptions that hold for either randomized or observational data.

\begin{assumption}\label{assum:standard}(Standard Assumptions)
    \begin{enumerate}
        \item[(i)] Consistency, \ie, $Y=T\,Y(1)+(1-T)\,Y(0)$,
        \item[(ii)] Treatment overlap, \ie, $0<\mathbb{P}(T=t\mid X=x, R=r)<1$, for all $t$, $x$, and $r$,
        \item[(iii)] Population overlap, \ie, $0<\mathbb{P}(R=r\mid X=x)<1$, for all $x$ and $r$.
    \end{enumerate}
\end{assumption}
Consistency states that if a patient receives treatment $t$, then the observed outcome is the corresponding potential outcome $Y(t)$. In particular, this holds true for both observational and randomized data, since $Y^{\unc} \sim (Y\mid R=1)$ and, thus, $Y^{\unc} = T^{\unc}\,Y(1)+(1-T^{\unc})\,Y(0)$ (analogously for $Y^{\conf}$). Treatment overlap requires that the treatment assignment is nondeterministic in both the RCT and OS. These are standard assumptions in causal inference \citep[\eg,][]{rosenbaum2010design}. Population overlap requires that the covariates in the randomized and observational data (\ie, $X^\unc$ and $X^\conf$) to have common support. However, it allows the covariates to follow different distributions, since the subjects in the RCT may not be representative for the population of interest.

Further, in order to make the CATE identifiable from the randomized data and generalizable to the population of interest, we make the following assumptions on the RCT and OS separately.
\begin{assumption}\label{assum:unc_rct}(Unconfoudedness, no outcome modification, and generalizability)
    \begin{enumerate}
        \item[(i)] Unconfounded RCT, \ie, $Y(1), Y(0) \!\perp\!\!\!\perp T \mid X, R=1$,
        \item[(ii)] No outcome modification, \ie, $Y(1), Y(0) \!\perp\!\!\!\perp R \mid X$,
        \item[(iii)] Generalizable OS, \ie, $\mathbb{P}(R=0\mid X=x) = \mathbb{P}(R=0)$.
    \end{enumerate}
\end{assumption}
Assumption~(i) ensures that the data we obtained from the RCT is unconfounded, and, hence, the expected potential outcomes can be estimated from this data. Assumption~(ii) means that being in the RCT or OS does not affect the potential outcomes.\footnote{In practice, it can happen that the potential outcomes are different in an OS due to non-compliance. However, this problem is orthogonal to our work. For works on non-compliance, see \citet{robins1994correcting, cuzick1997adjusting,ye2014estimating}.} This is a very weak assumption and merely ensures that the functional relationship between covariates and potential outcomes is the same across data sources. Assumption~(iii) implies that the covariate distribution in the OS is the same as in the population of interest.\footnote{Assumption~(iii) implies the same covariate distributions, since $p(X^{\conf}) = p(X\mid R=0) = \mathbb{P}(R=0\mid X)p(X)/\mathbb{P}(R=0) = p(X)$.} While (i)~and (ii)~are made explicitly in previous work for combining RCT and OS data \citep{kallus2018removing, rosenman2020combining, cheng2021adaptive}, (iii)~has been implicitly used, but not explicitly stated.

These assumptions ensure that the treatment effect is identifiable only within the small RCT. However, the estimation of CATE is hindered by the small sample size of RCTs and their lack of generalizability. We overcome these obstacles by combining observational data and randomized.



\subsection{Notation}
Throughout this paper, we use the following notations. We use upper-case letters (\ie, $X$) for random 
variables and lower-case letters (\ie, $x$) for realizations of these random variables. Further, we use 
bold upper-case letters (\ie, $\mathbf{W}$) 
for matrices and bold lower-case letters (\ie, $\mathbf{v}$) for vectors. Generically, we 
use ``hatted'' functions, vectors, and matrices 
(\eg, $\hat{\textup{f}}$, $\hat{\mathbf{v}}$, 
and $\hat{\mathbf{W}}$) to refer to their (random) estimators. We use 
$\RC$ to refer to a function class of representations 
$\phi: \R{d} \rightarrow \R{d_\phi}$ and
$\HC$ to refer to a function class of hypotheses
$\textup{h}: \R{d_\phi} \rightarrow \R{}$. 
Further, we use 
$\mathcal{B}$ to refer to a function class of bias functions 
$\delta: \R{d_\phi} \rightarrow \R{}$. For the hypothesis class 
$\HC$, we use 
$\HC^{\otimes 2}$ to refer to the 2-fold Cartesian product, 
\ie, $\HC^{\otimes 2} = \{\mathbf{h}=(\textup{h}_1,\textup{h}_0):\, \textup{h}_1\in\HC,\, \textup{h}_0\in\HC\}$. 
Similarly, $\mathcal{B}^{\otimes 2}$ refers to the 2-fold Cartesian product of the bias function class 
$\mathcal{B}$. We 
use `$\circ$' to denote composition of functions. That is, for two functions 
$\phi\in\RC$ and $\textup{h}\in\HC$, their composition is given by 
$\textup{h}\circ\phi(x) = \textup{h}(\phi(x))$. In particular, for 
$\textup{h}\in\HC$ and $\delta\in\mathcal{B}$, we also 
use `$\circ$' to refer to addition of 
$\textup{h}$ and $\delta$ using the same representation, 
\ie, $(\textup{h}+\delta)\circ \phi(x) = \textup{h}(\phi(x))+\delta(\phi(x))$. We use 
$\tilde{\mathcal{O}}$ to denote an expression that hides polylogarithmic factors.

\section{Combining Observational and Randomized Data by Learning a Shared Representation}\label{sec:combining_data}
In this section, we discuss how to estimate the CATE using both observational and randomized data. For this, we build upon a long-standing history in machine learning of assuming a shared structure between learning tasks. This is particularly prominent in multi-task learning, transfer learning, and, most relevant to our work, in causal inference \citep[\eg,][]{caruana1997multitask,argyriou2006multi,maurer2016benefit,Shalit2017a,du2020few,zhang2017survey} and has been proven useful in machine learning across general learning tasks \citep[\eg,][]{ruder2017overview,radford2019language,liu2019multi, li2020federated,brown2020language}. In particular, we first formalize the difference in observational and randomized outcomes using a shared representation and connect it to the bias due to unobserved confounding (\Cref{sec::dgp}). Motivated by this, we introduce our two-step procedure for combining observational and randomized data (\Cref{sec::two_step_procedure}).


\subsection{Formalizing the Bias due to Unobserved Confounding}\label{sec::dgp}
Here, we assume the shared structure between observational and randomized data to be the \emph{representation}, which is a function $\phi^\ast:\Rl^d\rightarrow\Rl^{d_\phi}$ that maps the covariates $X\in\Rl^d$ to some representation of the covariates $Z\in\Rl^{d_\phi}$. This representation captures the shared structure between observational and randomized data and, hence, is the same function for both types of data. On top of the representation, we consider hypotheses for both treatment and control. These hypotheses are ``data-specific'' and, thus, can vary across observational and randomized data. That is, we consider, for $t\in\{0, 1\}$, separate hypotheses $\textup{h}_t^u, \textup{h}_t^c:\Rl^{d_\phi}\rightarrow\Rl$ for randomized and observational data.\footnote{The superscripts ``u'' and ``c'' denote ``\textbf{u}nconfounded'' and ``\textbf{c}onfounded'' hypotheses.} 

Then, we recognize that the difference between observational and randomized outcomes can be connected to the bias due to unobserved confounding by the following:
\begin{align}
    &\E[Y(t)\mid X^{\conf}=x] - \E[Y^{\conf}\mid T^{\conf}=t, X^{\conf}=x]=\label{eq:bias_u_c}\\
    & \E[Y^{\unc}\mid T^{\unc}=t, X^{\unc}=x] - \E[Y^{\conf}\mid T^{\conf}=t, X^{\conf}=x]\label{eq:diff_outcomes}=\\
    & (\textup{h}^u_t - \textup{h}^c_t)\circ \phi^\ast(x),\label{eq:diff_hypo}
\end{align}
where \labelcref{eq:bias_u_c} is the bias due to unobserved confounding and \labelcref{eq:diff_outcomes} follows directly from (i) and (ii)~in Assumption~\labelcref{assum:unc_rct}. As such, the bias due to unobserved confounding is connected to the difference in the ``data-specific'' hypotheses.

Formally, the observational and randomized outcomes, $Y^{\conf}$, $Y^{\unc}$, are generated by 
\begin{align}
		Y^{\conf}\,\sim \,\textup{h}^c_{T^{\conf}}\circ\phi^\ast(X^{\conf}) + \varepsilon^{\textup{c}},\\
	    Y^{\unc}\,\sim \,\textup{h}^u_{T^{\unc}}\circ\phi^\ast(X^{\unc}) + \varepsilon^{\textup{u}},
\end{align}
where $\varepsilon^{\textup{c}}, \varepsilon^{\textup{u}}\sim\mathcal{N}(0, \sigma_{{c,u}}^2)$. The representation, $\phi^\ast$, is shared and, hence, encodes the common structure between the observational and randomized data. The confounded and unconfounded hypotheses, $\textup{h}_t^c,\,\textup{h}^u_t$ may differ and, hence, capture the bias due to unobserved confounding as in \labelcref{eq:diff_hypo}. 

Moreover, the covariates, \ie, $X^{\conf},X^{\unc}\sim p^{\conf}_{x}, p^{\unc}_{x}$, and the treatment assignment, \ie, $T^{\conf},T^{\unc}\sim p^{\conf}_{t}, p^{\unc}_{t}$, can follow any distribution, which satisfies the assumptions from \Cref{sec::identification_assum}. In particular, the distributions $p^{\conf}_{x}$ and $p^{\unc}_{x}$ may differ across observational and randomized data.\footnote{This is due to the fact that the subjects in RCTs are often not representative of the population of interest. See discussion in \Cref{sec::problem_setup} for details.} Moreover, the treatment assignment may depend on unobserved confounders. However, as we have seen earlier in this section, the bias due to unobserved confounding can be formulated as the difference between of the observational and randomized outcomes.

Motivated by these insights, we propose an intuitive two-step procedure, which exploits the shared structure of observational and randomized data for estimating the CATE.

\subsection{Two-Step Procedure for Combining Observational and Ranomized Data}\label{sec::two_step_procedure}
We introduce our two-step procedure for estimating the CATE that combines observational and randomized data. In the first step, we use observational data to learn a representation $\hat{\phi}$ and confounded hypotheses $\hat{\textup{h}}_t^c$ for each $t\in\{0,1\}$. Since the representation is shared across observational and randomized data, we can use it in the second step, together with the randomized data, to debias the confounded hypotheses. Note that we do not propose to learn the unconfounded hypothesis, $\hat{\textup{h}}_t^u$. Rather, we use the fact that the bias due to unobserved confounding can be expressed by the difference in the ``data-specific'' hypotheses, as in \labelcref{eq:diff_hypo}, and learn the bias function, \ie, $\delta_t = \textup{h}_t^u - \textup{h}_t^c$. We do so, since learning $\hat{\textup{h}}_t^u$ may be statistically inefficient due to the small sample size of the randomized data. As such, structural restrictions on the form of $\hat{\textup{h}}_t^u$ would have to be made upfront. However, to improve upon this, we seek to learning $\delta_t$ directly. Specifically, we recognize that, while $\hat{\textup{h}}_t^c$ and $\hat{\textup{h}}_t^u$ are different, they are likely to be related (\ie, structurally similar), because they both concern the same drug and partially overlapping populations. Hence, this allows us to make structural restrictions on $\delta_t$, which plays an important role later, when we derive a finite sample learning bound. An illustration of our two-step procedure can be found in \Cref{fig:2_step}.

\begin{figure}
    \centering
    \begin{subfigure}[b]{.5\textwidth}
        \begin{tikzpicture}[
            square/.style={rectangle, inner sep=3, draw=black, rounded corners, fill=blue!10},
            point/.style={circle, inner sep=2, draw=none},
        ]
            
            \node[square] (f) at (0,0) {\texttt{model}};
            \node[square] (phi) at (2, 0) {$\hat{\phi}$};
            

            \node[anchor=west] (biased) at (5, 0) {\textcolor{blue}{$\hat{\text{h}}^c_t$}};
            \node[anchor=west] (true) at (5, -2) {\textcolor{red}{$\hat{\text{h}}^u_t$}};
            
            \draw [
                blue,
                decoration={
                    brace,
                    raise=.35cm
                },
                rotate=90,
                decorate,
                thick
            ] ($(f.west) + (.3, 0)$) -- ($(biased.east) + (.3, 0)$) node [pos=0.5,anchor=south, yshift=.5cm] {\textcolor{blue}{{\bf step 1:} $\mathcal{D}^\conf$}};
            
            \draw [
                thick,
                red,
                decoration={
                    brace,
                    raise=.35cm
                },
                decorate
            ] ($(biased.north) + (.5, .1)$) -- ($(true.south) + (.5, -.1)$) node [pos=0.5,anchor=west,xshift=.5cm] {\rotatebox{270}{\textcolor{red}{{\bf step 2:} $\mathcal{D}^\unc$}}};

            \draw[-latex, thick] (f) -- (phi);
            \draw[-latex, thick, blue] (phi) -- (biased);
            \draw[-latex, thick,  red] (phi) to[out=0] (true);
            \draw[latex-latex, dashed] (biased) -- node[midway, right] {\rotatebox{270}{bias}} (true);

        \end{tikzpicture}
    \end{subfigure}%
    ~
    \begin{subfigure}[b]{.5\textwidth}
        \begin{tikzpicture}
            \begin{axis}[
                every axis plot post/.append style={
                mark=none,domain=-5:5,samples=50,smooth, thick},
                x label style={anchor=north},
                xlabel=$\hat{\phi}$,
                axis x line=bottom,
                axis y line=left,
                height=4cm,
                width=\textwidth,
                ticks=none,
                axis line style = thick,
                axis on top,
                clip=false
            ]
            \addplot+[color=blue, name path=biased, very thick] {\sigmoid{1}}
            node[pos=1, right, fill opacity=1] {\textcolor{blue}{$\hat{\text{h}}^c_u$}};
            
            \addplot[color=red, name path=true,thick] {\sigmoid{2}} 
            node[pos=1, right, fill opacity=1] {\textcolor{red}{$\hat{\text{h}}^u_t$}};
            
            \addplot[color=black!10] fill between[of=true and biased];
            
            \node at (axis cs: 3.5, 1) {bias};
 
            \end{axis}
        \end{tikzpicture}
    \end{subfigure}

    \caption{{\bf Conceptual overview of our procedure.} Given a confounded dataset ($\mathcal{D}^\conf$), and an unconfounded dataset ($\mathcal{D}^\unc$), our procedure learns an unconfounded estimate using both datasets. We propose to do this in two steps: in step 1, we learn a shared representation $\hat{\phi}$ and {\it confounded} hypotheses, and in step 2 we {\it correct} for the bias in these confounded hypotheses such that we have {\it unconfounded} hypotheses ($\hat{h}^u_t$). Left, we depict a conceptual overview of our procedure, and right we show the potential impact on the actual hypotheses.}
    \label{fig:2_step}
\end{figure}
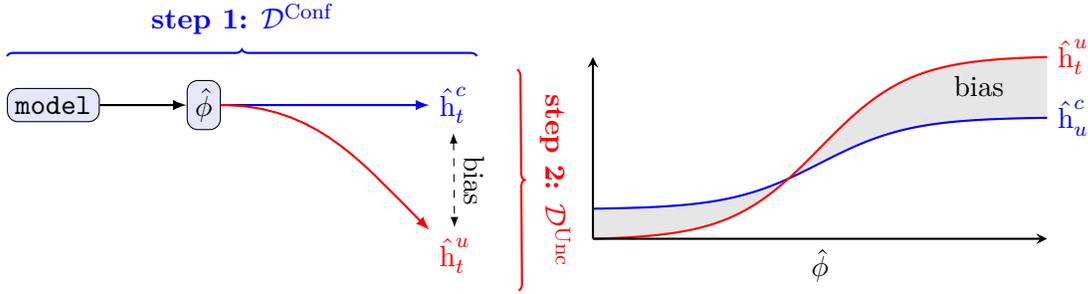

Formally, we consider a two-step empirical risk minimization procedure that leverages observational data $\{(x_i^{\conf}, t_i^{\conf}, y_i^{\conf})\}_{i=1}^{n^{\conf}}$ and randomized data $\{(x_i^{\unc}, t_i^{\unc}, y_i^{\unc})\}_{i=1}^{n^{\unc}}$. For this, we consider the function classes $\RC$, $\HC$, and $\mathcal{B}$ for the representation $\phi$, the confounded hypotheses $\textup{h}_t^c$, and the bias function $\delta_t$.\\

\underline{\textbf{Step~1.}} In the first step, we learn the shared representation and the confounded hypotheses by minimizing the empirical loss of the observational data
\begin{equation}\label{eq:conf_loss_minimizer}
\hat{\phi},\, \hat{\mathbf{h}}^c = \argmin_{\phi\in\Phi, \mathbf{h}^c\in\HC^{\otimes 2}} \hat{\epsilon}_{\conf}(\mathbf{h}^c, \phi), 
\end{equation}
where the empirical loss of the observational data is defined as
\begin{equation}\label{eq:conf_risk}
	\hat{\epsilon}_{\conf}(\mathbf{h}^c, \phi) = \frac{1}{n^{\conf}} \sm i {n^{\conf}}\left(\textup{h}^c_{t_i^{\conf}}\circ\phi(x_i^{\conf}) - y_i^{\conf}\right)^2,
\end{equation}
and $\mathbf{h}^c = (\textup{h}^c_1, \textup{h}^c_0)\in\HC^{\otimes 2}$.\\

\underline{\textbf{Step~2.}} In the second step, we use randomized data together with the estimates from the first step, $\hat{\phi}$ and $\hat{\mathbf{h}}^c$, to learn the bias in the confounded hypothesis by minimizing the empirical loss of the randomized data
\begin{align}\label{eq:unc_loss_minimizer}
    \hat{\boldsymbol{\delta}} = \argmin_{\boldsymbol{\delta}\in\mathcal{B}^{\otimes 2}}\hat{\epsilon}_{\unc}(\boldsymbol{\delta}, \hat{\mathbf{h}}^c, \hat{\phi}),
\end{align}
where the empirical loss of the randomized data is defined as
\begin{align}\label{eq:unc_risk}
\hat{\epsilon}_{\unc}(\boldsymbol{\delta}, \hat{\mathbf{h}}^c, \hat{\phi})= \frac{1}{n^{\unc}} \sm i {n^{\unc}} \left((\hat{\textup{h}}_{t_i^{\unc}}^c + \delta_{t_i^{\unc}}) \circ \hat{\phi}(x_i^{\unc}) - y_i^{\unc}\right)^2,
\end{align}
and $\boldsymbol{\delta} = (\delta_1, \delta_0)\in\mathcal{B}^{\otimes 2}$.\\

Once we obtained the representation $\hat{\phi}$, the confounded hypothesis $\hat{\mathbf{h}}^c$, and the bias function $\hat{\boldsymbol{\delta}}$, the unconfounded hypothesis is then obtained by
\begin{equation}
    \hat{\textup{h}}_t^u = \hat{\textup{h}}_t^c + \hat{\delta}_t.
\end{equation}
Finally, the unconfounded hypotheses yield our estimator for combining observational and randomized data
\begin{align}\label{eq:tau_hat}
    \hat{\tau}_{\cor}(x) = (\hat{\textup{h}}^u_1 -  \hat{\textup{h}}^u_0) \circ \hat{\phi}(x).
\end{align}

\begin{remark}
Note that the first step only requires
observational data, whereas the second step only requires
randomized data. As a result, observational and randomized data do not need to be simultaneously available during training. This is very useful if the two data sources cannot be easily merged, for instance, due to regulatory issues, which is common in medical practice. Our two-step procedure only requires to share the estimated functions, $\hat{\phi}$ and $\hat{\mathbf{h}}^c$.
\end{remark}

The efficacy of an estimator $\hat{\tau}$ is gauged by the precision of estimating heterogeneous effects (PEHE), which is defined as 
\begin{equation}
	\epsilon_{\textup{PEHE}}(\hat{\tau})=\int_{\Rl^d}(\hat{\tau}(x)-\tau(x))^2p_x^{\conf}(x)\mathrm{d}x.
\end{equation}
The density $p_x^{\conf}(x)$ occurs, since the covariate distribution in the observational data is representative of (\ie, the same as) the covariate distribution in the population of interest. In order to study the efficacy of our two-step procedure, we need to make a connection between our two-step procedure and the quantity $\epsilon_{\textup{PEHE}}(\hat{\tau}_{\cor})$. In particular, we derive a finite sample bound on $\epsilon_{\textup{PEHE}}(\hat{\tau}_{\cor})$. This finite sample view provides useful insights into which components influence the convergence of $\epsilon_{\textup{PEHE}}(\hat{\tau}_{\cor})$ and, based on this, how algorithms for combining observational and randomized data should be designed.

\section{Finite Sample Learning Bounds}\label{sec:error_bounds}
In this section, we derive finite sample learning bounds for our two-step procedure. Based on this, we further derive conditions for when it is useful to combine confounded and unconfounded data and for when it is not. In particular, we derive finite sample bounds on $\epsilon_{\textup{PEHE}}(\hat{\tau}_{\cor})$, where $\hat{\tau}_{\cor}$ is the CATE estimator obtained by our two-step procedure (see \Cref{thm_bound}). The finite sample view offers insights into which factors impact the estimation error when the sample size is \emph{not} infinite; in particular, when sample the size is small as in our setting. We find three driving factors of the error: (i)~the size of observational data, (ii)~the distributional discrepancy between the covariate distributions in observational data and randomized data, and (iii)~the complexity of the bias in the confounded hypotheses. Moreover, we derive conditions for when our two-step procedure should be used over standard algorithms, but also when our procedure should not be used (see \Cref{sec::comparison_baselines}). This gives a complete picture of the disadvantages and advantages of our two-step procedure. 

\subsection{Preliminaries}\label{sec::bound_preliminaries}
Throughout this section, we assume that the true shared representation, hypotheses, and bias functions are contained in the function classes $\RC$, $\HC$, and $\mathcal{B}$ over which the two-step procedure optimizes. For this, we make the following standard realizability assumption.
\begin{assumption}\label{assum:realizability}
 (Realizability). The true shared representation $\phi^\ast$ is contained in the functions class $\RC$. Further, the true hypotheses and bias functions, $\mathbf{h}^c = (\textup{h}^c_1, \textup{h}^c_0)$ and $\boldsymbol{\delta} = (\delta_1, \delta_0)$, are contained in $\HC^{\otimes 2}$ and $\mathcal{B}^{\otimes 2}$, respectively.
\end{assumption}
We first introduce an appropriate distance measure in order to measure the difference between the learned representation $\hat{\phi}$ and the true representation $\phi^\ast$ on the confounded data. For this, we introduce the \textit{representation difference}, which measures the extent to which two representation functions $\phi$ and $\phi^\prime$ differ in aggregation across treatment and control group. For a function class $\HC$ and hypotheses $\mathbf{h} = (\textup{h}_1, \textup{h}_0) \in \HC^{\otimes 2}$, the \textit{representation difference} between two representations $\phi, \phi^\prime\in\RC$ is defined as
\begin{align}
d_{\HC, \mathbf{h}}(\phi^\prime;\phi)
= \inf_{\mathbf{h}^\prime\in\mathcal{H}^{\otimes 2}}
\sum_{t=0}^{1}\E\left[\left(\textup{h}^\prime_t \circ\phi^\prime(X^{\conf}) - \textup{h}_t\circ\phi(X^{\conf})\right)^2\right].
\end{align}
The representation difference $d_{\mathcal{B}, \boldsymbol{\delta}}(\phi^\prime; \phi)$ for the function class $\mathcal{B}$ and bias functions $\boldsymbol{\delta} = (\delta_1, \delta_0)\in \mathcal{B}^{\otimes 2}$ is defined analogously. 

The representation difference measures the estimation error due to not exactly learning the true representation. The following conditions ensures that the estimation error due to not exactly learning the true shared representation $\phi^\ast$ is, up to a constant, larger in the hypotheses than in the bias functions.
\begin{condition}\label{cond:rep_diff} $d_{\mathcal{B}, \boldsymbol{\delta}}(\hat{\phi}; \phi^\ast) \leq\gamma\, d_{\HC, \mathbf{h}^c}(\hat{\phi};\phi^\ast)$, for some $\gamma > 0$.
\end{condition}
Note that this only requires that the two representation differences can be set into relation to each other; $d_{\HC, \mathbf{h}^c}(\hat{\phi};\phi^\ast)$ itself does not need to be larger than $d_{\mathcal{B}, \boldsymbol{\delta}}(\hat{\phi}; \phi^\ast)$, only up to a constant $\gamma>0$. In particular, we show later that this conditions holds true for the specific instantiation of the two-step procedure that we propose.

While our procedure works with arbitrary function classes, we consider feedforward neural networks for our representation class $\RC$, hypothesis class $\HC$, and bias function class $\mathcal{B}$. We provide a mathematical definition of feedforward neural networks and an architecture illustration of our two-step procedure using neural networks in \Cref{apx:ffnn}. We also provide the theoretical results proven in this section for general function classes in \Cref{apx:thm_gen_bound}. 

\subsection{Main Result}\label{sec::main_result}
We can now present the main result of this section: the finite sample learning bound for our two-step procedure.
\begin{theorem}(Finite sample learning bound.)\label{thm_bound}
		Let $(\hat{\phi}, \hat{\mathbf{h}}^c)$ be the empirical loss minimizer of $\hat{\epsilon}_{\conf}(\cdot, \cdot)$ from \labelcref{eq:conf_loss_minimizer} over the function classes $\RC$ and $\HC$, and let  $\hat{\boldsymbol{\delta}}$ be the empirical loss minimizer of $\hat{\epsilon}_{\unc}(\cdot, \hat{\mathbf{h}}^c, \hat{\phi})$ from \labelcref{eq:unc_loss_minimizer} over the function class $\mathcal{B}$. Further, let $\hat{\tau}_{\cor}$ be the resulting CATE estimator from \labelcref{eq:tau_hat}. Then, if Assumption~\labelcref{assum:realizability} and Condition~\labelcref{cond:rep_diff} hold true, we have that, with probability at least $1-p$,
\begin{align}
		\epsilon_{\textup{PEHE}}(\hat{\tau}_{\cor})
        \leq \tilde{\mathcal{O}}\bigg(\frac{\mathcal{C}_{\RC} + \mathcal{C}_{\HC}}{\sqrt{n^{\conf}}} + \frac{\mathit{d}_\infty(p^{\conf}_\phi\mid p^{\unc}_\phi)\,\mathcal{C}_{\mathcal{B}}}{\sqrt{n^{\unc}}}\bigg),
\end{align}
	where $\mathcal{C}_{\RC}$, $\mathcal{C}_{\HC}$, and $\mathcal{C}_{\mathcal{B}}$ are constants depending on the complexity of the neural networks and $\mathit{d}_\infty(p^{\conf}_\phi\mid p^{\unc}_\phi) = \sup_{z\in\mathcal{Z}}\frac{p^{\conf}_\phi(z)}{p^{\unc}_\phi(z)}$ is a measure for the distributional discrepancy between $p^{\conf}_\phi$ and $p^{\unc}_\phi$.\footnote{The terms $p^{\conf}_\phi$ and $p^{\unc}_\phi$ are the push-forwards of $p^{\conf}_x$ and $p^{\unc}_x$ through the representation $\phi$. Moreover, $\mathit{d}_\infty(p^{\conf}_\phi\mid p^{\unc}_\phi)$ is the exponential in base 2 of the Rényi divergence $D_\infty(p^{\conf}_\phi\mid p^{\unc}_\phi)$. Note that $\mathit{d}_\infty(p^{\conf}_\phi\mid p^{\unc}_\phi)$ is well-defined, since we assume population overlap in Assumption~\labelcref{assum:standard}.}${}^{,}$\footnote{In \Cref{thm_bound} and throughout this section, we use $\tilde{\mathcal{O}}$ to hide polylogarithmic factors. In particular, in the error bounds, we hide factor such as $\log(n^{\conf})$ and $\log(n^{\unc})$. We do so, since $\mathcal{O}(\log(n^{\conf}))=\mathcal{O}(\sqrt{n^{\conf}})$, but not vice versa. Similar holds true for $\mathcal{O}(\log(n^{\unc}))$. Moreover, using L'Hôspital's rule, $\mathcal{O}(\log(n^{\conf})^2)=\mathcal{O}(\sqrt{n^{\conf}})$.}
\end{theorem}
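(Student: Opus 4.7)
The plan is to decompose the PEHE via triangle inequalities, treat the two-stage ERM as two coupled problems, and control the residual coupling via the representation-difference machinery from \Cref{sec::bound_preliminaries}. Writing $\tau = ((\textup{h}^c_1+\delta_1)-(\textup{h}^c_0+\delta_0))\circ\phi^\ast$ and its analogue with hatted quantities, applying $(a+b)^2 \leq 2a^2+2b^2$ twice, yields
\[
\epsilon_{\textup{PEHE}}(\hat{\tau}_{\cor}) \;\leq\; 4 \sum_{t\in\{0,1\}} \bigl(A_t + B_t\bigr),
\]
with $A_t = \E_{X\sim p_x^\conf}[(\hat{\textup{h}}^c_t\circ\hat{\phi}(X)-\textup{h}^c_t\circ\phi^\ast(X))^2]$ and $B_t = \E_{X\sim p_x^\conf}[(\hat{\delta}_t\circ\hat{\phi}(X)-\delta_t\circ\phi^\ast(X))^2]$. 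The goal then reduces to bounding $\sum_t A_t$ by the first-step observational ERM and $\sum_t B_t$ by the second-step randomized ERM together with residual representation-mismatch terms.

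For $\sum_t A_t$, note that under the noise model $Y^\conf = \textup{h}^c_{T^\conf}\circ\phi^\ast(X^\conf)+\varepsilon^c$ the Bayes-optimal predictor over the treatment-stratified squared loss is exactly $\mathbf{h}^c\circ\phi^\ast$, which by \Cref{assum:realizability} lies in $\HC^{\otimes 2}\circ\Phi$. Hence $\sum_t A_t$ equals the excess risk $\epsilon_{\conf}(\hat{\mathbf{h}}^c,\hat{\phi}) - \sigma_c^2$. A standard symmetrization plus Rademacher complexity argument on the composite feedforward class $\HC\circ\Phi$, with peeling to obtain an additive rather than multiplicative complexity, yields $\sum_t A_t \leq \tilde{\mathcal{O}}((\mathcal{C}_{\RC}+\mathcal{C}_{\HC})/\sqrt{n^\conf})$ on an event of probability at least $1-p/2$. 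Since $d_{\HC,\mathbf{h}^c}(\hat{\phi};\phi^\ast)$ is, by definition as an infimum over $\mathbf{h}'\in\HC^{\otimes 2}$, bounded by $\sum_t A_t$ (take $\mathbf{h}'=\hat{\mathbf{h}}^c$), the same event also controls $d_{\HC,\mathbf{h}^c}(\hat{\phi};\phi^\ast)$ at the same rate, which will be crucial in step 3.

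For $\sum_t B_t$, introduce an intermediate bias $\tilde{\boldsymbol{\delta}}\in\mathcal{B}^{\otimes 2}$ realizing the infimum in $d_{\mathcal{B},\boldsymbol{\delta}}(\hat{\phi};\phi^\ast)$, and split
\[
B_t \leq 2\,\E_{X\sim p_x^\conf}\!\bigl[((\hat{\delta}_t-\tilde{\delta}_t)\!\circ\!\hat{\phi}(X))^2\bigr] + 2\,\E_{X\sim p_x^\conf}\!\bigl[(\tilde{\delta}_t\!\circ\!\hat{\phi}(X)-\delta_t\!\circ\!\phi^\ast(X))^2\bigr].
\]
Summed over $t$, the second piece is exactly $2\,d_{\mathcal{B},\boldsymbol{\delta}}(\hat{\phi};\phi^\ast)$, which by \Cref{cond:rep_diff} is at most $2\gamma\,d_{\HC,\mathbf{h}^c}(\hat{\phi};\phi^\ast)$ and is thus absorbed into the first-step rate obtained in step 2. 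The first piece depends only on $\hat{\phi}$, so a change of measure on the $\hat{\phi}$-pushforward yields
\[
\E_{X\sim p_x^\conf}[\,\cdot\,] \;\leq\; \mathit{d}_\infty(p_\phi^\conf\mid p_\phi^\unc)\,\E_{X\sim p_x^\unc}[\,\cdot\,].
\]
A Rademacher bound for $\mathcal{B}^{\otimes 2}$ applied to the second-step ERM (conditioning on the first stage and treating $\hat{\phi}$ and $\hat{\mathbf{h}}^c$ as fixed), together with the near-realizability of the residual up to a $d_{\mathcal{B},\boldsymbol{\delta}}(\hat{\phi};\phi^\ast)$ approximation term already handled, then gives $\tilde{\mathcal{O}}(\mathcal{C}_{\mathcal{B}}/\sqrt{n^\unc})$ on an event of probability at least $1-p/2$. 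A union bound produces the stated inequality.

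The main obstacle will be the coupling between the two ERM stages: $\hat{\boldsymbol{\delta}}$ is learned on top of the random $\hat{\phi}$ and $\hat{\mathbf{h}}^c$, so one cannot apply a clean uniform-convergence bound directly to the quantity $(\hat{\delta}_t-\delta_t)\circ\phi^\ast$ of interest. The mechanism that closes the argument is \Cref{cond:rep_diff}: it allows the bias-class representation discrepancy to be charged to the hypothesis-class one, so that the second stage contributes only the $\mathcal{C}_{\mathcal{B}}/\sqrt{n^\unc}$ term scaled by $\mathit{d}_\infty$, rather than a new, possibly larger, mismatch cost. A secondary technical subtlety is proving the additive complexity $\mathcal{C}_{\RC}+\mathcal{C}_{\HC}$ for the composed class $\HC\circ\Phi$ via contraction and peeling for bounded-weight feedforward networks; this is standard but nontrivial, and I would defer it to the appendix.
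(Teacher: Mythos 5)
Your proposal is correct and follows essentially the same route as the paper's proof: the same reduction of $\epsilon_{\textup{PEHE}}$ to excess risks via $(a+b)^2\le 2(a^2+b^2)$ and the loss decomposition, the same use of Condition~\labelcref{cond:rep_diff} to charge $d_{\mathcal{B},\boldsymbol{\delta}}(\hat{\phi};\phi^\ast)$ to the first-stage excess risk, the same change of measure producing the $\mathit{d}_\infty$ factor on the second-stage term, and the same chain rule for Gaussian complexities to get the additive $\mathcal{C}_{\RC}+\mathcal{C}_{\HC}$. The only difference is organizational: the paper splits at the level of risks, $\epsilon(\hat{\boldsymbol{\delta}},\hat{\mathbf{h}}^c,\hat{\phi})-\epsilon(\tilde{\boldsymbol{\delta}},\hat{\mathbf{h}}^c,\hat{\phi})$ plus $\epsilon(\tilde{\boldsymbol{\delta}},\hat{\mathbf{h}}^c,\hat{\phi})-\epsilon(\boldsymbol{\delta},\mathbf{h}^c,\phi^\ast)$, which bundles the second-stage estimation with the first-stage hypotheses and thereby sidesteps the ``near-realizability of the residual'' bookkeeping your function-level $A_t/B_t$ split has to carry explicitly, but both yield the stated bound.
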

\begin{proof}
	See \Cref{apx:thm_gen_bound}.
\end{proof}
From \Cref{thm_bound}, we can derive three major insights about the factors that impact the error of our two-step procedure. In particular, we gain insights into how (i)~the size of observational data, (ii)~the distributional discrepancy, and (iii)~the complexity of the bias function impact the error. This is only possible, since we take a finite sample (\ie, non-asymptotic) point of view. In contrast to an infinite sample view, which considers infinitely many samples, a finite sample view allows to study what happens when $n^{\conf}$ and $n^{\unc}$ are varied (\ie, small or large). In particular, an finite sample view allows to study what happens when $n^{\unc}$ is small, which is our setting of interest. We elaborate more on each of the above points in the following. An illustration of the three factors can be found in \Cref{fig:driving_factors}.

\textbf{(i)~The size of the observational data.} Leveraging observational data is beneficial, since the error bound in \Cref{thm_bound} decreases with the size of observational data, \ie, $n^{\conf}$. This is not obvious, since the observational data is biased, and, hence, it is a priori not clear that using biased data improves the estimation error. Moreover, the more observational data we have, the better, since this decreases the error bound in \Cref{thm_bound}. This is particularly favorable in our setting, in which we usually have large amounts of observational data.

\textbf{(ii)~The distributional discrepancy between observational and randomized data.} The distributional discrepancy, $\mathit{\bar{d}}_\infty(p^{\conf}_\phi\mid p^{\unc}_\phi)$, arises since the randomized data does not have the same covariate distribution as the observational data (which coincides with the target distribution). Therefore, since the bias function is learned from randomized data, distributional discrepancy between observational and randomized data impacts the error. In particular, the discrepancy between the observational and randomized data, $\mathit{d}_\infty(p^{\conf}_\phi\mid p^{\unc}_\phi)$, increases the error bound (since, by definition, $\mathit{d}_\infty(p^{\conf}_\phi\mid p^{\unc}_\phi)\geq 1$). This, itself, is an interesting result, since most works on covariate shift derive an additive impact of the distributional discrepancy. This leads to the intuition that we \emph{have to} minimize the discrepancy in order for the error to converge to zero. However, this is not true as we can see in \Cref{thm_bound}. As long as the discrepancy is finite, the error converges to zero in any case, but the discrepancy slows down the convergence. Thus, in a data-rich environment (\ie, where $n^{\unc}$ is large), the impact of the discrepancy may not be substantial. However, and more importantly for our setting, the discrepancy may have a large impact in data-scarce environments (\ie, when $n^{\unc}$ is small as in our setting). Hence, while the discrepancy may be negligible when $n^{\unc}$ is large, it is certainly important to consider it when $n^{\unc}$ is small. As a direct consequence, when designing algorithms to integrate observational data into randomized data, we should bear this in mind and account for the discrepancy. 

\textbf{(iii) The complexity of the bias function.} Based on \Cref{thm_bound}, we gain insight into impact of the complexity of the bias function on the error. In particular, \Cref{thm_bound} suggests that the more complex the bias function (\ie, the larger $\mathcal{C}_{\mathcal{B}}$), the larger the error. This again is crucial, since in a data-scarce regime, such as in RCTs, this term can substantially impact the error. Hence, while the bias may be negligible when $n^{\unc}$ is large, it is certainly important to consider it when $n^{\unc}$ is small. Again, this insight should guide the design of algorithms for combining observational and randomized data.
\begin{remark}\label{rmk:selection_bias}
    Note that, in \Cref{thm_bound}, we do not consider selection bias in the observational data. That is, we consider the case in which treatment and control group in the observational data possess the same covariate distribution. There is a large body of research addressing selection bias in observational data \citep[\eg,][]{Johansson2016,Shalit2017a, Yao2018a,zhang2020}, which is orthogonal to our work. However, since selection bias can arise in observational data, we provide a detailed discussion in \Cref{apx:selection_bias}. In particular, we prove a version of \Cref{thm_bound} which considers selection bias. We find that selection bias has a similar impact as the distributional discrepancy between observational and randomized data. However, if the size of observational data is large, \ie, $n^{\conf}$ is large, selection bias may play a less important role. For more details, see the discussion in \Cref{apx:selection_bias}.
\end{remark}
Guided by the above theoretical insights, we design sample-efficient algorithms of our two-step procedure in \Cref{sec:algorithms}.
\begin{figure}\label{fig:driving_factors}
	\centering 
	\scalebox{0.5}{\hspace{-0.8cm}\includegraphics{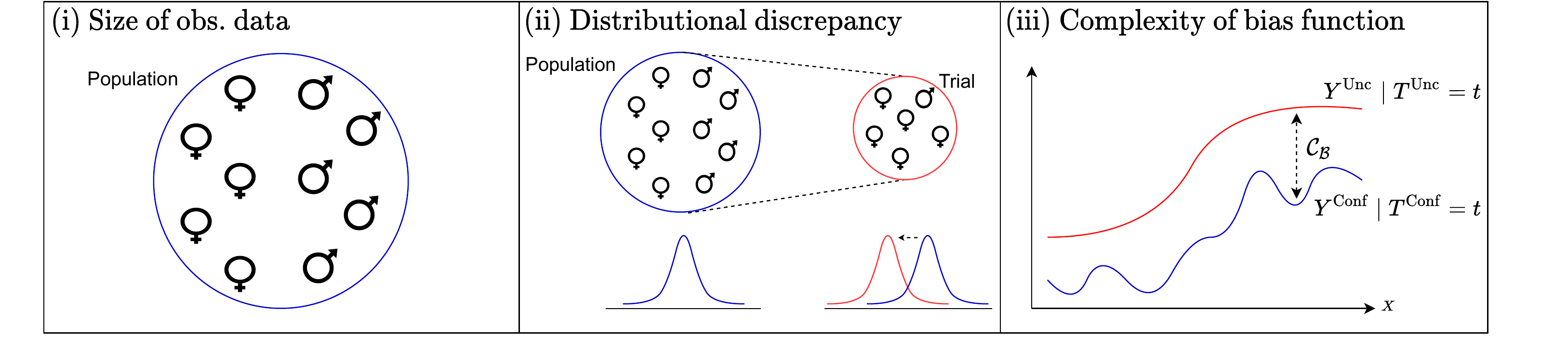}}
    \caption{\footnotesize Illustration of the three factors diving the error bound of $\epsilon_{\textup{PEHE}}$. We increase the number of observational samples. All other parameters are fixed. The solid blue line represent the error for the unregularized two-step procedure. The dashed blue line represent the error for $\hat{\tau}_{\cor}$. We observe that, for both algorithms, $\epsilon_{\textup{PEHE}}$ decreases in the number of observational samples.}
\end{figure}

\subsection{Theoretical Comparison with Baseline Alternatives}\label{sec::comparison_baselines}
In this section, we compare multiple natural baseline alternatives for combining observational and randomized data and theoretically compare them to our two-step procedure. In particular, we consider four baselines approaches: The estimators trained on (i)~only randomized data, denoted as $\tau_{\unc}$, and (ii)~only observational data, denoted as $\tau_{\conf}$, as well as two alternatives, (iii)~the averaging estimators, denoted as $\tau_{\avg}$, and (iv)~the weighted risk estimator, denoted as $\tau_{\weight}$. For each of these approaches, we derive corresponding finite sample learning bounds similar to \Cref{thm_bound}. Based on these results, we can then compare the baselines to our two-step procedure and derive conditions for when combining observational and randomized data is beneficial and for when it is not.

\subsubsection{Estimator on randomized data \texorpdfstring{($\tau_{\unc}$)}{}}\label{sec::tau_unc}
We introduce an approach that only uses randomized data. In order to allow appropriate comparison, we rely again on neural networks for the function classes for the representation, $\RC$, and hypotheses, $\HC$. However, the representation and hypotheses are learned using only randomized data. This is equivalent to learning a neural network (such as TarNet from \citet{Shalit2017a}) on the randomized data. Formally, the representation and the hypotheses are learned by minimizing the empirical loss of the randomized data, \ie, 
\begin{equation}\label{eq:naive_unc_minimizer}
\hat{\phi},\, \hat{\mathbf{h}} = \argmin_{\phi\in\Phi, \mathbf{h}\in\HC^{\otimes 2}} \hat{\epsilon}_{\unc}(\mathbf{h}, \phi), 
\end{equation}
where the empirical loss of the randomized data is given by
\begin{equation}
	\hat{\epsilon}_{\unc}(\mathbf{h}, \phi) = \frac{1}{n^{\unc}} \sm i {n^{\unc}} \left(\textup{h}_{t_i^{\unc}}\circ\phi(x_i^{\unc}) - y_i^{\unc}\right)^2.
\end{equation}
This then yields the following CATE estimator
\begin{equation}\label{eq:tau_hat_naive_unc}
    \hat{\tau}_{\unc} = (\hat{\textup{h}}_1 - \hat{\textup{h}}_0)\circ\hat{\phi}(x).    
\end{equation}
Then, we can derive a finite sample learning bound for $\hat{\tau}_{\unc}$ similar to our two-step procedure.

\begin{theorem}(Finite sample learning bound for $\tau_{\unc}$.)\label{lemma:unconfounded_gen_bnd}
Let $(\hat{\phi}, \hat{\mathbf{h}})$ be the empirical loss minimizer from \labelcref{eq:naive_unc_minimizer} and $\hat{\tau}_{\unc}$ as in \labelcref{eq:tau_hat_naive_unc}. Then, if Assumption~\labelcref{assum:realizability} holds true, we have that, with probability at least $1-p$,
    \begin{align}
        \epsilon_{\textup{PEHE}}(\hat{\tau}_{\unc})\leq\tilde{\mathcal{O}}\bigg(\mathit{\bar{d}}_\infty(p^{\conf}_\phi\mid p^{\unc}_\phi)\frac{\mathcal{C}_{\RC} + \mathcal{C}_{\HC}}{\sqrt{n^{\unc}}}\bigg),
    \end{align}
    where $\mathcal{C}_{\RC}$ and $\mathcal{C}_{\HC}$ are the same constants depending on the complexity of the neural networks as in \Cref{thm_bound} and $\mathit{\bar{d}}_\infty(p^{\conf}_\phi\mid p^{\unc}_\phi) = \sup_{\phi\in\RC}\mathit{d}_\infty(p^{\conf}_\phi\mid p^{\unc}_\phi)$.
\end{theorem}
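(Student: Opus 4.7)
The plan is to mimic the structure of the proof of Theorem~\ref{thm_bound} but simplified to a single-stage empirical risk minimization on the RCT. The overall flow is: decompose PEHE into per-treatment squared errors measured on the target distribution $p^{\conf}_x$; change measure to the RCT distribution $p^{\unc}_x$, paying the Rényi-divergence factor; then apply a standard uniform-deviation (generalization) bound for ERM over the composite class $\HC\circ\RC$. Realizability together with the fact that $(\hat{\phi},\hat{\mathbf{h}})$ is the empirical minimizer will then yield the stated $1/\sqrt{n^{\unc}}$ rate.

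\textbf{Step 1 (PEHE to per-treatment squared error).} First I would expand
$\hat{\tau}_{\unc}(x)-\tau(x) = \sum_{t\in\{0,1\}}(-1)^{1-t}\bigl(\hat{\textup{h}}_t\circ\hat{\phi}(x)-\textup{h}^u_t\circ\phi^\ast(x)\bigr)$, using Assumption~\ref{assum:unc_rct}(i)-(ii) (unconfoundedness of the RCT and no outcome modification) to identify $\textup{h}^u_t\circ\phi^\ast(x)=\E[Y(t)\mid X=x]$ on the support of $p^{\conf}_x$ as well. Cauchy--Schwarz then gives
\begin{equation*}
\epsilon_{\textup{PEHE}}(\hat{\tau}_{\unc})\,\le\,2\sum_{t=0}^{1}\E_{X\sim p^{\conf}_x}\!\bigl[(\hat{\textup{h}}_t\circ\hat{\phi}(X)-\textup{h}^u_t\circ\phi^\ast(X))^2\bigr].
\end{equation*}

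\textbf{Step 2 (Change of measure via Rényi divergence).} Since $\hat{\textup{h}}_t\circ\hat\phi$ and $\textup{h}^u_t\circ\phi^\ast$ are functions of $\hat{\phi}(X)$ and $\phi^\ast(X)$ respectively, I would lift the integrals to the representation space. For a function $g$ of $\phi(X)$,
\begin{equation*}
\E_{X\sim p^{\conf}_x}[g(\phi(X))^2]=\E_{Z\sim p^{\conf}_\phi}[g(Z)^2]\le \mathit{d}_\infty(p^{\conf}_\phi\|p^{\unc}_\phi)\,\E_{Z\sim p^{\unc}_\phi}[g(Z)^2].
\end{equation*}
Because $\hat\phi$ is data-dependent and hence random, I must take the supremum over $\phi\in\RC$, which is exactly why the bound is stated with $\mathit{\bar d}_\infty$. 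This gives a bound by $\mathit{\bar d}_\infty(p^{\conf}_\phi\|p^{\unc}_\phi)\sum_{t}\E_{X\sim p^{\unc}_x}[(\hat{\textup{h}}_t\circ\hat\phi(X)-\textup{h}^u_t\circ\phi^\ast(X))^2]$.

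\textbf{Step 3 (Per-treatment error to excess risk on the RCT).} Using treatment overlap (Assumption~\ref{assum:standard}(ii)), dividing and multiplying by the RCT propensity turns the unweighted sum into a weighted version that equals the excess population risk $\epsilon_{\unc}(\hat\phi,\hat{\mathbf{h}})-\inf_{\phi,\mathbf{h}}\epsilon_{\unc}(\phi,\mathbf{h})$ up to a constant depending only on the overlap lower bound; realizability (Assumption~\ref{assum:realizability}) guarantees the infimum equals the irreducible noise variance attained by $(\phi^\ast,\mathbf{h}^u)$, so the remaining term is genuinely the excess risk.

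\textbf{Step 4 (Uniform convergence / ERM bound).} For the composite class $\{(\textup{h}_1,\textup{h}_0)\circ\phi:\phi\in\RC,\mathbf{h}\in\HC^{\otimes 2}\}$ with feedforward networks I would invoke a Rademacher-complexity (or covering-number) argument plus a vector-contraction inequality to obtain, with probability at least $1-p$,
\begin{equation*}
\epsilon_{\unc}(\hat\phi,\hat{\mathbf{h}})-\hat\epsilon_{\unc}(\hat\phi,\hat{\mathbf{h}})\;\le\;\tilde{\mathcal{O}}\!\left(\frac{\mathcal{C}_{\RC}+\mathcal{C}_{\HC}}{\sqrt{n^{\unc}}}\right),
\end{equation*}
uniformly over the class; since $(\hat\phi,\hat{\mathbf{h}})$ minimizes $\hat\epsilon_{\unc}$ and $(\phi^\ast,\mathbf{h}^u)$ is admissible by realizability, a standard two-sided deviation argument converts this into the excess-risk bound of the same order. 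Combining with Steps 1--3 yields the theorem.

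\textbf{Main obstacle.} The delicate step is Step~2: making the change of measure uniformly valid over the random $\hat\phi\in\RC$. Without the $\sup_{\phi\in\RC}$ the inequality holds only pointwise for a fixed $\phi$; turning it into the stated $\mathit{\bar d}_\infty$ requires that the inequality be applied to the worst-case push-forward, and one must check that the supremum is finite under population overlap (Assumption~\ref{assum:standard}(iii)). A secondary technicality is ensuring that the Rademacher bound in Step~4 truly yields the additive decomposition $\mathcal{C}_{\RC}+\mathcal{C}_{\HC}$ rather than a product; this relies on a chain-rule contraction argument for the composite network class, but is standard for feedforward architectures as discussed in the Appendix of the paper.
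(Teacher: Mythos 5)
Your proposal follows essentially the same route as the paper: reduce $\epsilon_{\textup{PEHE}}(\hat{\tau}_{\unc})$ to the excess risk of the ERM pair $(\hat{\phi},\hat{\mathbf{h}})$, bound that by a uniform deviation over $\HC^{\otimes 2}\times\RC$, pay the distributional discrepancy as a multiplicative factor via a change of measure, and obtain the additive complexity $\mathcal{C}_{\RC}+\mathcal{C}_{\HC}$ from the chain rule for Gaussian complexities of the composite network class. The only substantive difference is \emph{where} you apply the change of measure, and there your Step~2 as written has a technical wrinkle: the inequality $\E_{Z\sim p^{\conf}_\phi}[g(Z)^2]\le \mathit{d}_\infty(p^{\conf}_\phi\mid p^{\unc}_\phi)\,\E_{Z\sim p^{\unc}_\phi}[g(Z)^2]$ is valid for a function of a \emph{single} representation's push-forward, but your integrand $(\hat{\textup{h}}_t\circ\hat{\phi}(X)-\textup{h}^u_t\circ\phi^\ast(X))^2$ mixes two different representations $\hat{\phi}$ and $\phi^\ast$, so it does not lift to a single representation space. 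The paper sidesteps this by keeping the population risk on $p^{\conf}_x$ (comparing each candidate against $Y(t)$ rather than against $\textup{h}^u_t\circ\phi^\ast$, via the loss-decomposition identity) and performing the change of measure \emph{inside} the symmetrization step of the uniform deviation bound, where each term in the supremum involves only one $\phi$; taking the supremum over $\phi\in\RC$ there is exactly what produces $\mathit{\bar{d}}_\infty$. Your argument is repairable either by reorganizing in the same way or by doing the change of measure in covariate space (at the cost of a different divergence constant), so I would call this the same proof with one step that needs to be re-ordered rather than a different method. Your Step~3 handling of the propensity weights via treatment overlap is, if anything, slightly more careful than the paper's, which treats the reweighting as an equality and absorbs the propensities into constants.
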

\begin{proof}
    See \Cref{apx:unconfounded_gen_bnd}.
\end{proof}
The distributional discrepancy, $\mathit{\bar{d}}_\infty(p^{\conf}_\phi\mid p^{\unc}_\phi)$, arises since the representation and hypotheses are estimated on randomized data, which does not have the same covariate distribution as the observational data (which coincides with the target population). For details, we refer to the proof in the \Cref{apx:unconfounded_gen_bnd}.

\subsubsection{Estimator on Observational Data \texorpdfstring{($\tau_{\conf}$)}{}}\label{sec::tau_conf}
Another approach is to ignore any randomized data
and simple rely on observational data to construct an estimator for the CATE \citep[\eg,][]{Johansson2016,Shalit2017a, Alaa2017d}. Again, for fair comparison, we rely on neural networks for the function classes. As such, an estimator that only relies on observational data is identical to the first step of our procedure, but not the second step. For this, let $(\hat{\phi}, \hat{\mathbf{h}}^c)$ be the empirical loss minimizer of $\hat{\epsilon}_{\conf}(\cdot, \cdot)$ from \labelcref{eq:conf_loss_minimizer}. Then, let the CATE estimator which only uses observational data be
\begin{align}\label{eq:tau_hat_conf}
    \hat{\tau}_{\conf}(x) = (\hat{\textup{h}}^c_1 -  \hat{\textup{h}}^c_0) \circ \hat{\phi}(x).
\end{align}
Again, we derive finite sample error bounds for the estimator which only uses observational data, $\hat{\tau}_{\conf}$.
\begin{theorem}(Finite sample learning bound for $\tau_{\conf}$.)\label{thm_gen_bound_conf}
Let $(\hat{\phi}, \hat{\mathbf{h}}^c)$ be the empirical loss minimizer  from \labelcref{eq:conf_loss_minimizer} and $\hat{\tau}_{\conf}$ as in \labelcref{eq:tau_hat_conf}. Then, if Assumption~\labelcref{assum:realizability} holds true, we have that, with probability at least $1-p$,
\begin{align}
		\epsilon_{\textup{PEHE}}(\hat{\tau}_{\conf})
        \leq \tilde{\mathcal{O}}\bigg(\frac{\mathcal{C}_{\RC} + \mathcal{C}_{\HC}}{\sqrt{n^{\conf}}}\bigg)+2\Delta,
	\end{align}
	where $\mathcal{C}_{\RC}$ and $\mathcal{C}_{\HC}$ are the same constants depending on the complexity of the neural networks as in \Cref{thm_bound} and $\Delta = \E[((\delta_1-\delta_0)\circ\phi^\ast(X^{\conf}))^2]$ is the bias due to unobserved confounding.
\end{theorem}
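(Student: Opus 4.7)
The plan is to decompose $\epsilon_{\textup{PEHE}}(\hat{\tau}_{\conf})$ into a finite-sample estimation error and an irreducible structural bias of size $2\Delta$. Introduce the ``pseudo-true'' confounded CATE $\tau^c(x) = (\textup{h}^c_1 - \textup{h}^c_0)\circ\phi^\ast(x)$, i.e., the target that the first step of the procedure is actually trying to estimate. Since $\textup{h}_t^u = \textup{h}_t^c + \delta_t$ (from the bias formalization in Section 4.1, combined with realizability in Assumption~3), we get the pointwise identity $\tau(x) - \tau^c(x) = (\delta_1 - \delta_0)\circ\phi^\ast(x)$. Applying $(a-c)^2 \leq 2(a-b)^2 + 2(b-c)^2$ with $a = \hat{\tau}_{\conf}(x)$, $b = \tau^c(x)$, $c = \tau(x)$ and integrating against $p^{\conf}_x$ yields
\begin{equation*}
\epsilon_{\textup{PEHE}}(\hat{\tau}_{\conf}) \;\leq\; 2\,\E_{X^{\conf}}\!\bigl[(\hat{\tau}_{\conf}(X^{\conf}) - \tau^c(X^{\conf}))^2\bigr] + 2\Delta,
\end{equation*}
so it remains to bound the first term by $\tilde{\mathcal{O}}((\mathcal{C}_{\RC} + \mathcal{C}_{\HC})/\sqrt{n^{\conf}})$.

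For the estimation term, another application of $(a-b)^2 \leq 2a^2+2b^2$ on the treatment and control branches gives
\begin{equation*}
(\hat{\tau}_{\conf}(x) - \tau^c(x))^2 \;\leq\; 2\sum_{t=0}^{1}\bigl(\hat{\textup{h}}^c_t\circ\hat{\phi}(x) - \textup{h}^c_t\circ\phi^\ast(x)\bigr)^2.
\end{equation*}
Taking expectation over $X^{\conf}$ and using treatment overlap (Assumption~1(ii)) to reweight each branch into the corresponding treated/control subpopulation, the right-hand side is at most a constant multiple of the excess population risk $\epsilon_{\conf}(\hat{\mathbf{h}}^c, \hat{\phi}) - \epsilon_{\conf}(\mathbf{h}^c, \phi^\ast)$; under Gaussian-noise realizability the minimizer $(\phi^\ast,\mathbf{h}^c)$ attains the Bayes risk, so this excess risk equals precisely the squared $L^2$ estimation error. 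A standard ERM argument then closes the bound: $\hat{\epsilon}_{\conf}(\hat{\mathbf{h}}^c,\hat{\phi}) \leq \hat{\epsilon}_{\conf}(\mathbf{h}^c, \phi^\ast)$ by the definition of the empirical minimizer, combined with symmetrization/Rademacher concentration uniformly over $\RC \times \HC^{\otimes 2}$, gives the target rate $\tilde{\mathcal{O}}((\mathcal{C}_{\RC} + \mathcal{C}_{\HC})/\sqrt{n^{\conf}})$ with probability at least $1-p$. Chaining the three inequalities completes the proof.

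The main technical obstacle is controlling the Rademacher complexity of the composite neural-network class $\HC \circ \RC$ so that the complexity penalty decomposes \emph{additively} as $\mathcal{C}_{\RC} + \mathcal{C}_{\HC}$ rather than multiplicatively; this requires a vector-valued contraction inequality together with Lipschitz assumptions inherited from the feedforward architecture defined in Appendix~A. Fortunately this is exactly the concentration machinery already invoked in the first step of Theorem~\ref{thm_bound}: the present statement is essentially that theorem stripped of its second-stage correction (no randomized sample, no $\mathcal{B}$-class complexity, and no distributional-discrepancy factor $\mathit{d}_\infty$), with the price of retaining the irreducible confounding bias $2\Delta$ instead. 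The detailed neural-network capacity bounds can thus be imported verbatim from the proof of Theorem~\ref{thm_bound} in the appendix.
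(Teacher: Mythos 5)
Your proposal is correct and follows essentially the same route as the paper's proof: split off the confounding bias via $(x+y)^2\leq 2(x^2+y^2)$ to obtain the additive $2\Delta$, bound the remaining term by the excess population risk $\epsilon_{\conf}(\hat{\mathbf{h}}^c,\hat{\phi})-\epsilon_{\conf}(\mathbf{h}^c,\phi^\ast)$ (the paper does this via its loss-decomposition lemma under the no-selection-bias setting of Remark~2, where you instead invoke overlap reweighting --- an immaterial difference at the level of $\tilde{\mathcal{O}}$ constants), and then import the first-step ERM/symmetrization/Gaussian-complexity bound from the proof of Theorem~1.
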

\begin{proof}
	See \Cref{apx:thm_gen_bound_conf}.
\end{proof}
Since $n^{\conf}$ is usually large, the first term in the error bound, $\frac{\mathcal{C}_{\RC} + \mathcal{C}_{\HC}}{\sqrt{n^{\conf}}}$, is small. Hence, the estimation error of $\hat{\tau}_{\conf}$, which only uses observational data, is dominated by the bias term $\Delta$. When there is no bias due to unobserved confounding in the observational data, then $\Delta$ is zero. In this case, $\hat{\tau}_{\conf}$ may be more accurate than $\hat{\tau}_{\unc}$ from \labelcref{eq:tau_hat_naive_unc}, which explains the widespread use of observational data in previous works. However, if the bias is non-zero, then relying on observational data can lead to an irreducible estimation error.

\subsubsection{Averaging Estimator \texorpdfstring{($\tau_{\avg}$)}{}}\label{sec::tau_avg}
Another possible baselines alternative for combining observational and randomized data is to average the estimators from the randomized data, $\hat{\tau}_{\unc}$, and from the observational data, $\hat{\tau}_{\conf}$, \ie,
\begin{align}\label{eq:cate_hat_avg}
    \hat{\tau}_{\avg}(\lambda) = (1-\lambda)\, \hat{\tau}_{\unc} + \lambda\,\hat{\tau}_{\conf},
\end{align}
where $\lambda\in[0,1]$ is a hyperparameter. For $\lambda=0$, we recover $\hat{\tau}_{\unc}$, \ie, the estimator trained only on randomized data. Vice versa, for $\lambda=1$, we obtain $\hat{\tau}_{\conf}$, \ie, the estimator trained only on observational data. For $\lambda\in(0,1)$, this yields a weighted average of $\hat{\tau}_{\unc}$ and $\hat{\tau}_{\conf}$.

\begin{remark}
    Concurrent to our work, this approach was recently also proposed for combining observational and randomized data in \citet{cheng2021adaptive}. In particular, the authors propose to estimate kernel regressions separately on the observational and randomized data, yielding $\hat{\tau}_{\conf}$ and $\hat{\tau}_{\unc}$. Then, both estimators are combined using the parameter $\lambda$ via
    \begin{align}
        \hat{\tau} = \hat{\tau}_{\unc} + \lambda\,(\hat{\tau}_{\conf} - \hat{\tau}_{\unc}),
    \end{align}
    which is equivalent to the approach in \labelcref{eq:cate_hat_avg}. We show in the following that this baseline does (up to constants) not yield an improvement over best choice of $\hat{\tau}_{\conf}$ or $\hat{\tau}_{\unc}$. Moreover, in the experiments in \Cref{sec:experiments}, we show empirically that this approach does indeed not perform substantially better than the two standalone estimators.
\end{remark}
The finite sample learning bound for the averaging estimator, $\hat{\tau}_{\avg}(\lambda)$, is give as follows.
\begin{theorem}(Finite sample learning bound for $\tau_{\avg}$.)\label{thm_gen_bound_avg}
Let $\hat{\tau}_{\avg}(\lambda)$ be as in \labelcref{eq:cate_hat_avg} and $\lambda\in[0,1]$. Then, if Assumption~\labelcref{assum:realizability} holds true, we have that, with probability at least $1-p$,
\begin{align}
		&\epsilon_{\textup{PEHE}}(\hat{\tau}_{\avg}(\lambda))
        \leq \tilde{\mathcal{O}}\bigg((1-\lambda)\mathit{\bar{d}}_\infty(p^{\conf}_\phi\mid p^{\unc}_\phi)\frac{\mathcal{C}_{\RC} + \mathcal{C}_{\HC}}{\sqrt{n^{\unc}}}+\lambda\frac{\mathcal{C}_{\RC} + \mathcal{C}_{\HC}}{\sqrt{n^{\conf}}}\bigg)+2\lambda\Delta,
	\end{align}
    where $\mathcal{C}_{\RC}$ and $\mathcal{C}_{\HC}$ are the same constants depending on the complexity of the neural networks as in \Cref{thm_bound} and $\Delta = \E\left[((\delta_1-\delta_0)\circ\phi^\ast(X^{\conf}))^2\right]$ is the bias due to unobserved confounding.
\end{theorem}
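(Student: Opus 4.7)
The plan is to reduce the claim directly to the two finite sample bounds already established for the standalone estimators $\hat{\tau}_{\unc}$ (Theorem~\ref{lemma:unconfounded_gen_bnd}) and $\hat{\tau}_{\conf}$ (Theorem~\ref{thm_gen_bound_conf}). The key structural observation is that $\hat{\tau}_{\avg}(\lambda)$ is a convex combination of these two estimators with weights $1-\lambda$ and $\lambda$, and the PEHE is a squared loss integrated against $p_x^{\conf}$. Convexity of $z\mapsto z^2$ therefore lets us upper bound the PEHE of the average by the convex combination of the individual PEHEs, after which the two previously-proven bounds plug in directly.

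Concretely, I would first write the pointwise identity
\begin{align}
\hat{\tau}_{\avg}(\lambda)(x) - \tau(x) = (1-\lambda)\bigl(\hat{\tau}_{\unc}(x) - \tau(x)\bigr) + \lambda\bigl(\hat{\tau}_{\conf}(x) - \tau(x)\bigr),
\end{align}
and then apply Jensen's inequality to the squared deviation, yielding
\begin{align}
\bigl(\hat{\tau}_{\avg}(\lambda)(x) - \tau(x)\bigr)^2 \le (1-\lambda)\bigl(\hat{\tau}_{\unc}(x) - \tau(x)\bigr)^2 + \lambda\bigl(\hat{\tau}_{\conf}(x) - \tau(x)\bigr)^2.
\end{align}
Integrating against $p_x^{\conf}$ (both terms' PEHEs are defined with respect to the same target density) gives
\begin{align}
\epsilon_{\textup{PEHE}}(\hat{\tau}_{\avg}(\lambda)) \le (1-\lambda)\,\epsilon_{\textup{PEHE}}(\hat{\tau}_{\unc}) + \lambda\,\epsilon_{\textup{PEHE}}(\hat{\tau}_{\conf}).
\end{align}

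From here I would invoke Theorems~\ref{lemma:unconfounded_gen_bnd} and \ref{thm_gen_bound_conf} each at confidence level $1-p/2$ and combine via a union bound so that with probability at least $1-p$ both hold simultaneously. Substituting the two bounds and collecting terms then yields exactly the claimed expression, with the $\lambda\Delta$ term coming from the irreducible bias component in Theorem~\ref{thm_gen_bound_conf} (which enters with weight $\lambda$), and the complexity/sample-size terms appearing with their respective weights $(1-\lambda)\mathit{\bar{d}}_\infty(p^{\conf}_\phi\mid p^{\unc}_\phi)/\sqrt{n^{\unc}}$ and $\lambda/\sqrt{n^{\conf}}$.

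There is essentially no hard step here beyond bookkeeping: convexity is immediate and the two finite sample bounds are assumed. The only mild subtlety is the union bound over the two high-probability events, and verifying that polylogarithmic factors absorbed into $\tilde{\mathcal{O}}(\cdot)$ (in particular any $\log(1/p)$ dependence that is rescaled to $\log(2/p)$) remain of the same order. Consequently, the proof is short and mechanical once the convexity decomposition is set up.
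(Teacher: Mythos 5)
Your proposal is correct and follows essentially the same route as the paper: decompose $\hat{\tau}_{\avg}(\lambda)-\tau$ as the convex combination of the two standalone errors, bound the squared deviation, and plug in Theorems~\ref{lemma:unconfounded_gen_bnd} and~\ref{thm_gen_bound_conf}. If anything your use of Jensen's inequality (coefficient $1$) and the explicit union bound are slightly cleaner than the paper's version, which uses a factor-$2$ split and writes it as an equality.
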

\begin{proof}
	See \Cref{apx:thm_gen_bound_avg}.
\end{proof}
The above error bound is dependent on the hyperparameter $\lambda$. If, hypothetically, we chose the optimal hyperparameter $\lambda$ for every particular data instance, then we would obtain the following error bound, which is strictly smaller than the above result.
\begin{corollary}(Learning bound with optimal $\lambda$.)\label{cor_gen_bound_avg}
Let $\hat{\tau}_{\avg}(\lambda)$ be as in \labelcref{eq:cate_hat_avg}. Then, if Assumption~\labelcref{assum:realizability} holds true, we have that, with probability at least $1-p$,
\begin{align}
		&\min_{\lambda\in[0,1]}\epsilon_{\textup{PEHE}}(\hat{\tau}_{\avg}(\lambda))
        \leq \min\left(\tilde{\mathcal{O}}\bigg(\mathit{\bar{d}}_\infty(p^{\conf}_\phi\mid p^{\unc}_\phi)\frac{\mathcal{C}_{\RC} + \mathcal{C}_{\HC}}{\sqrt{n^{\unc}}}\bigg),\tilde{\mathcal{O}}\bigg(\frac{\mathcal{C}_{\RC} + \mathcal{C}_{\HC}}{\sqrt{n^{\conf}}}\bigg)+2\Delta\right),
	\end{align}
    where $\mathcal{C}_{\RC}$ and $\mathcal{C}_{\HC}$ are the same constants depending on the complexity of the neural networks as in \Cref{thm_bound} and $\Delta = \E\left[((\delta_1-\delta_0)\circ\phi^\ast(X^{\conf}))^2\right]$ is the bias due to unobserved confounding.
\end{corollary}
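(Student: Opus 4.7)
The plan is to obtain the corollary essentially for free from the previous theorems by evaluating the averaging estimator at the two endpoints $\lambda = 0$ and $\lambda = 1$ of the unit interval. Since the minimum of $\epsilon_{\textup{PEHE}}(\hat{\tau}_{\avg}(\lambda))$ over $\lambda \in [0,1]$ is trivially upper bounded by the value of $\epsilon_{\textup{PEHE}}(\hat{\tau}_{\avg}(\lambda))$ at any specific $\lambda$ in that interval, it suffices to plug in these two extremes and take the smaller of the resulting bounds.

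First, I would observe that by the definition of $\hat{\tau}_{\avg}(\lambda)$ in \labelcref{eq:cate_hat_avg}, we have $\hat{\tau}_{\avg}(0) = \hat{\tau}_{\unc}$ and $\hat{\tau}_{\avg}(1) = \hat{\tau}_{\conf}$. Consequently,
\begin{align}
\min_{\lambda\in[0,1]}\epsilon_{\textup{PEHE}}(\hat{\tau}_{\avg}(\lambda))
\leq \min\bigl(\epsilon_{\textup{PEHE}}(\hat{\tau}_{\avg}(0)),\, \epsilon_{\textup{PEHE}}(\hat{\tau}_{\avg}(1))\bigr)
= \min\bigl(\epsilon_{\textup{PEHE}}(\hat{\tau}_{\unc}),\, \epsilon_{\textup{PEHE}}(\hat{\tau}_{\conf})\bigr).
\end{align}

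Next, I would invoke \Cref{lemma:unconfounded_gen_bnd} on the first term to bound $\epsilon_{\textup{PEHE}}(\hat{\tau}_{\unc})$ by $\tilde{\mathcal{O}}\bigl(\mathit{\bar{d}}_\infty(p^{\conf}_\phi\mid p^{\unc}_\phi)(\mathcal{C}_{\RC}+\mathcal{C}_{\HC})/\sqrt{n^{\unc}}\bigr)$ with probability at least $1-p/2$, and \Cref{thm_gen_bound_conf} on the second term to bound $\epsilon_{\textup{PEHE}}(\hat{\tau}_{\conf})$ by $\tilde{\mathcal{O}}\bigl((\mathcal{C}_{\RC}+\mathcal{C}_{\HC})/\sqrt{n^{\conf}}\bigr) + 2\Delta$, also with probability at least $1-p/2$. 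A union bound then ensures both bounds hold simultaneously with probability at least $1-p$, yielding the claimed inequality.

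There is essentially no obstacle here: the corollary is a direct two-point evaluation of \Cref{thm_gen_bound_avg} (or, more transparently, of the two underlying single-source bounds) combined with a union bound to handle the joint failure probability. The only minor subtlety is that one might alternatively try to derive the result by optimizing the bound in \Cref{thm_gen_bound_avg} over $\lambda$ directly; however, since that bound is linear in $\lambda$, its minimum on $[0,1]$ is attained at one of the endpoints, recovering exactly the same conclusion. So the endpoint argument is both the simplest and the tightest route.
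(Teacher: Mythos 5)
Your proposal is correct and matches the paper's own argument: the paper proves the corollary by plugging $\lambda=\mathbf{1}\{\mathit{\bar{d}}_\infty(p^{\conf}_\phi\mid p^{\unc}_\phi)\frac{\mathcal{C}_{\RC} + \mathcal{C}_{\HC}}{\sqrt{n^{\unc}}}\geq \frac{\mathcal{C}_{\RC} + \mathcal{C}_{\HC}}{\sqrt{n^{\conf}}}+2\Delta\}$ into Theorem~\ref{thm_gen_bound_avg}, which is exactly your endpoint evaluation (the bound being linear in $\lambda$, its minimum over $[0,1]$ sits at $\lambda\in\{0,1\}$, where it reduces to the bounds of Theorems~\ref{lemma:unconfounded_gen_bnd} and~\ref{thm_gen_bound_conf}). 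Your union-bound remark is a small extra precaution the paper omits; since the choice of endpoint is deterministic, a single application of the corresponding high-probability bound already suffices.
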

\begin{proof}
	The proof follows immediately  from \Cref{thm_gen_bound_avg} using $\lambda=\mathbf{1}\{\mathit{\bar{d}}_\infty(p^{\conf}_\phi\mid p^{\unc}_\phi)\frac{\mathcal{C}_{\RC} + \mathcal{C}_{\HC}}{\sqrt{n^{\unc}}}\geq \frac{\mathcal{C}_{\RC} + \mathcal{C}_{\HC}}{\sqrt{n^{\conf}}}+2\Delta\}$.
\end{proof}
Corollary~\labelcref{cor_gen_bound_avg} shows that, even with the optimal hyperparameter $\lambda$, the averaging estimator does not achieve more than a constant factor improvement over the best of $\hat{\tau}_{\unc}$ or $\hat{\tau}_{\conf}$. In particular, the error bound in Corollary~\labelcref{cor_gen_bound_avg} is, up to constant factors, exactly the minimum of the error bounds of the estimator on randomized data (see \Cref{lemma:unconfounded_gen_bnd}) and the estimator on observational data (see \Cref{thm_gen_bound_conf}). Because the averaging estimator spans both of these estimators (depending on the choice of $\lambda$), it is to be expected that the best possible averaging estimator does at least as well as either of these two
estimators; surprisingly, it does no better.

\subsubsection{Weighted Risk Estimator \texorpdfstring{($\tau_{\weight}$)}{}}\label{sec::tau_weight}
In the last section, we studied an estimator that combines two separate estimators, and now we consider and estimator, which is directly learned on a combination of observational and randomized data. That is, we use a weighted regression combining observational and randomized data, but assign a higher weight to randomized data. Formally, again using neural networks for fair comparison, this yields the following weighted risk minimization procedure
\begin{equation}\label{eq:weight_minimizer}
\hat{\phi}, \hat{\mathbf{h}} = \argmin_{\phi\in\Phi, \mathbf{h}\in\HC^{\otimes 2}} \hat{\epsilon}_{\weight}(\mathbf{h}, \phi), 
\end{equation}
where the weighted empirical risk is given by
\begin{align}
    \hat{\epsilon}_{\weight}(\mathbf{h}, \phi) = \frac{1}{\Lambda\, n^{\unc}+n^{\conf}} \bigg(&\Lambda\sm i {n^{\unc}} (\textup{h}_{t_i^{\unc}}\circ\phi(x_i^{\unc}) - y_i^{\unc})^2\notag\\
    &+\sm i {n^{\conf}} (\textup{h}_{t_i^{\conf}}\circ\phi(x_i^{\conf}) - y_i^{\conf})^2\bigg),
\end{align}
where $\Lambda\in[0,\infty)$ is a hyperparameter. For $\Lambda\rightarrow\infty$, we recover $\hat{\tau}_{\unc}$, \ie, the estimator which only uses randomized data. Vice versa, for $\Lambda=0$, we obtain $\hat{\tau}_{\conf}$, \ie, the estimator which only uses observational data.

Then, the CATE estimator which weighted risk minimization procedure is given by
\begin{equation}\label{eq:CATE_hat_weight}
    \hat{\tau}_{\weight}(\Lambda) = (\hat{\textup{h}}_1 - \hat{\textup{h}}_0)\circ\hat{\phi}(x).    
\end{equation}
We derive finite sample error bounds for $\hat{\tau}_{\weight}$ as follows.
\begin{theorem}(Finite sample learning bound for $\tau_{\weight}$.)\label{thm_gen_bound_weight}
Let $(\hat{\mathbf{h}}, \hat{\phi})$ be the empirical loss minimizer from \labelcref{eq:weight_minimizer} and $\hat{\tau}_{\weight}(\Lambda)$ as in \labelcref{eq:CATE_hat_weight}. Then, if Assumption~\labelcref{assum:realizability} holds true, we have that, with probability at least $1-p$,
\begin{align}
		&\epsilon_{\textup{PEHE}}(\hat{\tau}_{\weight}(\Lambda))
        \leq \tilde{\mathcal{O}}\bigg((1-\lambda)\mathit{\bar{d}}_\infty(p^{\conf}_\phi\mid p^{\unc}_\phi)\frac{\mathcal{C}_{\RC} + \mathcal{C}_{\HC}}{\sqrt{n^{\unc}}}+\lambda\frac{\mathcal{C}_{\RC} + \mathcal{C}_{\HC}}{\sqrt{n^{\conf}}}\bigg)+2\lambda\Delta,
	\end{align}
    where $\lambda = \frac{n^{\conf}}{\Lambda n^{\unc}+n^{\conf}}\in[0,1]$. Moreover, $\mathcal{C}_{\RC}$ and $\mathcal{C}_{\HC}$ are the same constants depending on the complexity of the neural networks as in \Cref{thm_bound} and $\Delta = \E\left[((\delta_1-\delta_0)\circ\phi^\ast(X^{\conf}))^2\right]$ is the bias due to unobserved confounding.
\end{theorem}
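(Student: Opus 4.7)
The plan is to parallel the proofs of Theorems~\ref{lemma:unconfounded_gen_bnd} and \ref{thm_gen_bound_conf} while tracking how the weight $\lambda = n^{\conf}/(\Lambda n^{\unc}+n^{\conf})$ placed on the observational sample (versus $1-\lambda$ on the randomized sample) by $\hat{\epsilon}_{\weight}$ propagates through the bound. The key observation is that, up to the normalizing constant $\Lambda n^{\unc}+n^{\conf}$, the weighted empirical risk equals $(1-\lambda)\,\hat{\epsilon}_{\unc}(\mathbf{h},\phi) + \lambda\,\hat{\epsilon}_{\conf}(\mathbf{h},\phi)$, so in population the weighted objective decomposes as a convex combination of the two population factual risks with exactly these weights.

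First, I would invoke the CATE-to-factual inequality used in the earlier theorems to reduce $\epsilon_{\textup{PEHE}}(\hat{\tau}_{\weight})$ to (twice) the sum over $t\in\{0,1\}$ of the population factual risk of $(\hat{\phi},\hat{\mathbf{h}})$ under the target distribution $p^{\conf}_x$, measured against the unconfounded targets $\textup{h}^u_t\circ\phi^\ast$. Then, exactly as in the proof of Theorem~\ref{lemma:unconfounded_gen_bnd}, importance reweighting through the Rényi-divergence factor $\mathit{\bar{d}}_\infty(p^{\conf}_\phi\mid p^{\unc}_\phi)$ converts this target-distribution risk into a risk under $p^{\unc}_x$ at the multiplicative cost of $\mathit{\bar{d}}_\infty$, producing the discrepancy factor in front of the randomized concentration term. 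The observational component already targets the confounded outcomes $\textup{h}^c_t\circ\phi^\ast$ on $p^{\conf}_x$ and therefore carries a residual additive $\Delta$ (mirroring the $2\Delta$ in Theorem~\ref{thm_gen_bound_conf}), but here weighted by $\lambda$ rather than by $1$.

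Second, a uniform-convergence step via the Rademacher complexities of $\RC$ and $\HC^{\otimes 2}$, applied separately to the two independent samples and then reassembled according to their weights in $\hat{\epsilon}_{\weight}$, translates empirical weighted risk into population weighted risk at rate $(1-\lambda)(\mathcal{C}_{\RC}+\mathcal{C}_{\HC})/\sqrt{n^{\unc}} + \lambda(\mathcal{C}_{\RC}+\mathcal{C}_{\HC})/\sqrt{n^{\conf}}$ --- the standard weighted-mean concentration rate for two independent subsamples. Using the ERM property of $(\hat{\phi},\hat{\mathbf{h}})$ against a carefully chosen reference, propagated through the concentration and importance-reweighting steps, then delivers the claimed bound.

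The main obstacle is the \emph{joint} minimization: unlike the averaging estimator of Theorem~\ref{thm_gen_bound_avg}, where $\hat{\tau}_{\unc}$ and $\hat{\tau}_{\conf}$ are fit independently and can be bounded in isolation, here $(\hat{\phi},\hat{\mathbf{h}})$ is chosen to trade off both empirical risks simultaneously, so the reference point for the ERM inequality has to be picked deliberately. Taking $(\phi^\ast,\mathbf{h}^u)$ is the cleanest choice because $\mathbf{h}^u$ matches the CATE target and produces the correct $\lambda\Delta$ prefactor on the bias from the observational term alone (whose population risk against $\textup{h}^c_t\circ\phi^\ast$ is exactly $\Delta$ by definition of $\boldsymbol{\delta}$); the alternative choice $(\phi^\ast,\mathbf{h}^c)$ would shift the bias to the randomized side and, when combined with the discrepancy factor $\mathit{\bar{d}}_\infty$, would yield a strictly worse bound. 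That the resulting functional form coincides with that of Theorem~\ref{thm_gen_bound_avg} --- up to the redefinition of $\lambda$ --- is no accident: both estimators realize the same convex combination of the two population risks, the averaging version through a post-hoc convex combination of two ERMs and the weighted version through a single joint ERM over a convex combination of losses.
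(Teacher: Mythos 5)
Your proposal follows the same overall route as the paper: the paper likewise writes $\epsilon_{\textup{PEHE}}(\hat{\tau}_{\weight})$ as the convex combination $(1-\lambda)(\textup{I})+\lambda(\textup{II})$ of two copies of itself, bounds $(\textup{I})$ by the excess risk $\epsilon(\hat{\mathbf{h}},\hat{\phi})-\epsilon(\mathbf{h}^u,\phi^\ast)$ exactly as in \Cref{lemma:unconfounded_gen_bnd} (whence the $\mathit{\bar{d}}_\infty$ factor and the $n^{\unc}$ rate) and $(\textup{II})$ by $4(\epsilon_{\conf}(\hat{\mathbf{h}},\hat{\phi})-\epsilon_{\conf}(\mathbf{h}^c,\phi^\ast))+2\Delta$ exactly as in \Cref{thm_gen_bound_conf}, and then reassembles. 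The one place where you genuinely diverge is the point you yourself flag: the single ERM reference. The paper silently uses \emph{two} references --- $\mathbf{h}^u$ for the randomized middle term and $\mathbf{h}^c$ for the observational one --- which is not licensed by the single inequality $\hat{\epsilon}_{\weight}(\hat{\mathbf{h}},\hat{\phi})\leq\hat{\epsilon}_{\weight}(\mathbf{h},\phi)$; your insistence on one reference $(\phi^\ast,\mathbf{h}^u)$ is the more careful reading. But note that this choice does not reproduce the theorem's bias term exactly: the population confounded excess risk of $\mathbf{h}^u$ over $\mathbf{h}^c$ is the per-arm quantity $\sum_{t}\mathbb{P}(T^{\conf}=t)\,\E[(\delta_t\circ\phi^\ast(X^{\conf}))^2]$, not $\Delta=\E[((\delta_1-\delta_0)\circ\phi^\ast(X^{\conf}))^2]$, and the former is not controlled by any constant multiple of the latter (take $\delta_1=\delta_0\neq 0$, so $\Delta=0$ while the per-arm bias is positive). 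So your route yields a bound with $\lambda\sum_t\E[(\delta_t\circ\phi^\ast)^2]$ in place of $2\lambda\Delta$; to land on the statement as written you would have to keep $\Delta$ inside the PEHE decomposition (as \Cref{thm_gen_bound_conf} does, with $\mathbf{h}^c$ as the confounded reference) and then confront the two-reference issue that the paper leaves unresolved. Everything else --- the importance-reweighting step, the separate uniform-convergence bounds on the two independent samples recombined with weights $1-\lambda$ and $\lambda$, and the observation that the result necessarily coincides with \Cref{thm_gen_bound_avg} --- matches the paper's argument.
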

\begin{proof}
	See \Cref{apx:thm_gen_bound_weight}.
\end{proof}
The above error bound depends on the constant $\lambda = \frac{n^{\conf}}{\Lambda n^{\unc}+n^{\conf}}$. If, hypothetically, we could choose the optimal $\lambda$ (via the hyperparameter $\Lambda$) for every particular data instance, we would obtain the following, strictly smaller, error bound.
\begin{corollary}(Learning bound with optimal $\lambda$.)\label{cor_gen_bound_weight}
Let $\hat{\tau}_{\weight}(\Lambda)$ be as in \labelcref{eq:CATE_hat_weight}. Then, if Assumption~\labelcref{assum:realizability} holds true, we have that, with probability at least $1-p$,
\begin{align}
		&\min_{\Lambda\in[0,\infty)}\epsilon_{\textup{PEHE}}(\hat{\tau}_{\weight}(\Lambda))
        \leq \min\left(\tilde{\mathcal{O}}\bigg(\mathit{\bar{d}}_\infty(p^{\conf}_\phi\mid p^{\unc}_\phi)\frac{\mathcal{C}_{\RC} + \mathcal{C}_{\HC}}{\sqrt{n^{\unc}}}\bigg),\tilde{\mathcal{O}}\bigg(\frac{\mathcal{C}_{\RC} + \mathcal{C}_{\HC}}{\sqrt{n^{\conf}}}\bigg)+2\Delta\right),
	\end{align}
    where $\mathcal{C}_{\RC}$ and $\mathcal{C}_{\HC}$ are the same constants depending on the complexity of the neural networks as in \Cref{thm_bound} and $\Delta = \E\left[((\delta_1-\delta_0)\circ\phi^\ast(X^{\conf}))^2\right]$ is the bias due to unobserved confounding.
\end{corollary}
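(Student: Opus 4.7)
The plan is to mirror the proof of Corollary \ref{cor_gen_bound_avg} and obtain the bound by optimizing the hyperparameter-dependent bound of Theorem \ref{thm_gen_bound_weight} over the two natural extreme choices of $\Lambda$. Recall that in Theorem \ref{thm_gen_bound_weight}, the bound is parameterized by $\lambda = \frac{n^{\conf}}{\Lambda n^{\unc}+n^{\conf}}$. As $\Lambda$ varies in $[0,\infty)$, $\lambda$ takes values in $(0,1]$, with $\lambda=1$ when $\Lambda=0$ (purely observational-data regression) and $\lambda\to 0$ as $\Lambda\to\infty$ (purely randomized-data regression).

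First, I would observe that evaluating the bound of Theorem \ref{thm_gen_bound_weight} at $\Lambda=0$, i.e.\ $\lambda=1$, yields $\tilde{\mathcal{O}}\bigl(\frac{\mathcal{C}_{\RC}+\mathcal{C}_{\HC}}{\sqrt{n^{\conf}}}\bigr)+2\Delta$, while taking $\Lambda$ large enough so that $\lambda$ is arbitrarily close to $0$ yields $\tilde{\mathcal{O}}\bigl(\mathit{\bar{d}}_\infty(p^{\conf}_\phi\mid p^{\unc}_\phi)\frac{\mathcal{C}_{\RC}+\mathcal{C}_{\HC}}{\sqrt{n^{\unc}}}\bigr)$ (up to an arbitrarily small slack that gets absorbed into the $\tilde{\mathcal{O}}$ notation, since the bound depends continuously on $\lambda$). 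Since $\min_{\Lambda\in[0,\infty)}\epsilon_{\textup{PEHE}}(\hat{\tau}_{\weight}(\Lambda))$ is bounded above by the bound at \emph{any} specific choice of $\Lambda$, I would then pick $\Lambda$ according to which of the two extreme bounds is smaller, analogously to the indicator choice used in the proof of Corollary \ref{cor_gen_bound_avg}. Concretely, choose $\Lambda=0$ if $\tilde{\mathcal{O}}\bigl(\frac{\mathcal{C}_{\RC}+\mathcal{C}_{\HC}}{\sqrt{n^{\conf}}}\bigr)+2\Delta \leq \tilde{\mathcal{O}}\bigl(\mathit{\bar{d}}_\infty(p^{\conf}_\phi\mid p^{\unc}_\phi)\frac{\mathcal{C}_{\RC}+\mathcal{C}_{\HC}}{\sqrt{n^{\unc}}}\bigr)$ and take $\Lambda\to\infty$ otherwise.

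There is essentially no technical obstacle here — the work has already been done in Theorem \ref{thm_gen_bound_weight}, and the corollary is a pointwise-evaluation argument. The one minor subtlety is that $\Lambda$ ranges over the half-open set $[0,\infty)$, so the $\lambda=0$ regime is approached only in a limit; however, since the right-hand side of Theorem \ref{thm_gen_bound_weight} is continuous in $\lambda$, one can either replace $\min$ by $\inf$ or, equivalently, absorb the vanishing residual into the $\tilde{\mathcal{O}}$ term. With that remark, the result follows immediately by comparing the two extremal values of $\lambda$.
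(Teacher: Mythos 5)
Your proposal is correct and matches the paper's own proof, which likewise obtains the bound by plugging the indicator choice $\lambda=\mathbf{1}\{\mathit{\bar{d}}_\infty(p^{\conf}_\phi\mid p^{\unc}_\phi)\frac{\mathcal{C}_{\RC}+\mathcal{C}_{\HC}}{\sqrt{n^{\unc}}}\geq \frac{\mathcal{C}_{\RC}+\mathcal{C}_{\HC}}{\sqrt{n^{\conf}}}+2\Delta\}$ into Theorem~\ref{thm_gen_bound_weight}. Your remark about $\lambda=0$ only being attained in the limit $\Lambda\to\infty$ is a minor point the paper glosses over, and your resolution (continuity of the bound in $\lambda$, absorbing the residual into $\tilde{\mathcal{O}}$) is sound.
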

\begin{proof}
	The proof immediately follows from \Cref{thm_gen_bound_weight} using $\lambda=\mathbf{1}\{\mathit{\bar{d}}_\infty(p^{\conf}_\phi\mid p^{\unc}_\phi)\frac{\mathcal{C}_{\RC} + \mathcal{C}_{\HC}}{\sqrt{n^{\unc}}}\geq \frac{\mathcal{C}_{\RC} + \mathcal{C}_{\HC}}{\sqrt{n^{\conf}}}+2\Delta\}$.
\end{proof}
Corollary~\labelcref{cor_gen_bound_weight} shows that the weighted risk minimization achieves exactly the same bound as the averaging estimator (Corollary~\labelcref{cor_gen_bound_avg}). Thus, the weighted risk minimization also does not achieve more than a constant factor improvement over the best of $\hat{\tau}_{\unc}$ or $\hat{\tau}_{\conf}$. Similar to the averaging estimator, the weighted estimator spans both of these estimators (depending on the choice of $\lambda$) and we would expect that the best possible weighted estimator does at least as well as either of these two estimators; again, surprisingly, it does no better.

\subsubsection{Comparison with Baselines and Improvement Conditions}\label{sec:::comparison_baselines}
In this section, we derive conditions for our two-step procedure should be used. In particular, we compare the error bounds of our estimator against the ones of the baseline alternatives. Based on this, we derive conditions for when our two-step procedure improves upon the baselines and when we should rely on one of the baseline estimators.

For ease of comparison, we tabulate the error bounds (up to constants) in \Cref{tbl:comparison_bounds}. Recall that we are interested in the regime where $n^{\conf}$ is large and $n^{\unc}$ is small, since the latter originates from an RCT. Even with infinite observational samples, the error bound of $\hat{\tau}_{\conf}$ does not vanish due to its bias $\Delta$. The error bound of $\hat{\tau}_{\unc}$ can also be very large, especially when $n^{\unc}< \mathit{\bar{d}}_\infty(p^{\conf}_\phi\mid p^{\unc}_\phi)(\mathcal{C}_{\RC} + \mathcal{C}_{\HC})$. Moreover, since the number of randomized samples is small and fixed, the error bound of $\hat{\tau}_{\unc}$ does not converge to zero the more observational data is used. The estimators $\hat{\tau}_{\avg}$ and $\hat{\tau}_{\weight}$ do not improve upon this by more than a constant factor. Our estimator, $\hat{\tau}_{\cor}$, is the only approach which leverages observational data and enables its error bound to converge to zero the more data is used. 
\begin{table*}[ht!]
	\caption{\footnotesize Comparison of error bounds (up to constants and polylogarithmic factors) across estimators\label{tbl:comparison_bounds}. We denote observational and randomized samples as $n=(n^{\unc}, n^{\conf})$.}
	\begin{center}
				\begin{tabular}{lccc}
					\toprule
					\makecell[l]{Estimator} &Error bound&$\epsilon_{\textup{PEHE}}(\hat{\tau})\underset{n\rightarrow\infty}{\rightarrow} 0$\\\midrule
					\addlinespace[0.75ex]
					\makecell[l]{$\hat{\tau}_{\unc}$} &$\mathit{\bar{d}}_\infty(p^{\conf}_\phi\mid p^{\unc}_\phi)\frac{\mathcal{C}_{\RC} + \mathcal{C}_{\HC}}{\sqrt{n^{\unc}}}$&\cmark\\[0.2cm]
					\addlinespace[0.75ex]	\makecell[l]{$\hat{\tau}_{\conf}$} &$\frac{\mathcal{C}_{\RC} + \mathcal{C}_{\HC}}{\sqrt{n^{\conf}}}+2\Delta$&\xmark \\[0.2cm]
					\addlinespace[0.75ex]
					\makecell[l]{$\hat{\tau}_{\avg}(\lambda)$} &$(1-\lambda)\mathit{\bar{d}}_\infty(p^{\conf}_\phi\mid p^{\unc}_\phi)\frac{\mathcal{C}_{\RC} + \mathcal{C}_{\HC}}{\sqrt{n^{\unc}}}+\lambda\frac{\mathcal{C}_{\RC} + \mathcal{C}_{\HC}}{\sqrt{n^{\conf}}}+2\lambda\Delta$&\xmark\\[0.2cm]	
					\addlinespace[0.75ex]	\makecell[l]{$\hat{\tau}_{\weight}(\Lambda)$} &$(1-\lambda)\mathit{\bar{d}}_\infty(p^{\conf}_\phi\mid p^{\unc}_\phi)\frac{\mathcal{C}_{\RC} + \mathcal{C}_{\HC}}{\sqrt{n^{\unc}}}+2\lambda\frac{\mathcal{C}_{\RC} + \mathcal{C}_{\HC}}{\sqrt{n^{\conf}}}+2\lambda\Delta$&\xmark\\[0.2cm]	\addlinespace[0.75ex]
					\midrule
					\addlinespace[0.75ex]
					\makecell[l]{$\hat{\tau}_{\cor}$} &$\frac{\mathcal{C}_{\RC} + \mathcal{C}_{\HC}}{\sqrt{n^{\conf}}} + \frac{\mathit{d}_\infty(p^{\conf}_\phi\mid p^{\unc}_\phi)\,\mathcal{C}_{\mathcal{B}}}{\sqrt{n^{\unc}}}$& \cmark
					\\[0.2cm]
					\bottomrule
			\end{tabular}
	\end{center}
\end{table*}

Based on these finite sample error bounds, we now derive theoretical conditions for when our estimator yields superior performance compared to the baselines. We compare our estimator \emph{empirically} against the above baseline estimators and further baselines in \Cref{sec:experiments}. For the theoretical conditions, we focus on $\hat{\tau}_{\unc}$ and $\hat{\tau}_{\conf}$. In the following proposition, we provide conditions for when the error of our estimator can improve upon the baseline estimators.
\begin{proposition}(Improvement conditions.)\label{prop:condition}
The error of our estimator, $\hat{\tau}_{\cor}$, can be substantially lower than the error of $\hat{\tau}_{\unc}$, if
\begin{align}
       &\frac{\mathcal{C}_{\RC} + \mathcal{C}_{\HC}}{\mathit{\bar{d}}_\infty(p^{\conf}_\phi\mid p^{\unc}_\phi)(\mathcal{C}_{\RC} + \mathcal{C}_\mathcal{\HC}) -\mathit{d}_\infty(p^{\conf}_\phi\mid p^{\unc}_\phi) \mathcal{C}_\mathcal{B}} < \sqrt{\frac{n^{\conf}}{n^{\unc}}}\tag{C1},\label{eq:c1}
\end{align}
   and substantially lower than the error of $\hat{\tau}_{\conf}$, if
   \begin{align}
       &\frac{\mathit{d}_\infty(p^{\conf}_\phi\mid p^{\unc}_\phi)\,\mathcal{C}_{\mathcal{B}}}{\sqrt{n^{\unc}}} < 2\Delta\tag{C2}.\label{eq:c2}
    \end{align}
\end{proposition}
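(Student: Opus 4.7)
The plan is to prove both conditions by direct algebraic comparison of the finite sample error bounds already established in Theorems \ref{thm_bound}, \ref{lemma:unconfounded_gen_bnd}, and \ref{thm_gen_bound_conf} (conveniently summarized in Table \ref{tbl:comparison_bounds}). Since all four bounds are stated up to constants and polylogarithmic factors with the same $\tilde{\mathcal{O}}$ convention, the statement ``substantially lower'' is naturally interpreted as the leading-order error bound of $\hat{\tau}_{\cor}$ being strictly less than the leading-order error bound of the competitor; so no new probabilistic machinery is needed and the entire proof reduces to inequality manipulation.

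For condition (C1), I would first write out the inequality
\begin{equation*}
\frac{\mathcal{C}_{\RC} + \mathcal{C}_{\HC}}{\sqrt{n^{\conf}}} + \frac{\mathit{d}_\infty(p^{\conf}_\phi\mid p^{\unc}_\phi)\,\mathcal{C}_{\mathcal{B}}}{\sqrt{n^{\unc}}} < \mathit{\bar{d}}_\infty(p^{\conf}_\phi\mid p^{\unc}_\phi)\,\frac{\mathcal{C}_{\RC} + \mathcal{C}_{\HC}}{\sqrt{n^{\unc}}},
\end{equation*}
then move the $\mathit{d}_\infty \mathcal{C}_\mathcal{B}/\sqrt{n^{\unc}}$ term to the right-hand side, factor $1/\sqrt{n^{\unc}}$, and divide both sides by the resulting positive quantity $\mathit{\bar{d}}_\infty(\mathcal{C}_\RC + \mathcal{C}_\HC) - \mathit{d}_\infty \mathcal{C}_\mathcal{B}$ (implicitly assumed strictly positive for the condition to be meaningful). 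Cross-multiplying $\sqrt{n^{\conf}}$ and $\sqrt{n^{\unc}}$ then yields (C1) exactly. For condition (C2), the $(\mathcal{C}_\RC + \mathcal{C}_\HC)/\sqrt{n^{\conf}}$ terms on both sides of the comparison between the $\hat{\tau}_{\cor}$ bound and the $\hat{\tau}_{\conf}$ bound cancel, leaving $\mathit{d}_\infty \mathcal{C}_\mathcal{B}/\sqrt{n^{\unc}} < 2\Delta$, which is (C2).

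Since both arguments are elementary rearrangements, there is no real obstacle in the proof itself; the only thing worth noting is the implicit positivity assumption $\mathit{\bar{d}}_\infty(\mathcal{C}_\RC + \mathcal{C}_\HC) > \mathit{d}_\infty \mathcal{C}_\mathcal{B}$ in (C1), which should be flagged so the reader understands the condition is vacuous otherwise; intuitively this says the complexity of the bias class $\mathcal{B}$ must be smaller than that of $\HC$ (up to the ratio of divergences), which is exactly the regime in which the two-step representation-sharing strategy is expected to pay off. Beyond that, the proof is a one-line manipulation for each condition and can be given in roughly half a page.
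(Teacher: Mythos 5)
Your proposal is correct and follows essentially the same route as the paper's own proof in \Cref{apx:condition}: a direct algebraic comparison of the error bounds from \Cref{thm_bound}, \Cref{lemma:unconfounded_gen_bnd}, and \Cref{thm_gen_bound_conf}, rearranged term by term to yield \labelcref{eq:c1} and \labelcref{eq:c2}. Your explicit remark that the denominator $\mathit{\bar{d}}_\infty(p^{\conf}_\phi\mid p^{\unc}_\phi)(\mathcal{C}_{\RC} + \mathcal{C}_{\HC}) - \mathit{d}_\infty(p^{\conf}_\phi\mid p^{\unc}_\phi)\mathcal{C}_{\mathcal{B}}$ must be positive for the division to be valid is a worthwhile addition that the paper leaves implicit.
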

\begin{proof}
    See \Cref{apx:condition}.
\end{proof}
In the following, we will discuss both conditions.

\textbf{Condition~\labelcref{eq:c1}:} The benefit of combining observational and randomized data (\ie, $\hat{\tau}_{\cor}$) over only using \emph{randomized data} (\ie, $\hat{\tau}_{\unc}$) hinges again on three factors: (i)~the distributional discrepancy, $\mathit{d}_\infty(p^{\conf}_\phi\mid p^{\unc}_\phi)$, (ii)~the size of observational data, $n^{\conf}$, and (iii)~the complexity of the bias function, $\mathcal{C}_\mathcal{B}$:
\begin{enumerate}
    \item[(i)]A large distributional discrepancy acts in favor of satisfying the condition. This may sound counterintuitive at first. However, the larger the distributional discrepancy between observational and randomized data, the more useful observational data becomes for learning the representation, because the covariates in the observational data follow the distribution of interest, while the randomized data does not. In particular, if $\bar{\mathit{d}}_\infty(p^{\conf}_\phi\mid p^{\unc}_\phi)$ and $\mathit{d}_\infty(p^{\conf}_\phi\mid p^{\unc}_\phi)$ increases (but $\bar{\mathit{d}}_\infty(p^{\conf}_\phi\mid p^{\unc}_\phi)>\mathit{d}_\infty(p^{\conf}_\phi\mid p^{\unc}_\phi)$ and, usually, $\mathcal{C}_{\RC} + \mathcal{C}_{\HC}>\mathcal{C}_\mathcal{B}$), a larger distributional discrepancy results in a larger denominator in the left-hand side of \labelcref{eq:c1}. This makes it easier to satisfy condition.  
    \item[(ii)]When $n^{\conf}$ is sufficiently large, but $n^{\unc}$ is relatively small (\ie, precisely our setting), it is also very likely that the condition is satisfied. This is because the representation and the confounded hypotheses can be learned with the observational data and then debiased with the randomized data. In particular, a larger $n^{\conf}$ implies a larger $\sqrt{\frac{n^{\conf}}{n^{\unc}}}$, which makes it easier to satisfy the condition in \labelcref{eq:c1}. This justifies the use of large amounts of observational data in our procedure.
    \item[(iii)]A complex bias function does not act in favor of satisfying the condition. This is because a complex bias function would increase also the left-hand side of \labelcref{eq:c1} via $\mathcal{C}_{\mathcal{B}}$ and, hence, would make it less likely for the condition to be satisfied. If the complexity of the bias function is very large, we are better off only using randomized data. 
\end{enumerate}

\textbf{Condition~\labelcref{eq:c2}:} The benefit of combining observational with randomized data (\ie, $\hat{\tau}_{\cor}$) over only using \emph{observational data} (\ie, $\hat{\tau}_{\conf}$) hinges on the bias due to unobserved confounding, $\Delta$. In other words, if there is substantial unobserved confounding, which results in a large bias and, thus a large $\Delta$, then combining observational and randomized data in our estimator can yield substantially lower errors. Conversely, when the bias is small, using only observational data can be better.

In sum, in our regime of interest, \ie, there exist unobserved confounding, $n^{\conf}$ is large, and $n^{\unc}$ is small, the above conditions are likely be satisfied, and, hence, it is beneficial to combine observational data \emph{and} randomized data using our two-procedure.

\subsubsection{Sample Complexity}
The above results can also be interpreted in terms of sample complexity. That is, how many samples are required to ensure an estimation error below a particular threshold with high probability. To this end, let us consider a decision-maker that aims at a non-trivial estimation error of $\eta<\Delta$. First, note that $\hat{\tau}_{\conf}$ cannot achieve this error even with infinitely many observational samples, since it is lower bounded by $\Delta$. Based on the bounds in \Cref{tbl:comparison_bounds}, it is straightforward to show that the 
estimator on randomized data, $\hat{\tau}_{\unc}$, requires at least 
$\sqrt{n^{\unc}} = \tilde{\mathcal{O}}(\mathit{\bar{d}}_\infty(p^{\conf}_\phi\mid p^{\unc}_\phi)\frac{\mathcal{C}_{\RC} + \mathcal{C}_{\HC}}{\eta})$ regardless of the size of observational data. In contrast, if $n^{\conf}$ is large enough such that $\frac{\mathcal{C}_{\RC} + \mathcal{C}_{\HC}}{\sqrt{n^{\conf}}}\leq \mathit{d}_\infty(p^{\conf}_\phi\mid p^{\unc}_\phi)\frac{\mathcal{C}_{\mathcal{B}}}{\sqrt{n^{\unc}}}$, our two-step procedure only requires $\sqrt{n^{\unc}} = \tilde{\mathcal{O}}(\mathit{d}_\infty(p^{\conf}_\phi\mid p^{\unc}_\phi)\frac{2\mathcal{C}_{\mathcal{B}}}{\eta}$). In other words, if sufficient observational data is available and the bias function is less complex than the representation and hypotheses, \ie, $2\mathcal{C}_{\mathcal{B}}<\mathcal{C}_{\RC}+\mathcal{C}_{\HC}$, then the number of randomized samples required to achieve $\eta$ is substantially smaller. This is particularly interesting, since in order to achieve the same error as $\tau_{\unc}$, we need less randomized data. As a directly implication, less costly randomized data needs to be acquired and less subjects enrolled in an RCT, which could substantially accelerate drug safety and evaluation.

\section{CorNet: an Algorithm for Combining Observational and Randomized Data}\label{sec:algorithms}
We propose an instantiation of our two-step procedure for estimating CATE. Our algorithm for \textbf{C}ombining \textbf{O}bservational and \textbf{R}andomized data is based on neural \textbf{Net}works, and, thus, we call it \textbf{CorNet}. Guided by the theoretical insights from \Cref{sec:error_bounds}, we propose a sample-efficient algorithm of CorNet, which first learns a \emph{balanced}\footnote{A balanced representation is a representation in which the distribution of $\phi(X^\conf)$ and $\phi(X^\unc)$ are similar with respect to some distributional discrepancy measure.} shared representation and confounded hypothesis using observational data and, then, debiases the confounded hypotheses by learning the bias functions while regularizing its complexity. In particular, we let $\RC$ be parametrized by neural networks, while $\HC$ and $\mathcal{B}$ are parametrized by linear functions, \ie, $\textup{h}^c_{t}\circ\phi^\ast(x) = \phi^\ast(x)\,\mathbf{w}^c_t$ and $\textup{h}^u_{t}\circ\phi^\ast(x) = \phi^\ast(x)(\mathbf{w}^c_t+\boldsymbol{\delta}_t)$, where $\mathbf{w}^c_t, \boldsymbol{\delta}_t\in\Rl^{d_\phi}$. Moreover, we prove that this instantiation satisfies Condition~\labelcref{cond:rep_diff}.

\subsection{Learning a Balanced Representation} 
In the first step, we learn a shared representation and the confounded hypotheses using observational data as proposed in \Cref{sec::two_step_procedure}. As discussed in \Cref{sec::main_result}, the distributional discrepancy between observational and randomized covariate distributions is an important factor in the finite sample error bound (\Cref{thm_bound}), which should be accounted for. For this, we seek a representation $\phi$ and a confounded hypothesis $\mathbf{h}^c$ which minimize the trade-off between predictive accuracy and distributional discrepancy. That is, $\phi$ (together with $\mathbf{h}^c$) should be predictive of the confounded outcome $Y^{\conf}$, and the distribution of $X^{\unc}$ and $X^{\conf}$ in the representation space should be balanced.\footnote{In particular, this means that the push-forwards of the covariate distributions $p^{\unc}_x$ and $p^{\conf}_x$ through $\phi$ should be similar.} Balancing of the distributions is achieved by regularizing for a distributional metric (either by directly computing the metric, or by using adversarial learning) in the learning procedure. However, since we only observe a small sample from the randomized data, standard approaches are not directly applicable. That is because estimating the distributional metric based on scarce data is difficult. While any approach for balancing the distributions can be chosen, we propose to combine adversarial learning with data augmentation similar to \citet{wang2019semi}. We refer to \Cref{sec:practical_considerations} for an in-depth discussion.

\subsection{Regularizing the Complexity of the Bias Function Class} 
In the second step, we use the shared representation and confounded hypothesis from in the first step together with randomized data to debias the confounded hypotheses. That is, we learn the bias function $\boldsymbol{\delta}$. 

As discussed in \Cref{sec::main_result}, the complexity of the bias function class is an important factor in the finite sample error bound (\Cref{thm_bound}) and, thus, should be accounted for in our algorithm. For this, we regularize the bias function, which imposes structure on the bias function. Regularizing the bias has two advantages: (i)~it decreases the error bound in \Cref{thm_bound}, since we decrease the complexity of the bias function, $\mathcal{C}_{\mathcal{B}}$. As such, we may achieve faster lower error with less data, which is particularly favorable in our setting. (ii)~The second step of our procedure is a high-dimensional inference problem, since the representation is often high-dimensional and the randomized sample size is small. As such, the inference of the parameters $\boldsymbol{\delta}$ requires regularization, since the small sample size prohibits inference otherwise \citep{meinshausen2006high,buhlmann2011statistics}. In particular, we use the $L_1$-norm to regularize the bias function, which imposes sparsity on the bias function (in the representation) and renders the second step as a Lasso regression.

Two questions arise: (i)~why not directly learning the unconfounded hypotheses $\mathbf{w}^u$ and (ii)~why not using $L_2$-regularization to circumvent the high-dimensional inference problem? Both are discussed in the following.
\begin{itemize}
    \item[(i)]Directly learning the unconfounded hypotheses (without further structural restrictions) is prohibited by the small sample size of the randomized data. Therefore, some structural restrictions via a regularization have to be imposed. However, imposing such restrictions (\eg, sparsity) on the hypotheses directly may be more restrictive than imposing restrictions on the difference between the confounded and unconfounded hypotheses (\ie, the bias). That is since observational and randomized data are closely related, which may allow to impose more structure on the bias function.
    \item[(ii)]While $L_2$-regularization would also circumvent the high-dimensional inference problem, the choice of regularization (\ie, $L_1$- or $L_2$-regularization) depends on the implicit assumptions we make on the data-generating process. That is, whether we implicitly assume that unobserved confounding in the observational data affects a subset of entries of $\boldsymbol{\delta}$ (\ie, $\boldsymbol{\delta}$ is sparse in $\phi(X)$), and, hence, we use a $L_1$-regularization. On the contrary, the $L_2$-regularization implicitly assumes that all covariates in the representation $\phi(X)$ influence the bias function $\boldsymbol{\delta}$. However, we argue that there are often unobserved confounders that systematically affect a subset of the features, creating a bias between the observational and randomized data. For instance, the difference in the patient records from different hospitals (such as hospitals from the RCT and hospitals from the OS) may arise in the difference of how some diagnoses are recorded, but likely not how all diagnoses are recorded \citep{bastani2021predicting}. Therefore, the bias induces by these differences can be explained by the few covariates corresponding to the subset of diagnoses where differences arise. Note that both regularizations are feasible and the decision which one is used has to be made by the user. In the experiments in \Cref{sec:experiments}, we conduct an ablation study on the type of regularization and find that, indeed, for most datasets, an $L_1$-regularization yields better results, but in some cases, an $L_2$-regularization works better.
\end{itemize}

\subsection{CorNet Algorithm}\label{sec:theoretical_properties}
In this section, we present the actual algorithm for CorNet and discuss some of its theoretical properties. We defer a in-depth theoretical discussion to \Cref{apx:theoretical_properties_algo}. We present the algorithm of CorNet in Algorithm~\labelcref{alg:CORNet}. 
\begin{algorithm*}[ht]
	\SetKwInOut{Input}{Input}
	\SetKwInOut{Output}{Output}
	\SetAlgoLined
	\Input{Observational data $\{(x_i^{\conf}, t_i^{\conf}, y_i^{\conf})\}_{i=1}^{n^{\conf}}$ and randomized data $\{(x_i^{\unc}, t_i^{\unc}) y_i^{\unc}\}_{i=1}^{n^{\unc}}$}\vspace{0.1cm}
	\Output{$\hat{\phi},\, \hat{\mathbf{w}}^c = (\mathbf{w}^c_1, \mathbf{w}^c_0),\,\hat{\boldsymbol{\delta}}=(\hat{\boldsymbol{\delta}}_1, \hat{\boldsymbol{\delta}}_0)$}\break\hfill\break
	\underline{Step 1}. Compute
	\begin{align}
		\hat{\phi}, \hat{\mathbf{w}}^c = \argmin_{\phi\in\RC, \mathbf{w}^c\in{\Rl^{d_\phi\otimes 2}}} \frac1{n^{\conf}}\sm i {n^{\conf}} (\phi(x^{\conf}_i)\mathbf{w}^c_{t_i^{\conf}} - y^{\conf}_i)^2\\ 
		+\lambda_d\, \mathcal{D}(\{\phi(x_i^{\conf})\}_{i=1}^{n^{\conf}}, \{\phi(x_i^{\unc})\}_{i=1}^{n^{\unc}})
	\end{align}
	\underline{Step 2}. Compute
	\begin{align}
	\hat{\boldsymbol{\delta}} = \argmin_{\boldsymbol{\delta}\in\Rl^{d_\phi\otimes 2}} \frac1{n^{\unc}}\sm i {n^{\unc}} (\hat{\phi}(x_i^{\unc})(\hat{\mathbf{w}}^c_{t_i^{\unc}} + \boldsymbol{\delta}_{t_i^{\unc}})-y^{\unc}_i)^2+  \lambda_\delta \left\|\boldsymbol{\delta}\right\|_1
	\end{align}
	\caption{\textbf{CorNet algorithm}}\label{alg:CORNet}
\end{algorithm*}

Most theoretical results from \Cref{sec:error_bounds} remain in tact, since only the complexities of the function classes change (see \Cref{apx:theoretical_properties_algo} for details). Here, we prove that Condition~\labelcref{cond:rep_diff} is satisfied for CorNet. 
\begin{proposition}\label{cor:cond}
	For the function classes as in Algorithm~\labelcref{alg:CORNet}, \ie, neural networks for $\RC$ and linear functions for $\HC$ and $\mathcal{B}$, the condition $d_{\mathcal{B}, \boldsymbol{\delta}}(\hat{\phi}; \phi^\ast) \leq\gamma\, d_{\HC, \mathbf{h}^c}(\hat{\phi};\phi^\ast)$, for some $\gamma > 0$ is satisfied.
\end{proposition}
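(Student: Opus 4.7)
The plan is to exploit the fact that under the CorNet instantiation both $\HC$ and $\mathcal{B}$ are the \emph{same} function class — all linear maps $z\mapsto z\mathbf{v}$ with $\mathbf{v}\in\Rl^{d_\phi}$. Consequently the two infima defining $d_{\HC,\mathbf{h}^c}(\hat{\phi};\phi^\ast)$ and $d_{\mathcal{B},\boldsymbol{\delta}}(\hat{\phi};\phi^\ast)$ are literally the same least-squares projection problem, only with different target weight vectors: the confounded weights $\mathbf{w}^c_t$ in one case, the bias weights $\boldsymbol{\delta}_t$ in the other. This symmetry is what makes Condition~\labelcref{cond:rep_diff} essentially an algebraic identity under CorNet.

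I would first reduce both representation differences to a common closed form. Setting
\[
A=\E\!\left[\phi^\ast(X^{\conf})^\top\phi^\ast(X^{\conf})\right],\ B=\E\!\left[\hat{\phi}(X^{\conf})^\top\hat{\phi}(X^{\conf})\right],\ C=\E\!\left[\hat{\phi}(X^{\conf})^\top\phi^\ast(X^{\conf})\right],
\]
the minimizer of $\E[(\hat{\phi}(X^{\conf})\mathbf{v}'-\phi^\ast(X^{\conf})\mathbf{v})^2]$ in $\mathbf{v}'$ is $B^{\dagger}C\mathbf{v}$ (using the pseudo-inverse to absorb any rank deficiency), and the residual error equals $\mathbf{v}^\top M\mathbf{v}$ with $M:=A-C^\top B^{\dagger}C\succeq 0$. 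Summing over $t\in\{0,1\}$ I therefore obtain
\[
d_{\HC,\mathbf{h}^c}(\hat{\phi};\phi^\ast)=\sum_{t}(\mathbf{w}^c_t)^\top M\,\mathbf{w}^c_t,\qquad d_{\mathcal{B},\boldsymbol{\delta}}(\hat{\phi};\phi^\ast)=\sum_{t}\boldsymbol{\delta}_t^\top M\,\boldsymbol{\delta}_t,
\]
i.e.\ two quadratic forms in the \emph{same} positive semidefinite matrix $M$.

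Given this identity, the claim follows by taking $\gamma=d_{\mathcal{B},\boldsymbol{\delta}}(\hat{\phi};\phi^\ast)/d_{\HC,\mathbf{h}^c}(\hat{\phi};\phi^\ast)$, which is a well-defined positive constant whenever $d_{\HC,\mathbf{h}^c}(\hat{\phi};\phi^\ast)>0$. A Rayleigh-quotient estimate then yields the more quantitative bound
\[
\gamma\le\frac{\lambda_{\max}(M)\,\sum_t\|\boldsymbol{\delta}_t\|^2}{\lambda^+_{\min}(M)\,\sum_t\|\mathbf{w}^c_t\|^2},
\]
where $\lambda^+_{\min}(M)$ denotes the smallest nonzero eigenvalue of $M$; the $\ell_1$-regularization in Step~2 of Algorithm~\labelcref{alg:CORNet} keeps $\|\boldsymbol{\delta}_t\|$ controlled, while $\|\mathbf{w}^c_t\|>0$ under any mild nontriviality of the observational outcomes. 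The main obstacle I anticipate is the degenerate edge case in which the learned representation $\hat{\phi}$ happens to reproduce $\phi^\ast\mathbf{w}^c_t$ exactly but not $\phi^\ast\boldsymbol{\delta}_t$, i.e.\ $\mathbf{w}^c_t\in\ker M$ while $\boldsymbol{\delta}_t\notin\ker M$. This pathology is ruled out by a generic nondegeneracy condition on $\hat{\phi}$, or, more concretely, by the natural observation that in CorNet the bias directions lie within the hypothesis span, since $\boldsymbol{\delta}_t$ is designed as a sparse correction to $\mathbf{w}^c_t$ in the very same representation space.
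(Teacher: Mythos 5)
Your reduction of both representation differences to quadratic forms in the same positive semidefinite matrix $M=A-C^\top B^{\dagger}C$ is exactly the paper's first step: $M$ is the generalized Schur complement $\boldsymbol{\Sigma}_{sc}$ of the joint covariance of $(\hat{\phi}(X^{\conf}),\phi^\ast(X^{\conf}))$, and the paper likewise obtains $d_{\HC,\mathbf{h}^c}=\sum_t(\mathbf{w}^c_t)^\top\boldsymbol{\Sigma}_{sc}\mathbf{w}^c_t$ and $d_{\mathcal{B},\boldsymbol{\delta}}=\sum_t\boldsymbol{\delta}_t^\top\boldsymbol{\Sigma}_{sc}\boldsymbol{\delta}_t$ by partial minimization of a convex quadratic. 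The divergence, and the gap, is in how you extract $\gamma$. Setting $\gamma$ equal to the ratio of the two quantities, or bounding it by $\lambda_{\max}(M)/\lambda^{+}_{\min}(M)$ times a norm ratio, does not establish the condition in the sense in which it is used: $\gamma$ enters the constant of \Cref{thm_bound} via the factor $(2+2\gamma)$, so it must be a fixed quantity independent of the random estimator $\hat{\phi}$. Your bound depends on $\hat{\phi}$ through the spectrum of $M$, and the effective condition number $\lambda_{\max}(M)/\lambda^{+}_{\min}(M)$ is not controlled by anything in the setup --- as $\hat{\phi}$ improves, $M\to 0$ and this ratio can behave arbitrarily. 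Moreover, the degenerate case $\mathbf{w}^c_t\in\ker M$, $\boldsymbol{\delta}_t\notin\ker M$ is not actually excluded by the remark that $\boldsymbol{\delta}_t$ is a ``sparse correction to $\mathbf{w}^c_t$''; sparsity of $\boldsymbol{\delta}_t$ says nothing about its position relative to $\ker M$.

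The paper's proof closes this gap with a cancellation you are missing. It upper-bounds the numerator by $\sup_{\lVert\mathbf{v}\rVert_2\le\alpha}\mathbf{v}^\top\boldsymbol{\Sigma}_{sc}\mathbf{v}\lesssim\alpha\,\sigma_1(\boldsymbol{\Sigma}_{sc})$, using only the norm constraint built into the class $\mathcal{B}$, and lower-bounds the denominator via the Von Neumann trace inequality, $\mathrm{Tr}\bigl(\boldsymbol{\Sigma}_{sc}\tfrac{(\mathbf{w}^c)^\top\mathbf{w}^c}{2}\bigr)\ge\sigma_1(\boldsymbol{\Sigma}_{sc})\,\sigma_{d_\phi}\bigl(\tfrac{(\mathbf{w}^c)^\top\mathbf{w}^c}{2}\bigr)$. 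Both bounds carry the \emph{same} factor $\sigma_1(\boldsymbol{\Sigma}_{sc})$, which cancels in the ratio and yields $\gamma=\alpha/\sigma_{d_\phi}\bigl(\tfrac{(\mathbf{w}^c)^\top\mathbf{w}^c}{2}\bigr)$ --- a quantity depending only on the class parameter $\alpha$ and the true confounded weights, not on $\hat{\phi}$, and automatically consistent in the degenerate case $\boldsymbol{\Sigma}_{sc}=0$ where both sides vanish. To repair your argument you would need to replace the Rayleigh-quotient step with this (or an equivalent) decoupling of the $\hat{\phi}$-dependent spectral factor from the $\hat{\phi}$-independent one.
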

\begin{proof}
	See \Cref{apx:cor_cond}.
\end{proof}
Once we obtain $\hat{\phi}$, $\hat{\mathbf{w}}^c$, and $\hat{\boldsymbol{\delta}}$, the estimator for the CATE is given by
\begin{align}
    \hat{\tau}_{\textup{CorNet}}(x) = \hat{\phi}(x)\,(\hat{\mathbf{w}}^c_1+\hat{\boldsymbol{\delta}}_1 - \hat{\mathbf{w}}^c_0-\hat{\boldsymbol{\delta}}_0).
\end{align}
In Algorithm~\labelcref{alg:CORNet}, $\mathcal{D}$ denotes the distributional discrepancy between observational and randomized data. The specific expression depends on the method for balancing and can be chosen by the user. We detail our choice in \Cref{sec:practical_considerations}. Further, the hyperparameter $\lambda_d$ trades off the predictive accuracy and the distributional discrepancy in the first step, and $\lambda_\delta$ regularizes the complexity of the bias function in the second step.
\begin{remark}
    Note that, when $\lambda_\delta\rightarrow \infty$, we recover the estimator that only uses observational data. This can be seen by recognizing that $\hat{\boldsymbol{w}}^u = \hat{\boldsymbol{w}}^c + \boldsymbol{\delta}$ and $\boldsymbol{\delta}\rightarrow \mathbf{0}$ for $\lambda_\delta\rightarrow \infty$.
\end{remark}
\begin{remark}
    In Algorithm~\labelcref{alg:CORNet}, the hyperparameter $\lambda_d$ and $\lambda_\delta$ must be chosen. This can be done by hyperparameter optimization (\eg, grid search) using a validation set. However, since $n^{\unc}$ is small, we do not want to sacrifice any randomized samples for the hyperparameter optimization of $\lambda_d$ and $\lambda_\delta$. For the choice of $\lambda_\delta$, we recognize that the second step is a Lasso regression and, based on high-dimensional statistics theory, the optimal choice of the regularization parameter is $\lambda_\delta = \sqrt{c_0\log(d_\phi)/\log(n^{\unc})}$ \citep[\eg,][]{buhlmann2011statistics}. In particular, in our experiments, the constant $c_0$ is set to $1/{d_\phi}$. Moreover, we choose $\lambda_d = \lambda_\delta$, since the distributional discrepancy has the same impact on the finite sample error as the complexity of the bias functions. Hence, we attribute the same importance to both factors.
\end{remark}

\subsection{Practical Considerations}\label{sec::practical_considerations}
In this section, we discuss two practical considerations of our algorithm. First, we need to ensure that the bias function is identifiable from the randomized data, and, second, we detail how a balanced representation can be learned with only few randomized samples.

\subsubsection{Identifiability of the Bias Function}
Here, we discuss the identifiability of the bias function $\boldsymbol{\delta}$ in the second step. For this, we assume that the so-called ``compatibility conditions'', a condition from the theory of high-dimensional statistics \citep[\eg,][]{buhlmann2011statistics}, is satisfied. We discuss this condition in the following.

In the second step of our Algorithm~\labelcref{alg:CORNet}, we are given an estimate of the representation function $\hat{\phi}$. Based on this, we have to ensure that $\boldsymbol{\delta}_t$ can be identified from $\hat{\phi}(\mathbf{X}_t^{\unc})$, where $\mathbf{X}_t^{\unc}$ is the (unconfounded) design matrix for all samples with $T=t$ for $t\in\{0,1\}$. For this, we define the (unconfounded) sample covariance matrix
\begin{align}
    \boldsymbol{\Sigma}^{^{\unc}}_t({\hat{\phi}}) = \frac{1}{n^{^{\unc}}}\hat{\phi}(\mathbf{X}^{\unc}_t)^\top\hat{\phi}(\mathbf{X}^{\unc}_t).
\end{align}
In order to ensure that $\boldsymbol{\delta}_t$ is identifiable from $\hat{\phi}(\mathbf{X}^{\unc}_t)$, we could assume that $\hat{\phi}(\mathbf{X}^{\unc}_t)$ is positive-definite, \ie, for any vector $\mathbf{v}\in\Rl^{d_\phi}$, $\mathbf{v}^\top\hat{\phi}(\mathbf{X}^{\unc}_t)\mathbf{v}>0$. This, however, is infeasible in our setting, since $n^{\unc}$ is small and, hence, $n^{\unc}<d_\phi$. Therefore, there exists collinearity in features. The compatibility condition is a strictly weaker condition, since, for instance, it allows for collinearity in features outside of the index set\footnote{An index set $S$ is a set $S\subseteq\{1, \ldots,d\}$. Then, for any vector $\mathbf{v}\in\Rl^d$, the vector $\mathbf{v}_S\in\Rl^d$ is obtained by setting the entries in $\mathbf{v}$ which are not in $S$ to zero. Furthermore, $S^c$ denotes the complement of the index set.} $S_t=\text{supp}(\boldsymbol{\delta}_t)$\footnote{The support of a vector $\mathbf{v}\in\Rl^d$, denoted as supp$(\mathbf{v})\subseteq\{1, \ldots, d\}$, is the set of indices corresponding to nonzero entries of $\mathbf{v}$.}, which is often the case in a high-dimensional setting such as ours.

We assume that the following compatibility condition holds for the (unconfounded) sample covariance matrix, $\boldsymbol{\Sigma}^{^{\unc}}_t(\hat{\phi})$, and the index set $S_t=\text{supp}(\boldsymbol{\delta}_t)$, for each $t\in\{0,1\}$.

\textbf{Compatibility Condition.} The compatibility condition is satisfied if for an index set $S\subseteq\{1, \ldots, d\}$ and a matrix $\boldsymbol{\Sigma}\in \Rl^{d\times d}$, there exists $\rho>0$ such that, for all $\mathbf{v}\in\Rl^d$ satisfying $\lVert \mathbf{v}_{S^c}\rVert_1 \leq 3 \lVert \mathbf{v}_{S} \rVert_1$, it holds that
\begin{equation}\label{eq:comp_cond}
    \lVert \mathbf{v}_S\rVert^2_1 \leq \frac{\lvert S\rvert}{\rho^2}\mathbf{v}^\top \mathbf{v}.
\end{equation}
This assumption is standard in the high-dimensional statistics literature to ensure the convergence of high-dimensional estimators such as the Dantzig selector or Lasso \citep{candes2007dantzig, bickel2009simultaneous, buhlmann2011statistics}.

Note that we make this assumption for the bias function $\boldsymbol{\delta}_t$ and not the unconfounded hypothesis $\mathbf{w}_t^u$ directly. We do so, since we only need to estimate the bias vector $\boldsymbol{\delta}_t$ in the second step of our algorithm. In particular, if the bias is sparse enough, \ie, $\lVert \boldsymbol{\delta}_t\rVert_1$ is small, the compatibility condition in \labelcref{eq:comp_cond} is satisfied.

Furthermore, the compatibility condition is always satisfied if $\boldsymbol{\Sigma}^{^{\unc}}_t({\hat{\phi}})$ is positive-definite. This can be seen by letting $\psi>0$ the minimum eigenvalue of $\boldsymbol{\Sigma}^{^{\unc}}_t({\hat{\phi}})$. Then, the compatibility condition is satisfied with the constant $\rho_0=\sqrt{\psi}$ for an index set. Hence, the compatibility condition is strictly weaker than positive definiteness. This is particularly relevant, since positive definiteness for the unconfounded covariance matrix might not hold true due to the small samples size of the randomized data. As such, even if $\mathbf{w}^u_t$ is not identifiable, we may be able to identify the bias function $\boldsymbol{\delta}_t$.


\subsubsection{Balancing Representation via Augmented Distribution Alignment}\label{sec:practical_considerations}
In Algorithm~\labelcref{alg:CORNet}, we regularize for the distributional discrepancy between the covariate distributions in the representation space, $\mathcal{D}(\{\phi(x_i^{\conf})\}_{i=1}^{n^{\conf}}, \{\phi(x_i^{\unc})\}_{i=1}^{n^{\unc}})$. We do this, since the distributional discrepancy is a driving factor in the learning bound from \Cref{thm_bound}. Regularizing for the distributional discrepancy, so-called balancing, is not straightforward in our setting, since the small sample size of the randomized data may cause instability in optimization and leads to performance degradation. While several approaches for balancing exist from the fields of domain adaptation \citep{ganin2015unsupervised,ganin2016domain} and causal inference \citep[\eg,][]{Shalit2017a, zhang2020}, these approaches assume large sample sizes and, therefore, are not directly applicable in our setting.

As a remedy, we adopt an adversarial distribution alignment approach using augmented data \citep{wang2019semi}. For this, \citet{wang2019semi} propose to iteratively generate new samples by interpolating between the observational and the randomized samples similar to the \textit{mixup} approach for supervised learning \citep{zhang2017mixup}. This has two advantages: First, the new samples enlarge the training data set, which leads to a more stable optimization (particularly for neural networks). Second, since the new samples are generated by interpolating between observational and randomized samples, the distribution of the interpolated samples is expected to be closer to the observational distribution as proven in \citet{wang2019semi}. We use these new samples to augment the existing randomized samples and, based on this, minimize the distributional discrepancy using adversarial learning.

Mathematically, let $x^{\unc}$ and $x^{\conf}$ be observational and randomized covariate samples. Then, we sample from the following density function
\begin{equation}\label{eq:mixup_distribution}
    p_\mu(x\mid x^{\unc}, x^{\conf}) = \mathbf{1}_{\{\mu\, x^{\unc} + (1-\mu)\, x^{\conf}\}}(x),
\end{equation}
where $\mathbf{1}_{\{\cdot\}}(\cdot)$ is the indicator function and $\mu\sim$Beta($\alpha$, $\alpha$), with $\alpha>0$. Sampling from this density is equivalent to linearly interpolating between $x^{\unc}$ and $x^{\conf}$ via
\begin{align}
	\tilde{x} = \mu\, x^{\unc} + (1-\mu)\, x^{\conf},
\end{align}
where, for each sample $x^{\conf}$, a sample $x^{\unc}$ is randomly chosen from all randomized samples and $\mu\sim$ Beta($\alpha$, $\alpha$) determines the degree of interpolation. The interpolated sample is denoted by $\tilde{x}$.


Based on the samples from the above density, $\{\tilde{x}_i\}_{i=1}^m$, for large enough $m$, we balance the empirical distribution of $\tilde{\mathcal{U}} = \{\phi(\tilde{x}_i)\}_{i=1}^m$ and $\mathcal{U}^{\conf}=\{\phi(x_i^{\conf})\}_{i=1}^{n^{\conf}}$. Here, both $m$ and $n^{\conf}$ are large.\footnote{In the experiments, $m$ is set to $n^{\conf}$.} In order to measure the distributional discrepancy, we use the $H$-divergence between interpolated and confounded samples as inspired by domain adaptation \citep{ben2010theory}. We optimize the $H$-divergence, $\hat{d}_H(\tilde{\mathcal{U}}, \mathcal{U}^{\conf})$, using adversarial learning and a gradient reverse layer~(GRL) \citep{ganin2015unsupervised}.\footnote{For details on the optimization of the $H$-divergence, see \Cref{apx:h_divergence_details}.} This renders step 1 in Algorithm~\labelcref{alg:CORNet} as
\begin{align}
    \argmin_{\phi\in\RC, \mathbf{w}^c\in{\Rl^{d_\phi\otimes 2}}} \frac1{n^{\conf}}\sm i {n^{\conf}} (\phi(x^{\conf}_i)\mathbf{w}^c_{t_i^{\conf}} - y^{\conf}_i)^2
		+\lambda_d\, \hat{d}_H(\tilde{\mathcal{U}}, \mathcal{U}^{\conf}),
\end{align}
where $\lambda_d$ trades off the predictive accuracy and the distributional discrepancy. Therefore, augmented distribution alignment can be easily incorporated by appending a GRL and adding the proposed interpolated samples during the optimization.

\section{Experiments}\label{sec:experiments}
In this section, we perform extensive experiments which demonstrate the superior performance of our CorNet. For this, we conduct simulation studies in which we verify the theoretical properties of CorNet (see \Cref{sec:sim_study}) and experiments using real-world data in which we compare the estimation performance against a variety of baselines (see \Cref{sec:real_world_data}). During this section, we refer to the estimator yield by the unregularized variant of CorNet (\ie, $\lambda_d=\lambda_\delta=0$) by $\tau_{\textup{CorNet}}$ and to its regularized variant (\ie, $\lambda_d,\lambda_\delta>0$) by $\tau_{\textup{CorNet}^+}$.

\subsection{Simulation Studies: Empirical Verification of Theoretical Properties}\label{sec:sim_study}
Here, we empirically verify the theoretical properties of our procedure derived in \Cref{sec:error_bounds}. For this, we use simulated data, since this allows to manipulate the data-generating processes. In particular, we need to manipulate the data-generating process in order to study the impact of different factors as in \Cref{sec:error_bounds}. Note that for this, we use the \emph{unregularized} variant, $\tau_{\textup{CorNet}}$, since regularizations may avoid observing the effect of certain factors.

We conduct the following three simulation studies. In \Cref{sec::sim_learning_bounds}, we empirically  the finite sample properties derived in \Cref{thm_bound}, \ie, the impact of the size of observational data, the distributional discrepancy, and the complexity of the bias on the error of our algorithm. In \Cref{sec:::sim_condition}, we empirically verify the conditions derived in \Cref{prop:condition} for when our algorithm should be used and when is should not. In \Cref{sec:::sim_assum_violations}, we study the robustness of our algorithm against alterations of our setup (\ie, the representation is not shared, the observational data is unconfounded, and no population overlap). We find that our algorithm yields robust results even when the underlying setup is altered.

\subsubsection{Finite Sample Properties}\label{sec::sim_learning_bounds}
Here, we empirically investigate the finite sample properties of our algorithm CorNet. As we discussed in \Cref{sec::main_result}, there are three driving factors of interest in the finite sample error bound: (i)~the size of observational data, $n^{\conf}$, (ii)~the distributional discrepancy between observational and randomized data, $\mathit{d}_\infty(p^{\conf}_\phi\mid p^{\unc}_\phi)$, and (iii)~the complexity of the bias, $\mathcal{C}_{\mathcal{B}}$. In particular, we have seen that from a theoretical point of view: more observational data is beneficial and large distributional discrepancy and large complexity of the bias function are  not beneficial for the error of our algorithm. Therefore, we conduct three simulation studies to show the impact of these three factors on the finite sample error. Note that, in order to control the bias due to unobserved confounding, we do not introduce an unobserved confounder that affects treatment assignment and outcome. Instead, we directly manipulate the data-generating processes of the randomized and observational outcomes, $Y^{\unc}, Y^{\conf}$. This is permitted, since the unobserved confounding leads to a difference in $Y^{\unc}, Y^{\conf}$ (see \Cref{sec::dgp}), which gives direct access to controlling the bias.

We generate observational and randomized data following the data-generating process from \Cref{sec::dgp}. In particular, let $X^{\unc} \sim \mathcal{N}(\mathbf{0}, \sigma_u^2\mathbf{1})$ and $X^{\conf} \sim \mathcal{N}(\mathbf{0}, \mathbf{1})$, where $\sigma_u^2$ controls the distributional discrepancy between the observational and randomized data. Furthermore, let the treatment assignment\footnote{Note that $T^{\conf}$ does not depend on the covariates $X^{\conf}$, since the impact of selection bias (\ie, the covariate shift between treatment and control group) is orthogonal to this work. See discussion in \Cref{rmk:selection_bias} and \Cref{apx:selection_bias} for details.}$^{,}$\footnote{Note that we do not introduce bias due to unobserved confounding by conditioning $T^{\conf}$ on the potential outcomes or an unobserved confounder. Instead we introduced bias by directly manipulating the DGP of $Y^{\unc}, Y^{\conf}$ (see discussion at beginning of \Cref{sec:sim_study}).} be $T^{\unc}, T^{\conf}\sim \text{Bernoulli}(\frac{1}{2})$ and the outcomes be 
\begin{align}\label{eq:informal_bound}
    Y^{\conf} \sim \phi^\ast(X^{\conf})\,\mathbf{w}_{T^{\conf}}^c + \epsilon,\\
    Y^{\unc} \sim \phi^\ast(X^{\unc})\,\mathbf{w}_{T^{\unc}}^u + \epsilon,
\end{align}
where $\epsilon\sim\mathcal{N}(0, \sigma_\epsilon^2)$. We use randomly chosen weights for $\phi^\ast$ and $\mathbf{w}_t^c$ and control the bias between observational and randomized data via $\mathbf{w}_t^u = \mathbf{w}_t^c + \boldsymbol{\delta}_t$, by changing $\lVert \boldsymbol{\delta}_t\rVert = \beta$.
\begin{figure}
	\centering 
	\scalebox{0.55}{\includegraphics{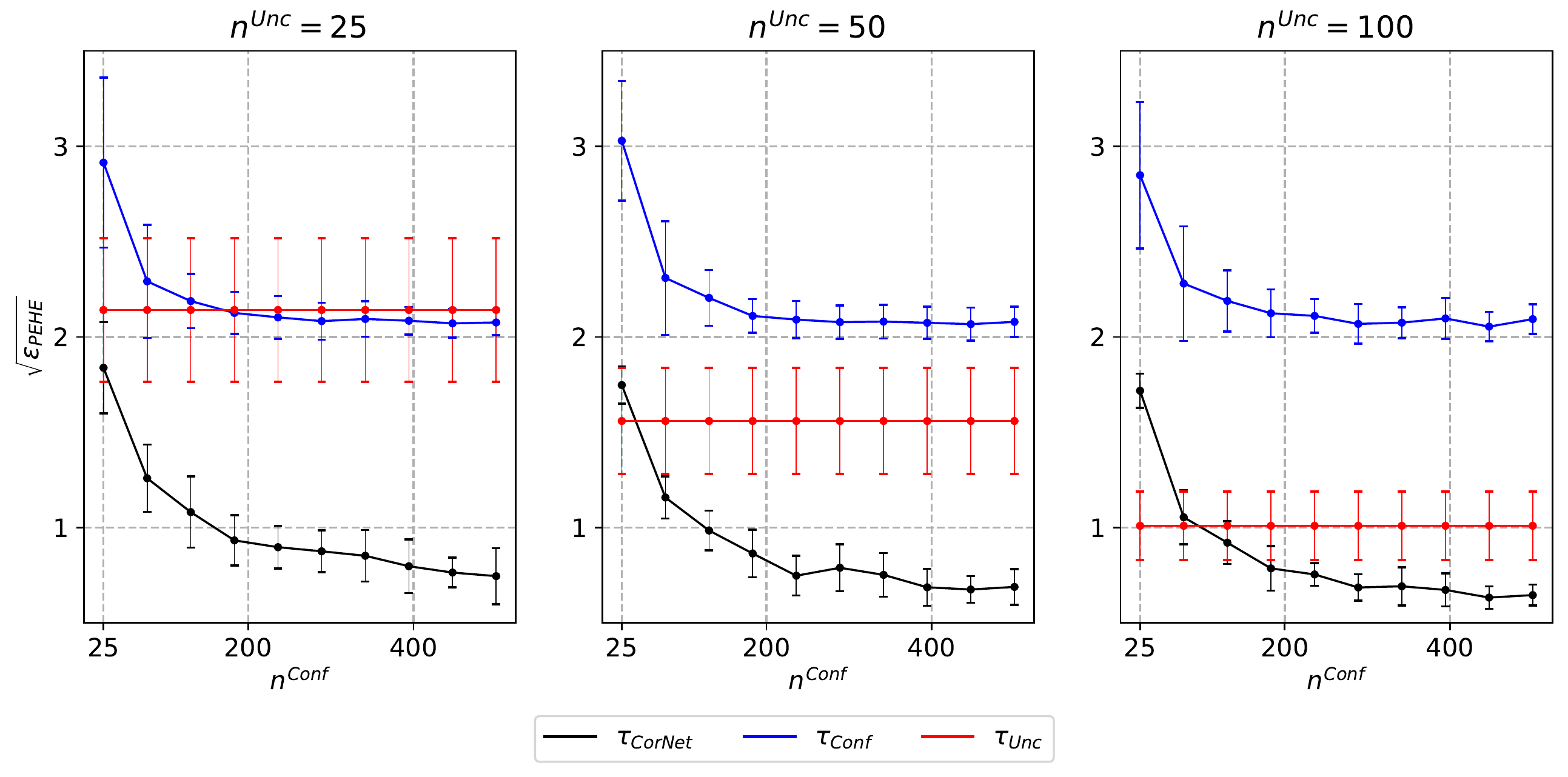}}
    \caption{\footnotesize Simulation study on the impact of the size of observational data on the error, $\sqrt{\epsilon_{\textup{PEHE}}}$. We compare our algorithm, $\tau_{\textup{CorNet}}$, against the algorithm that only uses randomized data, $\tau_{\unc}$, and the algorithm that only uses observational data, $\tau_{\conf}$. The size of observational data is increased ($x$-axis). All other parameters are fixed. The mean and standard deviation (over 10 samples of the data-generating process) are shown. We observe that, for $\tau_{\textup{CorNet}}$, $\sqrt{\epsilon_{\textup{PEHE}}}$ decreases in the number of observational samples, \ie, $n^{\conf}$.}\label{fig:sim_n_conf}
\end{figure}
\begin{enumerate}
    \item[\textbf{(i)}]\textbf{Impact of size of observational data.} Here, we study the impact of the size of observational data on the finite sample error of our algorithm. In order to isolate the impact of the observational data, we fix all parameters and vary $n^{\conf}$. In particular, we set $n^{\unc} \in \{25, 50, 100\}$ and $d_\infty(p^{\conf}\mid p^{\unc}) = 1$ (\ie, $\sigma_u^2=1$). Although we are interested in the error across different values for $n^{\conf}$, we show the result across difference values for $n^{\unc}$ as a sensitivity check. The parameter $\beta$ is chosen such that the bias in the observational data is constant across the different settings, \ie, $\sqrt{\epsilon_{\textup{PEHE}}(\tau_{\conf})} = 2$.

    In \Cref{fig:sim_n_conf}, we present the error of our method (\ie, $\tau_{\textup{CorNet}}$), the estimator which only uses randomized data (\ie, $\tau_{\unc}$), and the estimator which only uses observational data (\ie, $\tau_{\conf}$) for increasing $n^{\conf}$. We are interest in whether more observational data (\ie, larger $n^{\conf}$) is beneficial for our algorithm.

    We make the following three observations. (1)~The insight derived from the result in \Cref{thm_bound} holds true empirically: using observational data is beneficial, since the error decreases the more observational data is used (\ie, larger $n^{\conf}$). This is not obvious, since the observational data is biased. (2)~Compared to $\tau_{\unc}$, our method achieves smaller error for large enough $n^{\conf}$. Moreover, the standard deviation of our method, $\tau_{\textup{CorNet}}$, is much smaller than the standard deviation of $\tau_{\unc}$. This is because the sample size of the randomized data (\ie, $n^{\unc}$) is small. Our method achieves much smaller standard deviation, since we incorporate observational data, which is available in larger quantities and, hence, reduce the variance. If the sample size of the randomized data is increased, for instance, from the far left plot (with $n^{\unc}=25$) to the far right plot (with $n^{\unc}=100$), the benefit of observational data remains. However, the error of $\tau_{\unc}$ decreases (due to more data) and, in order to outperform $\tau_{\unc}$, our method requires more observational data. Hence, since observational data is usually available in large quantities, it remains beneficial to combine observational and randomized data, even when more randomized data is acquired. However, this effect may diminish at some point when the size of randomized data becomes very large. (3)~Compared to $\tau_{\conf}$, our method achieves smaller error independent of $n^{\conf}$. Moreover, the error of $\tau_{\conf}$ stagnates at $\sqrt{\epsilon_{\textup{PEHE}}(\tau_{\conf})} = 2$, which is exactly the square root of the bias in the observational data (see earlier in this section). If the sample size of the randomized data is increased (far left plot to far right plot), the error of $\tau_{\conf}$ does not change, since $\tau_{\conf}$ ignores any randomized data.

\item[\textbf{(ii)}]\textbf{Impact of distributional discrepancy.} Here, we study the impact of the distributional discrepancy, $d_\infty(p^{\conf}\mid p^{\unc})$, on the finite sample error. Such a setting arises in practice when the covariate distribution of the subjects in the RCT does not fit the covariate distribution of the subject in the target population. For this, again, we fix all parameters (\ie, $n^{\unc}=50$ and $\beta$ such that $\sqrt{\epsilon_{\textup{PEHE}}(\tau_{\conf})} = 2$). Then, we compare the error in the case in which there is no distributional discrepancy (\ie, $d_\infty(p^{\conf}\mid p^{\unc})=1$) to the case in which there is distributional discrepancy (\ie, $d_\infty(p^{\conf}\mid p^{\unc})>1$). For the former case, we set $\sigma_u=1$, \ie, $X^{\unc}\sim\mathcal{N}(0, 1\cdot\mathbf{1})$, and, for the latter case, we set $\sigma_u=0.1$, \ie, $X^{\unc}\sim\mathcal{N}(0, 0.1^2\cdot\mathbf{1})$.

The results are presented in \Cref{sim_res_dist}. We make the following three observations. (1)~If $d_\infty(p^{\conf}\mid p^{\unc}) > 1$ (\ie, $\sigma_u=0.1$), the error converges slower for our algorithm compared to when $d_\infty(p^{\conf}\mid p^{\unc}) = 1$. This is expected from the finite sample error bound, as the distributional discrepancy is a driving factor in \Cref{thm_bound}. Hence, as our theory suggests, we observe that there is a negative impact of the distributional discrepancy between observational and randomized covariate distributions on the error. (2)~If $d_\infty(p^{\conf}\mid p^{\unc}) > 1$, the performance of the estimator which only uses randomized data, $\tau_{\unc}$, worsens. This is expected from the finite sample error bound of $\tau_{\unc}$ (see \Cref{lemma:unconfounded_gen_bnd}), in which the distributional discrepancy is a driving factor. In particular, we observe that the impact of distributional discrepancy is more pronounced for $\tau_{\unc}$ than for our algorithm, $\tau_{\textup{CorNet}}$ (the error of $\tau_{\unc}$ worsens more than the error of $\tau_{\textup{CorNet}}$). This is because the distributional discrepancy impact the whole finite sample error bound of $\tau_{\unc}$ (see \Cref{lemma:unconfounded_gen_bnd}), but only parts of the finite sample error bound of $\tau_{\textup{CorNet}}$ (see \Cref{thm_bound}). (3)~The error of $\tau_{\conf}$ is not affected by the distributional discrepancy. Again, this is suggested by our theory (see \Cref{thm_gen_bound_conf}), since the term $d_\infty(p^{\conf}\mid p^{\unc})$ does not occur in the finite sample error bound.
\begin{figure}
	\centering 
	\scalebox{0.5}{\includegraphics{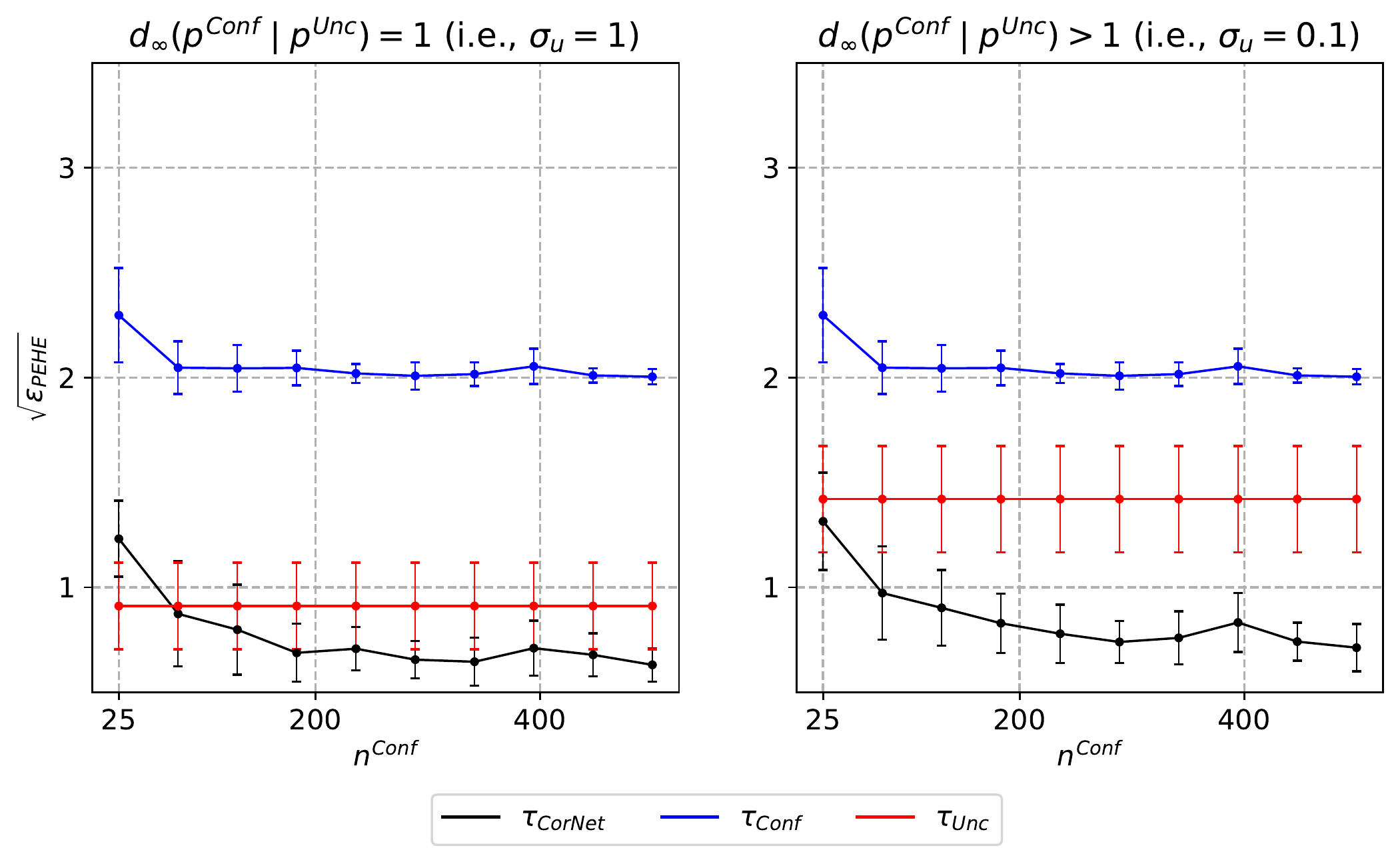}}
	\caption{\footnotesize Simulation study on the impact of the distributional discrepancy on the error, $\sqrt{\epsilon_{\textup{PEHE}}}$ (left plot: no discrepancy, right plot: discrepancy). We compare our algorithm, $\tau_{\textup{CorNet}}$, against the algorithm that only uses randomized data, $\tau_{\unc}$, and the algorithm that only uses observational data, $\tau_{\conf}$. The size of observational data is increased ($x$-axis). All other parameters are fixed. The mean and standard deviation (over 10 samples of the data-generating process) are shown. We observe that, for $\tau_{\textup{CorNet}}$ and $\tau_{\unc}$, the distributional discrepancy negatively impacts the convergence of $\sqrt{\epsilon_{\textup{PEHE}}}$.}\label{sim_res_dist}
\end{figure}
\item[\textbf{(iii)}]\textbf{Impact of the complexity of bias.} Finally, we study the impact of the complexity of the bias function, $\mathcal{C}_{\mathcal{B}}$, on the finite sample error. For this, we fix all parameters (\ie, $n^{\unc}=50$ and $\mathit{d}_\infty(p^{\conf}\mid p^{\unc})=1$, \ie, no distributional discrepancy). Then, we compare the error in the case in which the complexity is small (\ie, $\mathcal{C}_{\mathcal{B}}$ small) to the case in which the complexity is large (\ie, $\mathcal{C}_{\mathcal{B}}$ large). The complexity of the bias function is measured in terms of its Gaussian complexity and, hence, can be controlled by varying $\lVert \boldsymbol{\delta}_t\rVert=\beta$ (\ie, larger $\beta$ corresponds to larger Gaussian complexity).

The results are presented in \Cref{fig:sim_res_bias}. We make the following three observations. (1)~The error of our algorithm, $\tau_{\textup{CorNet}}$, increases in the complexity of the bias function (from the left to the right plot). This is in line with our theoretical results in \Cref{thm_bound}, in which the complexity of the bias function, $\mathcal{C}_{\mathcal{B}}$, is one of the driving factors. Moreover, the convergence of the error is substantially slower the more complex the bias function is. In particular, for large bias complexity, the error remains substantially larger than in the case with smaller bias complexity. This observation is, again, expected by our finite sample error bound in \Cref{thm_bound}. For both settings, our method improves upon the estimators $\tau_{\unc}$ and $\tau_{\conf}$. (2)~The estimator which only uses randomized data, \ie, $\tau_{\unc}$, is not impacted by the complexity of the bias, since it is trained on only randomized (\ie, unbiased) data. It, therefore, remains identical across the two plots. (3)~The estimator which only uses observational data, \ie, $\tau_{\conf}$, achieves smaller error in the case in which the bias complexity is small compared to the case in which bias complexity is large. This is due to the fact that, when the complexity of the bias is increased (by scaling $\beta$), the bias itself also becomes larger.
\begin{figure}
	\centering 
	\scalebox{0.5}{\includegraphics{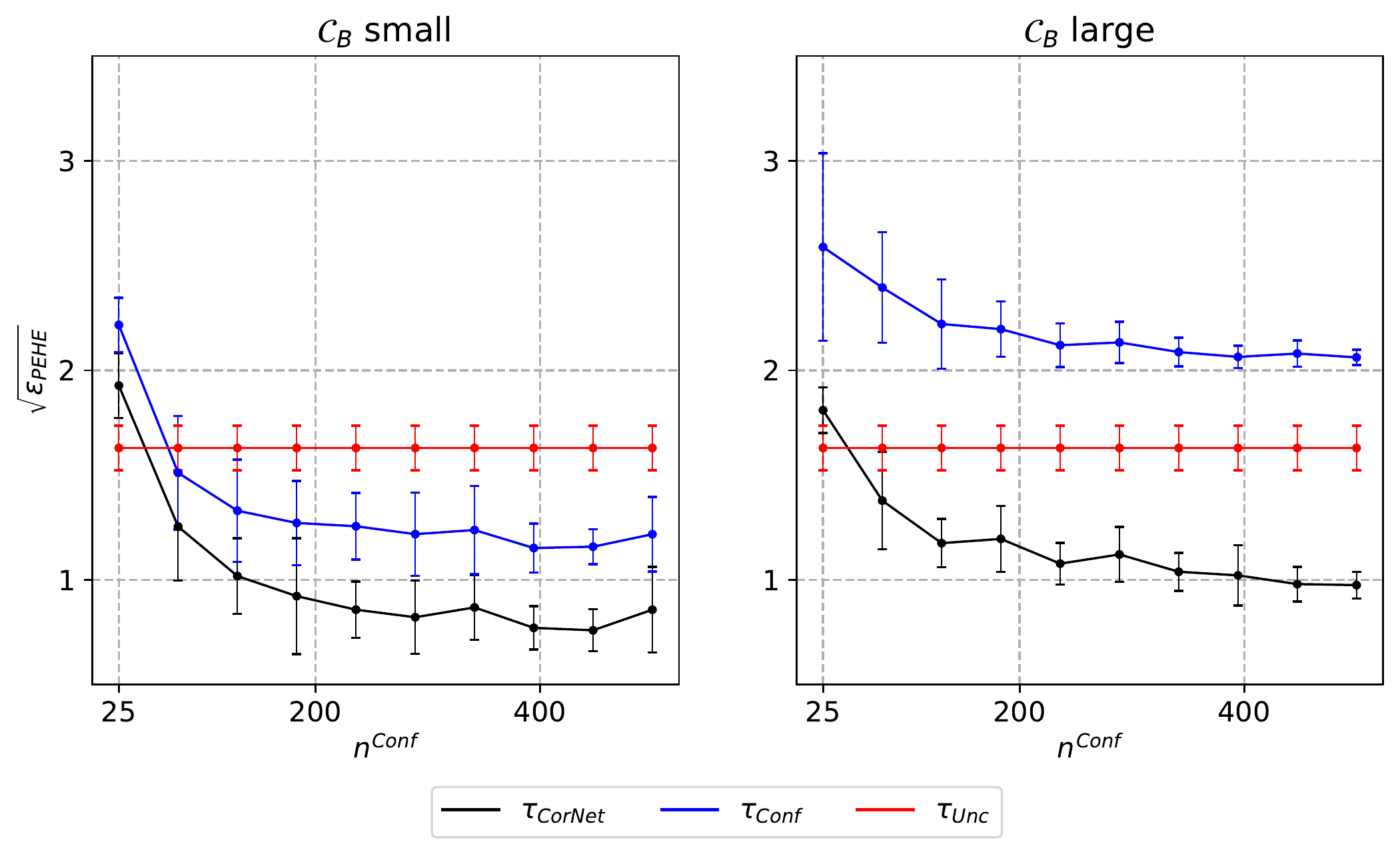}}
	\caption{\footnotesize Simulation study on the impact of the complexity of the bias function on the error, $\sqrt{\epsilon_{\textup{PEHE}}}$ (left plot: small complexity, right plot: large complexity). We compare our algorithm, $\tau_{\textup{CorNet}}$, against the algorithm that only uses randomized data, $\tau_{\unc}$, and the algorithm that only uses observational data, $\tau_{\conf}$. The size of observational data is increased ($x$-axis). All other parameters are fixed. The mean and standard deviation (over 10 samples of the data-generating process) are shown. We observe that the complexity of the bias negatively impacts the convergence of $\sqrt{\epsilon_{\textup{PEHE}}}$.}\label{fig:sim_res_bias}
\end{figure}
\end{enumerate}
\subsubsection{Improvement Conditions for our Algorithm}\label{sec:::sim_condition}
In this section, we empirically verify the theoretical conditions derived in Proposition~\labelcref{prop:condition} in \Cref{sec:::comparison_baselines} for when our method, $\tau_{\textup{CorNet}}$, should be used over $\tau_{\unc}$ and $\tau_{\conf}$ and when it should not. For this, we compare $\epsilon_{\textup{PEHE}}(\tau_{\textup{CorNet}})$ against $\epsilon_{\textup{PEHE}}(\tau_{\unc})$ and $\epsilon_{\textup{PEHE}}(\tau_{\textup{CorNet}})$ against $\epsilon_{\textup{PEHE}}(\tau_{\conf})$ across different settings. 

For the comparison against $\epsilon_{\textup{PEHE}}(\tau_{\unc})$, we are interested in when our method is more beneficial than $\hat{\tau}_{\unc}$ and how the size of the observational data, the distributional discrepancy, and the complexity of the bias affect this. For this, we compare across three settings: (1)~no distributional discrepancy and small bias complexity (\ie, $d_\infty(p^{\conf}\mid p^{\unc}) = 1$, $\mathcal{C}_{\mathcal{B}}$ small), (2)~distributional discrepancy and small bias complexity (\ie, $d_\infty(p^{\conf}\mid p^{\unc}) > 1$, $\mathcal{C}_{\mathcal{B}}$ small), and (3)~no distributional discrepancy and large bias complexity (\ie, $d_\infty(p^{\conf}\mid p^{\unc}) = 1$, $\mathcal{C}_{\mathcal{B}}$ large). The distributional discrepancy and the complexity of the bias are controlled similarly as in the simulation studies conducted earlier in this section.


In \Cref{fig:sim_res_cond}, we present the results. We note that we observe empirically what we found theoretically in \Cref{sec:::comparison_baselines}. We make the following three observations. First, the larger the size of the observational data (and, therefore, the larger the ratio $\sqrt{\frac{n^{\conf}}{n^{\unc}}}$), the more beneficial (in terms of error) our method is. We can observe this in \Cref{fig:sim_res_cond} (top), where the vertical dashed line indicates the point when our method, $\tau_{\textup{CorNet}}$, improve upon $\tau_{\unc}$. Second, the larger the distributional discrepancy, the more beneficial our method is. We can observe this in \Cref{fig:sim_res_cond} (middle), where the vertical dashed line moves to the left compared to the plot in the first row (\ie, less data required). Third, the larger the complexity of the bias function, the more observational data our algorithm needs to offset its negative effect and to be beneficial over $\tau_{\unc}$. We can observe this in \Cref{fig:sim_res_cond} (bottom), where the vertical dashed line moves to the right (\ie, more data required).
\begin{figure}[ht]
	\centering 
	\scalebox{0.85}{\includegraphics{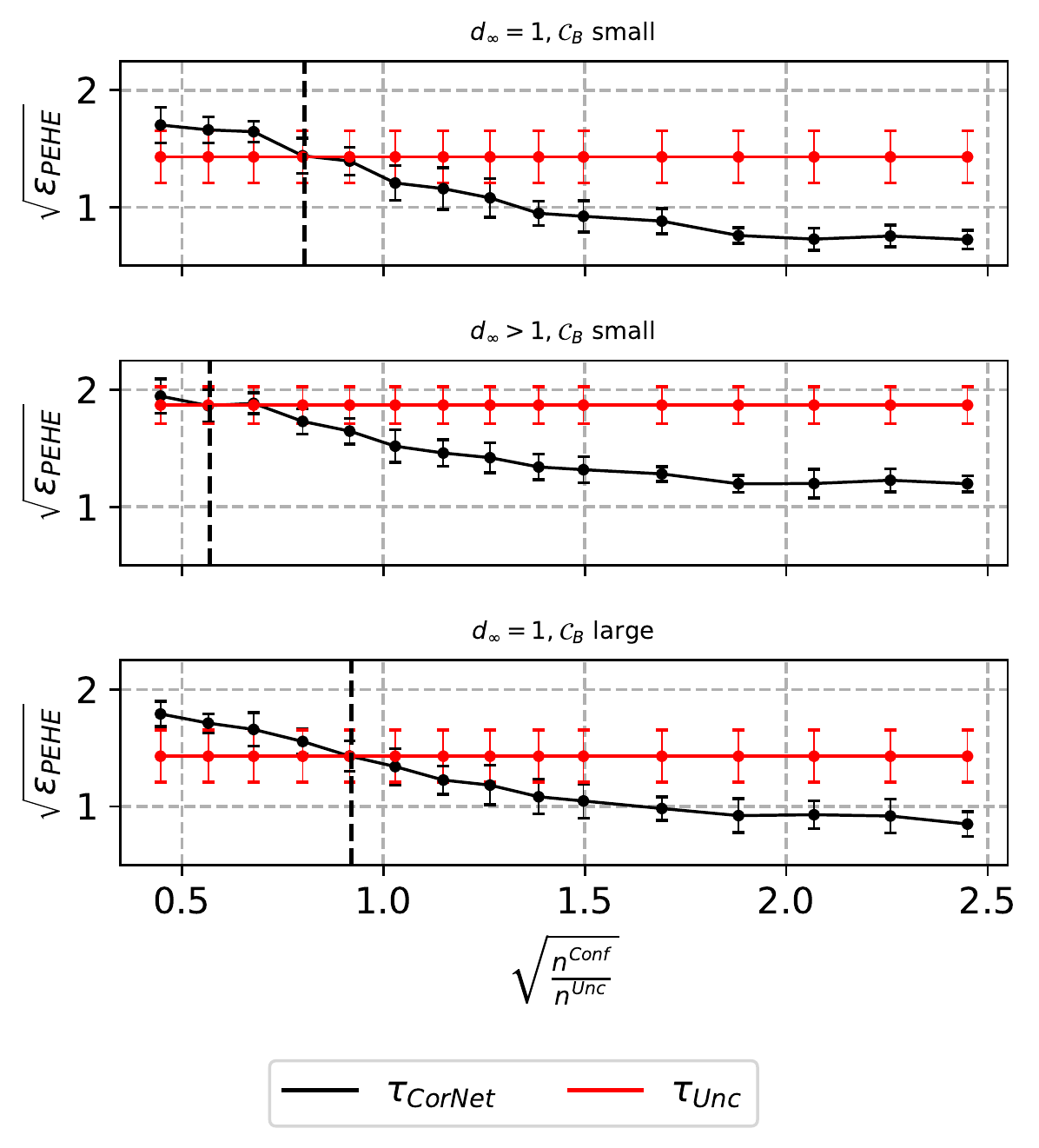}}
	\caption{\footnotesize Simulation study on the conditions for when our algorithm, $\tau_{\textup{CorNet}}$, is beneficial over the algorithm which only uses randomized data, $\tau_{\unc}$. Top:~$d_\infty(p^{\conf}\mid p^{\unc}) = 1$ and $\mathcal{C}_{\mathcal{B}}$ small. Middle:~$d_\infty(p^{\conf}\mid p^{\unc}) > 1$ and $\mathcal{C}_{\mathcal{B}}$ small. Bottom:~$d_\infty(p^{\conf}\mid p^{\unc}) = 1$ and $\mathcal{C}_{\mathcal{B}}$ large. The ratio $\sqrt{\frac{n^{\conf}}{n^{\unc}}}$ is increased. All other parameters are fixed. We observe that: (1)~the larger the size of the observational data, the more beneficial our algorithm becomes. (2)~The larger the distributional discrepancy, the fewer observational samples are needed to improve upon $\tau_{\unc}$. (3)~The larger the bias, the more observational data is required to improve upon $\tau_{\unc}$.}\label{fig:sim_res_cond}
\end{figure}

For the comparison against $\epsilon_{\textup{PEHE}}(\tau_{\conf})$, we are interested in when our method, $\tau_{\textup{CorNet}}$, is more beneficial than $\tau_{\conf}$ and how the size of the randomized data, the distributional discrepancy, and the bias affect this. Similar to before, we compare across three settings: (1)~no distributional discrepancy and small bias (\ie, $d_\infty(p^{\conf}\mid p^{\unc}) = 1$, $\Delta=1$), (2)~distributional discrepancy and small bias (\ie, $d_\infty(p^{\conf}\mid p^{\unc}) > 1$, $\Delta=1$), and (3)~no distributional discrepancy and larger bias (\ie, $d_\infty(p^{\conf}\mid p^{\unc}) = 1$, $\Delta = 1.5$). The distributional discrepancy and the bias are controlled similarly as in the simulation studies conducted earlier in this section.

In \Cref{fig:sim_res_cond_conf}, we present the results. We note that we again observe empirically what we found theoretically in \Cref{sec:::comparison_baselines}. First, the larger the size of the randomized data (and, therefore, the smaller the ratio $\sqrt{1/n^{\unc}}$), the more beneficial (in terms of error) our method becomes. We can observe this in \Cref{fig:sim_res_cond_conf} (top), where the vertical dashed line indicates the point when our method, $\tau_{\textup{CorNet}}$, improves upon $\tau_{\conf}$. Second, the larger the distributional discrepancy, the more randomized data is required to offset the negative effect from the distributional discrepancy and to be beneficial over $\tau_{\conf}$. We can observe this in the \Cref{fig:sim_res_cond_conf} (middle), where the vertical dashed line moves to the right compared to the plot at the top (\ie, more data required). Third, the larger the bias, the more beneficial our algorithm becomes over $\tau_{\conf}$. We can observe this in \Cref{fig:sim_res_cond_conf} (bottom), where the vertical dashed line moves to the left (\ie, less data required).
\begin{figure}[ht]
	\centering 
	\scalebox{0.85}{\includegraphics{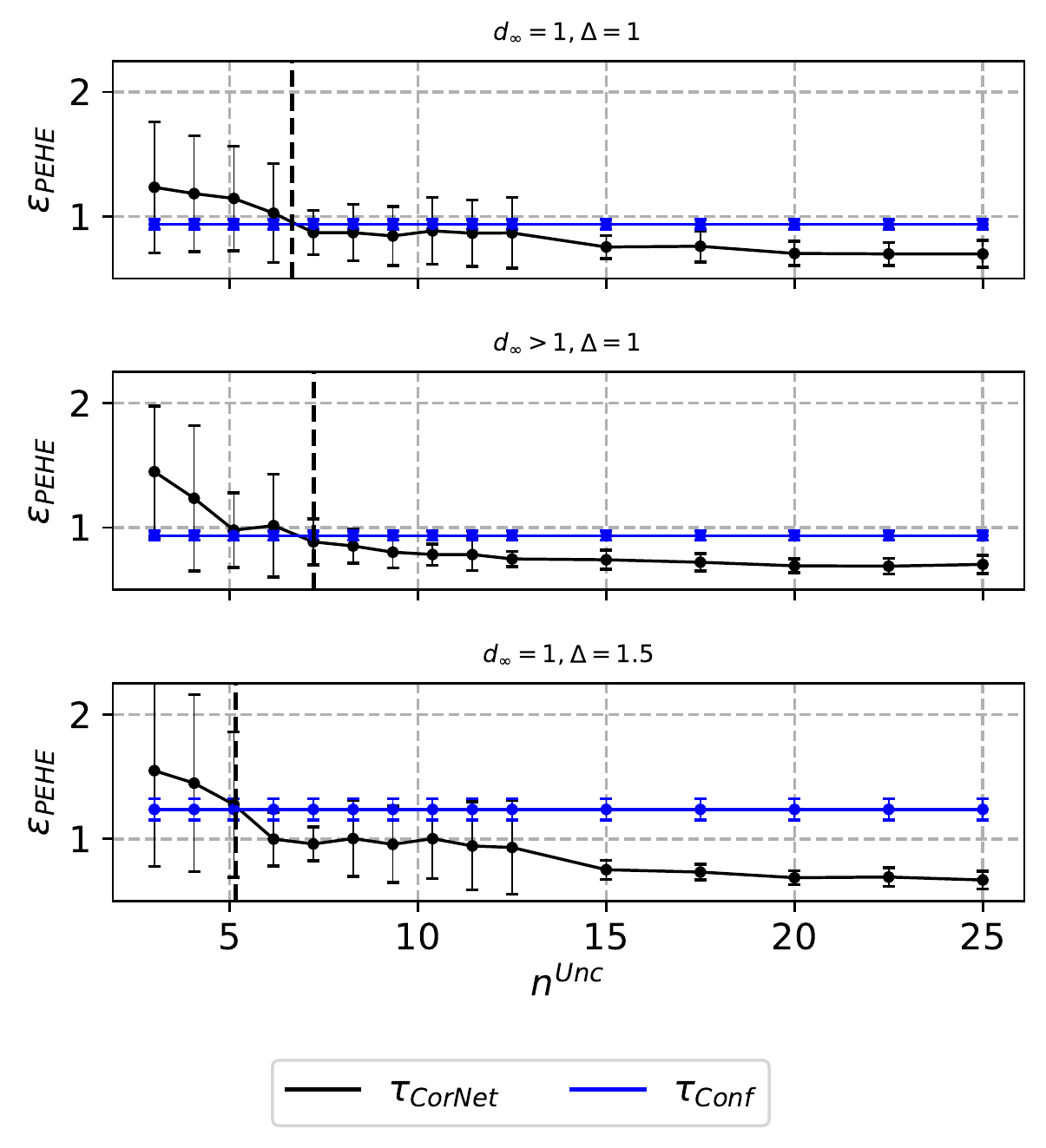}}
	\caption{\footnotesize Simulation study on the conditions for when our algorithm, $\tau_{\textup{CorNet}}$, is beneficial over the algorithm which only uses observational data, $\tau_{\conf}$. Top:~$d_\infty(p^{\conf}\mid p^{\unc}) = 1$ and $\Delta=1$ Middle:~$d_\infty(p^{\conf}\mid p^{\unc}) > 1$ and $\Delta=1$. Bottom:~$d_\infty(p^{\conf}\mid p^{\unc}) = 1$ and $\Delta=1.5$. The number of randomized samples $n^{\unc}$ is increased. All other parameters are fixed. We observe that: (1)~the larger the size of the randomized data, the more beneficial our algorithm becomes (2)~The larger the distributional discrepancy, the more randomized samples are needed to improve upon $\tau_{\conf}$. (3)~The larger the bias, the fewer randomized data is required to improve upon $\tau_{\conf}$.}\label{fig:sim_res_cond_conf}
\end{figure}

\subsubsection{Robustness against alterations of our setup}\label{sec:::sim_assum_violations}
In this section, we study the robustness of our algorithm against alteration of the underlying setup. In particular, we investigate three different settings, in which our setup is altered: (i)~the representation between observational and randomized data is \emph{not} shared, (ii)~the observational data is unconfounded, and (iii)~the observational and randomized covariate distributions do not fully overlap.

\textbf{(i)~No shared representation:} Here, we generate a observational and randomized dataset, whose data-generating processes do \emph{not} possess a shared representation.

For this, we generate observational and randomized samples as follows: $X^{\conf}, X^{\unc} \sim \mathcal{N}(\boldsymbol{0}, \mathbf{1})$, $T^{\conf}, T^{\unc}\sim \text{Bernoulli}(\frac{1}{2})$. The outcomes are sampled from two different distributions, with two different representations $\phi^c$ and $\phi^u$:
\begin{align}
    Y^{\conf} \sim \phi^c(X^{\conf})\,\mathbf{w}_{T^{\conf}}^c + \epsilon, \\
    Y^{\unc} \sim \phi^u(X^{\unc})\,\mathbf{w}_{T^{\unc}}^u + \epsilon,
\end{align}
where $\epsilon\sim\mathcal{N}(0, \sigma_\epsilon^2)$. If both observational and randomized data share the representation, then $\phi^\ast = \phi^c = \phi^u$. We break the shared representation assumption by choosing $\phi^c \neq \phi^u$. In particular, we use randomly chosen weights for $\phi^c$. Then, we vary by how much $\phi^u$ differs from $\phi^c$ using $\lVert \mathbf{W}^u_K - \mathbf{W}^c_K\rVert_{1, \infty} = \beta_\phi$, where $\mathbf{W}^u_K$ and $\mathbf{W}^c_K$ are the last weight matrices of the corresponding representations. For $\beta_\phi=0$, the representation is shared, \ie, $\phi^\ast = \phi^c = \phi^u$ and the larger $\beta_\phi$, the larger the difference between $\phi^c$ and $\phi^u$. We sample three different settings for $\beta_\phi$: (1)~$\beta_\phi=0$, \ie, the representation is shared, (2)~$\beta_\phi$ small, \ie, there is a small difference between $\phi^c$ and $\phi^u$, and (3)~$\beta_\phi$ large, \ie, there is a large difference between $\phi^c$ and $\phi^u$.

In \Cref{fig:sim_assum_shared_rep}, we compare the estimation error of $\tau_{\textup{CorNet}}$ against $\tau_{\unc}$ and $\tau_{\conf}$ for all three settings as described above. We make the following three observations. (1)~When there is a shared representation, \ie, $\beta_\phi=0$, our method substantially improves upon $\tau_{\unc}$ and $\tau_{\conf}$. (2)~When we break the assumption of a shared representation (\ie, $\beta_\phi$ small and $\beta_\phi$ large), the error of $\tau_{\conf}$ increases, which is due to a larger bias. At the same time, the error of $\tau_{\unc}$ remains unchanged across the different settings, since $\tau_{\unc}$ only uses randomized (\ie, unbiased) data. (3)~When we break the assumption of a shared representation (\ie, $\beta_\phi$ small and $\beta_\phi$ large), the error of our method, $\tau_{\textup{CorNet}}$, remains consistent across the different settings. That means, our method is robust to violations of the ``shared representation'' assumption. This may be the case, since the bias function is directly learned in the second step of our CorNet. As such, any difference between the representations that has not been correctly learned in the first step, may be learned in the second step as part of the bias function.
\begin{figure}
	\centering 
	\scalebox{0.55}{\includegraphics{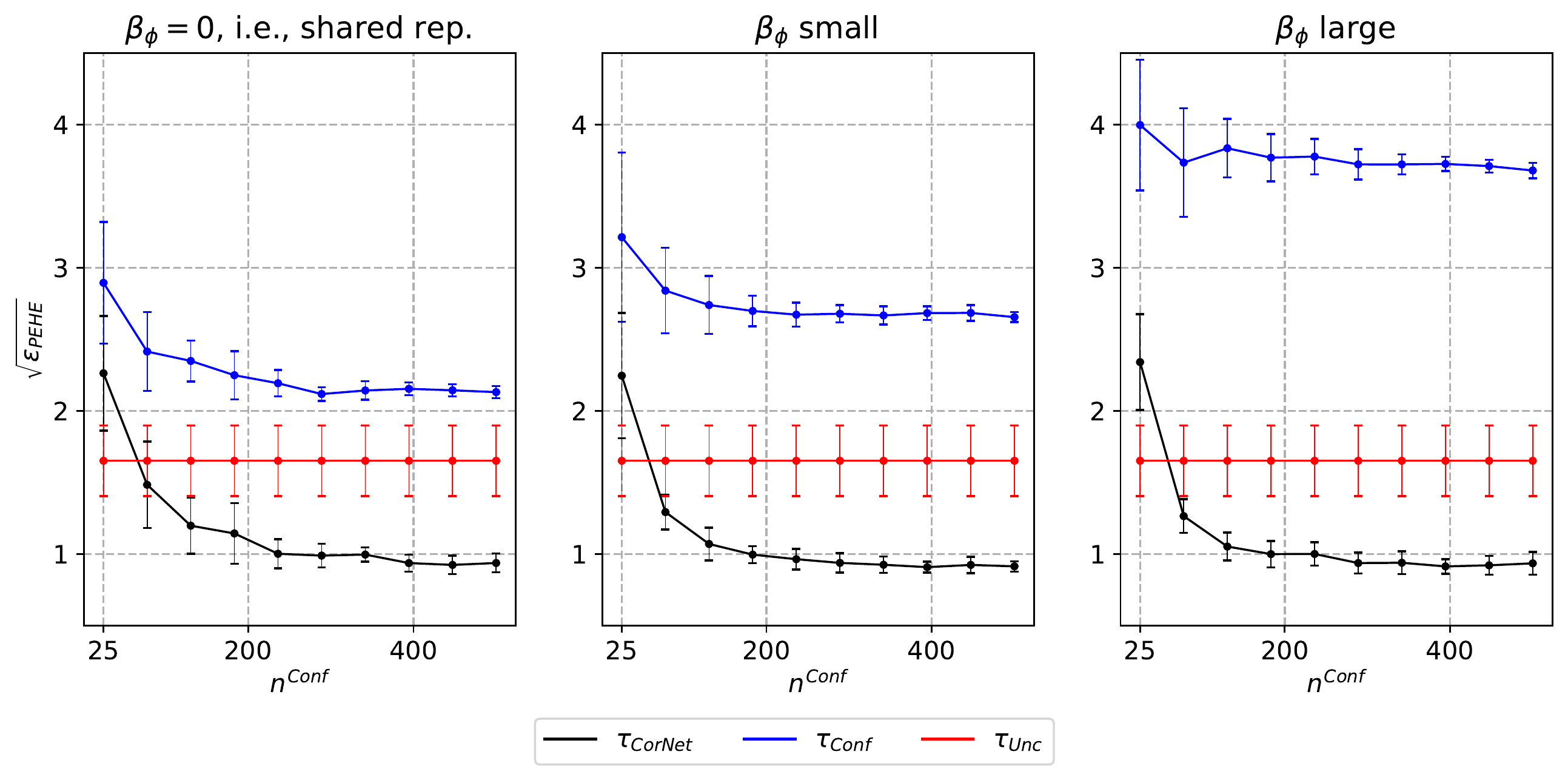}}
	\caption{\footnotesize Simulation study for the robustness of our algorithm, if data does not posses a shared representation. The difference to the true representation is measured by $\lVert \mathbf{W}_K^u - \mathbf{W}_K^c\rVert_{1, \infty} = \beta_\phi$, where $\beta_\phi$ is increased in order to study the impact on the estimation error. We observe that our algorithm, $\tau_{\textup{CorNet}}$, remains robust with respect to violation of the shared representation assumption.}\label{fig:sim_assum_shared_rep}
\end{figure}

\textbf{(ii)~Unconfounded observational data:} In this setting, we generate an observational dataset which is unconfounded, \ie, it does not yield biased estimates. We are interested in whether, in this case, it would be more beneficial to rely only on observational data.

For this, we generate observational and randomized samples as follows: $X^{\conf}, X^{\unc} \sim \mathcal{N}(\mathbf{0}, \mathbf{1})$, $T^{\conf}, T^{\unc}\sim \text{Bernoulli}(\frac{1}{2})$. Then, the outcomes are sampled from two different distributions:
\begin{align}
    Y^{\conf} \sim \phi^\ast(X^{\conf})\,\mathbf{w}_{T^{\conf}}^c + \epsilon, \\
    Y^{\unc} \sim \phi^\ast(X^{\unc})\,\mathbf{w}_{T^{\unc}}^u + \epsilon,
\end{align}
where $\epsilon\sim\mathcal{N}(0, \sigma_\epsilon^2)$. Then, the overall bias due to unobserved confounding can be controlled via $\boldsymbol{\delta}_t = \mathbf{w}_t^u-\mathbf{w}_t^c$, since $\Delta = \mathbb{E}\left[(\phi^\ast(X^{\conf})\,(\boldsymbol{\delta}_1-\boldsymbol{\delta}_0))^2\right]$. Hence, by varying $\boldsymbol{\delta}_t$, we vary the bias due to unobserved confounding. In particular, we use randomly chosen weights for $\mathbf{w}_t^u$ and, then, we vary by how much $\mathbf{w}_t^c$ differs from $\mathbf{w}_t^u$ using $\lVert \boldsymbol{\delta}_t\rVert_1 = \beta$. Hence, for $\beta=0$, the observational data is unconfounded (\ie, $\mathbf{w}_t^c=\mathbf{w}_t^u$ and, therefore, $\Delta=0$). The larger $\beta$, the larger the difference between $\mathbf{w}_t^c$ and $\mathbf{w}_t^u$, and, therefore, the larger $\Delta$. We sample three different settings for $\Delta$: (1)~$\Delta=4$ , \ie, the observational data is heavily confounded, (2)~$\Delta=1$ , \ie, the observational data is moderately confounded, and (3)~$\Delta = 0$, \ie, the observational data is unconfounded.

In \Cref{sim_assum_os_bias}, we compare the estimation error of $\tau_{\textup{CorNet}}$ against $\tau_{\unc}$ and $\tau_{\conf}$ for all three settings as described above. We make the following two observations. First, when there is confounding in the observational data, \ie, $\Delta=4$  or $\Delta=1$, our method consistently improves upon the $\tau_{\unc}$ and, especially, upon $\tau_{\conf}$, where the latter estimator only uses observational data. Second, when the observational data is unconfounded (\ie, $\Delta=0$), the error of $\tau_{\conf}$ decreases below the error of $\tau_{\unc}$ and to a similar level as the error of our method, $\tau_{\textup{CorNet}}$. Notably, the error of $\tau_{\conf}$ does not decrease below the error of our method. This is particularly important, since this implies that there is no drawback from using our method, since it yields superior results even if the observational data is unconfounded. That is, even if the observational data is unconfounded, our method, $\tau_{\textup{CorNet}}$, is on par with methods that only use observational data. The reason for this is that our method uses observational data in the first step and randomized data in the second step. If both types of data are unconfounded, then, our method uses unconfounded data in every step. On the other hand, $\tau_{\conf}$ only uses observational data. Hence, if the observational data is unconfounded, then, $\tau_{\conf}$ uses unconfounded data in every ``step'' as well, which explains the similar, but not superior, error of $\tau_{\conf}$.
\begin{figure}
	\centering 
	\scalebox{0.55}{\includegraphics{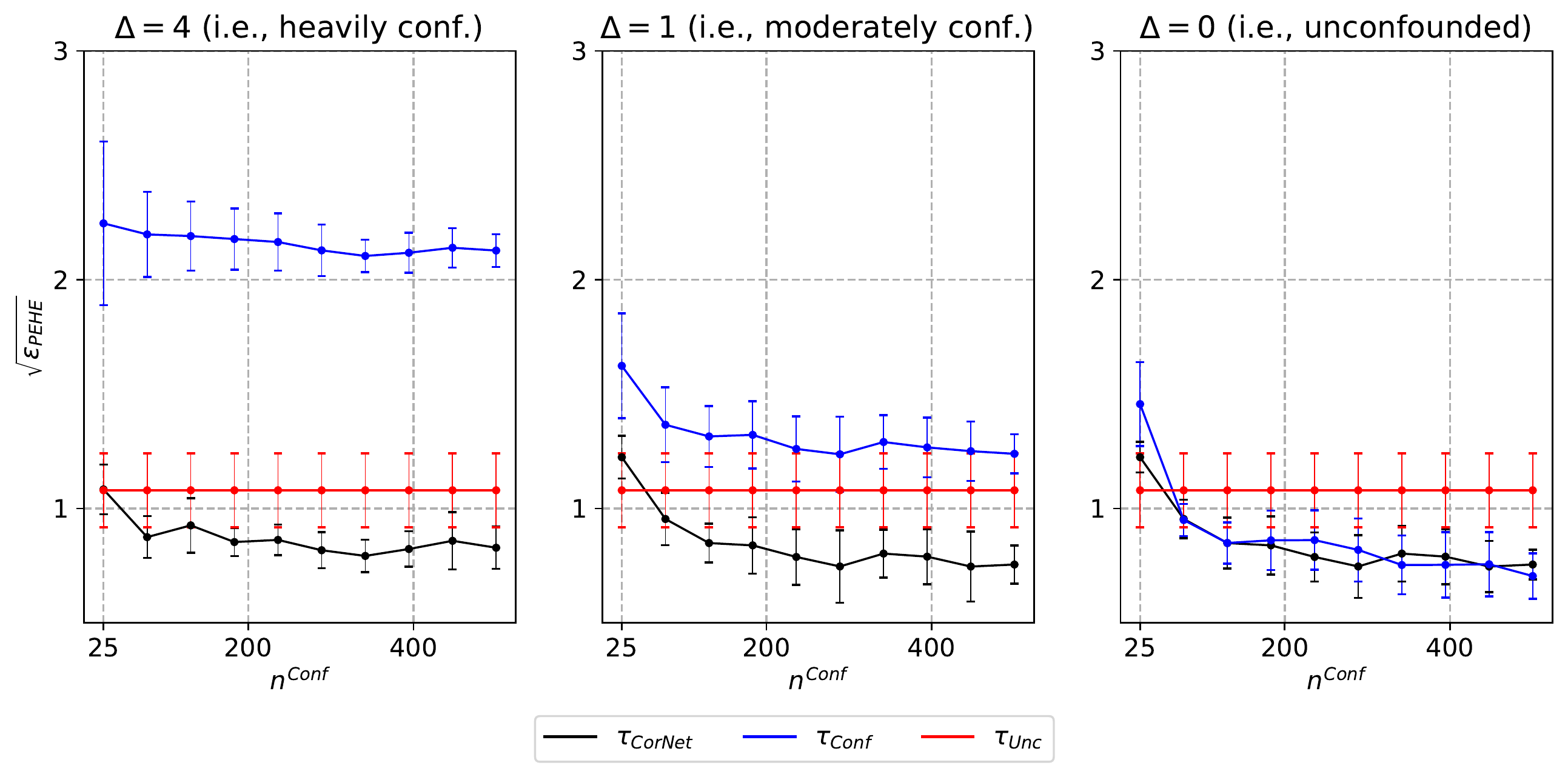}}
	\caption{\footnotesize Simulation study for robustness of our algorithm if the observational data is unconfounded. The amount of bias due to unobserved confounding is measured by $\Delta$, which is decreased in order to study the impact of no unobserved confounding on the estimation error. We observe that our algorithm, $\tau_{\textup{CorNet}}$, remains superior even if the observational data is confounded.}\label{sim_assum_os_bias}
\end{figure}

\textbf{(iii)~Limited overlap between observational and randomized data:} Here, we investigate whether our method yields robust results if the population overlap assumption (see (iii)~in Assumption~\labelcref{assum:standard}) is violated. In particular, this assumption implies that $p_x^{\unc}$ and $p_x^{\conf}$ fully overlap, which ensures that the distributional discrepancy, $\mathit{d}_\infty(p^{\conf}_\phi\mid p^{\unc}_\phi)$, in \Cref{thm_bound} is well-defined. Here, we study the impact on our method if this is not satisfied.


For this, we generate observational and randomized samples as follows: $X^{\conf} \sim \mathcal{N}(\mathbf{0}, \mathbf{1})$ and $T^{\conf}, T^{\unc}\sim \text{Bernoulli}(\frac{1}{2})$. The outcomes are sampled from two different distributions:
\begin{align}
    Y^{\conf} \sim \phi^\ast(X^{\conf})\mathbf{w}_{T^{\conf}}^c + \epsilon, \\
    Y^{\unc} \sim \phi^\ast(X^{\unc})\mathbf{w}_{T^{\unc}}^u + \epsilon,
\end{align}
where $\epsilon\sim\mathcal{N}(0, \sigma_\epsilon^2)$. In order to study what happens if $p_x^{\unc}$ and $p_x^{\conf}$ do not fully overlap, we sample the randomized covariates as follows: $X^{\unc} \sim \mathcal{U}([-\textup{a}, \textup{a}]^d)$. As such, $p^{\unc}$ is zero outside of the hypercube $[-\textup{a}, \textup{a}]^d$ and, therefore, violates the overlap assumption. By varying `$\textup{a}$', we control by how much the population overlap assumption is violated. In particular, we study three different settings for $X^{\unc} \sim \mathcal{U}([-\textup{a}, \textup{a}]^d)$: $\textup{a}=3$, $\textup{a}=1$, and $\textup{a}=\frac{1}{2}$.

In \Cref{sim_assum_overlap} (top), we compare the estimation error of $\tau_{\textup{CorNet}}$ against $\tau_{\unc}$ and $\tau_{\conf}$ for all three settings for $X^{\unc} \sim \mathcal{U}([-\textup{a}, \textup{a}]^d)$ as described above. We also display the covariate distribution of the observational (in blue) and randomized (in red) data for one dimension, which illustrates the violation of the overlap. 

We make the following three observations. (1)~We observe that the error of $\tau_{\conf}$ remains constant across the settings, since it uses only observational data. (2)~The error of $\tau_{\unc}$ increases the less the two distributions overlap. (3)~The error of $\tau_{\textup{CorNet}}$ also increases when the overlap becomes less, but, it increases substantially less than the error of $\tau_{\unc}$. Moreover, the error of $\tau_{\textup{CorNet}}$ remains substantially below $\tau_{\unc}$ and $\tau_{\conf}$ across all settings. However, we cannot conclude from this that the violation of the population overlap assumption harms the performance of our method, since, by varying $\textup{a}$ in $\mathcal{U}([-\textup{a}, \textup{a}]^d)$, we also vary the distributional discrepancy. And, as seen in \Cref{thm_bound} and \Cref{sim_res_dist}, varying the distributional discrepancy also impacts the error.
\begin{figure}[ht!]
	\centering 
	\scalebox{0.75}{\includegraphics{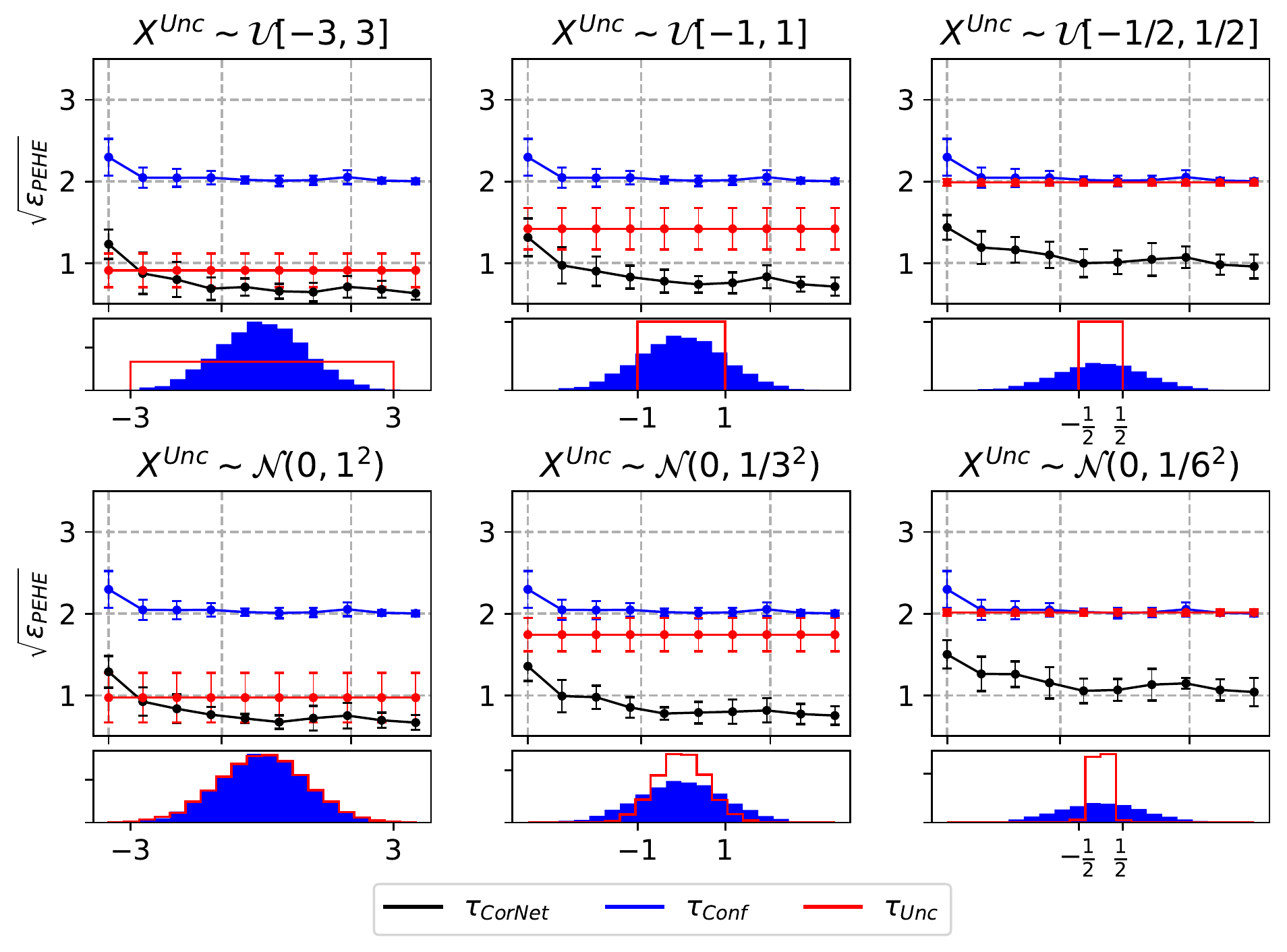}}
	\caption{\footnotesize Simulation study for the violation of the population overlap assumption. The degree of overlap is measured by the hypercube $[-\textup{a}, \textup{a}]^d$, which is decreased in order to study the impact of assumption violation on the estimation error. Top:~the error for $\tau_{\textup{CorNet}}$, $\tau_{\conf}$, and $\tau_{\unc}$ for different hypercubes and for different values of $n^{\conf}$ ($x$-axis). The distribution of the covariates, $X^{\conf}$ and $X^{\unc}$, and their limited overlap is illustrated below the error plot. Bottom:~the errors for the same methods, but without violating the overlap assumption in order to distinguish the contribution of the overlap violations and the distributional discrepancy to the error. The distribution of the covariates, $X^{\conf}$ and $X^{\unc}$, are presented below the error plot. Since the errors increase across the settings similarly in both rows, we can conclude that our algorithm, $\tau_{\textup{CorNet}}$, remains robust with respect to violation of the population overlap assumption.}\label{sim_assum_overlap}
\end{figure}

In order to isolate the impact of the violation of the population overlap assumption, we sample randomized data, which fulfills the overlap assumption, but has a similar distribution as $X^{\unc} \sim \mathcal{U}([-\textup{a}, \textup{a}]^d)$ (\ie, a sample is with high probability in $[-\textup{a}, \textup{a}]^d$). This means, we sample randomized data with similar distributional discrepancy, but without violating the population overlap assumption. As such, we can distinguish the contribution of the overlap violations and the distributional discrepancy to the error.

For this, for each of the above settings, we sample $X^{\unc}\sim\mathcal{N}(\mathbf{0}, \sigma_u^2\mathbf{1})$, which satisfies the overlap assumption. Moreover, we choose $\sigma_u$ for each setting such that $X^{\unc}$ is, with high probability,\footnote{To be precise, with high probability, we mean that we choose $\sigma_u$ such that $\mathbb{P}(X^{\unc}\in [-\textup{a}, \textup{a}]^d)=0.9973$ in \emph{all} settings.} in $[-\textup{a}, \textup{a}]^d$. This yields, for each of the above settings, the following choices: for $\textup{a}=3$, we choose $\sigma_u=1$; for $\textup{a}=1$, we choose, $\sigma_u=\frac{1}{3}$; and, for $\textup{a}=\frac{1}{2}$, we choose $\sigma_u=\frac{1}{6}$.


In \Cref{sim_assum_overlap} (bottom), we present the same results as in the first row, but for $X^{\unc} \sim \mathcal{N}(\mathbf{0}, \sigma_u^2\mathbf{1})$ with $\sigma_u\in\{1, \frac{1}{3}, \frac{1}{6}\}$. 

By comparing the results at the top against the results at the bottom in \Cref{sim_assum_overlap}, we see how much of the changes in error are contributed to the violation of the population overlap assumption and how much to the distributional discrepancy. We find that the errors change similarly across the settings for our method, $\tau_{\textup{CorNet}}$, at the top \emph{and} at the bottom in \Cref{sim_assum_overlap}. As such, the errors at the bottom change similarly (from left to right) as at the top. However, at the bottom, the population overlap assumption is \emph{not} violated. Hence, we can conclude that, also at the top, the change in the errors is coming from the distributional discrepancy and \emph{not} from the violation of the population overlap assumption. As a consequence, our method remains robust against violations of the population overlap assumptions.


\subsection{Real-World Data}\label{sec:real_world_data}
In this section, we validate our method against an extensive set of baselines on real-world data. Validating causal inference method on real-world data is challenging, since we do not have access to the true potential outcomes. We circumvent this challenge by using real-world data from three large experiments: the Tennessee Student/Teacher Achievement Ratio (STAR) experiment \citep{word1990state, krueger1999experimental}, the AIDS Clinical Trial Group (ACTG) study 175 \citep{hammer1996trial}, and the National Supported Work Demonstration \citep{LaLonde1986, Smith2005}. We discuss in \Cref{sec:::exp_ground_truth} how a ground truth can be constructed from this real-world data. 

\subsubsection{Summary of the real-world data}\label{sec:::exp_data}
We briefly describe all three real-world datasets that we used for our experiments in the following. More details on each of the datasets can be found in \Cref{apx:data_sets_details}.

\textbf{Tennessee Student/Teacher Achievement Ratio (STAR) experiment:} The Tennessee Student/Teacher Achievement Ratio (STAR) experiment is a randomized randomized experiment, which started in 1985 with the objective to study the effect of class size (\ie, treatment) on students' standardized test scores (\ie, outcome). This dataset was previously also used in \citet{kallus2018removing} for removing bias due to unmeasured confounding in observational data. Following their setup yields a randomized samples of 8 covariates and  4,139 students: 1,774 students assigned to treatment (\ie, small class, $T=1$) and 2,365 students assigned to control (\ie, regular class, $T=0$).

\textbf{AIDS Clinical Trial Group (ACTG) study 175:} The AIDS Clinical Trial Group (ACTG) study 175 is a clinical trial with the goal of studying the effect of different treatments on subjects with human immunodeficiency virus type 1 (HIV-1), whose CD4 counts were 200–500 cells/mm$^3$ \citep{hammer1996trial}. The ACTG study was used in \citet{hatt2021generalizing} for learning policies that generalize to the target population. Even though this is a different aim, the study is particularly suited for evaluating our method since HIV-positive females tend to be underrepresented in clinical trials, which makes these studies not representative of the target population (\ie, the HIV-positive population) \citep{gandhi2005eligibility, greenblatt2011priority}. The outcome $Y$ is defined as the difference between the cluster of differentiation 4 (CD4) cell counts at the beginning of the study and the CD4 counts after $20\pm5$ weeks. The average treatment effects on the male and female subgroups are $-8.97$ and $-1.39$, respectively \citep{hatt2021generalizing}, which suggests a large discrepancy in the treatment effects between both subgroups. Following the setup in \citet{hatt2021generalizing} yields a total of 1,056 patients and 12 covariates, which are detailed in \Cref{apx:data_sets_details}.

\textbf{National Supported Work~(NSW) Demonstration:}
The National Supported Work~(NSW) Demonstration was a randomized experiment investigating the effect of job training on income \citep{LaLonde1986}.\footnote{The study by \citet{LaLonde1986} is a widely used dataset in the causal inference literature and is also known as the ``Jobs'' dataset \citep[\eg,][]{Shalit2017a, hatt2021estimating}.} Following \citet{Smith2005}, we combine randomized samples of 465 subjects (297 treated, 425 control) with the 2,490 PSID observational controls to create an observational dataset and include 8 covariates, which are detailed in \Cref{apx:data_sets_details}.

\subsubsection{Estimating ground truth treatment effects}\label{sec:::exp_ground_truth}
Evaluating causal inference methods is generally challenging, since observational data is often confounded and the size of randomized data is usually too small. Fortunately, our datasets, STAR, ACTG, and NSW, originated from large-scale experiment. As such, these datasets are unconfounded, since the treatment assignment was controlled by the investigator of the experiment. Moreover, these experiments have the appropriate size to estimate treatment effect heterogeneity. As a result, the CATE is identifiable in all three datasets via standard causal inference methods. 

Therefore, we can estimate the potential outcomes $\mathbb{E}[Y(t)\mid X=x]$ for each $t\in\{0, 1\}$. Note, that is possible, since the data is unconfounded and, therefore, $\mathbb{E}[Y(t)\mid X=x] = \mathbb{E}[Y\mid X=x, T=t]$. In addition, since we have access to large-scale experiments, we have enough data to estimate treatment effect heterogeneity. For this, we use neural networks; in particular, we use an architecture as proposed for TarNet \citep{Shalit2017a}, which uses a shared representation across potential outcomes and two independent hypotheses. We use neural networks is two-fold. First, we implemented all methods\footnote{Strictly speaking, there are two baselines which are not based on neural networks. However, for these two baselines, we also implement a neural network-based version to guarantee fair comparison. See \Cref{sec:::exp_baselines} for details.} using neural networks and, therefore, a consistent use of the same underlying method enables fair comparisons across all baselines. Second, we require a function class with sufficiently capacity such that the true functions are contained in this function class with high confidence. Due to their large capacity, neural networks are a favorable choice for this.\footnote{As a sensitivity check, we also ran all experiments using random forest, tree-based methods, and kernel methods as estimators for the ground truth treatment effect, which did not alter the final outcome of our experiments.} Based on this, we estimate the ground truth CATE by $\hat{\tau}_{\text{GT}}(x) = \hat{\mathbb{E}}[Y(1)\mid X=x] - \hat{\mathbb{E}}[Y(0)\mid X=x]$. Note that this is only feasible, since we have access to the data of uncommonly large experiments, which allows to estimate complex functions directly on the randomized data. This, however, is rarely the case in practice, but enables us to evaluate our proposed method. 

\subsubsection{Generating small randomized and large observational data}\label{sec:::exp_os_rct}
Following the same procedure as in \citet{kallus2018removing}, we artificially generate a large, but confounded observational dataset and an unconfounded, but small randomized datasets using the real-world data described in \Cref{sec:::exp_data}. In particular, the randomized data is generated over a different population than the observational data, which represents the limited capabilities of RCTs to generalize to the population of interest. 

For this, we follow the same protocol for STAR, ACTG, and NSW: First, from the original dataset, we sample a unconfounded, but small randomized dataset. To do so, we sample $n^{\unc}=2d$ samples. We introduce distributional discrepancy between the randomized and observational data by selecting subjects into the randomized dataset based on one of the covariates (``birthday'' for STAR, ``gender'' for ACTG, and ``age'' for NSW). Second, we generate the observational dataset by introducing unobserved confounding, \ie, the treatment and control group must be systematically different in their potential outcomes. Similar to \citet{kallus2018removing}, we sample the following subjects from whose who were not selected into the randomized dataset: we take the controls ($T=0$) whose outcomes were particularly low (\ie, $y_i<\E[Y\mid T=0]-c\cdot \sigma_{Y\mid T=0}$, where $\sigma_{Y\mid T=0}$ is the standard deviation of the outcomes in the control). Then, we take the treated (\ie, $T=1$) whose outcomes were particularly high (\ie, $y_i>\E[Y\mid T=1]+c\cdot \sigma_{Y\mid T=1}$, where $\sigma_{Y\mid T=1}$ is the standard deviation of the outcomes among treated). The constant $c$ is used to control how many subjects are selected into the observational data (to ensure it remains large) and depends on the size of the original dataset (we choose $c=1$, $c=0$, $c=0.25$ for STAR, ACTG, and NSW, respectively). This procedure yields confounding by including control subjects with lower outcomes and treated subjects with higher outcomes selectively into control and treatment group of the observational data. Hence, a na{\"i}ve estimate which only uses observational data will be biased. Moreover, since this selection is based on the outcome variable, it makes it impossible to control for this confounding.

\subsubsection{Baselines}\label{sec:::exp_baselines}
We compare our algorithm against seven baselines: (i)~The estimator on randomized data, $\hat{\tau}_{\unc}$, as described in \Cref{sec::tau_unc}; (ii)~the estimator on observational data, $\hat{\tau}_{\conf}$, as described in \Cref{sec::tau_conf}, which we have seen is biased; (iii)~the averaging estimator, $\hat{\tau}_{\avg}$, as described in \Cref{sec::tau_avg}, which is a convex combination of $\hat{\tau}_{\unc}$ and $\hat{\tau}_{\conf}$. This is also the estimator that was proposed recently in \citet{cheng2021adaptive}; (iv)~the weighting estimator, $\hat{\tau}_{\weight}$, as described in \Cref{sec::tau_weight}, which joins both data sets, but assigns a larger weight to randomized samples. For all the above baselines, the chosen functions classes are neural networks, which allows for comparison between them. Moreover, we compare against the two-step approach proposed in \citet{kallus2018removing}, which first estimates a biased model and then tries to remove the bias using the randomized data (see \Cref{apx:baseline_improvement_discussion} for a detailed description of the method). The authors use ridge regression and random forest (RF) for estimating the biased model. In addition, for fair comparison, we also use neural networks (NN) for estimating this biased model. Hence, this yields three further baselines from \citet{kallus2018removing}: (v)~{2-step-ridge}, (vi)~{2-step-RF}, (vii)~{2-step-NN}.

\subsubsection{Evaluation Metric}\label{sec:::exp_metrics}
Our method is evaluated against various baselines for predicting the CATE. We evaluate how well the CATE estimate matches the ground truth estimate ($\tau_{\text{GT}}$ from \Cref{sec:::exp_ground_truth}) in terms of the empirical version of the $\epsilon_{\textup{PEHE}}$. For this, an unconfounded version of the observational dataset, which is exactly the original dataset (as described in \Cref{sec:::exp_data}) minus the randomized dataset. In sum, the metric for evaluation is: $\hat{\epsilon}_{\textup{PEHE}}(\hat{\tau}) = \frac1{n^{\text{test}}} \sm i {n^{\text{test}}} (\hat{\tau}(x^{\text{test}}_i) - \tau_{\text{GT}}(x^{\text{test}}_i))^2$.

\subsubsection{Results}
We run extensive experiments across all three real-world datasets. We find that, across all datasets and compared to all baselines, our algorithms, \ie, $\tau_{\textup{CorNet}}$ and its regularized variant, $\tau_{\textup{CorNet}^+}$, perform superior. 

The results are presented in \Cref{tbl:exp_results}. We make the following five observations about the results presented in \Cref{tbl:exp_results}. (1)~Our methods (\ie, $\tau_{\textup{CorNet}}$, $\tau_{\textup{CorNet}^+}$) outperform all other methods substantially. The regularized two-step procedure, $\tau_{\textup{CorNet}^+}$, improves upon the unregularized two-step procedure, $\tau_{\textup{CorNet}}$, on STAR and ACTG, but not on NSW. The reason for this may be that the bias function is not sparse for this dataset. We investigate this in more detail in the ablation study in \Cref{sec:::exp_ablation_study}. (2)~The baselines ``2-step-ridge'', ``2-step-RF'', and ``2-step-NN'' from \citet{kallus2018removing} perform comparatively poorly due to the re-weighting of the outcomes. We discuss the reasons for this at length in Remark~\labelcref{rmk:poor_baseline_performance}. (3)~The error of $\tau_{\conf}$, which only uses observational data, depends on the magnitude of the bias due to unobserved confounding and displays consistently high errors across all datasets. (4)~The method $\tau_{\unc}$, which only uses randomized data, displays also high errors across all datasets. Moreover, $\tau_{\unc}$ displays substantially higher variance than the other methods across all datasets. Both observations, the high error and high variance, are due to the small sample size of randomized data. (5)~The averaging and weighted risk estimators, $\tau_{\avg}$ and $\tau_{\weight}$, yield errors of a similar magnitude than $\tau_{\unc}$ and $\tau_{\conf}$. This is expected from the theoretical comparison in \Cref{sec:::comparison_baselines}, since these estimators are combinations of $\tau_{\unc}$ and $\tau_{\conf}$. Between $\tau_{\avg}$ and $\tau_{\weight}$, there is no clearly preference: On STAR and NSW, $\tau_{\avg}$ achieves lower error, whereas, on ACTG, $\tau_{\weight}$ achieves lower errors.
\begin{table*}[ht!]
	\caption{\footnotesize Results for the experiments on the real-world datasets STAR, ACTG, and NSW. Results are obtained via 10 runs. Lower is better.}\label{tbl:exp_results}
	\begin{center}
				\begin{tabular}{lcccc}
					\multicolumn{4}{l}{\bf{Results: Experiments on three real-world datasets}}\\
					\toprule\addlinespace[0.75ex] &\multicolumn{3}{c}{\bf{$\sqrt{\hat{\epsilon}_{\textup{PEHE}}}$ (Mean $\pm$ Std)}}\\
					\cmidrule{2-4}\addlinespace[0.75ex]
					\tikz{\node[below left, inner sep=1pt] (est) {Estimator};%
      \node[above right,inner sep=1pt] (set) {Dataset};%
      \draw (est.north west|-set.north west) -- (est.south east-|set.south east);}
                    & \bf{STAR}
					&\bf{ACTG}
					&\bf{NSW}\\
					\hline
					\addlinespace[0.75ex]
					\makecell[l]{2-step ridge} 
					&$\text{3.01} \pm \text{0.01}$ 
					& $\text{1.51} \pm \text{0.01}$
					& $\text{2.82} \pm \text{0.02}$\\
					\addlinespace[0.75ex]
					\makecell[l]{2-step RF} 
					&$\text{3.14} \pm \text{0.03}$ 
					& $\text{1.58} \pm \text{0.07}$
					& $\text{3.10} \pm \text{0.12}$\\
					\addlinespace[0.75ex]
					\makecell[l]{2-step NN} 
					&$\text{3.03} \pm \text{0.02}$ 
					& $\text{1.60} \pm \text{0.02}$
					& $\text{2.82} \pm \text{0.02}$\\
                    \addlinespace[0.75ex]
					\makecell[l]{$\tau_{\conf}$} &$\text{2.66} \pm \text{0.01}$ 
					& $\text{1.08} \pm \text{0.04}$
					& $\text{0.85} \pm \text{0.04}$\\
					\addlinespace[0.75ex]
					\makecell[l]{$\tau_{\unc}$} &$\text{1.33} \pm \text{0.22}$ 
					& $\text{1.10} \pm \text{0.11}$
					& $\text{0.52} \pm \text{0.15}$\\
					\addlinespace[0.75ex]
				\makecell[l]{$\tau_{\avg}$} 
					&$\text{1.68} \pm \text{0.35}$ 
					& $\text{0.91} \pm \text{0.29}$
					& $\text{0.51} \pm \text{0.20}$\\
					\addlinespace[0.75ex]
			   	\makecell[l]{$\tau_{\weight}$} 
						&$\text{2.26} \pm \text{0.09}$ 
					& $\text{0.80} \pm \text{0.10}$
					& $\text{0.76} \pm \text{0.07}$\\\hline
					\addlinespace[0.75ex]
					\makecell[l]{$\tau_{\textup{CorNet}}$ (ours)} &$\text{0.59} \pm \text{0.01}$ 
					& $\text{0.42} \pm \text{0.06}$
					& $\textbf{0.14} \pm \textbf{0.07}$\\
					\addlinespace[0.75ex]
		       	\makecell[l]{$\tau_{\textup{CorNet}^+}$ (ours)} &$\textbf{0.38} \pm \textbf{0.07}$ 
					& $\textbf{0.27} \pm \textbf{0.03}$
					& $\text{0.21} \pm \text{0.08}$\\
					\bottomrule	
			\end{tabular}
	\end{center}
\end{table*}

\begin{remark}\label{rmk:poor_baseline_performance}
    In \Cref{tbl:exp_results}, we observe that the baselines ``2-step-ridge'', ``2-step-RF'', and ``2-step-NN'' from \citet{kallus2018removing} perform comparatively poorly. This is due to the re-weighting of the outcomes using the inverse propensity score.\footnote{See \Cref{apx:baseline_improvement_discussion} or Eq.~(2) in \citet{kallus2018removing} for more details.} It is well-known that this yields high-variance estimates, particularly in higher dimensions. Hence, these baselines improve when $n^{\unc}$ is increased, which can be observed in \Cref{sensitivity_results_n}. However, their performance remains inferior to na{\"i}ve approaches such as $\tau_{\unc}$ and, in particular, inferior to our methods. In order to show that the poor performance originates from re-weighting, we modify these baselines to avoid re-weighting, which improves their performance by a substantial margin. For this, we apply their approach to both of the outcomes rather than the CATE directly. This circumvents the need of re-weighting with the inverse propensity score. Although these modifications improve the baselines, they remain inferior to our method. A detailed discussion on the modification and the numerical results for the modified baselines can be found in \Cref{apx:baseline_improvement_discussion}.
\end{remark}

In sum, we conclude that our methods, $\tau_{\textup{CorNet}}$ and $\tau_{\textup{CorNet}^+}$, outperform a variety of baselines across different real-world datasets. 

\subsubsection{Sensitivity with respect to \texorpdfstring{$n^{\conf}$}{} and \texorpdfstring{$n^{\unc}$}{}}\label{sec:::exp_sensitivity_n}
In this section, we study the sensitivity of the previous results for different numbers of $n^{\conf}$ and $n^{\unc}$. In the previous experiments, we chose an explicit size of the observational and randomized dataset. Specifically, we chose $n^{\conf}$ to be the maximum available samples for the observational dataset after the processing described in \Cref{sec:::exp_os_rct}. In particular, this yields $n^{\conf}\in\{643, 552, 482\}$ for the datasets STAR, ACTG, and NSW.\footnote{The sizes of the observational datasets, $n^{\conf}$, are smaller than the sizes of the original datasets used to construct the observational datasets. That is, since we introduce confounding by only including treated subjects with high outcomes and control subjects with low outcomes. As such, we exclude many of the subjects in the original dataset.} Moreover, we chose $n^{\unc}=2\,d$, where $d$ is the number of covariates. We did so, since the randomized datasets are usually data-scarce. Here, we study the robustness of our results from the previous section when $n^{\conf}$ and $n^{\unc}$ are varied.

\textbf{Sensitivity to $n^{\conf}$:} We compare our method against the baselines across different numbers of observational samples used, \ie, we vary $n^{\conf}$. In particular, we choose $n^{\conf} \in \{300, 400, 500, 600, 643\}$ for STAR, $n^{\conf} \in \{200, 300, 400, 500, 552\}$ for ACTG, and $n^{\conf} \in \{100, 200, 300, 400, 482\}$ for NSW. The results are presented in \Cref{sensitivity_results_n} (top). We observe that all results remain consistent across different numbers of observational samples. In particular, our methods, $\tau_{\textup{CorNet}}$ and $\tau_{\textup{CorNet}^+}$, achieve consistently superior results.

\textbf{Sensitivity to $n^{\unc}$:} We compare our method against the baselines across different numbers of randomized samples used, \ie, we vary $n^{\unc}$. In particular, for $X^{\unc}\in\Rl^d$, we choose $n^{\unc} \in \{1\cdot d, 2\cdot d, 3\cdot d, 4\cdot d, 6\cdot d, 8\cdot d\}$ for all datasets. The results are presented in \Cref{sensitivity_results_n} (bottom). We observe that the results remain consistent across different numbers of randomized samples. In particular, our methods, $\tau_{\textup{CorNet}}$ and $\tau_{\textup{CorNet}^+}$, achieve the lowest error for any number of randomized samples. We make two further observations. First, the larger the number of randomized samples, the closer the error of $\tau_{\unc}$ becomes to the error of our methods. This is because, for a large randomized dataset, which is not data-scarce anymore, only using randomized data becomes a competitive alternative. However, notably, our method remains superior for any number of randomized samples across all real-world datasets. Second, turning to the baselines ``2-step ridge'', ``2-step RF'', and ``2-step NN'', we observe that their error substantially decreases for increasing number of randomized samples. However, for any number of randomized samples, they remain inferior to the na{\"i}ve approach $\tau_{\unc}$ and, in particular, to our methods.
\begin{figure}[ht]
	\centering 
	\scalebox{0.425}{\includegraphics{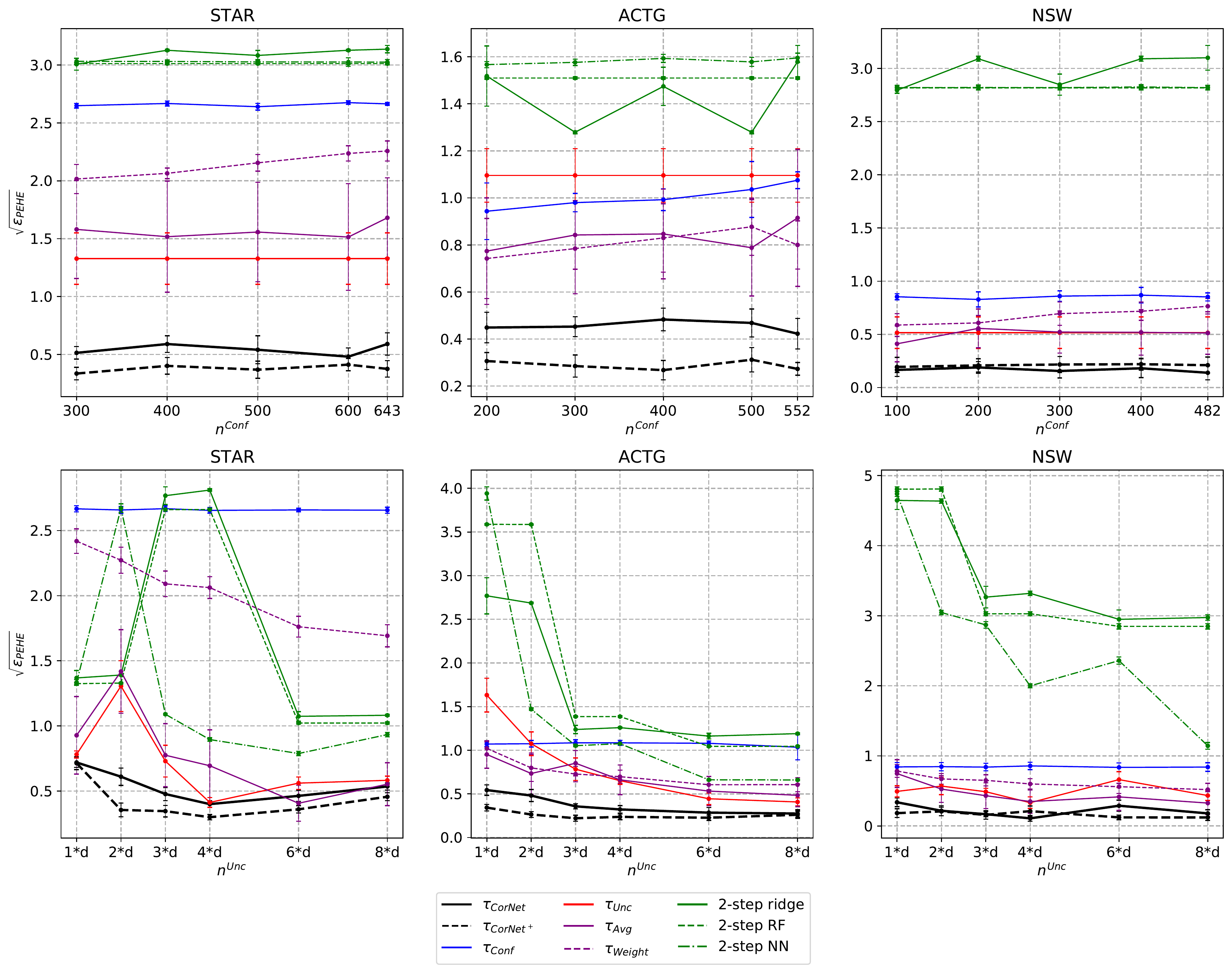}}
	\caption{\footnotesize Sensitivity study of the results for different numbers of $n^{\conf}$ (top row) and $n^{\unc}$ (bottom row) across all real-world datasets STAR, ACTG, and NSW. The results are consistent across all numbers of observational (\ie, $n^{\conf}$) and randomized samples (\ie, $n^{\unc}$). For large number of randomized samples, the error of $\tau_{\unc}$ approaches the error of our methods, $\tau_{\textup{CorNet}}$ and $\tau_{\textup{CorNet}^+}$. This is expected, since once the randomized dataset is not small anymore, only using randomized data may be a reasonable alternative. However, our methods remain superior across all numbers of observational and randomized samples.}\label{sensitivity_results_n}
\end{figure}

\subsubsection{Ablation Study}\label{sec:::exp_ablation_study}
In this section, we conduct an ablation study to investigate the contribution of the different components of our method, $\tau_{\textup{CorNet}}$, to its superior performance. We also investigate different extensions and variants of our method based on multi-task learning, which we discuss in depth in \Cref{apx:variant_study}.

For the ablation study, we take our $\tau_{\textup{CorNet}}$ as a starting point. Then, we compare the performance of variants, for which we add different components as follows: $\tau_\textup{CorNet}$ with (i)~no regularizations, (ii)~$L_1$-regularization on the bias function $\boldsymbol{\delta}$, (iii)~$L_2$-regularization on $\boldsymbol{\delta}$, (iv)~balancing the covariate distributions, \ie, $\lambda_d>0$, and (v)~$L_1$-regularization on $\boldsymbol{\delta}$ and balancing (which is $\tau_{\textup{CorNet}^+}$). In addition, we also compare to variants, which estimate the unconfounded hypotheses, $\mathbf{w}^u$, in the second step instead of the bias function. We denote this as $\tau_{\textup{CorNet}}$ + $\mathbf{w}^u$ and, then, add different components as follows: (vi)~no regularization on $\mathbf{w}^u$, (vii)~$L_1$-regularization on $\mathbf{w}^u$, (viii)~$L_2$-regularization on $\mathbf{w}^u$, (ix)~$L_1$-regularization on $\mathbf{w}^u$ and balancing, (x)~$L_2$-regularization on $\mathbf{w}^u$ and balancing. 

The results are presented in \Cref{tbl:ablation_study_results}. We make the following observations. The $L_1$-regularization on the bias improves the performance for STAR and ACTG, but not for NSW. This indicates that the bias function may not be sparse in the case of NSW. $L_2$-regularization performs worse than $L_1$-regularization for STAR and ACTG, but better for NSW. Again, this indicates that the bias functions may not be sparse on NSW. Learning the hypotheses, $\mathbf{w}^u_t$, (rather than the bias function) yields worse performance across all datasets. On top of this, regularizing the hypotheses (either $L_1$ or $L_2$) can yield improvement. Balancing the covariates can, but does not always lead to improvement compared to the unbalanced variants. Finally, $L_1$-regularization on the bias function and balancing of the covariates (\ie, $\tau_{\textup{CorNet}^+}$) improves upon only regularizing the bias function for STAR and NSW, but not for ACTG. This may be due to extreme underrepresentation of women in these trials. Hence, counteracting this large discrepancy may cause the predictive accuracy to decrease by a substantial margin. Similarly to the sensitivity study for $n^{\conf}$ and $n^{\unc}$ in \Cref{sec:::exp_sensitivity_n}, we run the identical sensitivity study for the ablation study. The findings are consistent across different numbers of observational and randomized samples and can be found in \Cref{apx:sensitivity_ablation}.
\begin{table*}[ht]
	\caption{\footnotesize Results for the ablation study on the real-world datasets STAR, ACTG, and NSW. $L_1$, $L_2$, and $\lambda_d$ indicate whether $L_1$-regularization, $L_2$-regularization, or balancing was used. Results obtained via 10 runs. Lower is better.}\label{tbl:ablation_study_results}
	\begin{center}
				\begin{tabular}{llcccccc}
					\multicolumn{7}{l}{\bf{Results: Ablation study on three real-world datasets}}\\
					\toprule\addlinespace[0.75ex] 
					&&&&&\multicolumn{3}{c}{\bf{$\sqrt{\hat{\epsilon}_{\textup{PEHE}}}$ (Mean $\pm$ Std)}}\\
					\cmidrule{6-8}\addlinespace[0.75ex]
					\multicolumn{2}{l}{Estimator}
                    & $L_1$ & $L_2$ & $\lambda_d$ & \bf{STAR} &\bf{ACTG} &\bf{NSW}\\

                    \midrule
					\addlinespace[0.75ex]

					\multirow{4}{*}{
					    \rotatebox[origin=c]{90}{$\tau_{\textup{CorNet}}$}
					}&
					\makecell[l]{(i)} 
					& \xmark
					& \xmark
					& \xmark
					& $\text{0.59} \pm \text{0.10}$
					& $\text{0.42} \pm \text{0.06}$
					& $\text{0.14} \pm \text{0.07}$\\
                    \addlinespace[0.75ex]
					&\makecell[l]{(ii)} 
					& \cmark
					& \xmark
					& \xmark
					& $\text{0.41} \pm \text{0.07}$
					& $\text{0.25} \pm \text{0.04}$
					& $\text{0.23} \pm \text{0.04}$\\
					\addlinespace[0.75ex]
					&\makecell[l]{(iii)} 
					& \xmark
					& \cmark
					& \xmark
					& $\text{0.57} \pm \text{0.09}$
					& $\text{0.40} \pm \text{0.06}$
					& $\text{0.10} \pm \text{0.05}$\\
					\addlinespace[0.75ex]
					&\makecell[l]{(iv)} 
					& \xmark
					& \xmark
					& \cmark
					&$\text{0.54} \pm \text{0.09}$
					& $\text{0.47} \pm \text{0.04}$
					& $\text{0.19} \pm \text{0.05}$\\
										\addlinespace[0.75ex]					
					&\makecell[l]{(v) $\tau_{\textup{CorNet}^+}$} 
					& \cmark
					& \xmark
					& \cmark
					& $\text{0.38} \pm \text{0.07}$
					& $\text{0.27} \pm \text{0.03}$
					& $\text{0.21} \pm \text{0.08}$\\
					\midrule

					\addlinespace[0.75ex]
					
					\multirow{5}{*}{
					    \rotatebox[origin=c]{90}{$\tau_{\textup{CorNet}}$ + $\mathbf{w}^u$}
					}
					&\makecell[l]{(vi)} 
					& \xmark
					& \xmark
					& \xmark
					&$\text{0.60} \pm \text{0.10}$
					& $\text{0.48} \pm \text{0.05}$
					& $\text{0.31} \pm \text{0.09}$\\
					\addlinespace[0.75ex]
				    &\makecell[l]{(vii)} 
					& \cmark
					& \xmark
					& \xmark
				    & $\text{0.57} \pm \text{0.10}$
					& $\text{0.47} \pm \text{0.05}$
					& $\text{0.29} \pm \text{0.08}$\\
					\addlinespace[0.75ex]
					&\makecell[l]{(viii)} 
					& \xmark
					& \cmark
					& \xmark
					& $\text{0.33} \pm \text{0.08}$
					& $\text{0.36} \pm \text{0.03}$
					& $\text{0.14} \pm \text{0.06}$\\
					\addlinespace[0.75ex]
					&\makecell[l]{(ix)} 
					& \cmark
					& \xmark
					& \cmark
					&$\text{0.31} \pm \text{0.04}$
					& $\text{0.39} \pm \text{0.04}$
					& $\text{0.14} \pm \text{0.04}$\\
					\addlinespace[0.75ex]
					&\makecell[l]{(x)} 
					& \xmark
					& \cmark
					& \cmark
					&$\text{0.53} \pm \text{0.07}$
					& $\text{0.51} \pm \text{0.06}$
					& $\text{0.33} \pm \text{0.06}$\\
					\bottomrule	
			\end{tabular}
	\end{center}
\end{table*}

\section{Discussion}
Individual-level decision-making is of great importance in many domains such as marketing \citep{hatt2020early}, economics \citep{Heckman1997}, or medicine \citep{ozyurt2021attdmm}. For this, heterogeneous treatment effects have to be estimated in order to personalize treatment decisions. In this paper, we propose a two-step procedure for estimating heterogeneous treatment effects, which combines observational and randomized data. In the first step, we use observational data to learn a shared representation and a confounded (\ie, biased) hypothesis. In the second step, we use the randomized data to debias the confounded hypothesis.

We prove finite sample learning bounds, which offer useful insights into which factors drive the estimation error when the size of randomized data is small. In particular, this reveals three driving factors affecting the error bound: (i)~The size of observational data affects the error bound positively (\ie, the more observational data the better). (ii)~The discrepancy between the covariate distributions in the observational and randomized data as well as (iii)~the complexity of the bias function both negatively affect the error bound. We also prove finite sample learning bounds for several natural baselines which use either observational or randomized data or both. This allows us to compare our approach to these natural baselines and to derive conditions for when it if beneficial to combine observational and randomized data and when it is not. In particular, we find that the same factors as above affect these conditions. For instance, we reveal that if the distributional discrepancy is large, combining observational and randomized data is more beneficial than only using randomized data. This is particularly useful for pharmaceutical and medical sciences. If an RCT is poorly design in the sense that it may not generalize to the population of interest, combining the data from the RCT and observational data may substantially benefit the estimation of heterogeneous treatment effects. We also interpret these conditions in terms of sample complexity. This reveals that, under certain conditions, when combining observational and randomized data, we required less randomized data in order to achieve a predefined estimation error. This is particularly interesting, since RCTs are costly, and, hence, if we can reduce the required number of randomized samples, we can also reduce the costs of RCTs and accelerate drug approval.
Based on these theoretical insights, we propose a sample-efficient algorithm based on our two-step procedure, called CorNet. We perform extensive simulation studies, which empirically verify the finite sample properties of CorNet as discussed above. Moreover, we study the robustness of CorNet against alterations of our setup. We find that even if observational and randomized data are generated from different representations (not a shared representation), CorNet yields robust results. Moreover, if observational data is unconfounded, CorNet yields similar results compared to only using observational data. This suggests that there is no drawback of using CorNet over method that only use observational data, even if observational data is unconfounded. Finally, if the population overlap assumption is violated, we find that the performance degradation of CorNet is due to the larger distributional discrepancy, but not the assumption violation itself. We further use multiple real-world datasets to compare CorNet against a variety of baseline methods. We find that CorNet outperforms all baseline methods by a substantial margin. This finding is consistent across different sizes of the observational data. Once we can acquire large amounts of randomized data, methods that only use randomized data perform similarly to CorNet. However, this means that we leave the setting for which CorNet was design (namely, small randomized data as in RCTs). Finally, in an ablation study, we find that the different components of CorNet (and extensions of it) improve upon the unregularized CorNet depending on the data at hand.

\clearpage
\bibliography{library}

\clearpage
\appendix
\onecolumn

\section{Model Complexity}
We first introduce measure for the complexity of function classes. We define the Gaussian complexity, which is used as a measure for the complexity of a function class. For a function class $\mathcal{Q}$ with functions $\textup{q}:\mathbb{R}^d \rightarrow \mathbb{R}^r$ and $n$ data points, the empirical Gaussian complexity is defined as
\begin{equation}
	\hat{\mathcal{G}}_{\bar{X}}(\mathcal{Q})=\E_{\mathbf{g}}\left[\sup_{\textup{q}\in\mathcal{Q}} \frac1n\sm k r \sm i n g_{ki}\,q_k(\mathbf{x}_i)\right],
\end{equation}
where $g_{ki}\overset{\text{iid}}{\sim}\mathcal{N}(0, 1)$. The corresponding population quantity is defined as $\mathcal{G}_n(\mathcal{Q})=\E_{X}[\hat{\mathcal{G}}_{\bar{X}}(\mathcal{Q})]$. The first expectation is taken over the distribution of the data samples, and the second expectation is taken over the Gaussian random variables. We further define the worst-case Gaussian complexity as
\begin{align}
	\WGC{n}{\mathcal{H}} = \max_{\bar{Z}\in\mathcal{Z}}\hat{\mathcal{G}}_{\bar{Z}}(\HC),\quad \mathcal{Z} = \{(\phi(x_1), \ldots, \phi(x_n)); \phi\in\Phi, x_i\in\mathcal{X} \text{ for all } i \in \{1, \ldots, n\}\}.
\end{align}
Moreover, we define the empirical Rademacher complexity as
\begin{align}
    \hat{\mathcal{R}}_n(\mathcal{Q}) = \E_\sigma\left[\sup_{q\in\mathcal{Q}} \left\lvert \frac1n \sm i n \sigma_i\,q(x_i)\right\rvert\right],
\end{align}
where $\sigma_i$ are Rademacher variables (\ie, taking the values $\{-1, +1\}$ equiprobably). The corresponding population quantity is defined by taking the expectation over $X$, \ie, $\mathcal{R}_n(\mathcal{Q}) = \E_X[\hat{\mathcal{R}}_n(\mathcal{Q})] = \E_{X, \sigma}\left[\sup_{q\in\mathcal{Q}} \left\lvert \frac1n \sm i n \sigma_iq(x_i)\right\rvert\right]$.

\section{Feedforward Neural Networks}\label{apx:ffnn}
The class of feedforward neural network $\mathcal{N}$ is defined by recursion:
\begin{equation}\label{eq:nn}
\mathcal{N} = \{f_d:\,f_1(x) = \mathbf{W_1^\top} x, \qquad f_i(x) = \mathbf{W_i^\top} \sigma \fprns{f_{i-1}(x)}, \quad i=2,\ldots,d\},
\end{equation}
for $d\in\mathbb N_{\geq 1}$, matrices $\{\mathbf{W_i}\}_{i=1}^d$ of appropriate dimensions, activation function $\sigma$, which applies element-wise, and $d$ is the depth of the neural network.

Then, we define the function classes for the representation, $\RC$, confounded hypotheses, $\HC$, and bias functions, $\mathcal{B}$ as follows:
\begin{align}\label{eq:nn_rep}
\Phi = \{\phi \in\mathcal{N};\, &\lVert \mathbf{W_k} \rVert_{1, \infty}\leq \Omega_k \text{ for } k\in\{1, \ldots, K-1\},\nonumber\\ 
&\max(\lVert \mathbf{W_K} \rVert_{1, \infty}, \lVert \mathbf{W_K} \rVert_{2\rightarrow\infty})\leq \Omega,\nonumber\\
&\text{ s.t. } \sigma_r(\E[\phi(X^{\conf})\phi(X^{\conf})^\top])\geq \Omega(1)\},
\end{align}
\begin{align}\label{eq:nn_hypo}
\mathcal{H} = \{h \in\mathcal{N};\, &\lVert \mathbf{W_d} \rVert_{1, \infty}\leq \Omega_d \text{ for } d\in\{1, \ldots, D\}\},
\end{align}
\begin{align}\label{eq:nn_bias}
\mathcal{B} = \{\delta \in\mathcal{N};\, &\lVert \mathbf{W_b} \rVert_{1, \infty}\leq \Omega_b \text{ for } b\in\{1, \ldots,B\}\},
\end{align}
where for a matrix $\mathbf{W}\in \mathbb{R}^{p\times q}$, we let $\lVert \mathbf{W} \rVert_{1, \infty} = \max_{i}\sum_{j}\lvert\mathbf{W}_{ij}\rvert$ and $\lVert \mathbf{W} \rVert_{2\rightarrow\infty}$ the induced $\infty$-to-$2$ operator norm.

\Cref{fig:procedure_illlustration} illustrates the neural network architecture of our two-step procedure, which is used in \Cref{sec:error_bounds}.
\begin{figure}[ht]
	\centering 
	\scalebox{0.4}{\includegraphics{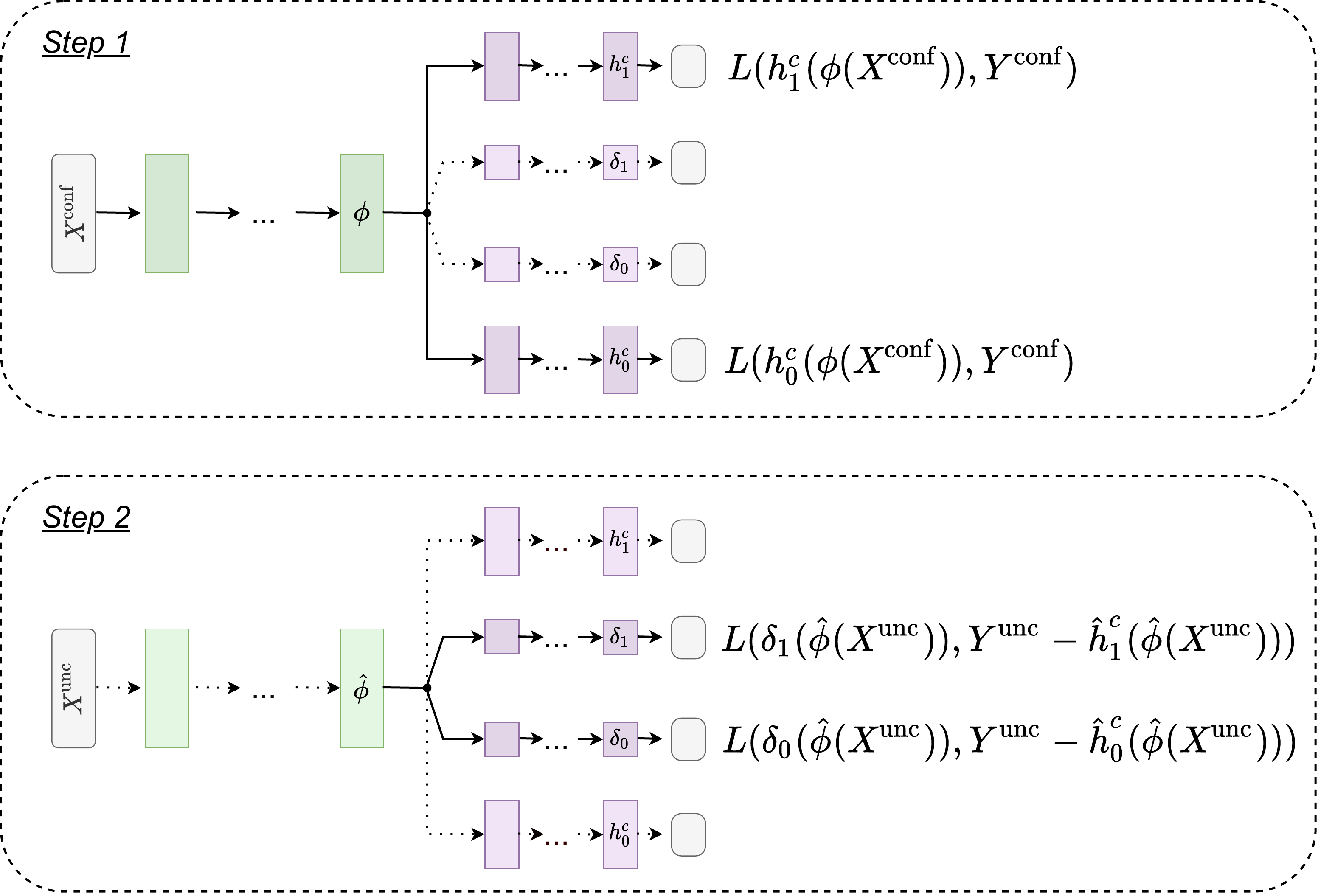}}
	\caption{\footnotesize Illustration of our two-step procedure using neural networks. In \underline{Step 1}, the observational data is used to learn a representation, $\hat{\phi}$, and confounded hypotheses, $\hat{\mathbf{h}}^c$. In \underline{Step 2}, the randomized data, together with the $\hat{\phi}$ and $\hat{\mathbf{h}}^c$ form the first step, is used to learn the bias, $\boldsymbol{\delta}$, in the confounded hypotheses due to unobserved confounding.}\label{fig:procedure_illlustration}
\end{figure}

\section{Proofs in \texorpdfstring{\Cref{sec:error_bounds}}{}}
\subsection{Proof of \texorpdfstring{\Cref{thm_bound}}{}}\label{apx:thm_gen_bound}
In this section, we prove \Cref{thm_bound}. To do so, we first prove two lemmas that are needed for the proof. Then, we prove a version of \Cref{thm_bound} for arbitrary function classes $\RC$, $\HC$, and $\mathcal{B}$. Finally, we can then prove \Cref{thm_bound} for feedforward neural networks.


\begin{lemma}\label{lemma:unc_upper_bnd}
Let $\phi\in\RC$ and $\mathbf{h}\in\HC^{\otimes 2}$. Then, for $\emerrorUC{\boldsymbol{\delta}}{\mathbf{h}}{\phi}$ as in \labelcref{eq:unc_risk} and 
\begin{align}
    \epsilon(\boldsymbol{\delta}, \mathbf{h}, \phi)=\sum_{t=0}^{1}\left[((\textup{h}_t+\delta_t)\circ\phi(X^{\conf}) - Y(t))\right],
\end{align}
we have that, if Assumption~\labelcref{assum:realizability} holds, 
\begin{align}
    \sup_{\boldsymbol{\delta}\in\mathcal{B}^{\otimes 2}}\lvert \epsilon(\boldsymbol{\delta}, \mathbf{h}, \phi) - \emerrorUC{\boldsymbol{\delta}}{\mathbf{h}}{\phi} \rvert \leq 8L\mathit{d}_\infty(p^{\conf}_\phi\mid p^{\unc}_\phi) \GC{n^{\unc}}{\mathcal{B}} + 4\mathit{d}_\infty(p^{\conf}_\phi\mid p^{\unc}_\phi)B\sqrt{\frac{\log(1/p)}{n^{\unc}}},
\end{align}
with probability at least $1-p$.
\end{lemma}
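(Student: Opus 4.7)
My approach is a standard uniform-convergence argument that uses importance weighting to transport the expectation from the confounded (target) distribution to the randomized (sample) distribution. The key asymmetry between the two sides of the bound is that the empirical risk is computed on a sample from $p^{\unc}_x$ while the population risk is an integral against $p^{\conf}_x$ (viewed after applying $\phi$, against $p^{\conf}_\phi$). The $\mathit{d}_\infty$ factor is exactly what one pays to change measure between these.

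\textbf{Step 1: change of measure.} Using Assumption~\labelcref{assum:unc_rct}(i)-(ii) (unconfoundedness in the RCT and no outcome modification), the conditional law of $Y(t)\mid\phi(X)$ is the same in the randomized and confounded populations. Hence for any $\boldsymbol{\delta}\in\mathcal{B}^{\otimes 2}$ I can rewrite
\begin{align*}
\epsilon(\boldsymbol{\delta},\mathbf{h},\phi)
\;=\; \E_{(X^{\unc},T^{\unc},Y^{\unc})}\!\left[\, w_\phi(X^{\unc})\cdot\bigl((\textup{h}_{T^{\unc}}+\delta_{T^{\unc}})\circ\phi(X^{\unc}) - Y^{\unc}\bigr)^2\,\right],
\end{align*}
where $w_\phi(x)=p^{\conf}_\phi(\phi(x))/p^{\unc}_\phi(\phi(x))$ is bounded a.s.\ by $\mathit{d}_\infty(p^{\conf}_\phi\mid p^{\unc}_\phi)$. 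The empirical counterpart $\emerrorUC{\boldsymbol{\delta}}{\mathbf{h}}{\phi}$ is (up to the missing weight) an average of the same integrand, so the supremum of interest is a weighted empirical process indexed by $\mathcal{B}^{\otimes 2}$.

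\textbf{Step 2: concentration and symmetrization.} Under the standing boundedness (losses bounded by a constant $B$), each weighted summand is bounded by $\mathit{d}_\infty(p^{\conf}_\phi\mid p^{\unc}_\phi)\cdot B$, so McDiarmid's bounded-differences inequality gives, with probability at least $1-p$,
\begin{align*}
\sup_{\boldsymbol{\delta}}\bigl|\epsilon-\emerrorUC{\boldsymbol{\delta}}{\mathbf{h}}{\phi}\bigr|
\;\le\; 2\,\E\!\left[\sup_{\boldsymbol{\delta}}\bigl|\epsilon-\emerrorUC{\boldsymbol{\delta}}{\mathbf{h}}{\phi}\bigr|\right] + 4\,\mathit{d}_\infty(p^{\conf}_\phi\mid p^{\unc}_\phi)\, B\sqrt{\tfrac{\log(1/p)}{n^{\unc}}},
\end{align*}
which already yields the second term of the stated bound. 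Standard symmetrization turns the expected supremum into a Rademacher (hence, up to a constant, Gaussian) complexity of the weighted loss class on $n^{\unc}$ samples.

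\textbf{Step 3: contraction and peeling off the weight.} Because $\phi$ and $\mathbf{h}$ are fixed, the only variation across $\boldsymbol{\delta}\in\mathcal{B}^{\otimes 2}$ sits inside the squared loss. Applying Talagrand's contraction lemma for the $L$-Lipschitz squared loss reduces the empirical Gaussian complexity of the weighted loss class to that of $\mathcal{B}^{\otimes 2}$, picking up a factor $2L$. The weight $w_\phi(x)\le \mathit{d}_\infty(p^{\conf}_\phi\mid p^{\unc}_\phi)$ enters as a uniform multiplicative constant that can be pulled out of the complexity, and the 2-fold product decomposes into two copies of $\GC{n^{\unc}}{\mathcal{B}}$; combining gives the leading term $8L\,\mathit{d}_\infty(p^{\conf}_\phi\mid p^{\unc}_\phi)\,\GC{n^{\unc}}{\mathcal{B}}$.

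\textbf{Main obstacle.} The only non-routine point is handling the weight $w_\phi$ cleanly inside the contraction step: one must verify that the Lipschitz contraction can be applied to the \emph{weighted} squared loss (equivalently, treat $\sqrt{w_\phi}$ as a fixed per-sample scaling that does not depend on $\boldsymbol{\delta}$) so that the density-ratio bound can be pulled out as a multiplicative constant without inflating the Lipschitz constant beyond $L\cdot\mathit{d}_\infty$. Once this is handled, everything else is standard symmetrization, contraction, and McDiarmid.
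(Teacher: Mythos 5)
Your proposal is correct and follows essentially the same route as the paper's proof: change of measure via the density ratio $w_\phi \le \mathit{d}_\infty(p^{\conf}_\phi\mid p^{\unc}_\phi)$ to transport the population risk to the randomized distribution, McDiarmid's bounded-differences inequality for concentration, symmetrization, and Ledoux--Talagrand contraction for the $L$-Lipschitz squared loss, with the density-ratio bound pulled out as a multiplicative constant before converting Rademacher to Gaussian complexity. The only cosmetic difference is where the weight is discharged (the paper bounds it at the symmetrization step on the ghost-sample term rather than carrying $\sqrt{w_\phi}$ through contraction), which does not change the argument.
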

\begin{proof}



The proof relies on a modified version of Theorem 4.10 in \citep{wainwright2019high} and uses a modification of the bounded differences inequality and a standard symmetrization argument. For ease of notation, we use $l_{\mathbf{h}, \phi}(\boldsymbol{\delta}, x,y,t) = ((\textup{h}_t + \delta_t)\circ\phi(x) - y)^2$.

Then, note that for, any $\boldsymbol{\delta}\in\mathcal{B}^{\otimes 2}, \mathbf{h}\in\HC^{\otimes 2}$, and $\phi\in\RC$,
\begin{align}
    \epsilon(\boldsymbol{\delta}, \mathbf{h}, \phi) &= \sum_{t=0}^{1}\E\left[((\textup{h}_t+\delta_t)\circ\phi(X^{\conf}) - Y(t))^2\right]\\ 
    &= \sum_{t=0}^{1}\mathbb{P}(T^{\unc}=t)\E\left[\E[((\textup{h}_t+\delta_t)\circ\phi(X^{\conf}) - Y^{\unc})^2\mid T^{\unc}=t, X^{\conf}]\right]\\
    &=\E\left[\E[((\textup{h}_{T^{\unc}}+\delta_{T^{\unc}})\circ\phi(X^{\conf}) - Y^{\unc})^2\mid T^{\unc}, X^{\conf}]\right]\\&=\E\left[((\textup{h}_{T^{\unc}}+\delta_{T^{\unc}})\circ\phi(X^{\conf}) - Y^{\unc})^2\right]\\
    &=\E[l_{\mathbf{h},\phi}(\boldsymbol{\delta}, X^{\conf}, Y^{\unc}, T^{\unc})].
\end{align}

This yields 
\begin{align}
    \sup_{\boldsymbol{\delta}\in\mathcal{B}^{\otimes 2}}\lvert &\epsilon(\boldsymbol{\delta}, \mathbf{h}, \phi) - \emerrorUC{\boldsymbol{\delta}}{\mathbf{h}}{\phi} \rvert\\
    &= \sup_{\boldsymbol{\delta}\in\mathcal{B}^{\otimes 2}}\left\lvert \frac1{n^{\unc}} \sm i {n^{\unc}} l_{\mathbf{h},\phi}(\boldsymbol{\delta}, x^{\unc}_i, y^{\unc}_i, t^{\unc}_i)  - \E[l_{\mathbf{h},\phi}(\boldsymbol{\delta}, X^{\conf}, Y^{\unc}, T^{\unc})]\right\rvert.
\end{align}
We further simplify notation using $\bar{l}_{\mathbf{h},\phi}(\boldsymbol{\delta}, x_i^{\unc}, y_i^{\unc}, t_i^{\unc}) = l_{\mathbf{h},\phi}(\boldsymbol{\delta},x_i^{\unc}, y_i^{\unc}, t_i^{\unc})  - \E[l_{\mathbf{h},\phi}(\boldsymbol{\delta}, X^{\conf}, Y^{\unc}, T^{\unc})]$. Then, the above can be written as
\begin{equation}
    \sup_{\boldsymbol{\delta}\in\mathcal{B}^{\otimes 2}}\left\lvert \frac1{n^{\unc}} \sm i {n^{\unc}} \bar{l}_{\mathbf{h},\phi}(\boldsymbol{\delta}, x_i^{\unc}, y_i^{\unc}, t_i^{\unc})\right\rvert.
\end{equation}
Thinking of the samples as fixed for the moment, we consider the function
\begin{align}
    G(d_1, \ldots, d_{n^{\unc}}) := \sup_{\boldsymbol{\delta}\in\mathcal{B}^{\otimes 2}}\left\lvert \frac1{n^{\unc}} \sm i {n^{\unc}} \bar{l}_{\mathbf{h},\phi}(\boldsymbol{\delta}, x_i^{\unc}, y_i^{\unc}, t_i^{\unc})\right\rvert,
\end{align}
where $d_i=(x_i^{\unc}, y_i^{\unc}, t_i^{\unc})$. We prove that $G$ is Lipschitz such that we can apply the bounded difference inequality (see Corollary 2.21 in \citet{wainwright2019high}). Since the function $G$ is invariant to permutation of its coordinates, it suffices to bound the difference when the first coordinate $d_1$ is perturbed. For this, we define the sample vector $(w_1, \ldots, w_{n^{\unc}})$ such that $w_i = d_i$ for $i\neq 1$. Then, we seek a bound for the difference $\lvert G(d_1, \ldots, d_{n^{\unc}}) - G(w_1, \ldots, w_{^{\unc}})\rvert$. For any $\mathbf{h}\in\HC^{\otimes 2}, \phi\in\Phi, \boldsymbol{\delta}, \mathbf{f}\in\mathcal{B}^{\otimes 2}$, and $\bar{l}_{\mathbf{h},\phi}(\boldsymbol{\delta}, x_i^{\unc}, y_i^{\unc}, t_i^{\unc})$, we have that
\begin{align}
    &\left\lvert\frac1{n^{\unc}}\sm i {n^{\unc}} \bar{l}_{\mathbf{h},\phi}(\boldsymbol{\delta}, d_i)\right\rvert -\sup_{\mathbf{f}\in\mathcal{B}^{\otimes 2}}\left\lvert\frac1{n^{\unc}}\sm i {n^{\unc}} \bar{l}_{\mathbf{h},\phi}(\mathbf{f}, w_i)\right\rvert\\
    &\leq \left\lvert\frac1{n^{\unc}}\sm i {n^{\unc}} \bar{l}_{\mathbf{h},\phi}(\boldsymbol{\delta}, d_i)\right\rvert -\left\lvert\frac1{n^{\unc}}\sm i {n^{\unc}} \bar{l}_{\mathbf{h},\phi}(\boldsymbol{\delta}, w_i)\right\rvert\\
    &\leq \frac1{n^{\unc}} \left\lvert \bar{l}_{\mathbf{h},\phi}(\boldsymbol{\delta}, d_1) - \bar{l}_{\mathbf{h},\phi}(\boldsymbol{\delta}, w_1) \right\rvert=\frac1{n^{\unc}} \left\lvert l_{\mathbf{h},\phi}(\mathbf{h}, d_1) - l_{\mathbf{h},\phi}(\mathbf{h}, w_1) \right\rvert\\
    &\leq \frac{2B}{{n^{\unc}}},
\end{align}
where the last inequality follows from the boundedness of the squared loss. Since the above holds for any function $\boldsymbol{\delta}\in\mathcal{B}^{\otimes 2}$, we can take the supremum over $\boldsymbol{\delta}\in\mathcal{B}^{\otimes 2}$ on each sides. This yields $G(d_1, \ldots, d_{n^{\unc}}) - G(w_1, \ldots, w_{n^{\unc}})\leq\frac{2B}{{n^{\unc}}}$. Since the same argument can be applied with $(d_1, \ldots, d_{n^{\unc}})$ and $(w_1, \ldots, w_{n^{\unc}})$ reversed, we conclude that $\lvert G(d_1, \ldots, d_{n^{\unc}}) - G(w_1, \ldots, w_{n^{\unc}})\rvert\leq\frac{2B}{{n^{\unc}}}$. Hence, we can apply the bounded differences inequality (see Corollary 2.21 in \citep{wainwright2019high}), which yields
\begin{align}
    \sup_{\boldsymbol{\delta}\in\mathcal{B}^{\otimes 2}}\left\lvert \frac1{n^{\unc}} \sm i {n^{\unc}} \bar{l}_{\mathbf{h},\phi}(\boldsymbol{\delta}, x_i^{\unc}, y_i^{\unc}, t_i^{\unc})\right\rvert - &\E\bigg[    \sup_{\boldsymbol{\delta}\in\mathcal{B}^{\otimes 2}}\bigg\lvert \frac1{n^{\unc}} \sm i {n^{\unc}} \bar{l}_{\mathbf{h},\phi}(\boldsymbol{\delta}, x_i^{\unc}, y_i^{\unc}, t_i^{\unc}) \bigg\rvert\bigg]\\
    &\leq B\sqrt{\frac{2\log(1/p)}{n^{\unc}}},
\end{align}
with probability at least $1-p$.

We have now proven that, when 
$\mathcal{B}$ is uniformly bounded, then the random variable $\sup_{\boldsymbol{\delta}\in\mathcal{B}^{\otimes 2}}\left\lvert \frac1{n^{\unc}} \sm i {n^{\unc}} \bar{l}_{\mathbf{h},\phi}(\boldsymbol{\delta}, x_i^{\unc}, y_i^{\unc}, t_i^{\unc})\right\rvert$ is sharply concentrated around its expected value. It remains to upper bound $\E\left[    \sup_{\boldsymbol{\delta}\in\mathcal{B}^{\otimes 2}}\left\lvert \frac1{n^{\unc}} \sm i {n^{\unc}} \bar{l}_{\mathbf{h},\phi}(\boldsymbol{\delta}, x_i^{\unc}, y_i^{\unc}, t_i^{\unc})\right\rvert\right]$. We show that this quantity is upper bounded using classical symmetrization arguments.

For this, let $\{(X^{\unc}_1, Y^{\unc}_1, T^{\unc}_1)$, \ldots, $(X^{\unc, \prime}_{n^{\unc}}, Y^{\unc}_{n^{\unc}}, T^{\unc}_{n^{\unc}})\}$ and $\{(X^{\conf}_1, Y^{\unc, \prime}_1, T^{\unc, \prime}_1)$, \ldots, $(X^{\conf}_{n^{\unc}}, Y^{\unc, \prime}_{n^{\unc}}, T^{\unc, \prime}_{n^{\unc}})\}$ be a second \textit{i.i.d.} sequence independent of the first.

Further, let $w_\phi(z) = \frac{p^{\conf}_\phi(z)}{p^{\unc}_\phi(z)}$ and $\Psi$ be the inverse of $\phi$. Then, together with Lemma~\ref{lemma:change_of_variable}, this yields
\begin{align}
    &\E\left[    \sup_{\boldsymbol{\delta}\in\mathcal{B}^{\otimes 2}}\left\lvert \frac1{n^{\unc}} \sm i {n^{\unc}} \bar{l}_{\mathbf{h},\phi}(\boldsymbol{\delta}, x_i^{\unc}, y_i^{\unc}, t_i^{\unc})\right\rvert\right]\\
    &=\E\left[\sup_{\boldsymbol{\delta}\in\mathcal{B}^{\otimes 2}}\left\lvert \frac1{n^{\unc}} \sm i {n^{\unc}} l_{\mathbf{h},\phi}(\boldsymbol{\delta},x_i^{\unc}, y_i^{\unc}, t_i^{\unc})  - \E[l_{\mathbf{h},\phi}(\boldsymbol{\delta}, X_i^{\conf}, Y_i^{\unc, \prime}, T_i^{\unc, \prime})]\right\rvert\right]\\   
    &=\E\left[\sup_{\boldsymbol{\delta}\in\mathcal{B}^{\otimes 2}}\left\lvert \frac1{n^{\unc}} \sm i {n^{\unc}} l_{\mathbf{h},\phi}(\boldsymbol{\delta},\phi(\Psi(z_i^{\unc})), y_i^{\unc}, t_i^{\unc})  - \underbrace{\E[l_{\mathbf{h},\phi}(\boldsymbol{\delta}, \phi(\Psi(Z_i^{\conf})), Y_i^{\unc, \prime}, T_i^{\unc, \prime})]}_{=\E[w(\phi(\Psi(Z_i^{\unc})))l_{\mathbf{h},\phi}(\boldsymbol{\delta}, \phi(\Psi(Z_i^{\unc})), Y_i^{\unc, \prime}, T_i^{\unc, \prime})]}\right\rvert\right]\\
    &=\E\left[\sup_{\boldsymbol{\delta}\in\mathcal{B}^{\otimes 2}}\left\lvert \frac1{n^{\unc}} \sm i {n^{\unc}} l_{\mathbf{h},\phi}(\boldsymbol{\delta},z_i^{\unc}, y_i^{\unc}, t_i^{\unc})  - \underbrace{\E[w(\phi(\Psi(Z_i^{\unc})))l_{\mathbf{h},\phi}(\boldsymbol{\delta}, \phi(\Psi(Z_i^{\unc})), Y_i^{\unc, \prime}, T_i^{\unc, \prime})]}_{=\E[w(Z_i^{\unc})l_{\mathbf{h},\phi}(\boldsymbol{\delta}, Z_i^{\unc}, Y_i^{\unc, \prime}, T_i^{\unc, \prime})]}\right\rvert\right]\\
    &=\E\left[\sup_{\boldsymbol{\delta}\in\mathcal{B}^{\otimes 2}}\left\lvert \E[\frac1{n^{\unc}} \sm i {n^{\unc}} l_{\mathbf{h},\phi}(\boldsymbol{\delta},z_i^{\unc}, y_i^{\unc}, t_i^{\unc, \prime})  - {w(z_i^{\unc})l_{\mathbf{h},\phi}(\boldsymbol{\delta}, z_i^{\unc}, y_i^{\unc, \prime}, t_i^{\unc, \prime})]}\right\rvert\right]\\
    &\leq\E\left[\sup_{\boldsymbol{\delta}\in\mathcal{B}^{\otimes 2}}\left\lvert \frac1{n^{\unc}} \sm i {n^{\unc}} l_{\mathbf{h},\phi}(\boldsymbol{\delta},z_i^{\unc}, y_i^{\unc}, t_i^{\unc})  - {w(z_i^{\conf})l_{\mathbf{h},\phi}(\boldsymbol{\delta}, z_i^{\unc}, y_i^{\unc, \prime}, t_i^{\unc, \prime})}\right\rvert\right].
\end{align}
Let $(\sigma_1, \ldots, \sigma_{n^{\unc}})$ be \textit{i.i.d.} Rademacher variables, independent of other random variables. Then, the random vector with entries $\sigma_i(l_{\mathbf{h},\phi}(\boldsymbol{\delta},z_i^{\unc}, y_i^{\unc}, t_i^{\unc})  - {w(z_i^{\unc})l_{\mathbf{h},\phi}(\boldsymbol{\delta}, z_i^{\unc}, y_i^{\unc, \prime}, t_i^{\unc, \prime})})$ has the same joint distribution as the random vector with entries $l_{\mathbf{h},\phi}(\boldsymbol{\delta},z_i^{\unc}, y_i^{\unc}, t_i^{\unc})  - {w(z_i^{\unc})l_{\mathbf{h},\phi}(\boldsymbol{\delta}, z_i^{\unc}, y_i^{\unc, \prime}, t_i^{\unc, \prime})}$. Thus, 
\begin{align}
    &\E\left[\sup_{\boldsymbol{\delta}\in\mathcal{B}^{\otimes 2}}\left\lvert \frac1{n^{\unc}} \sm i {n^{\unc}} l_{\mathbf{h},\phi}(\boldsymbol{\delta},z_i^{\unc}, y_i^{\unc}, t_i^{\unc})  - {w(z_i^{\unc})l_{\mathbf{h},\phi}(\boldsymbol{\delta}, z_i^{\unc}, y_i^{\unc, \prime}, t_i^{\unc, \prime})}\right\rvert\right]\\
    &\leq\E_{\sigma}\left[\sup_{\boldsymbol{\delta}\in\mathcal{B}^{\otimes 2}}\left\lvert \frac1{n^{\unc}} \sm i {n^{\unc}} \sigma_i(l_{\mathbf{h},\phi}(\boldsymbol{\delta},z_i^{\unc}, y_i^{\unc}, t_i^{\unc})  - {w(z_i^{\unc})l_{\mathbf{h},\phi}(\boldsymbol{\delta}, z_i^{\unc}, y_i^{\unc, \prime}, t_i^{\unc, \prime})})\right\rvert\right]\\
    &\leq 2\mathit{d}_\infty(p^{\conf}_\phi\mid p^{\unc}_\phi)\E_{\sigma}\left[\sup_{\boldsymbol{\delta}\in\mathcal{B}^{\otimes 2}}\left\lvert \frac1{n^{\unc}} \sm i {n^{\unc}} \sigma_i\,l_{\mathbf{h},\phi}(\boldsymbol{\delta},z_i^{\unc}, y_i^{\unc}, t_i^{\unc})\right\rvert\right]\\
    & = 2\mathit{d}_\infty(p^{\conf}_\phi\mid p^{\unc}_\phi) \mathcal{R}_{n^{\unc}}(l(\mathcal{B}^{\otimes 2})),
\end{align}
using $w_\phi(z) = \frac{p^{\conf}_\phi(z)}{p^{\unc}_\phi(z)} \leq \sup_{z}\frac{p^{\conf}_\phi(z)}{p^{\unc}_\phi(z)} = \mathit{d}_\infty(p^{\conf}_\phi\mid p^{\unc}_\phi)$.

Since the term $\mathcal{R}_{n^{\unc}}(l(\mathcal{B}^{\otimes 2}))$ includes the loss function, we have to further decompose it. Since the square loss is uniformly bounded by $B$, we can center the square loss via $l_{\mathbf{h},\phi}^i(\boldsymbol{\delta}, x_i, y_i, t_i) = ((\textup{h}_{t_i}+\delta_{t_i})\circ \phi(x_i) - y_i)^2 - (\textup{h}_{t_i}\circ \phi(x_i) - y_i)^2$. Then, using that $(y_i - \textup{h}_{t_i}\circ\phi(x_i))^2 = l_{\mathbf{h}, \phi}(\boldsymbol{\delta}, x, y, t)$ with $\delta_t(z)=\mathbf{0}$ and, therefore, $(y - \textup{h}_t\circ\phi(x))^2\leq B$ and using the constant-shift property of Rademacher complexity (\cite{wainwright2019high}; Exercise 4.7c) yields 
\begin{align}
&\E_{\sigma}\left[\sup_{\boldsymbol{\delta}\in\mathcal{B}^{\otimes 2}} \frac1{n^{\unc}} \sm i {n^{\unc}} \sigma_i l_{\mathbf{h},\phi}(\boldsymbol{\delta}, z_i^{\unc}, y_i^{\unc}, t_t^{\unc})\right]\\
&\leq \E_{\sigma}\left[\sup_{\boldsymbol{\delta}\in\mathcal{B}^{\otimes 2}} \frac1{n^{\unc}} \sm i {n^{\unc}} \sigma_i l_{\mathbf{h}, \phi}^i(\boldsymbol{\delta}, z_i^{\unc}, y_i^{\unc}, t_t^{\unc})\right] + \frac{B}{\sqrt{n^{\unc}}}.
\end{align}
Since the loss is L-Lipschitz and, now, also centered, we can apply contraction principle (\cite{ledoux2013probability}; Theorem~4.12), which yields
\begin{align}\label{eq:ledoux_contraction}
E_{\sigma}\left[\sup_{\boldsymbol{\delta}\in\mathcal{B}^{\otimes 2}} \frac1{n^{\unc}} \sm i {n^{\unc}} \sigma_i l_{\mathbf{h}, \phi}^i(\boldsymbol{\delta}, z_i^{\unc}, y_i^{\unc}, t_i^{\unc})\right]
\leq 2L\mathcal{R}_{n^{\unc}}(\mathcal{B}^{\otimes 2}).
\end{align}
Finally, note that $\mathcal{R}_{n^{\unc}}(\mathcal{B}^{\otimes 2}) = \mathcal{R}_{n^{\unc}}(\mathcal{B})$, since in \labelcref{eq:ledoux_contraction}, the supremum is over $\delta_1\in\mathcal{B}$ if $t_i=1$ and over $\delta_0\in\mathcal{B}$ if $t_i=0$. Hence, the complexity is equivalent to the supremum over $\mathcal{B}$. 
Putting everything together, we obtain 
\begin{align}
    &\sup_{\boldsymbol{\delta}\in\mathcal{B}^{\otimes 2}}\lvert \epsilon(\boldsymbol{\delta}, \mathbf{h}, \phi) - \emerrorUC{\boldsymbol{\delta}}{\mathbf{h}}{\phi} \rvert = \sup_{\boldsymbol{\delta}\in\mathcal{B}^{\otimes 2}}\left\lvert \frac1{n^{\unc}} \sm i {n^{\unc}} \bar{l}_{h,\phi}(\boldsymbol{\delta}, x_i^{\unc}, y_i^{\unc}, t_i^{\unc})\right\rvert\\
    &\leq  \E\left[    \sup_{\boldsymbol{\delta}\in\mathcal{B}^{\otimes 2}}\left\lvert \frac1{n^{\unc}} \sm i {n^{\unc}} \bar{l}_{h,\phi}(\boldsymbol{\delta}, x_i^{\unc}, y_i^{\unc}, t_i^{\unc})\right\rvert\right] + B\sqrt{\frac{2\log(1/p)}{n^{\unc}}}\\
    &\leq  2\mathit{d}_\infty(p^{\conf}_\phi\mid p^{\unc}_\phi) \mathcal{R}_{n^{\unc}}(l(\mathcal{B}^{\otimes 2})) + B\sqrt{\frac{2\log(1/p)}{{n^{\unc}}}}\\
    &\leq 2\mathit{d}_\infty(p^{\conf}_\phi\mid p^{\unc}_\phi) (2L\mathcal{R}_{n^{\unc}}(\mathcal{B})+ \frac{B}{\sqrt{{n^{\unc}}}}) + B\sqrt{\frac{2\log(1/p)}{{n^{\unc}}}}\\
    &\leq 4L\mathit{d}_\infty(p^{\conf}_\phi\mid p^{\unc}_\phi) \mathcal{R}_{n^{\unc}}(\mathcal{B}) + 2\mathit{d}_\infty(p^{\conf}_\phi\mid p^{\unc}_\phi)B\sqrt{\frac{2\log(1/p)}{{n^{\unc}}}}
\end{align}

The Rademacher complexity can be upper bounded by the Gaussian complexity (\cite{ledoux2013probability}; p. 97): $\E\left[\hat{\mathcal{R}}_{Z^{\unc}}(\mathcal{B})\right]\leq \sqrt{\frac{\pi}{2}}\E\left[\hat{\mathcal{G}}_{Z^{\unc}}(\mathcal{B})\right]$. Combining this with the result above yields
\begin{align}
    &\sup_{\boldsymbol{\delta}\in\mathcal{B}^{\otimes 2}}\lvert \epsilon(\boldsymbol{\delta}, \mathbf{h}, \phi) - \emerrorUC{\boldsymbol{\delta}}{\mathbf{h}}{\phi} \rvert\\
    &\leq 4L\mathit{d}_\infty(p^{\conf}_\phi\mid p^{\unc}_\phi) \sqrt{\frac{\pi}{2}}\GC{n^{\unc}}{\mathcal{B}} + 2\mathit{d}_\infty(p^{\conf}_\phi\mid p^{\unc}_\phi)B\sqrt{\frac{2\log(1/p)}{{n^{\unc}}}}
\end{align}
with probability at least $1-p$. 




\end{proof}
\begin{lemma}\label{lemma:conf_upper_bnd}
 Let $\hat{\mathbf{h}}, \hat{\phi} = \argmin_{\mathbf{h}\in\HC^{\otimes 2}, \phi\in\Phi}\emriskC{\mathbf{h}}{\phi}$ and $\tilde{\mathbf{h}} = \argmin_{\mathbf{h}\in\HC^{\otimes 2}}\riskC{\mathbf{h}}{\hat{\phi}}$. Then, under the assumptions of \Cref{thm_bound}, with probability at least $1-p$,
\begin{align}
	\riskC{\hat{\mathbf{h}}^c}{\hat{\phi}} - \riskC{\mathbf{h}^c}{\phi^\ast} \leq 2048L\bigg(\frac{D_{\mathcal{X}}}{{(n^{\conf}})^2} + \log(n^{\conf})(L(\HC) &\GC{n^{\conf}}{\RC}+\WGC{n^{\conf}}{\HC})\bigg)\notag\\
	&+ 8B\sqrt{\frac{\log(1/p)}{n^{\conf}}}.
\end{align}
\end{lemma}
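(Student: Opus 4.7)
The plan is the standard ERM excess-risk decomposition followed by a decomposition of the Gaussian complexity over the composed class $\HC \circ \RC$. First I would write, using the optimality of $(\hat{\mathbf{h}}^c,\hat{\phi})$ for $\hat{\epsilon}_\conf$ and realizability (so that $(\mathbf{h}^c,\phi^\ast)$ is feasible in the first-step ERM),
\begin{align}
\riskC{\hat{\mathbf{h}}^c}{\hat{\phi}} - \riskC{\mathbf{h}^c}{\phi^\ast}
&\le \bigl[\riskC{\hat{\mathbf{h}}^c}{\hat{\phi}} - \emriskC{\hat{\mathbf{h}}^c}{\hat{\phi}}\bigr]
+ \bigl[\emriskC{\mathbf{h}^c}{\phi^\ast} - \riskC{\mathbf{h}^c}{\phi^\ast}\bigr] \notag\\
&\le 2\sup_{\phi\in\RC,\,\mathbf{h}\in\HC^{\otimes 2}} \bigl| \riskC{\mathbf{h}}{\phi} - \emriskC{\mathbf{h}}{\phi}\bigr|.
\end{align}

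Next I would control the supremum by uniform convergence. Since the squared loss is bounded by $B$ on the class, a single coordinate change perturbs each $\emriskC{\mathbf{h}}{\phi}$ by at most $2B/n^\conf$, so McDiarmid's inequality gives
\begin{align}
\sup_{\phi,\mathbf{h}} \bigl| \riskC{\mathbf{h}}{\phi} - \emriskC{\mathbf{h}}{\phi}\bigr|
\le \mathbb{E}\Bigl[\sup_{\phi,\mathbf{h}} \bigl| \riskC{\mathbf{h}}{\phi} - \emriskC{\mathbf{h}}{\phi}\bigr|\Bigr] + 4B\sqrt{\tfrac{\log(1/p)}{n^\conf}}
\end{align}
with probability at least $1-p$. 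A standard symmetrization argument then bounds the expectation by twice the Rademacher complexity of $\ell \circ \HC^{\otimes 2} \circ \RC$, Ledoux--Talagrand contraction removes the $L$-Lipschitz squared loss at a factor of $2L$, and passing from Rademacher to Gaussian complexity costs a $\sqrt{\pi/2}$ factor. So far everything reduces to bounding $\mathcal{G}_{n^\conf}(\HC^{\otimes 2}\circ \RC)$.

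The main obstacle, and the step producing the particular form of the bound stated in the lemma, is decomposing this Gaussian complexity of a composed class into a representation part and a hypothesis part. I would follow the chaining/discretization approach of Maurer (in the multi-task learning setting): cover the image $\{\phi(x_i^\conf)\}_i$ over $\phi\in\RC$ at a sequence of scales, apply Dudley's entropy integral on each fiber to bring in the worst-case complexity $\WGC{n^\conf}{\HC}$ for the top layer $\HC$, and use the Lipschitz constant $L(\HC)$ of the hypothesis class to transfer the representation Gaussian complexity through, obtaining
\begin{align}
\GC{n^\conf}{\HC^{\otimes 2}\circ\RC}
\;\le\; C\log(n^\conf)\,\bigl(L(\HC)\,\GC{n^\conf}{\RC} + \WGC{n^\conf}{\HC}\bigr) + \tfrac{C\,D_{\mathcal{X}}}{(n^\conf)^2}.
\end{align}
The $\log(n^\conf)$ is the chaining price, and the residual $D_{\mathcal{X}}/(n^\conf)^2$ absorbs the discretization error at the finest scale of the cover of the input support. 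Plugging this into the chain of inequalities above, tracking the numerical constants through symmetrization, contraction, and the Rademacher-to-Gaussian conversion yields the stated constant $2048L$ and the $8B\sqrt{\log(1/p)/n^\conf}$ term.

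The hardest part will clearly be the composition decomposition: one must handle the fact that $\RC$ and $\HC$ are optimized jointly, so a naive two-step union bound is lossy. The trick is the peeling/chaining argument that conditions on the representation at each scale and exploits the Lipschitzness of $\HC$ to move the variability back onto $\RC$; otherwise, the $\WGC{n^\conf}{\HC}$ (worst-case over representation outputs) is what arises because $\HC$ must behave well on \emph{every} realization of $\phi\in\RC$ in the cover. Everything else (ERM decomposition, McDiarmid, symmetrization, contraction) is routine.
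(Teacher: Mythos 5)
Your proposal is correct and follows essentially the same route as the paper: the ERM three-term decomposition with the middle term killed by optimality, bounded differences plus symmetrization and contraction to reduce to the Rademacher (then Gaussian) complexity of $\HC^{\otimes 2}\circ\RC$, and finally the chain rule for Gaussian complexities of composed classes (the paper invokes Theorem~7 of \citet{tripuraneni2020theory}, which is exactly the Maurer-style chaining decomposition you sketch, yielding the $L(\HC)\,\GC{n^{\conf}}{\RC}+\WGC{n^{\conf}}{\HC}$ structure with the $\log(n^{\conf})$ chaining price and the $D_{\mathcal{X}}/(n^{\conf})^2$ discretization residual).
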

\begin{proof}
It follows that,
\begin{align}
    &\riskC{\hat{\mathbf{h}}^c}{\hat{\phi}} - \riskC{\mathbf{h}^c}{\phi^\ast}\\ &= \riskC{\hat{\mathbf{h}}^c}{\hat{\phi}} - \emriskC{\hat{\mathbf{h}}^c}{\hat{\phi}} + \underbrace{\emriskC{\hat{\mathbf{h}}^c}{\hat{\phi}} -\emriskC{\mathbf{h}^c}{\phi^\ast}}_{\leq 0} +\emriskC{\mathbf{h}^c}{\phi^\ast} - \riskC{\mathbf{h}^c}{\phi^\ast}\\
    &\leq 2\sup_{\mathbf{h}\in\HC^{\otimes 2}, \phi\in\RC}\left\lvert \riskC{\mathbf{h}}{\phi} - \emriskC{\mathbf{h}}{\phi} \right\rvert\\
    &\leq 8L\mathcal{R}_{n^{\conf}}(\HC(\RC)) + \frac{8B\sqrt{2\log(1/p)}}{\sqrt{n^{\conf}}},
\end{align}
with probability at least $1-p$. The first inequality follows be the same symmetrization argument as in the proof of Lemma~\ref{lemma:unc_upper_bnd} (although without the distributional discrepancy).

The Rademacher complexity can be upper bounded by the Gaussian complexity (\cite{ledoux2013probability}, p. 97): $\E[\hat{\mathcal{R}}_{X^{\conf}}(\HC(\RC))]\leq \sqrt{\frac{\pi}{2}}\E[\hat{\mathcal{G}}_{X^{\conf}}(\HC(\RC))]$. For the last step we use the chain rule for Gaussian complexity from Theorem 7 in \cite{tripuraneni2020theory}. This yields
\begin{align}
    \hat{\mathcal{G}}_{X^{\conf}}(\HC(\RC))\leq 128\left(\frac{2D_{\mathcal{X}}}{({n^{\conf}})^2}+C(\HC(\RC))\log(n^{\conf})\right),
\end{align}
where $C(\HC(\RC)) = L(\HC)\hat{\mathcal{G}}_{X^{\conf}}(\RC) + \max_{\bar{Z}\in\mathcal{Z}}\hat{\mathcal{G}}_{\bar{Z}}(\HC)$ and $\mathcal{Z} = \{(\phi(x_1), \ldots, \phi(x_{n^{\conf}})); \phi\in\RC\}$.
Combining this with the result above and taking expectations yields
\begin{align}
	\riskC{\hat{\mathbf{h}}^c}{\hat{\phi}} 
	- \riskC{\mathbf{h}^c}{\phi^\ast} \leq 
	2048L \Bigg(
	\frac{D_{\mathcal{X}}}{({n^{\conf}})^2}+\log(n^{\conf}) (L(\HC)&\GC{n^{\conf}}{\RC} + 
	\WGC{n^{\conf}}{\HC})\Bigg)\\
	&+ 8B\sqrt{\frac{2\log(1/p)}{n^{\conf}}},
\end{align}
with probability $1-p$.
\end{proof}

We are now equipped to prove a version \Cref{thm_bound} for an arbitrary representation function class $\RC$, hypothesis function class $\HC$, and bias function class $\mathcal{B}$.

For this, we assume the following standard, mild regularity conditions on the loss function $l(\cdot,\cdot)$, the hypothesis class $\HC$, and the function class of representations $\Phi$. Note that these assumptions hold true for neural networks, which we prove later.\\
\textbf{Assumption A1} (Regularity Conditions.) We assume the following regularity conditions hold:
\begin{enumerate}
	\item The loss-function $l(x,y)=(x-y)^2$ is $B$-bounded and $l(\cdot, y)$ is L-Lipschitz for all $y\in\Rl$.
	\item The function $\textup{h}_t$ is L($\mathcal{H}$)-Lipschitz with respect to the $l_2$ distance, for any $\textup{h}_t\in\HC$.
	\item The composition function $\textup{h}_t\circ \phi$ is bounded, \ie, $\sup_{x\in\mathbb{R}^d}\abs{\textup{h}_t\circ \phi(x)}\leq D_{X}$ for any $\textup{h}_t\in\mathcal{H}$ and $\phi\in\Phi$. 
\end{enumerate}
\begin{theorem}\label{thm:upper_bnd_general}
		Let $(\hat{\mathbf{h}}^c, \hat{\phi})$ be the empirical loss minimizer of $\hat{\epsilon}_{\conf}(\cdot, \cdot)$ from \labelcref{eq:conf_loss_minimizer} over function classes $\RC$ and $\HC$, and let  $\boldsymbol{\delta}$ be the empirical loss minimizer of $\hat{\epsilon}_{\unc}(\cdot, \hat{\mathbf{h}}^c, \hat{\phi})$ from \labelcref{eq:unc_loss_minimizer} over a function class $\mathcal{B}$. Further, let $\hat{\tau}_{\textup{CorNet}}$ be the resulting CATE estimator from \labelcref{eq:tau_hat}. Then, if Assumption~\labelcref{assum:realizability}, Assumption~A1, and Condition~\labelcref{cond:rep_diff} hold true, we have that, with probability at least $1-p$,
	\begin{align}
		\epsilon_{\textup{PEHE}}(\hat{\tau}_{\cor})\leq \tilde{O}\bigg(&L(\HC) \GC{n^{\conf}}{\RC} + \WGC {n^{\conf}} \HC + \mathit{d}_\infty(p^{\conf}_\phi\mid p^{\unc}_\phi)\,\GC {n^{\unc}} {\mathcal{B}}\notag\\ 
        &+ \frac{D_{\mathcal{X}}}{({n^{\conf}})^2} + \frac B L\left(\sqrt{\frac{2\log(1/p)}{n^{\conf}}} + \mathit{d}_\infty(p^{\conf}_\phi\mid p^{\unc}_\phi)\sqrt{\frac{2\log(1/p)}{n^{\unc}}}\right)\bigg),
	\end{align}
	where $\GC{n^{\conf}}{\RC}$, $\WGC {n^{\conf}} \HC$, and $\GC {n^{\unc}} {\mathcal{B}}$ are the Gaussian complexities and worst-case Gaussian complexities of the corresponding functions classes. Moreover, $\mathit{d}_\infty(p^{\conf}_\phi\mid p^{\unc}_\phi)$ is the exponential in base 2 of the Rényi divergence.
\end{theorem}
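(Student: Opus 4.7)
The strategy is to decompose $\epsilon_{\textup{PEHE}}(\hat{\tau}_{\cor})$ into a contribution from the step-1 fit of $(\hat{\phi}, \hat{\mathbf{h}}^c)$ on the observational data and a contribution from the step-2 fit of $\hat{\boldsymbol{\delta}}$ on the (independent) randomized sample, then invoke Lemma~\ref{lemma:conf_upper_bnd} and Lemma~\ref{lemma:unc_upper_bnd} respectively, with Condition~\ref{cond:rep_diff} bridging the two. Writing $\hat{\textup{h}}^u_t = \hat{\textup{h}}^c_t + \hat{\delta}_t$ and using realizability (Assumption~\ref{assum:realizability}) to identify $\tau(x) = (\textup{h}^u_1 - \textup{h}^u_0)\circ\phi^\ast(x)$, the inequality $(a-b)^2\leq 2a^2+2b^2$ gives
\[
\epsilon_{\textup{PEHE}}(\hat{\tau}_{\cor}) \;\leq\; 2\sum_{t=0}^1 \mathbb{E}\bigl[(\hat{\textup{h}}^u_t\circ\hat{\phi}(X^{\conf}) - \textup{h}^u_t\circ\phi^\ast(X^{\conf}))^2\bigr] \;=\; 2\bigl(\epsilon(\hat{\boldsymbol{\delta}}, \hat{\mathbf{h}}^c, \hat{\phi}) - 2\sigma_u^2\bigr),
\]
the last equality extracting the irreducible noise from the Gaussian model $Y(t) = \textup{h}^u_t\circ\phi^\ast(X)+\varepsilon^{\textup{u}}$.

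Since step~1 uses only observational samples and step~2 only randomized samples, conditional on the step-1 data the pair $(\hat{\phi}, \hat{\mathbf{h}}^c)$ is deterministic, so the standard ERM sandwich applies:
\[
\epsilon(\hat{\boldsymbol{\delta}}, \hat{\mathbf{h}}^c, \hat{\phi}) \;\leq\; \inf_{\boldsymbol{\delta}\in\mathcal{B}^{\otimes 2}} \epsilon(\boldsymbol{\delta}, \hat{\mathbf{h}}^c, \hat{\phi}) \;+\; 2\sup_{\boldsymbol{\delta}\in\mathcal{B}^{\otimes 2}}\bigl|\epsilon(\boldsymbol{\delta}, \hat{\mathbf{h}}^c, \hat{\phi}) - \hat{\epsilon}_{\unc}(\boldsymbol{\delta}, \hat{\mathbf{h}}^c, \hat{\phi})\bigr|.
\]
Lemma~\ref{lemma:unc_upper_bnd} controls the supremum by $8L\,\mathit{d}_\infty(p^{\conf}_{\hat\phi}\mid p^{\unc}_{\hat\phi})\GC{n^{\unc}}{\mathcal{B}} + 4B\,\mathit{d}_\infty(p^{\conf}_{\hat\phi}\mid p^{\unc}_{\hat\phi})\sqrt{\log(2/p)/n^{\unc}}$ on an event of probability $1-p/2$. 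For the infimum, I will plug in the witness $\boldsymbol{\delta}'$ realizing the infimum in $d_{\mathcal{B}, \boldsymbol{\delta}^\ast}(\hat{\phi};\phi^\ast)$ and split by the triangle inequality:
\[
\epsilon(\boldsymbol{\delta}', \hat{\mathbf{h}}^c, \hat{\phi}) - 2\sigma_u^2 \;\leq\; 2\sum_t \mathbb{E}\bigl[(\hat{\textup{h}}^c_t\circ\hat{\phi} - \textup{h}^c_t\circ\phi^\ast)^2(X^{\conf})\bigr] + 2\,d_{\mathcal{B}, \boldsymbol{\delta}^\ast}(\hat{\phi};\phi^\ast).
\]
Condition~\ref{cond:rep_diff} reduces $d_{\mathcal{B}, \boldsymbol{\delta}^\ast}(\hat{\phi};\phi^\ast)$ to $\gamma\, d_{\HC, \mathbf{h}^c}(\hat{\phi};\phi^\ast)$, which in turn is upper-bounded by the same sum $\sum_t \mathbb{E}[(\hat{\textup{h}}^c_t\circ\hat{\phi} - \textup{h}^c_t\circ\phi^\ast)^2(X^{\conf})]$ by plugging $\hat{\mathbf{h}}^c$ in as a feasible element of the infimum defining $d_{\HC,\mathbf{h}^c}$.

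Finally, the treatment overlap assumption (Assumption~\ref{assum:standard}(ii)) together with the absence of selection bias in the observational distribution identifies this remaining sum, up to a constant depending on the overlap lower bound, with the excess confounded population risk $\epsilon_\conf(\hat{\mathbf{h}}^c, \hat{\phi}) - \epsilon_\conf(\mathbf{h}^c, \phi^\ast)$. Lemma~\ref{lemma:conf_upper_bnd} then bounds the latter by $\tilde{\mathcal{O}}\bigl(L(\HC)\GC{n^{\conf}}{\RC} + \WGC{n^{\conf}}{\HC} + D_{\mathcal{X}}/(n^{\conf})^2 + B\sqrt{\log(2/p)/n^{\conf}}\bigr)$ with probability at least $1-p/2$, and a union bound over the two events assembles exactly the inequality claimed. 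The main obstacle is the cross-coupling of the two steps: because $\hat{\boldsymbol{\delta}}$ is fit against the estimated $(\hat{\phi}, \hat{\mathbf{h}}^c)$ rather than the true $(\phi^\ast, \mathbf{h}^c)$, the ERM infimum is nonzero even under realizability, and Condition~\ref{cond:rep_diff}---together with the correct choice of witness $\boldsymbol{\delta}'$ (the minimizer of the representation difference, not the true $\boldsymbol{\delta}^\ast$)---is precisely what converts the step-1 representation error into a controllable step-2 approximation error and closes the chain.
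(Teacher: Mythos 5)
Your proposal is correct and follows essentially the same route as the paper's proof: the same $(a-b)^2\le 2a^2+2b^2$ reduction of $\epsilon_{\textup{PEHE}}$ to the excess risk $\epsilon(\hat{\boldsymbol{\delta}},\hat{\mathbf{h}}^c,\hat{\phi})-\epsilon(\boldsymbol{\delta},\mathbf{h}^c,\phi^\ast)$, the same ERM sandwich on the step-2 term controlled by Lemma~\ref{lemma:unc_upper_bnd}, and the same treatment of the approximation-error infimum via the triangle inequality, Condition~\ref{cond:rep_diff}, the feasibility of $\hat{\mathbf{h}}^c$ in the infimum defining $d_{\HC,\mathbf{h}^c}$, and Lemma~\ref{lemma:conf_upper_bnd}. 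Your identification of the cross-coupling between the two steps as the crux, and of Condition~\ref{cond:rep_diff} as the device that converts the step-1 representation error into a controllable step-2 approximation error, matches the paper's argument exactly (your explicit overlap constant and $p/2$ union bound are harmless refinements absorbed into the $\tilde{\mathcal{O}}$).
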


\begin{proof}
First,
\begin{align}
    \epsilon_{\textup{PEHE}}(\hat{\tau}_{\cor}) &= \E[(\hat{\tau}(X^{\conf})-\tau(X^{\conf}))^2]\\
    &=\E[((\hat{\textup{h}}_1^c+\hat{\delta}_1)\circ\hat{\phi}(X^{\conf}) - ((\hat{\textup{h}}_0^c+\hat{\delta}_0)\circ\hat{\phi}(X^{\conf}) \\
    &\quad- ((\textup{h}_1^c+\delta_1)\circ\phi^\ast(X^{\conf}) - (\textup{h}_0^c+\delta_0)\circ\phi^\ast(X^{\conf})))^2]\\
    &\leq 2\E[((\hat{\textup{h}}_1^c+\hat{\delta}_1)\circ\hat{\phi}(X^{\conf}) - (\textup{h}^c_1+\delta_1)\circ\phi^\ast(X^{\conf}))^2\\
    &\quad+ ((\textup{h}^c_0+\delta_0)\circ\phi^\ast(X^{\conf}) - (\hat{\textup{h}}_0^c+\hat{\delta}_0)\circ\hat{\phi}(X^{\conf}))^2] \\
    &= 2 \sum_{t=0}^{1}\E[((\hat{\textup{h}}_t^c+\hat{\delta}_t)\circ\hat{\phi}(X^{\conf}) - (\textup{h}^c_t+\delta_t)\circ\phi^\ast(X^{\conf}))^2]\\
    &= 2 \sum_{t=0}^{1}\E[((\hat{\textup{h}}_t^c+\hat{\delta}_t)\circ\hat{\phi}(X^{\conf}) - Y(t))^2 - ((\textup{h}_t^c+\delta_t)\circ\phi^\ast(X^{\conf}) - Y(t))^2]\\
    &= 2(\sum_{t=0}^{1}\E[((\hat{\textup{h}}_t^c+\hat{\delta}_t)\circ\hat{\phi}(X^{\conf}) - Y(t))^2] - \sum_{t=0}^{1}\E[((\textup{h}_t^c+\delta_t)\circ\phi^\ast(X^{\conf}) - Y(t))^2])\\
    &=2(\epsilon(\hat{\boldsymbol{\delta}}, \hat{\mathbf{h}}^c, \hat{\phi}) - \epsilon(\boldsymbol{\delta}, \mathbf{h}^c, \phi^\ast)),
\end{align}
where the inequality follows with $(x+y)^2 \leq 2(x^2 + y^2)$ and the fourth equality follows with Lemma~\ref{lemma:loss_decomp}. 


Let $\hat{\mathbf{h}}^c, \hat{\phi} = \argmin_{\mathbf{h}\in\HC^{\otimes 2}, \phi\in\RC}\emriskC{\mathbf{h}}{\phi}$, and $\tilde{\boldsymbol{\delta}} = \argmin_{\boldsymbol{\delta}\in\mathcal{B}^{\otimes 2}}\epsilon(\boldsymbol{\delta}, \hat{\mathbf{h}}^c, \hat{\phi})$, and $\hat{\boldsymbol{\delta}} = \argmin_{\boldsymbol{\delta}\in\mathcal{B}^{\otimes 2}}\emerrorUC{\boldsymbol{\delta}}{\hat{\mathbf{h}}^c}{\hat{\phi}}$. Then, we have that

\begin{equation}\label{eq:first_part}
\epsilon(\hat{\boldsymbol{\delta}}, \hat{\mathbf{h}}^c, \hat{\phi}) - \epsilon(\boldsymbol{\delta}, \mathbf{h}^c, \phi^\ast)
=\underbrace{\epsilon(\hat{\boldsymbol{\delta}}, \hat{\mathbf{h}}^c, \hat{\phi}) - \epsilon(\tilde{\boldsymbol{\delta}}, \hat{\mathbf{h}}^c, \hat{\phi})}_{=\text{(I)}} + \underbrace{\epsilon(\tilde{\boldsymbol{\delta}}, \hat{\mathbf{h}}^c, \hat{\phi}) - \epsilon(\boldsymbol{\delta}, \mathbf{h}^c, \phi^\ast)}_{=\text{(II)}}.
\end{equation}
For the first term (I), we obtain
\begin{align}
    \text{(I)} = \underbrace{\epsilon(\hat{\boldsymbol{\delta}}, \hat{\mathbf{h}}^c, \hat{\phi}) - \emerrorUC{\hat{\boldsymbol{\delta}}}{\hat{\mathbf{h}}^c}{\hat{\phi}}}_{=(i)} &+ \underbrace{\emerrorUC{\hat{\boldsymbol{\delta}}}{\hat{\mathbf{h}}^c}{\hat{\phi}} -
    \emerrorUC{\tilde{\boldsymbol{\delta}}}{\hat{\mathbf{h}^c}}{\hat{\phi}}}_{\leq 0}\notag\\ 
&+\underbrace{    \emerrorUC{\tilde{\boldsymbol{\delta}}}{\hat{\mathbf{h}}^c}{\hat{\phi}}-
\epsilon(\tilde{\boldsymbol{\delta}}, \hat{\mathbf{h}}^c, \hat{\phi})}_{=(ii)},
\end{align}
where the middle term is negative, since $\hat{\boldsymbol{\delta}} = \argmin_{\boldsymbol{\delta}\in\mathcal{B}^{\otimes 2}}\emerrorUC{\boldsymbol{\delta}}{\hat{\mathbf{h}}^c}{\hat{\phi}}$ and \newline $\tilde{\boldsymbol{\delta}} = \argmin_{\boldsymbol{\delta}\in\mathcal{B}^{\otimes 2}}\epsilon(\boldsymbol{\delta}, \hat{\mathbf{h}}^c, \hat{\phi})$. Further, $(i), (ii) \leq \sup_{\boldsymbol{\delta}\in\mathcal{B}^{\otimes 2}}\lvert \epsilon(\boldsymbol{\delta}, \hat{\mathbf{h}}^c, \hat{\phi}) - \emerrorUC{\boldsymbol{\delta}}{\hat{\mathbf{h}}^c}{\hat{\phi}} \rvert$.

From Lemma~\ref{lemma:unc_upper_bnd}, we know that, with probability at least $1-p$,
\begin{align}
\sup_{\boldsymbol{\delta}\in\mathcal{B}^{\otimes 2}}\lvert \epsilon(\boldsymbol{\delta}, \hat{\mathbf{h}}^c, \hat{\phi}) - \emerrorUC{\boldsymbol{\delta}}{\hat{\mathbf{h}}^c}{\hat{\phi}} \rvert &\leq 8L\mathit{d}_\infty(p^{\conf}_\phi\mid p^{\unc}_\phi) \GC{n^{\unc}}{\mathcal{B}}\notag\\
&+ 4\mathit{d}_\infty(p^{\conf}_\phi\mid p^{\unc}_\phi)B\sqrt{\frac{\log(1/p)}{n^{\unc}}}.
\end{align}

Moreover, for the second term (II), we obtain
\begin{align}
    \text{(II)} &= \inf_{\boldsymbol{\delta}^\prime\in\mathcal{B}^{\otimes 2}}\epsilon(\boldsymbol{\delta}^\prime, \hat{\mathbf{h}}^c, \hat{\phi}) - \epsilon(\boldsymbol{\delta}, \mathbf{h}^c, \phi^\ast)\\
    &= \inf_{\boldsymbol{\delta}^\prime\in\mathcal{B}^{\otimes 2}}\sum_{t=0}^{1}\E[((\hat{\textup{h}}_t^c + \delta_t^\prime)\circ\hat{\phi}(X^{\conf})-Y(t))^2 - ((\textup{h}_t^c + \delta_t)\circ\phi^\ast(X^{\conf})-Y(t))^2]\\
            &= \inf_{\boldsymbol{\delta}^\prime\in\mathcal{B}^{\otimes 2}}\sum_{t=0}^{1}\E[((\hat{\textup{h}}_t^c + \delta_t^\prime)\circ\hat{\phi}(X^{\conf}) - (\textup{h}_t^c + \delta_t)\circ\phi^\ast(X^{\conf}))^2]\\
    &\leq 2\inf_{\boldsymbol{\delta}^\prime\in\mathcal{B}^{\otimes 2}}\sum_{t=0}^{1}\E[(\hat{\textup{h}}_t^c\circ\hat{\phi}(X^{\conf}) - \textup{h}_t^c\circ\phi^\ast(X^{\conf}))^2+    (\delta_t^\prime\circ\hat{\phi}(X^{\conf}) -  \delta_t\circ\phi^\ast(X^{\conf}))^2]\\
&= 2\,\sum_{t=0}^{1}\E[(\hat{\textup{h}}_t^c\circ\hat{\phi}(X^{\conf}) - \textup{h}_t^c\circ\phi^\ast(X^{\conf}))^2+2\,d_{\mathcal{B}, \boldsymbol{\delta}}(\hat{\phi};\phi^\ast)\\
&= 2\,\sum_{t=0}^{1}\E[(\hat{\textup{h}}_t^c\circ\hat{\phi}(X^{\conf}) - Y^{\conf})^2 - (\textup{h}_t^c\circ\phi^\ast(X^{\conf})-Y^{\conf})^2+2\,d_{\mathcal{B}, \boldsymbol{\delta}}(\hat{\phi};\phi^\ast)\\
&= 2(\riskC{\hat{\mathbf{h}}^c}{\hat{\phi}} - \riskC{\mathbf{h}^c}{\phi^\ast})+2\,d_{\mathcal{B}, \boldsymbol{\delta}}(\hat{\phi};\phi^\ast)\label{eq:ud_d},
\end{align}
where the third equality follows with Lemma~\ref{lemma:loss_decomp}. Using Condition~\labelcref{cond:rep_diff}, $d_{\mathcal{B}, \boldsymbol{\delta}}(\hat{\phi};\phi^\ast)\leq \gamma \,d_{\HC, \mathbf{h}^c}(\hat{\phi};\phi^\ast)$, and, for $\tilde{\mathbf{h}}^c = \argmin_{\mathbf{h}\in\HC^{\otimes 2}}\riskC{\mathbf{h}}{\hat{\phi}}$,
\begin{align}
    d_{\HC, \mathbf{h}^c}(\hat{\phi};\phi^\ast) &= \underbrace{\riskC{\tilde{\mathbf{h}}^c}{\hat{\phi}} - \riskC{\hat{\mathbf{h}^c}}{\hat{\phi}}}_{\leq 0} + \riskC{\hat{\mathbf{h}^c}}{\hat{\phi}} - \riskC{\mathbf{h}^c}{\phi^\ast}\\
    &\leq \riskC{\hat{\mathbf{h}}^c}{\hat{\phi}} - \riskC{\mathbf{h}^c}{\phi^\ast}
\end{align}
yields that $\labelcref{eq:ud_d}\leq(2+2\gamma)(\riskC{\hat{\mathbf{h}}^c}{\hat{\phi}} - \riskC{\mathbf{h}^c}{\phi^\ast})$.
Then, using Lemma~\ref{lemma:conf_upper_bnd}, we have that, with probability at least $1-p$,
\begin{align}
	\riskC{\hat{\mathbf{h}}^c}{\hat{\phi}} - \riskC{\mathbf{h}^c}{\phi^\ast} \leq 2048L\Bigg(\frac{D_{\mathcal{X}}}{({n^{\conf}})^2} + \log(n^{\conf})(L(\HC) &\GC{n^{\conf}}{\RC}+\WGC{n^{\conf}}{\HC})\Bigg)\notag\\
	&+ 8B\sqrt{\frac{2\log(1/p)}{n^{\conf}}}.
\end{align}
Finally, this yields, with probability at least $1-p$,
\begin{align}
&\epsilon_{\textup{PEHE}}(\hat{\tau}_{\cor}) \leq 2((2+2\gamma)2048L\left(\frac{D_{\mathcal{X}}}{({n^{\conf}})^2}+\log(n^{\conf}) (L(\HC)\GC{n^{\conf}}{\RC} + \WGC{n^{\conf}}{\HC})\right)\\
&+ 8B\sqrt{\frac{2\log(1/p)}{n^{\conf}}}
+ 8L\mathit{d}_\infty(p^{\conf}_\phi\mid p^{\unc}_\phi) \GC{n^{\unc}}{\mathcal{B}} + 4\mathit{d}_\infty(p^{\conf}_\phi\mid p^{\unc}_\phi)B\sqrt{\frac{2\log(1/p)}{n^{\unc}}})\\
&\leq \tilde{O}\bigg(L(\HC) \GC{n^{\conf}}{\RC} + \WGC {n^{\conf}} \HC + \mathit{d}_\infty(p^{\conf}_\phi\mid p^{\unc}_\phi)\,\GC {n^{\unc}} {\mathcal{B}}\\ 
&+ \frac{D_{\mathcal{X}}}{({n^{\conf}})^2} + \frac B L\left(\sqrt{\frac{2\log(1/p)}{n^{\conf}}} + \mathit{d}_\infty(p^{\conf}_\phi\mid p^{\unc}_\phi)\sqrt{\frac{2\log(1/p)}{n^{\unc}}}\right)\bigg).
\end{align}
\end{proof}

We have proven the statement now for general function classes. Since we use neural networks for representation functions and hypotheses in the statement in the main paper, we prove the statement now using neural networks as defined in \Cref{apx:ffnn}.\\

\textbf{Proof of \Cref{thm_bound}.}
For the proof, we build upon the error bound for general function classes as in \Cref{apx:thm_gen_bound}. We first prove upper bounds for the Gaussian complexities in \Cref{thm:upper_bnd_general}. Then, we prove that Assumption~A1 holds true for neural networks.

In order to find an upper boud for the Gaussian complexity of a neural network, we make use of Theorem 2 from \citet{golowich2018size}: 

Let $\mathcal{N}$ be the class of real-valued neural networks as defined in \Cref{apx:ffnn} of depth $K$ over a domain $\mathcal{X}$ with $\lVert x_i\rVert\leq D$ for $i\in\{1, \ldots,m\}$. Let $\lVert \mathbf{W}_k\rVert_{1, \infty}\leq \Omega_k$ for all $k \in \{1, \ldots, K\}$ and let the activation function $\sigma$ be a $1$-Lipschitz function with $\sigma(0)=0$, which is applied element-wise. Then, 
\begin{align}
    \hat{\mathcal{R}}_{m}(\mathcal{N})\leq \frac{2 D\sqrt{K+1+\log(d)}\,\Omega\prod_{k=1}^{K-1} \Omega_k}{\sqrt{m}}.
\end{align}
We proceed to upper bound the Gaussian complexities.

\textbf{Upper bound for $\hat{\mathcal{G}}_{X}(\RC)$:} First, we derive an upper bound on the Gaussian complexity of the representation class $\RC$:
\begin{align}
    \hat{\mathcal{G}}_{X^{\conf}}(\RC) =  \E_{\mathbf{g}}\left[\sup_{\phi\in\RC} \frac1{n^{\conf}}\sm k {d_\phi} \sm i {n^{\conf}} g_{ki}\phi_k(x_i)\right] \leq \sm k {d_\phi} \hat{\mathcal{G}}_{X^{\conf}}(\RC_k)\\
    \leq 2 \log(n^{\conf}) \sm k {d_\phi}\hat{\mathcal{R}}_{n^{\conf}}(\RC_k) \leq 2\log(n^{\conf}) \frac{\mathcal{C}_\Phi}{\sqrt{n^{\conf}}},
\end{align}
where $\mathcal{C}_\Phi = 2 {d_\phi} D\sqrt{K+1+\log(d)}\Omega\prod_{k=1}^{K-1} \Omega_k$. We use the upper bound of the Rademacher complexity of neural networks as above \citep{golowich2018size} and $\hat{\mathcal{G}}_{X^{\conf}}(\mathcal{F})\leq 2\sqrt{\log(N)}\hat{\mathcal{R}}_{n^{\conf}}(\mathcal{F})$ for an arbitrary function class $\mathcal{F}$ \citep{ledoux2013probability}. Taking the expectation is trivial, since the upper bound does not depend on the data. 

\textbf{Upper bound for $\WGC{n^{\conf}}{\HC}$:} We upper bound on the worst-case Gaussian complexity of $\HC$ follows similar to the above for $r=1$:
\begin{align}\label{apx:continue_cor}
    \WGC{n^{\conf}}{\HC} = \max_{\bar{Z}\in\mathcal{Z}} \hat{\mathcal{G}}_{\bar{Z}}(\HC) \leq 2 \log(n^{\conf}) \frac{\mathcal{C}_\HC}{\sqrt{n^{\conf}}},
\end{align}
where, similar to the above, $\mathcal{C}_\HC = 2\lVert \phi(x)\rVert \sqrt{D+1+\log(d_\phi)} \prod_{d=1}^{D} \Omega_d$, where $\lVert \phi(x)\rVert\leq \Omega$ with Lemma~\ref{apx:sup_bnd}.

\textbf{Upper bound for $\hat{\mathcal{G}}_{Z}(\mathcal{B})$:}
Finally, we upper bound the Gaussian complexity in the second step similar to the above for $d=1$ and $z_i^{\unc}=\hat{\phi}(x_i^{\unc})$:
\begin{align}
        \hat{\mathcal{G}}_{\bar{Z}^{\unc}}(\mathcal{B}) = \E_{\mathbf{g}}\left[\sup_{\delta\in\mathcal{B}} \frac1{n^{\unc}} \sm i {n^{\unc}} g_{i}\delta(z^{\unc}_i)\right] \leq 2\log(n^{\unc}) \hat{\mathcal{R}}_{n^{\unc}}(\mathcal{B}) \leq 2\log(n^{\unc}) \frac{\mathcal{C}_\mathcal{B}}{\sqrt{n^{\unc}}},
\end{align}
where, similar to the above, $\mathcal{C}_\mathcal{B} = 2\Omega \sqrt{B+1+\log(d_\phi)} \prod_{b=1}^{B} \Omega_b$.

Next, we verify that Assumption~{A1} for \Cref{thm:upper_bnd_general} holds true for neural networks. Using Lemma~\ref{apx:sup_bnd}, the boundedness parameter is
\begin{align}
    D_{\mathcal{X}} \leq \mathcal{O}(\Omega_D).
\end{align}
For the square loss, we have that $\nabla_a (a-y)^2 = 2(a-y) = \mathcal{O}(n+\lvert a\rvert)$ and $n=\mathcal{O}(1)$. Moreover, either $\lvert a\rvert \leq \lvert \textup{h}\circ\phi(x)\rvert\leq \Omega_D$ or $\lvert a\rvert \leq \lvert \delta\circ\phi(x)\rvert\leq \Omega_B$. Hence, the square loss is Lipschitz with $L=\mathcal{O}(\max(\Omega_D, \Omega_B))$. Furthermore, by the analogous argument, the square loss is bounded by $B=\mathcal{O}(\max(\Omega_D, \Omega_B)^2)$.

Moreover, by Lemma~\ref{apx:hypo_lipschitz}, the hypothesis class $\HC$ is Lipschitz with $L(\HC)\leq \mathcal{O}(\Omega_D)$.


Putting this together with the result from \Cref{thm:upper_bnd_general} yields
	\begin{align}
		\epsilon_{\textup{PEHE}}(\hat{\tau}_{\cor}) &\leq \tilde{O}\bigg(\Omega_D \frac{\mathcal{C}_\Phi}{\sqrt{n^{\conf}}} + \frac{\mathcal{C}_\HC}{\sqrt{n^{\conf}}} + \mathit{d}_\infty(p^{\conf}_\phi\mid p^{\unc}_\phi)\,\frac{\mathcal{C}_\mathcal{B}}{\sqrt{n^{\unc}}}\\ 
        &\quad+ \frac{\Omega_D}{({n^{\conf}})^2} + {\max(\Omega_D, \Omega_B)}\left(\sqrt{\frac{\log(1/p)}{n^{\conf}}} + \mathit{d}_\infty(p^{\conf}_\phi\mid p^{\unc}_\phi)\sqrt{\frac{\log(1/p)}{n^{\unc}}}\right)\bigg)\\
        &\leq \tilde{O}\bigg(\frac{\mathcal{C}_\Phi+\mathcal{C}_\HC}{\sqrt{n^{\conf}}}  + \frac{\mathit{d}_\infty(p^{\conf}_\phi\mid p^{\unc}_\phi)\,\mathcal{C}_\mathcal{B}}{\sqrt{n^{\unc}}}+ \frac{1}{({n^{\conf}})^2}\\
        &\quad+ \bigg(\sqrt{\frac{\log(1/p)}{n^{\conf}}} + \mathit{d}_\infty(p^{\conf}_\phi\mid p^{\unc}_\phi)\sqrt{\frac{\log(1/p)}{n^{\unc}}}\bigg)\bigg)\\
        &\leq\tilde{\mathcal{O}}
        \bigg(\frac{\mathcal{C}_{\RC} + 
        \mathcal{C}_{\HC}}{\sqrt{n^{\conf}}} + 
        \frac{\mathit{d}_\infty(p^{\conf}_\phi\mid p^{\unc}_\phi)\,\mathcal{C}_{\mathcal{B}}}{\sqrt{n^{\unc}}}\bigg),
	\end{align}
for $p$ large enough, which concludes the proof of \Cref{thm_bound}.\hfill$\blacksquare$
	
\begin{lemma}\label{apx:sup_bnd}
    Let $\phi$ and $\textup{h}$ be feedforward neural networks as defined in \Cref{apx:ffnn} with bounded data $\lVert x\rVert\leq D$. Then,
    \begin{align}
        D_{\mathcal{X}} \leq \prod_{d=1}^{D} \Omega_d \prod_{k=1}^{K-1} \Omega_k\,\Omega\, D.
    \end{align}
    Moreover, if $\sigma$ is the tanh activation function, \ie, $\sigma(x) = \frac{\mathit{e}^{x} - \mathit{e}^{-x}}{\mathit{e}^{x} + \mathit{e}^{-x}}$, which is centered and 1-Lipschitz, then we obtain
    \begin{align}
        &\lVert\phi(x)\rVert\leq \lVert \mathbf{W}_K\rVert_{\infty\rightarrow 2},\\
        &D_{\mathcal{X}} \leq \Omega_D,
    \end{align}
    which does not require to bound the input data $x$.
\end{lemma}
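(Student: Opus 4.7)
The plan is to propagate norm bounds through the network layer by layer, exploiting two ingredients: submultiplicativity of operator norms across each linear map, and the nonexpansiveness of the activation. Because $\sigma$ is $1$-Lipschitz with $\sigma(0)=0$, we have $|\sigma(t)|\leq |t|$ pointwise, so $\|\sigma(v)\|\leq \|v\|$ in any coordinatewise monotone norm. Combined with $\|\mathbf{W}_k^\top v\|\leq \Omega_k \|v\|$ in the dual norm paired with $\|\mathbf{W}_k\|_{1,\infty}$, this lets us bound the output after each layer by the input size times the layer's weight norm.

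For the first claim, assuming $\|x\|\leq D$, I would show by induction on $i\in\{1,\dots,K\}$ that $\|f_i(x)\|\leq D\prod_{j=1}^{i}\Omega_j$. The base case is immediate from $f_1(x)=\mathbf{W}_1^\top x$ and the operator-norm bound; the inductive step chains the nonexpansive activation with the bound on $\mathbf{W}_i^\top$. Applied with the last layer having norm $\Omega$, this yields $\|\phi(x)\|\leq \Omega\,D\prod_{k=1}^{K-1}\Omega_k$. Repeating the same inductive argument for the hypothesis network $\textup{h}$, which has depth $D$ and layer bounds $\Omega_d$, gives
\begin{equation*}
|\textup{h}\circ\phi(x)|\;\leq\;\prod_{d=1}^{D}\Omega_d\cdot\|\phi(x)\|\;\leq\;\prod_{d=1}^{D}\Omega_d\,\prod_{k=1}^{K-1}\Omega_k\,\Omega\,D,
\end{equation*}
which is the desired bound on $D_{\mathcal{X}}$.

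For the second claim, the key observation is that $\tanh$ is globally bounded: $|\tanh(t)|\leq 1$ for every $t\in\mathbb{R}$, regardless of the input magnitude. Hence $\|\sigma(f_{K-1}(x))\|_\infty\leq 1$ uniformly in $x$, without any assumption on $\|x\|$. Then by the duality $\|\mathbf{W}_K^\top v\|_2 \leq \|\mathbf{W}_K\|_{2\to\infty}\,\|v\|_\infty$ we obtain $\|\phi(x)\|_2\leq \|\mathbf{W}_K\|_{2\to\infty}\leq \Omega$. Passing this bound through $\textup{h}$ via the first part of the lemma then gives $D_{\mathcal{X}}\leq\Omega\prod_{d=1}^{D}\Omega_d=\mathcal{O}(\Omega_D)$, again without needing $\|x\|\leq D$. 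The main technical obstacle is mere bookkeeping of operator norms and their transposes: one must verify carefully that $\|\mathbf{W}_k\|_{1,\infty}$ really controls the relevant $\mathbf{W}_k^\top v$ quantities in the norm being tracked, but once the norm is chosen consistently, the induction is straightforward.
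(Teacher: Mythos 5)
Your argument for the first display and for $\lVert\phi(x)\rVert\leq\lVert\mathbf{W}_K\rVert_{\infty\rightarrow 2}$ follows the same route as the paper: layer-by-layer propagation using that the centered $1$-Lipschitz activation is nonexpansive together with operator-norm submultiplicativity, and, for the tanh case, the observation that the penultimate layer's activations are uniformly bounded by $1$ so that only the last weight matrix of $\phi$ matters. That part is fine.

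There is, however, a genuine gap in your treatment of the final claim $D_{\mathcal{X}}\leq\Omega_D$. You propose to take the bound $\lVert\phi(x)\rVert\leq\Omega$ and push it through all $D$ layers of $\textup{h}$ via the first part of the lemma, arriving at $\Omega\prod_{d=1}^{D}\Omega_d$, and you then assert this equals $\mathcal{O}(\Omega_D)$. That identification is not justified: $\Omega\prod_{d=1}^{D}\Omega_d$ is in general much larger than $\Omega_D$, and the lemma claims the sharper bound $D_{\mathcal{X}}\leq\Omega_D$ in which none of the intermediate weight norms of $\textup{h}$ (nor $\Omega$) appear. The correct step, which is what the paper does, is to reuse the tanh-boundedness argument \emph{inside} $\textup{h}$: since $\sigma$ is tanh, the activations $\sigma(\textup{h}_{D-1})$ at the last hidden layer of $\textup{h}$ are entrywise bounded by $1$ regardless of their input, so $\lvert\textup{h}\circ\phi(x)\rvert=\lVert\mathbf{W}_D\sigma(\textup{h}_{D-1})\rVert\leq\Omega_D$ directly, with no contribution from the earlier layers or from $\lVert\phi(x)\rVert$. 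Without this second application of the boundedness of tanh, your chain of inequalities does not deliver the stated bound.
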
	
\begin{proof}
We proceed iteratively and, for this purpose, let $\phi_{k}$ and $\textup{h}_{d}$ denote the vector-valued output of the $k^\text{th}$ and $d^\text{th}$ layer for $k\in[K]$ and $d\in[D]$ for the neural networks $\phi$ and $\textup{h}$, respectively. Then, 
\begin{align}
        D_{\mathcal{X}} \lesssim \sup_{\textup{h}\in \HC,\phi\in\RC,x}
    \lVert \textup{h}_t\circ\phi(x)\lVert^2,
\end{align}
and 
\begin{align}
    \lVert \textup{h}_t\circ\phi(x)\lVert^2 = \lVert \textup{h}_D\lVert^2 =\lVert \mathbf{W}_{D} \sigma(\textup{h}_{D-1})\rVert_2^2\leq \lVert \mathbf{W}_{D}\rVert_2^2\, \lVert\sigma(\textup{h}_{D-1})\rVert_2^2\\
    \leq \lVert\mathbf{W}_{D}\rVert_2^2\, \lVert \textup{h}_{D-1}\rVert_2^2
\end{align}
where the second inequality holds, since $\sigma$ is element-wise 1-Lipschitz and centered. Applying this argument recursively until $\textup{h}_0 = \phi(x)$, yields
\begin{align}
    \lVert \textup{h}_D\lVert^2 \leq \prod_{d=1}^{D}\lVert \mathbf{W}_d\rVert_2^2\, \lVert \phi(x) \rVert_2^2.
\end{align}
Then, with a similar argument for the neural network $\phi$, 
\begin{align}
    &\lVert \phi(x) \rVert_2^2 = \lVert \phi_{K}(x) \rVert_2^2 = \lVert \mathbf{W}_K\sigma(\phi_{K-1}(x)) \rVert_2^2\leq 
    \lVert \mathbf{W}_K\rVert_2^2\, \lVert\sigma(\phi_{K-1}(x)) \rVert_2^2 \leq \lVert \mathbf{W}_K\rVert_2^2\\
    &\lVert \mathbf{W}_{K-1}\phi_{K-2}(x) \rVert_2^2
    \leq \lVert \mathbf{W}_K\rVert_2^2\, \lVert \mathbf{W}_{K-1}\rVert_2^2\, \lVert\phi_{K-2}(x) \rVert_2^2,
\end{align}
again, using that $\sigma$ is element-wise 1-Lipschitz and centered.
Recursively applying this argument until $\phi_0 = x$ yields the conclusion after taking square roots and noting that $\lVert \mathbf{W}_d \rVert\leq \Omega_d$, $\lVert \mathbf{W}_k \rVert\leq \Omega_k$, and $\lVert x\rVert \leq D$.

We further consider $\sigma(\cdot)$ to be the tanh activation, which is 1-Lipschitz and centered, as in the statement. Then, by noting that $\lVert \phi_{K-1}\rVert_{1,\infty} \leq 1$, we obtain 
\begin{align}
        \lVert \phi(x)\rVert_2^2 = \lVert \phi_K\rVert_2^2 = \lVert \mathbf{W}_K \sigma(\phi_{K-1})\rVert_2^2 \leq \lVert \mathbf{W}_K \rVert_{\infty \rightarrow 2}^2\leq \Omega^2,
\end{align}
which, after taking square roots, concludes the proof.

For the last claim, we consider again that $\sigma(\cdot)$ to be the tanh activation, which is 1-Lipschitz and centered. Then, by noting that $\lVert \textup{h}_{D-1}\rVert_{1,\infty} \leq 1$, we obtain 
\begin{align}
    \lVert \textup{h}_t\circ\phi(x)\lVert^2 = \lVert \textup{h}_D\lVert^2\leq \lVert\mathbf{W}_{D}\rVert_2^2\, \lVert \textup{h}_{D-1}\rVert_2^2\leq \Omega_D^2.
\end{align}
\end{proof}

\begin{lemma}\label{apx:hypo_lipschitz}
     The function class $\HC$ as in \Cref{apx:ffnn} is $L(\HC)$-Lipschitz with 
     \begin{align}
        L(\HC)=\prod_{d=1}^{D}\Omega_d.
     \end{align}
     Moreover, if $\sigma$ is the tanh activation function, \ie, $\sigma(x) = \frac{\mathit{e}^{x} - \mathit{e}^{-x}}{\mathit{e}^{x} + \mathit{e}^{-x}}$, which is centered and 1-Lipschitz, then $L(\HC) = \alpha\Omega$.
\end{lemma}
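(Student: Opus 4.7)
The plan is to prove Lipschitzness by unrolling the recursive definition in equation (30) and treating the network as a composition of affine maps and pointwise activations, then invoking the classical fact that the Lipschitz constant of a composition is bounded by the product of the individual Lipschitz constants. Concretely, for any $\textup{h}\in\HC$ and any $z,z'\in\R{d_\phi}$, I would write $\textup{h}(z) = \mathbf{W}_D^\top\sigma(\mathbf{W}_{D-1}^\top\sigma(\cdots\sigma(\mathbf{W}_1^\top z)\cdots))$ and prove by induction on $d\in\{1,\ldots,D\}$ that the partial composition up through layer $d$ is Lipschitz with constant $\prod_{j=1}^{d}\lVert \mathbf{W}_j^\top\rVert_{\mathrm{op}}$. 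The base case is immediate since $z\mapsto \mathbf{W}_1^\top z$ has Lipschitz constant $\lVert \mathbf{W}_1^\top\rVert_{\mathrm{op}}$; the inductive step uses that $\sigma$ is 1-Lipschitz and applied element-wise, so composing with $\sigma$ preserves the current constant, while left-multiplication by $\mathbf{W}_{d+1}^\top$ multiplies it by $\lVert \mathbf{W}_{d+1}^\top\rVert_{\mathrm{op}}$.

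The second step is to convert the layer-wise constraint $\lVert \mathbf{W}_d\rVert_{1,\infty}\leq \Omega_d$ from the definition of $\HC$ in \labelcref{eq:nn_hypo} into the operator-norm bound appearing in the Lipschitz product. Under the paper's norm conventions, $\lVert \mathbf{W}_d^\top\rVert_{\mathrm{op}}\leq \Omega_d$, so taking the product across all $D$ layers yields $L(\HC)\leq \prod_{d=1}^{D}\Omega_d$, which is the claimed bound. This matches the boundedness argument already used in Lemma~\labelcref{apx:sup_bnd}, so the same chain of inequalities transfers verbatim with the only difference that we track differences $\textup{h}(z)-\textup{h}(z')$ instead of values $\textup{h}(z)$.

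For the tanh refinement, the argument proceeds identically but exploits that $\sigma=\tanh$ is bounded in $[-1,1]$ element-wise, exactly as in the second half of Lemma~\labelcref{apx:sup_bnd}: after passing through the penultimate nonlinearity $\sigma(\textup{h}_{D-1})$, the vector fed into $\mathbf{W}_D^\top$ lies in a fixed bounded set independent of the intermediate weight matrices $\mathbf{W}_1,\ldots,\mathbf{W}_{D-1}$. I would mirror the derivation in Lemma~\labelcref{apx:sup_bnd} applied to $\textup{h}(z)-\textup{h}(z')$ to collapse the dependence on the inner layers and leave only the final-layer factor, recovering the sharpened constant stated in the lemma.

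The main obstacle I anticipate is the norm-matching step: the constraint $\lVert \mathbf{W}_d\rVert_{1,\infty}\leq \Omega_d$ is really an $\ell_\infty\!\to\!\ell_\infty$ operator bound, while Assumption~A1 asks for Lipschitzness in $\ell_2$, so strictly speaking one either picks up dimension-dependent conversion constants or reinterprets $\Omega_d$ as bounding the $\ell_2$ operator norm. I would handle this by absorbing any such dimensional constants into the $\Omega_d$'s, consistent with how Lemma~\labelcref{apx:sup_bnd} and the Gaussian complexity bounds inherited from \citet{golowich2018size} are applied throughout the paper, so that the stated form of $L(\HC)$ is preserved.
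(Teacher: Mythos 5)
Your proposal follows essentially the same route as the paper's proof: peel off one layer at a time, use sub-multiplicativity of the operator norm together with the element-wise $1$-Lipschitzness of $\sigma$ to obtain $L(\HC)\leq\prod_{d=1}^{D}\lVert\mathbf{W}_d\rVert$, identify these norms with the $\Omega_d$'s, and for the tanh case collapse the inner layers exactly as in Lemma~\ref{apx:sup_bnd}. The only difference is that you explicitly flag the $\lVert\cdot\rVert_{1,\infty}$-versus-operator-norm conversion, which the paper silently absorbs by writing $\lVert\mathbf{W}_d\rVert_2\leq\Omega_d$; your proposed resolution (absorbing the conversion constants into $\Omega_d$) is consistent with how the paper uses these constants elsewhere.
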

\begin{proof}
\begin{align}
        &\lVert \textup{h}_t(z) - \textup{h}_t(z^\prime)\rVert_2^2 = \lVert \mathbf{W}_D\sigma(\textup{h}_{D-1}(z)) - \mathbf{W}_D\sigma(\textup{h}_{D-1}(z^\prime))\rVert_2^2\\
        &\leq \lVert \mathbf{W}_D\rVert_2^2\, \lVert \sigma(\textup{h}_{D-1}(z)) - \sigma(\textup{h}_{D-1}(z^\prime))\rVert_2^2 \leq \lVert \mathbf{W}_D\rVert_2^2\, \lVert \textup{h}_{D-1}(z) - \textup{h}_{D-1}(z^\prime)\rVert_2^2,
\end{align}
where the last inequality holds, since $\sigma(\cdot)$ is 1-Lipschitz. Recursively applying the argument yields
\begin{align}
        &\lVert \textup{h}_t(z) - \textup{h}_t(z^\prime)\rVert_2^2 \leq \prod_{d=1}^{D}\lVert \mathbf{W}_{d}\rVert_2^2\, \lVert z - z^\prime\rVert_2^2,
\end{align}
which yields the conclusion after taking square roots and noting that $\lVert \mathbf{W}_{d}\rVert_2\leq \Omega_d$ for $d\in\{1, \ldots, D\}$. 

If we further consider $\sigma$ to be the tanh activation as in the statement, then, by an analogous argument as in the proof of Lemma~\ref{apx:sup_bnd}, it yields that $\textup{h}$ is Lipschitz with $L(\HC)=\Omega_D$.
\end{proof}

\subsection{Proof of \texorpdfstring{\Cref{lemma:unconfounded_gen_bnd}}{}}\label{apx:unconfounded_gen_bnd}
The proof relies on similar procedures as the proof of \Cref{thm_bound}. In particular, we again use a modification of the bounded differences inequality and a standard symmetrization argument. For ease of notation, we use $l_{\phi}(\mathbf{h},x,y,t) = (\textup{h}_t\circ\phi(x) - y)^2$. Then, similar to the proof of \Cref{thm_bound}, we obtain
\begin{align}
    \epsilon_{\textup{PEHE}}(\hat{\tau}_{\unc})\leq 2(\epsilon(\hat{\mathbf{h}}^u, \hat{\phi}) - \epsilon(\mathbf{h}^u, \phi^\ast)),
\end{align}
where for any $\mathbf{h}\in\HC^{\otimes 2}$ and $\phi\in\RC$,
\begin{align}
    &\epsilon(\mathbf{h}, \phi) = \sum_{t=0}^{1}\E[(\textup{h}_t\circ\phi(X^{\conf}) - Y(t))^2]\\ 
    &= \sum_{t=0}^{1}\mathbb{P}(T^{\unc}=t)\E[\E[(\textup{h}_t\circ\phi(X^{\conf}) - Y^{\unc})^2\mid T^{\unc}=t, X^{\conf}]]\\
    &=\E[\E[(\textup{h}_{T^{\unc}}\circ\phi(X^{\conf}) - Y^{\unc})^2\mid T^{\unc}, X^{\conf}]]\\&=\E[(\textup{h}_{T^{\unc}}\circ\phi(X^{\conf}) - Y^{\unc})^2]\\
    &=\E[l_{\phi}(\mathbf{h}, X^{\conf}, Y^{\unc}, T^{\unc})].
\end{align}
 Let $\hat{\mathbf{h}}^u, \hat{\phi} = \argmin_{\mathbf{h}\in\HC^{\otimes 2}, \phi\in\Phi}\emriskUC{\mathbf{h}}{\phi}$. Then, with probability at least $1-p$,
\begin{align}
    &\risk{\hat{\mathbf{h}}^u}{\hat{\phi}} - \risk{\mathbf{h}^u}{\phi^\ast}\\ &= \risk{\hat{\mathbf{h}}^u}{\hat{\phi}} - \emriskUC{\hat{\mathbf{h}}^u}{\hat{\phi}} + \underbrace{\emriskUC{\hat{\mathbf{h}}^u}{\hat{\phi}} -\emriskUC{\mathbf{h}^u}{\phi^\ast}}_{\leq 0} +\emriskUC{\mathbf{h}^u}{\phi^\ast} - \risk{\mathbf{h}^\ast}{\phi^\ast}\\
    &\leq 2\sup_{\mathbf{h}\in\HC^{\otimes 2}, \phi\in\RC}\left\lvert \risk{\mathbf{h}}{\phi} - \emriskUC{\mathbf{h}}{\phi} \right\rvert\label{eq:sup_RC}\\
    &\leq 8L\mathit{\bar{d}}_\infty(p^{\conf}_\phi\mid p^{\unc}_\phi) \GC{n^{\unc}}{\HC(\RC)} + 4\mathit{\bar{d}}_\infty(p^{\conf}_\phi\mid p^{\unc}_\phi)B\sqrt{\frac{2\log(1/p)}{n^{\unc}}},
\end{align}
where the first inequality follows, since the middle term is negative due to $\hat{\mathbf{h}}^u, \hat{\phi} = \argmin_{\mathbf{h}\in\HC^{\otimes 2}, \phi\in\Phi}\emriskUC{\mathbf{h}}{\phi}$. The second inequality follows be an identical argument as in the proof of Lemma~\ref{lemma:unc_upper_bnd}. Moreover,  $\mathit{\bar{d}}_\infty(p^{\conf}_\phi\mid p^{\unc}_\phi) = \sup_{\phi\in\RC}\mathit{d}_\infty(p^{\conf}_\phi\mid p^{\unc}_\phi)$. The term $\mathit{\bar{d}}_\infty(p^{\conf}_\phi\mid p^{\unc}_\phi)$ arises, since, in order to pull it out, we have to take the supremum over the function class $\RC$ in \labelcref{eq:sup_RC}.
Again, using the chain rule for Gaussian complexities, similar to the proof of Lemma~\ref{lemma:conf_upper_bnd}, yields
\begin{align}
    \hat{\mathcal{G}}_X(\HC(\RC))\leq 128\left(\frac{2D_{\mathcal{X}}}{{n^{\unc}}^2}+C(\HC(\RC))\log(n^{\unc})\right),
\end{align}
where $C(\HC(\RC)) = L(\HC)\hat{\mathcal{G}}_X(\RC) + \max_{Z\in\mathcal{Z}}\hat{\mathcal{G}}_Z(\HC)$. The remainder of the proof is identical to the proof of \Cref{thm:upper_bnd_general} for general function classes and the proof of \Cref{thm_bound} for the complexities of neural networks.\hfill$\blacksquare$

\subsection{Proof of \texorpdfstring{\Cref{thm_gen_bound_conf}}{}}\label{apx:thm_gen_bound_conf}

First, using Lemma~\labelcref{lemma:loss_decomp}, it follows that
\begin{align}
    &\epsilon_{\textup{PEHE}}(\hat{\tau}_{\conf}) = \E[(\hat{\tau}_{\conf}(X^{\conf})-\tau(X^{\conf}))^2]\\
    &=\E[(\hat{\textup{h}}_1^c\circ\hat{\phi}(X^{\conf}) - \hat{\textup{h}}_0^c\circ\hat{\phi}(X^{\conf}) - (\textup{h}^c_1+\delta_1)\circ\phi^\ast(X^{\conf})\\
    &\quad+ (\textup{h}^c_0+\delta_0)\circ\phi^\ast(X^{\conf})))^2]\\
    &=\E[(\hat{\textup{h}}_1^c\circ\hat{\phi}(X^{\conf})- \textup{h}^c_1\circ\phi(X^{\conf}) +\textup{h}^c_0\circ\phi^\ast(X^{\conf})- \hat{\textup{h}}_0^c\circ\hat{\phi}(X^{\conf})\\
    &\quad+ (\delta_0-\delta_1)\circ\phi^\ast(X^{\conf}))^2]\\
    &\leq 2\E_{X^{\conf}}[(\hat{\textup{h}}_1^c\circ\hat{\phi}(X^{\conf})- \textup{h}^c_1\circ\phi^\ast(X^{\conf}) + \textup{h}^c_0\circ\phi^\ast(X^{\conf})- \hat{\textup{h}}_0^c\circ\hat{\phi}(X^{\conf}))^2\\
    &\quad+ ((\delta_0-\delta_1)\circ\phi^\ast(X^{\conf}))^2]\\
    &= 4\sum_{t=0}^{1}\E[(\hat{\textup{h}}_t^c\circ\hat{\phi}(X^{\conf})- \textup{h}^c_t\circ\phi^\ast(X^{\conf}))^2] + 2\E[((\delta_0-\delta_1)\circ\phi^\ast(X^{\conf}))^2]\\
    &\leq 4\sum_{t=0}^{1}\E[(\hat{\textup{h}}_t^c\circ\hat{\phi}(X^{\conf})-Y^{\conf})^2 - (Y^{\conf} -\textup{h}^c_t\circ\phi^\ast(X^{\conf}))^2\mid T=t]\\
    &\quad+ 2\E[((\delta_0-\delta_1)\circ\phi^\ast(X^{\conf}))^2]\\
    &=4(\epsilon_{\conf}(\hat{\mathbf{h}}^c, \hat{\phi})-\epsilon_{\conf}(\mathbf{h}^c, \phi^\ast)) + 2\Delta,
\end{align}
where $\Delta=\E\left[((\delta_0-\delta_1)\circ\phi^\ast(X^{\conf}))^2\right]$.
Then, we can use the same upper bound for $\epsilon_{\conf}(\hat{\mathbf{h}}^c, \hat{\phi}) - \epsilon_{\conf}(\mathbf{h}^c, \phi^\ast)$ as in the proof of \Cref{thm_bound}. This yields the claim.\hfill$\blacksquare$

\subsection{Proof of \texorpdfstring{\Cref{thm_gen_bound_avg}}{}}\label{apx:thm_gen_bound_avg}
The proof of \Cref{thm_gen_bound_avg} is a immediate implication of \Cref{lemma:unconfounded_gen_bnd} and \Cref{thm_gen_bound_conf}. This can be seen as follows:
\begin{align}
   &\epsilon_{\textup{PEHE}}(\hat{\tau}_{\avg}(\lambda)) =  \E[(\hat{\tau}_{\avg}(\lambda)(X^{\conf})-\tau(X^{\conf}))^2]\\
   &= \E[((1-\lambda)\hat{\tau}_{\unc}(X^{\conf}) + \lambda\hat{\tau}_{\conf}(X^{\conf})-\tau(X^{\conf}))^2]\\
   &=\E[((1-\lambda)\hat{\tau}_{\unc}(X^{\conf}) + \lambda\hat{\tau}_{\conf}(X^{\conf})-(\lambda\tau(X^{\conf})+(1-\lambda)\tau(X^{\conf})))^2]\\
   &=2(1-\lambda)\E[(\hat{\tau}_{\unc}(X^{\conf}) - \tau(X^{\conf}))^2] + 2\lambda\E[(\hat{\tau}_{\conf}(X^{\conf})-\tau(X^{\conf}))^2]\\
   &=2(1-\lambda)\epsilon_{\textup{PEHE}}(\hat{\tau}_{\unc}) + 2\lambda\epsilon_{\textup{PEHE}}(\hat{\tau}_{\conf}).
\end{align}
Hence, using the bounds for $\epsilon_{\textup{PEHE}}(\hat{\tau}_{\unc})$ and $\epsilon_{\textup{PEHE}}(\hat{\tau}_{\conf})$ yields the claim.\hfill$\blacksquare$

\subsection{Proof of \texorpdfstring{\Cref{thm_gen_bound_weight}}{}}\label{apx:thm_gen_bound_weight}
First, we define $\lambda = \frac{n^{\conf}}{\lambda n^{\unc}+n^{\conf}}$. This implies that $1-\lambda = \frac{\Lambda n^{\unc}}{\lambda n^{\unc}+n^{\conf}}$. Then, the weighted empirical risk is given by
\begin{align}
    \hat{\epsilon}_{\weight}(\mathbf{h}, \phi) = (1-\lambda) &\frac{1}{n^{\unc}}\sm i {n^{\unc}} (\textup{h}_{t_i^{\unc}}\circ\phi(x_i^{\unc}) - y_i^{\unc})^2\\
    &+\lambda\frac{1}{n^{\conf}}\sm i {n^{\conf}} (\textup{h}_{t_i^{\conf}}\circ\phi(x_i^{\conf}) - y_i^{\conf})^2.
\end{align}
Further, we can write
\begin{align}
    &\epsilon_{\textup{PEHE}}(\hat{\tau}_{\weight}) = \E[(\hat{\tau}_{\weight}(\lambda)(X^{\conf})-\tau(X^{\conf}))^2]\\
    &=(1-\lambda)\underbrace{\E[(\hat{\tau}_{\weight}(\lambda)(X^{\conf})-\tau(X^{\conf}))^2]}_{=(\textup{I})} + \lambda\underbrace{\E[(\hat{\tau}_{\weight}(\lambda)(X^{\conf})-\tau(X^{\conf}))^2]}_{=(\textup{II})}.
\end{align}
Then, using the same the same argument as in the proof of \Cref{lemma:unconfounded_gen_bnd} and \Cref{thm_gen_bound_conf}, yields the following upper bounds,
\begin{align}
    &(\textup{I})\leq 2(\epsilon(\hat{\mathbf{h}}, \hat{\phi}) - \epsilon(\mathbf{h}^u, \phi^\ast)),\\
    &(\textup{II})\leq 4(\epsilon_{\conf}(\hat{\mathbf{h}}, \hat{\phi})-\epsilon_{\conf}(\mathbf{h}^c, \phi^\ast)) + 2\Delta
\end{align}
where $\hat{\mathbf{h}}$ and $\hat{\phi}$ are the estimators from \labelcref{eq:weight_minimizer}. Then, we claim follows by using the results from \Cref{lemma:unconfounded_gen_bnd} and \Cref{thm_gen_bound_conf}.\hfill$\blacksquare$
\section{Impact of Selection Bias in Observational Data} \label{apx:selection_bias}
In the main paper, we have considered the case in which there is no selection bias in the observational data. Selection bias is present if the covariate distributions $p^{\conf}_{t=1}(x) = p^{\conf}(x\mid t=1)$ and $p^{\conf}_{t=0}(x) = p^{\conf}(x\mid t=0)$ differ. This can be due to covariate-dependent treatment assignment. There is a large body of research addressing the selection bias in observational data \citep[\eg,][]{Shalit2017a, johansson2018learning, Yao2018a, zhang2020}, which is orthogonal to our work. However, since selection bias may occur in practice, we discuss this case briefly here. In particular, we show how \Cref{thm_bound} changes in the presence of selection bias. Based on this, algorithmic designs can be easily adjusted in order to counteract selection bias.

In presence of selection bias, the error bound of $\tau_{\cor}$ is given by the following result.
\begin{theorem}\label{prop_gen_bound_selection_bias}
		Let $(\hat{\mathbf{h}}^c, \hat{\phi})$ be the empirical loss minimizer of $\hat{\epsilon}_{\conf}(\cdot, \cdot)$ from \labelcref{eq:conf_loss_minimizer} over the function classes $\RC$ and $\HC$, and let  $\boldsymbol{\delta}$ be the empirical loss minimizer of $\hat{\epsilon}_{\unc}(\cdot, \hat{\mathbf{h}}^c, \hat{\phi})$ from \labelcref{eq:unc_loss_minimizer} over the function class $\mathcal{B}$. Further, let $\hat{\tau}_{\cor}$ be the resulting CATE estimator from \labelcref{eq:tau_hat}. Then, if Assumption~\labelcref{assum:realizability} and Condition~\labelcref{cond:rep_diff} hold true, we have that, with probability at least $1-p$
\begin{align}
		\epsilon_{\textup{PEHE}}(\hat{\tau}_{\cor})
        \leq \tilde{\mathcal{O}}\bigg(\mathit{\bar{d}}_\infty(p^{\conf}_{\phi,t=1}\mid p^{\conf}_{ \phi,t=0})\frac{\mathcal{C}_{\RC}+\mathcal{C}_{\HC}}{\sqrt{n^{\conf}}} + \frac{\mathit{d}_\infty(p^{\conf}_\phi\mid p^{\unc}_\phi)\,\mathcal{C}_\mathcal{B}}{\sqrt{n^{\unc}}}\bigg),
    \end{align}
	where $\mathcal{C}_{\RC}$, $\mathcal{C}_{\HC}$, and $\mathcal{C}_{\mathcal{B}}$ are constants depending on the complexity of the neural networks. Moreover, $p^{\conf}_{\phi, t}$, $p^{\conf}_\phi$, and $p^{\unc}_\phi$ denote the push-forwards through $\phi$ of the corresponding covariate distributions.
\end{theorem}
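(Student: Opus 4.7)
The plan is to mirror the proof of \Cref{thm_bound} end-to-end and identify the single place where the presence of selection bias alters the argument, namely the step that controls the excess confounded risk. I would first rerun the opening reduction verbatim: using Lemma~\labelcref{lemma:loss_decomp} and the AM-GM inequality, bound $\epsilon_{\textup{PEHE}}(\hat{\tau}_{\cor})$ by $2(\epsilon(\hat{\boldsymbol{\delta}},\hat{\mathbf{h}}^c,\hat{\phi})-\epsilon(\boldsymbol{\delta},\mathbf{h}^c,\phi^\ast))$, then split this gap into the randomized-data term (I) and the confounded-data term (II) exactly as in \Cref{apx:thm_gen_bound}. Term (I) is untouched by selection bias, since it lives on the randomized sample, so Lemma~\labelcref{lemma:unc_upper_bnd} applied to $\hat{\boldsymbol{\delta}}$ produces the $\mathit{d}_\infty(p^{\conf}_\phi\mid p^{\unc}_\phi)\,\mathcal{C}_{\mathcal{B}}/\sqrt{n^{\unc}}$ contribution with no change.

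The substantive modification occurs in term (II), which is controlled by $\riskC{\hat{\mathbf{h}}^c}{\hat{\phi}}-\riskC{\mathbf{h}^c}{\phi^\ast}$ together with the representation-difference bound from Condition~\labelcref{cond:rep_diff}. In \Cref{lemma:conf_upper_bnd}, I reach $2\sup_{\mathbf{h},\phi}|\riskC{\mathbf{h}}{\phi}-\emriskC{\mathbf{h}}{\phi}|$ and then symmetrize. Under selection bias, the covariate distribution within the treated arm, $p^{\conf}_{\phi,t=1}$, differs from that in the control arm, $p^{\conf}_{\phi,t=0}$, but the PEHE target distribution is the marginal $p^{\conf}_\phi$. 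My plan is therefore to rewrite $\riskC{\mathbf{h}}{\phi}$ as a per-arm expectation against $p^{\conf}_\phi$, perform a change of measure onto the arm-specific distribution actually present in the data (one arm at a time), and apply the same importance-reweighting and Rademacher-contraction chain as in the proof of Lemma~\labelcref{lemma:unc_upper_bnd}. This produces a factor $\sup_{\phi}\frac{p^{\conf}_\phi(z)}{p^{\conf}_{\phi,t}(z)}$ in front of the symmetrized term, which after taking the worst over treatment arms and over $\phi\in\RC$ is at most $\mathit{\bar d}_\infty(p^{\conf}_{\phi,t=1}\mid p^{\conf}_{\phi,t=0})$ (up to an absolute constant coming from the reciprocal relationship between $p^{\conf}_\phi$ and the mixture of $p^{\conf}_{\phi,t=1},p^{\conf}_{\phi,t=0}$).

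The remainder is mechanical. After the reweighting, I apply the chain rule for Gaussian complexities (Theorem~7 of \citet{tripuraneni2020theory}) exactly as in the proof of \Cref{lemma:conf_upper_bnd} to obtain $L(\HC)\GC{n^{\conf}}{\RC}+\WGC{n^{\conf}}{\HC}$, and then instantiate with the neural-network complexity upper bounds from the proof of \Cref{thm_bound} to convert these into $(\mathcal{C}_{\RC}+\mathcal{C}_{\HC})/\sqrt{n^{\conf}}$ multiplied by the new discrepancy factor. Feeding this modified control of term (II) through Condition~\labelcref{cond:rep_diff} and combining with the unchanged bound for term (I) yields the stated inequality.

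The main obstacle will be the importance-reweighting step: the symmetrization in \Cref{lemma:conf_upper_bnd} is applied jointly over $(\mathbf{h},\phi)$, while the relevant Radon--Nikodym ratio $p^{\conf}_\phi/p^{\conf}_{\phi,t}$ depends on $\phi$. To keep the constants clean one must commute the supremum over $\phi\in\RC$ with the expectation of the importance weight, which is precisely why the worst-case ratio $\mathit{\bar d}_\infty$ appears rather than the pointwise ratio $\mathit{d}_\infty$; verifying that this worst-case replacement is harmless (and in particular well-defined under Assumption~\labelcref{assum:standard}(ii)) is the only delicate book-keeping step. Everything downstream is identical to the selection-bias-free argument.
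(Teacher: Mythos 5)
Your plan matches the paper's proof essentially step for step: the reduction and the randomized-data term (I) are reused unchanged, and the only modification is in the uniform deviation bound for the confounded risk, where a change of measure between the arm-specific and marginal representation-space distributions introduces a weight bounded, via the mixture identity $p^{\conf}_\phi = q\,p^{\conf}_{\phi,t=1}+(1-q)\,p^{\conf}_{\phi,t=0}$, by the cross-arm density ratio, whose supremum over $\phi\in\RC$ gives $\bar{d}_\infty(p^{\conf}_{\phi,t=1}\mid p^{\conf}_{\phi,t=0})$ before the Gaussian-complexity chain rule and neural-network bounds are applied as before. You also correctly identify the one delicate point — that the worst-case ratio over $\RC$ is forced by the joint supremum over $(\mathbf{h},\phi)$ and is well-defined under treatment overlap — so the proposal is sound and follows the paper's route.
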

\begin{proof}
The proof of \Cref{prop_gen_bound_selection_bias} differs from the proof of \Cref{thm_bound} only in Lemma~\labelcref{lemma:conf_upper_bnd}, which is the upper bound on $d_{\HC, \mathbf{h}^c}(\hat{\phi};\phi^\ast)$. We only need to prove that, in presence of selection bias, the upper bound for $d_{\HC, \mathbf{h}^c}(\hat{\phi};\phi^\ast)$ is given by:
\begin{align}\label{eq:modi_conf_upper_bnd}
	d_{\HC, \mathbf{h}^c}(\hat{\phi};\phi^\ast) \leq &4096L\left(\frac{D_{\mathcal{X}}}{({n^{\conf}})^2} + \mathit{\bar{d}}_\infty(p^{\conf}_{\phi,t=1}\mid p^{\conf}_{\phi,t=0})\log(n^{\conf})(L(\HC) \frac{\GC{n^{\conf}}{\RC}}{n^{\conf}}+\frac{\WGC{n^{\conf}}{\HC}}{n^{\conf}})\right)\\
	&+ 8B\sqrt{\frac{\log(1/p)}{n^{\conf}}},
\end{align}
where $\mathit{\bar{d}}_\infty(p^{\conf}_{\phi,t=1}\mid p^{\conf}_{\phi,t=0}) = \sup_{\phi\in\RC}\mathit{d}_\infty(p^{\conf}_{\phi,t=1}\mid p^{\conf}_{\phi,t=0})$ and $p^{\conf}_{\phi,t=1}(z)$ is the push-forward of $p^{\conf}(x\mid t=1)$ through $\phi$ and similar for $p^{\conf}_{\phi,t=0}$. For this, we assume w.l.o.g. that $\sup_z p^{\conf}_{\phi,t=1}(z) > \sup_z p^{\conf}_{\phi,t=1}(z)$. Otherwise, the term $\mathit{\bar{d}}_\infty(p^{\conf}_{\phi,t=1}\mid p^{\conf}_{\phi,t=0})$ in the statement changes to $\mathit{\bar{d}}_\infty(p^{\conf}_{\phi,t=0}\mid p^{\conf}_{\phi,t=1})$. This is due to the fact that the Renyi divergence is asymmetric.

Similar to the proof of Lemma~\labelcref{lemma:conf_upper_bnd}, 
\begin{align}
    d_{\HC, \mathbf{h}^c}(\hat{\phi};\phi^\ast)\leq 2\sup_{\mathbf{h}\in\HC^{\otimes 2}, \phi\in\RC}\left\lvert \riskC{\mathbf{h}}{\phi} - \emriskC{\mathbf{h}}{\phi} \right\rvert.
\end{align}
We intend to proceed similar to the proof of Lemma~\labelcref{lemma:unc_upper_bnd}, where we use a change of probability measure and change of variable (to push-forward the covariate distribution into the representation space). For this, we again use the Radon-Nikodym derivative $w_\phi(z)$ for the change of probability measure. However, here, we face two different covariate distributions that we need to change. That is, we need to change from $p^{\conf}_{\phi,t=1}$ to $p^{\conf}_\phi$ and from $p^{\conf}_{\phi,t=0}$ to $p^{\conf}_\phi$. Hence, we use the Radon-Nikodym derivative $w_\phi(z) = \frac{T\cdot p^{\conf}_{\phi,t=1}(z) + (1-T)\cdot p^{\conf}_{\phi,t=0}}{p^{\conf}_{\phi}(z)}$. Further, note that $p^{\conf}_{\phi}(z)=q\cdot p^{\conf}_{\phi,t=1}(z) + (1-q)\cdot p^{\conf}_{\phi,t=0}(z)$, where $q = \mathbb{P}(T=1)$. Then, 
\begin{align}
    w_\phi(z) = \frac{T\cdot p^{\conf}_{\phi,t=1}(z) + (1-T)\cdot p^{\conf}_{\phi,t=0}}{q\cdot p^{\conf}_{\phi,t=1}(z) + (1-q)\cdot p^{\conf}_{\phi,t=0}(z)},
\end{align}
and, hence, since $T\in\{0,1\}$ and $\sup_z p^{\conf}_{\phi,t=1}(z) > \sup_z p^{\conf}_{\phi,t=1}(z)$, this yields $\sup_z w_\phi(z) < \sup_z \frac{p^{\conf}_{\phi,t=1}(z)}{p^{\conf}_{\phi,t=0}(z)}$.
Then, similar to the proof of Lemma~\labelcref{lemma:unc_upper_bnd},
\begin{align}
     &2\sup_{\mathbf{h}\in\HC^{\otimes 2}, \phi\in\RC}\left\lvert \riskC{\mathbf{h}}{\phi} - \emriskC{\mathbf{h}}{\phi} \right\rvert\\
     &\leq 4L\sup_{\phi\in\RC}\mathit{d}_\infty(p^{\conf}_{\phi,t=1}\mid p^{\conf}_{\phi,t=0}) \mathcal{R}_{n^{\unc}}(\HC(\RC)) + 4\sup_{\phi\in\RC}\mathit{d}_\infty(p^{\conf}_{\phi,t=1}\mid p^{\conf}_{\phi,t=0})B\sqrt{\frac{2\log(1/\delta)}{{n^{\unc}}}}\\
     &= 4L\mathit{\bar{d}}_\infty(p^{\conf}_{\phi,t=1}\mid p^{\conf}_{\phi,t=0}) \mathcal{R}_{n^{\unc}}(\HC(\RC)) + 4\mathit{\bar{d}}_\infty(p^{\conf}_{\phi,t=1}\mid p^{\conf}_{\phi,t=0})B\sqrt{\frac{2\log(1/\delta)}{{n^{\unc}}}}.
\end{align}
We then proceed similar to the proof of Lemma~\labelcref{lemma:conf_upper_bnd} and use the chain rule of Gaussian complexities and upper bound the Gaussian complexities for the neural networks as in \Cref{apx:thm_gen_bound}.
\end{proof}

From this result, we can see that, besides the impact of the distributional discrepancy between $\mathit{d}_\infty(p^{\conf}_\phi\mid p^{\unc}_\phi)$, another factor impacts the error as well: the distributional discrepancy between the covariate distributions of the treatment and control group, $\mathit{d}_\infty(p^{\conf}_{\phi, t=1}\mid p^{\conf}_{\phi, t=0})$. The impact of this factor is similar to the impact of the distributional discrepancy between $\mathit{d}_\infty(p^{\conf}_\phi\mid p^{\unc}_\phi)$. Namely, it does not introduce bias, but slows down the convergence of the error. The most common approach to address the discrepancy in the representation space due to selection bias is balancing, which was discussed in many works on treatment effect estimation using observational data \citep[\eg,][]{Johansson2016,Shalit2017a, zhang2020} and domain adaptation \citep{ganin2015unsupervised,ganin2016domain}. Further, we can also see from the above result that, in the case in which $n^{\conf}$ is large, the constant factor due to selection bias, $\mathit{d}_\infty(p^{\conf}_{\phi, t=1}\mid p^{\conf}_{\phi, t=0})$, may not play an important role. Hence, in observational data, it may not be necessary to address the selection bias. Nevertheless, our procedure can be easily adjusted for this case if required.

\subsection{Proof of Proposition~\texorpdfstring{\labelcref{prop:condition}}{}}\label{apx:condition}
The proof of Proposition~\labelcref{prop:condition} follows by straightforward algebraic manipulation. For Condition~\labelcref{cond:rep_diff}, which compares $\tau_{\cor}$ and $\tau_{\unc}$, the following yields the claim:
\begin{align}
    &\frac{\mathcal{C}_{\RC} + \mathcal{C}_{\HC}}{\sqrt{n^{\conf}}} + \frac{\mathit{d}_\infty(p^{\conf}_\phi\mid p^{\unc}_\phi)\mathcal{C}_{\mathcal{B}}}{\sqrt{n^{\unc}}} <    \bar{\mathit{d}}_\infty(p^{\conf}_\phi\mid p^{\unc}_\phi)\frac{\mathcal{C}_{\RC} + \mathcal{C}_{\HC}}{\sqrt{n^{\unc}}}\\
    &\frac{\mathcal{C}_{\RC} + \mathcal{C}_{\HC}}{\sqrt{n^{\conf}}}  < \frac{\bar{\mathit{d}}_\infty(p^{\conf}_\phi\mid p^{\unc}_\phi)(\mathcal{C}_{\RC} + \mathcal{C}_{\HC})- \mathit{d}_\infty(p^{\conf}_\phi\mid p^{\unc}_\phi)\mathcal{C}_{\mathcal{B}}}{\sqrt{n^{\unc}}}\\
    &\frac{\mathcal{C}_{\RC} + \mathcal{C}_{\HC}}{\mathit{\bar{d}}_\infty(p^{\conf}_\phi\mid p^{\unc}_\phi)(\mathcal{C}_{\RC} + \mathcal{C}_\mathcal{\HC}) -\mathit{d}_\infty(p^{\conf}_\phi\mid p^{\unc}_\phi) \mathcal{C}_\mathcal{B}} < \sqrt{\frac{n^{\conf}}{n^{\unc}}}.
\end{align}
For Condition~2, which compares $\tau_{\cor}$ and $\tau_{\conf}$, similar to the above, the following yields the claim:
\begin{align}
    &\frac{\mathcal{C}_{\RC} + \mathcal{C}_{\HC}}{\sqrt{n^{\conf}}} + \frac{\mathit{d}_\infty(p^{\conf}_\phi\mid p^{\unc}_\phi)\mathcal{C}_{\mathcal{B}}}{\sqrt{n^{\unc}}} <     \frac{\mathcal{C}_{\RC} + \mathcal{C}_{\HC}}{\sqrt{n^{\conf}}} +2\Delta\\
    &\frac{\mathit{d}_\infty(p^{\conf}_\phi\mid p^{\unc}_\phi)\mathcal{C}_{\mathcal{B}}}{\sqrt{n^{\unc}}} < 2\Delta.
\end{align}

\section{Theoretical Properties for CorNet}\label{apx:theoretical_properties_algo}
In this section, we give variants of the results in \Cref{thm_bound} for our algorithm CorNet. 

In particular, $\HC$ and $\mathcal{B}$ are given by 
\begin{align}
    &\mathcal{H} = \{\textup{h}\mid\, \textup{h}(z, t) = \mathbf{w}_t^\top\,z,\, \mathbf{w}_t\in\Rl^{d_\phi},\, \lVert \mathbf{w}_t \rVert_2\leq \alpha\},\label{eq:linear_predictor_hypo}\\
    &\mathcal{B} = \{\delta\mid\, \delta(z, t) = \boldsymbol{\delta}_t^\top\,z,\, \boldsymbol{\delta}_t\in\Rl^{d_\phi},\, \lVert \boldsymbol{\delta}_t \rVert_1\leq \beta\},\label{eq:linear_predictor_bias}
\end{align}
where $d_\phi$ denotes the dimension of the representation space.

\subsection{Finite Sample Learning Bound}
The following error bound holds.
\begin{corollary}\label{cor_gen_bound}
	Let $(\hat{\mathbf{h}}^c, \hat{\phi})$ be the empirical loss minimizer of $\hat{\epsilon}_{\conf}(\cdot, \cdot)$ from \labelcref{eq:conf_loss_minimizer} over the function classes $\RC$ as in \labelcref{eq:nn_rep} and $\HC$ as in \labelcref{eq:linear_predictor_hypo}, and let  $\hat{\boldsymbol{\delta}}$ be the empirical loss minimizer of $\hat{\epsilon}_{\unc}(\cdot, \hat{\mathbf{h}}^c, \hat{\phi})$ from \labelcref{eq:unc_loss_minimizer} over the function class $\mathcal{B}$ as in \labelcref{eq:linear_predictor_bias}. Further, let $\hat{\tau}_{\cor}$ be the resulting CATE estimator from \labelcref{eq:tau_hat}. Then, if Assumption~\labelcref{assum:realizability} holds true, we have that, with probability at least $1-p$,
\begin{align}
		\epsilon_{\textup{PEHE}}(\hat{\tau}_{\cor})\leq
		\tilde{\mathcal{O}}\bigg(\frac{\mathcal{C}_{\Phi} + \alpha}{\sqrt{n^{\conf}}} + \frac{\mathit{d}_\infty(p^{\conf}_{\hat{\phi}}\mid p^{\unc}_{\hat{\phi}})\beta\sqrt{2\log d_\phi}}{\sqrt{n^{\unc}}}\bigg)
	\end{align}
\end{corollary}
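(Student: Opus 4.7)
The plan is to derive Corollary~\ref{cor_gen_bound} as a direct specialization of Theorem~\ref{thm:upper_bnd_general} (the general-function-class version proved in Appendix~\ref{apx:thm_gen_bound}), by plugging in the Gaussian complexities of the two linear predictor classes while reusing the neural-network bound for $\RC$. First, I would check that the prerequisites of Theorem~\ref{thm:upper_bnd_general} are satisfied for these specific function classes. Realizability (Assumption~\ref{assum:realizability}) is assumed in the hypothesis; Condition~\ref{cond:rep_diff} follows from Proposition~\ref{cor:cond}, whose proof in Appendix~\ref{apx:cor_cond} already covers the linear-$\HC$/linear-$\mathcal{B}$ setup used here. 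The regularity Assumption~A1 is also easy: the outputs $\mathbf{w}_t^\top \phi(x)$ and $\boldsymbol{\delta}_t^\top \phi(x)$ are bounded by $\alpha\Omega$ and $\beta\Omega$, respectively (using $\lVert \phi(x)\rVert_2 \le \Omega$ from Lemma~\ref{apx:sup_bnd}), so $D_{\mathcal{X}}$, $B$, $L$, and the Lipschitz constant $L(\HC)=\alpha$ are all bounded by polynomials in $\alpha,\beta,\Omega$, i.e.~constants in $n^{\conf}$ and $n^{\unc}$.

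Next I would bound the three complexity terms. For the representation class $\RC$, the bound $\GC{n^{\conf}}{\RC}\le \tilde{\mathcal{O}}(\mathcal{C}_\Phi/\sqrt{n^{\conf}})$ is already derived in the proof of Theorem~\ref{thm_bound} via Theorem~2 of Golowich et al.\ and the $\hat{\mathcal{G}}\le 2\sqrt{\log n}\,\hat{\mathcal{R}}$ conversion. For the linear hypothesis class with $\lVert \mathbf{w}_t\rVert_2\le \alpha$ on inputs $z_i=\phi(x_i)$ of bounded $\ell_2$-norm, the standard Cauchy--Schwarz/Jensen argument gives
\begin{equation*}
\WGC{n^{\conf}}{\HC} \;\le\; \tfrac{2\alpha \sup_{\bar z}\sqrt{\sum_i \lVert z_i\rVert_2^2}}{n^{\conf}} \;=\; \mathcal{O}\!\left(\tfrac{\alpha}{\sqrt{n^{\conf}}}\right).
\end{equation*}
For the bias class with $\lVert \boldsymbol{\delta}_t\rVert_1\le\beta$, the Gaussian complexity is controlled by Hölder's inequality followed by the standard maximal inequality for Gaussians:
\begin{equation*}
\GC{n^{\unc}}{\mathcal{B}} \;=\; \mathbb{E}\!\left[\sup_{\lVert\boldsymbol{\delta}\rVert_1\le\beta}\tfrac{1}{n^{\unc}}\boldsymbol{\delta}^\top \sum_i g_i z_i\right]
\;\le\; \tfrac{\beta}{n^{\unc}}\,\mathbb{E}\!\left[\Big\lVert \sum_i g_i z_i\Big\rVert_\infty\right]
\;=\; \mathcal{O}\!\left(\tfrac{\beta\sqrt{2\log d_\phi}}{\sqrt{n^{\unc}}}\right),
\end{equation*}
where the last step uses that each coordinate of $\sum_i g_i z_i$ is Gaussian with variance $\mathcal{O}(n^{\unc})$ (since $\lVert z_i\rVert_\infty\le \lVert z_i\rVert_2\le \Omega$), and the maximum of $d_\phi$ such variables scales as $\sqrt{n^{\unc}\log d_\phi}$.

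Finally, I would substitute these three bounds together with the constant $L(\HC)=\alpha$ into the error bound of Theorem~\ref{thm:upper_bnd_general}. The representation contribution and linear-hypothesis contribution combine into $\tilde{\mathcal{O}}((\mathcal{C}_\Phi+\alpha)/\sqrt{n^{\conf}})$, while the bias contribution becomes $\tilde{\mathcal{O}}(\mathit{d}_\infty(p^{\conf}_{\hat{\phi}}\mid p^{\unc}_{\hat{\phi}})\,\beta\sqrt{2\log d_\phi}/\sqrt{n^{\unc}})$; the lower-order $1/(n^{\conf})^2$ and $\sqrt{\log(1/p)/n}$ terms are absorbed into $\tilde{\mathcal{O}}(\cdot)$. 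The main (very mild) obstacle is carefully tracking the boundedness and Lipschitz constants to make sure nothing implicit in $\tilde{\mathcal{O}}$ depends on $n^{\conf},n^{\unc}$ beyond polylogarithmic factors; in particular, writing the distributional discrepancy with respect to the learned $\hat{\phi}$ (rather than taking a supremum over $\RC$) is what yields $\mathit{d}_\infty(p^{\conf}_{\hat{\phi}}\mid p^{\unc}_{\hat{\phi}})$ in the final expression, mirroring Theorem~\ref{thm_bound} exactly.
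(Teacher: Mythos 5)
Your proposal is correct and follows essentially the same route as the paper: specialize the general bound, keep the neural-network complexity term for $\RC$, and plug in the standard complexity bounds for the $L_2$-constrained and $L_1$-constrained linear classes (which the paper obtains by citing Kakade et al.\ rather than deriving via Cauchy--Schwarz and the Gaussian maximal inequality, a purely cosmetic difference). Your additional explicit check of Condition~\labelcref{cond:rep_diff} via Proposition~\labelcref{cor:cond} and of Assumption~A1 is consistent with, and slightly more careful than, the paper's own write-up.
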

\begin{proof}
We continue the proof of \Cref{apx:thm_gen_bound} in \Cref{apx:continue_cor}. The term for the representation function remains the same as we still consider neural networks for $\RC$. We continue with bounded the Rademacher complexity of $\mathcal{H}$ and $\mathcal{B}$.

Then, using that $z=\phi(x)$ and $\lVert\phi(x)\rVert\leq \Omega$ (by Lemma~\labelcref{apx:sup_bnd}),
\begin{align}
    &\hat{\mathcal{G}}_{Z}(\mathcal{H}) \leq 2 \log(n^{\conf}) \hat{\mathcal{R}}_{Z}(\mathcal{H})\leq 2 \log(n^{\conf})\frac{\alpha}{\sqrt{n^{\conf}}}\sqrt{\E[\sm i {n^{\conf}}\lVert z_i\rVert_2^2]}\\
    &\leq 2 \log(n^{\conf})\frac{\alpha}{\sqrt{n^{\conf}}}\Omega\\
    &\hat{\mathcal{G}}_{Z}(\mathcal{B}) \leq 2 \log(n^{\unc}) \hat{\mathcal{R}}_{Z}(\mathcal{B})\leq 2 \log(n^{\unc}) \frac{\beta \sqrt{2\log d_\phi}}{\sqrt{n^{\unc}}}\sup_j\sqrt{\sm i {n^{\unc}} (z_i)^2_j}\\
    &\leq 2 \log(n^{\unc})\frac{\beta \sqrt{2\log d_\phi}}{\sqrt{n^{\unc}}}\Omega,
\end{align}
where we use bounds on the Rademacher complexity for linear functions (see for instance \citet{kakade2008complexity}). This yields
\begin{align}
    \WGC{n^{\conf}}{\mathcal{H}} = \max_{Z\in\mathcal{Z}} \hat{\mathcal{G}}_{Z}(\mathcal{H}) \leq 2 \log(n^{\conf})\frac{\alpha}{\sqrt{n^{\conf}}}\Omega,\\
    \hat{\mathcal{G}}_{Z}(\mathcal{B}) \leq 2 \log(n^{\unc})\frac{\beta \sqrt{2\log d_\phi}}{\sqrt{n^{\unc}}}\Omega,
\end{align}
which concludes the proof.
\end{proof}

\subsection{Improvement Condition}
In this section, we derive conditions for when combining observational data and randomized data is beneficial for our algorithm proposed in \Cref{alg:CORNet}. In particular, we compare the error bounds our estimator against the baseline estimators which only use randomized or observational data. Based on this, we derive conditions for when our algorithm improves upon the baselines estimators and when we should rely on one of the baseline estimators.

First, we derive finite sample bounds for the estimator which only uses randomized data and, then, for the estimator which only uses observational data, similarly to \Cref{lemma:unconfounded_gen_bnd} and \Cref{thm_gen_bound_conf}.

\subsubsection{Estimator on randomized data}
Similarly to the results in \Cref{lemma:unconfounded_gen_bnd}, we derive the following finite sample error bound for $\tau_{\unc}$ in \labelcref{sec::tau_unc}.

\begin{theorem}(Estimation on randomized data)\label{lemma:unconfounded_gen_bnd_algo}
Let $(\hat{\mathbf{h}}, \hat{\phi})$ be the empirical loss minimizer from \labelcref{eq:naive_unc_minimizer} over the function classes $\RC$ as in \labelcref{eq:nn_rep} and $\HC$ as in \labelcref{eq:linear_predictor_hypo} and $\hat{\tau}_{\unc}$ from \labelcref{eq:tau_hat_naive_unc}. Then, we have that with probability at least $1-p$,
    \begin{align}
        \epsilon_{\textup{PEHE}}(\hat{\tau}_{\unc})\leq\tilde{\mathcal{O}}\bigg(\mathit{\bar{d}}_\infty(p^{\conf}_\phi\mid p^{\unc}_\phi)\frac{\mathcal{C}_{\RC} + \alpha}{\sqrt{n^{\unc}}}\bigg),
    \end{align}
    where $\mathcal{C}_{\RC}$ is a constant depending on the complexity of the neural networks and $\mathit{\bar{d}}_\infty(p^{\conf}_\phi\mid p^{\unc}_\phi) = \sup_{\phi\in\RC}\mathit{d}_\infty(p^{\conf}_\phi\mid p^{\unc}_\phi)$.
\end{theorem}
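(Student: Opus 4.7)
The plan is to mirror the proof of \Cref{lemma:unconfounded_gen_bnd} in \Cref{apx:unconfounded_gen_bnd} verbatim, and at the very end substitute the Gaussian complexity and Lipschitz bounds appropriate to the linear hypothesis class \labelcref{eq:linear_predictor_hypo} used by CorNet. Concretely, I would first apply the $(a+b)^2\leq 2(a^2+b^2)$ bound together with Lemma~\labelcref{lemma:loss_decomp} to reduce
\begin{align*}
\epsilon_{\textup{PEHE}}(\hat{\tau}_{\unc})\;\leq\;2\bigl(\epsilon(\hat{\mathbf{h}}, \hat{\phi}) - \epsilon(\mathbf{h}^u, \phi^\ast)\bigr).
\end{align*}
Then, exactly as in the proof of \Cref{lemma:unconfounded_gen_bnd}, the ERM optimality of $(\hat{\mathbf{h}},\hat{\phi})$ over $\hat{\epsilon}_{\unc}$ kills the middle term in the usual excess-risk decomposition and leaves
\begin{align*}
\epsilon(\hat{\mathbf{h}}, \hat{\phi}) - \epsilon(\mathbf{h}^u, \phi^\ast)\;\leq\; 2\sup_{\mathbf{h}\in\HC^{\otimes 2},\,\phi\in\RC}\bigl|\epsilon(\mathbf{h},\phi)-\hat{\epsilon}_{\unc}(\mathbf{h},\phi)\bigr|.
\end{align*}

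Next I would bound this supremum via the bounded-differences inequality plus the change-of-measure/symmetrization argument used for Lemma~\labelcref{lemma:unc_upper_bnd}, which inserts the Radon-Nikodym weight $p^{\conf}_\phi/p^{\unc}_\phi$ into the symmetrized process, uniformly bounds it by $\mathit{d}_\infty(p^{\conf}_\phi\mid p^{\unc}_\phi)$, and then pulls the $\sup_{\phi\in\RC}$ outside to yield the factor $\mathit{\bar{d}}_\infty(p^{\conf}_\phi\mid p^{\unc}_\phi)$. The output is a term proportional to $\mathit{\bar{d}}_\infty(p^{\conf}_\phi\mid p^{\unc}_\phi)\,\mathcal{R}_{n^{\unc}}(\HC(\RC))$ plus a concentration term of order $B\sqrt{\log(1/p)/n^{\unc}}$. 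I would then convert Rademacher to Gaussian complexity and apply the chain rule (Theorem~7 of \citet{tripuraneni2020theory}), producing, up to $\log(n^{\unc})$ factors and a negligible $D_{\mathcal{X}}/(n^{\unc})^2$ term, the decomposition $L(\HC)\,\mathcal{G}_{n^{\unc}}(\RC)+\bar{\mathcal{G}}_{n^{\unc}}(\HC)$.

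The only genuine computation specific to this corollary is to plug in complexities for the CorNet function classes. For the representation class $\RC$, the neural-network Rademacher bound of \citet{golowich2018size} reused in the proof of \Cref{thm_bound} gives $\mathcal{G}_{n^{\unc}}(\RC)\lesssim \log(n^{\unc})\,\mathcal{C}_{\RC}/\sqrt{n^{\unc}}$. For the linear hypothesis class in \labelcref{eq:linear_predictor_hypo}, the map $z\mapsto \mathbf{w}^\top z$ with $\lVert\mathbf{w}\rVert_2\leq\alpha$ is $\alpha$-Lipschitz, so $L(\HC)=\alpha$; and a standard Rademacher bound for $L_2$-bounded linear predictors (\eg, \citet{kakade2008complexity}) combined with $\lVert\phi(x)\rVert\leq\Omega$ from Lemma~\labelcref{apx:sup_bnd} yields $\bar{\mathcal{G}}_{n^{\unc}}(\HC)\lesssim \log(n^{\unc})\,\alpha\Omega/\sqrt{n^{\unc}}$. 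Substituting $L(\HC)\,\mathcal{G}_{n^{\unc}}(\RC)+\bar{\mathcal{G}}_{n^{\unc}}(\HC)\lesssim \log(n^{\unc})(\mathcal{C}_{\RC}+\alpha)/\sqrt{n^{\unc}}$ into the previous display and absorbing all polylogarithmic terms into $\tilde{\mathcal{O}}(\cdot)$ gives the claimed bound.

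I expect no genuinely hard step: the conceptual work (bounded differences, ghost-sample symmetrization with importance weights, chain rule for Gaussian complexity, and Lipschitz contraction on the squared loss) has already been done inside the proofs of Lemma~\labelcref{lemma:unc_upper_bnd}, Lemma~\labelcref{lemma:conf_upper_bnd}, and \Cref{lemma:unconfounded_gen_bnd}. The only part that requires care is the change-of-measure step producing $\mathit{\bar{d}}_\infty(p^{\conf}_\phi\mid p^{\unc}_\phi)$, since the uniform weight bound is valid only thanks to the population-overlap part of Assumption~\labelcref{assum:standard} and the supremum over $\phi\in\RC$ must be pulled out before, not after, the Rademacher bound is applied; this is where a careless argument would accidentally produce $\mathit{d}_\infty$ rather than $\mathit{\bar{d}}_\infty$.
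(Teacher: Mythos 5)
Your proposal is correct and follows essentially the same route as the paper: the paper's proof of this theorem simply points back to the proof of \Cref{lemma:unconfounded_gen_bnd} and substitutes the worst-case Gaussian complexity bound $\bar{\mathcal{G}}_{n}(\HC)\leq 2\log(n)\,\alpha\Omega/\sqrt{n}$ for the $L_2$-bounded linear class (via \citet{kakade2008complexity} and Lemma~\labelcref{apx:sup_bnd}), exactly as you do. Your closing remark about pulling the supremum over $\phi\in\RC$ out before applying the Rademacher bound, so that $\mathit{\bar{d}}_\infty$ rather than $\mathit{d}_\infty$ appears, correctly identifies the one place where the argument for $\hat{\tau}_{\unc}$ differs from that for $\hat{\tau}_{\cor}$.
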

\begin{proof}
    Similar to the proof of \Cref{lemma:unconfounded_gen_bnd}, but using that $\WGC{n^{\conf}}{\mathcal{H}} = \max_{\bar{Z}\in\mathcal{Z}} \hat{\mathcal{G}}_{\bar{Z}}(\mathcal{H}) \leq 2 \log(n^{\conf})\frac{\alpha}{\sqrt{n^{\conf}}}\Omega$.
\end{proof}

\subsubsection{Estimator on Observational Data}
Similarly to the results in Lemma~\labelcref{lemma:conf_upper_bnd}, we derive the following finite sample error bound for $\tau_{\conf}$ in \labelcref{sec::tau_conf}.

\begin{theorem}\label{thm_gen_bound_conf_algo}
Let $(\hat{\mathbf{h}}^c, \hat{\phi})$ be the empirical loss minimizer over the function classes $\RC$ as in \labelcref{eq:nn_rep} and $\HC$ as in \labelcref{eq:linear_predictor_hypo} from \labelcref{eq:conf_loss_minimizer} and $\hat{\tau}_{\conf}$ as in \labelcref{eq:tau_hat_conf}. Then, we have that, with probability at least $1-p$,
\begin{align}
		\epsilon_{\textup{PEHE}}(\hat{\tau}_{\conf})
        \leq \tilde{\mathcal{O}}\bigg(\frac{\mathcal{C}_{\RC} + \alpha}{\sqrt{n^{\conf}}}\bigg)+2\Delta,
	\end{align}
	where $\Delta = \E[((\boldsymbol{\delta}_1-\boldsymbol{\delta}_0)\circ\phi(X^{\conf}))^2]$ is the bias due to unobserved confounding and $\mathcal{C}_{\RC}$ is the same as in \Cref{thm_bound}.
\end{theorem}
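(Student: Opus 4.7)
The plan is to mirror the proof of Theorem~\ref{thm_gen_bound_conf} from Appendix~\ref{apx:thm_gen_bound_conf}, but replace the Gaussian complexity bound for the neural network hypothesis class with the tighter bound for norm-constrained linear predictors used in Corollary~\ref{cor_gen_bound}. Specifically, I would first invoke the decomposition from the proof of Theorem~\ref{thm_gen_bound_conf}, which (via Lemma~\ref{lemma:loss_decomp} and the inequality $(x+y)^2 \leq 2(x^2+y^2)$) yields
\begin{equation}
\epsilon_{\textup{PEHE}}(\hat{\tau}_{\conf}) \;\leq\; 4\bigl(\epsilon_{\conf}(\hat{\mathbf{h}}^c, \hat{\phi}) - \epsilon_{\conf}(\mathbf{h}^c, \phi^\ast)\bigr) + 2\Delta.
\end{equation}
This step is purely algebraic and independent of the choice of $\HC$, so it transfers verbatim.

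Next, to control the excess confounded risk $\epsilon_{\conf}(\hat{\mathbf{h}}^c, \hat{\phi}) - \epsilon_{\conf}(\mathbf{h}^c, \phi^\ast)$, I would apply Lemma~\ref{lemma:conf_upper_bnd} in the general form derived at the start of its proof: by optimality of $(\hat{\mathbf{h}}^c, \hat{\phi})$ and a symmetrization argument, it is bounded by twice the uniform deviation $\sup_{\mathbf{h}, \phi} |\epsilon_{\conf}(\mathbf{h}, \phi) - \hat{\epsilon}_{\conf}(\mathbf{h}, \phi)|$, which in turn is controlled by $\tilde{\mathcal{O}}(L \cdot \mathcal{R}_{n^{\conf}}(\HC(\RC))) + B\sqrt{\log(1/p)/n^{\conf}}$. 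Passing to Gaussian complexities and invoking the chain rule (Theorem~7 of \citet{tripuraneni2020theory}), we get
\begin{equation}
\mathcal{G}_{n^{\conf}}(\HC(\RC)) \;\leq\; \tilde{\mathcal{O}}\!\left(\tfrac{D_{\mathcal{X}}}{(n^{\conf})^2} + L(\HC)\,\mathcal{G}_{n^{\conf}}(\RC) + \bar{\mathcal{G}}_{n^{\conf}}(\HC)\right).
\end{equation}

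The only step that differs from the neural network case is bounding $L(\HC)$, $\mathcal{G}_{n^{\conf}}(\RC)$, and $\bar{\mathcal{G}}_{n^{\conf}}(\HC)$ for the class of linear predictors with $\|\mathbf{w}_t\|_2 \leq \alpha$. As in the proof of Corollary~\ref{cor_gen_bound} (Appendix~\ref{apx:theoretical_properties_algo}), the neural network Gaussian complexity gives $\mathcal{G}_{n^{\conf}}(\RC) \leq \tilde{\mathcal{O}}(\mathcal{C}_{\RC}/\sqrt{n^{\conf}})$, the linear predictor is $\alpha$-Lipschitz on the bounded representation space so $L(\HC) = \mathcal{O}(\alpha)$, and the standard Rademacher bound for norm-constrained linear predictors on a representation space of bounded radius $\Omega$ gives $\bar{\mathcal{G}}_{n^{\conf}}(\HC) \leq \tilde{\mathcal{O}}(\alpha\,\Omega/\sqrt{n^{\conf}})$. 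Combining these pieces and absorbing constants and $\log$ factors into $\tilde{\mathcal{O}}$ yields the claimed bound.

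The only mild obstacle is keeping track of the fact that the Lipschitz constant of the linear class scales with $\alpha$ (so the chain-rule term contributes $\alpha \cdot \mathcal{C}_{\RC}/\sqrt{n^{\conf}}$), but since $\alpha$ is treated as an $\mathcal{O}(1)$ constant in the $\tilde{\mathcal{O}}$ notation of the statement (analogous to how $\Omega_D$ is absorbed in the proof of Theorem~\ref{thm_bound}), this collapses to $(\mathcal{C}_{\RC} + \alpha)/\sqrt{n^{\conf}}$ after rescaling. No new ideas beyond those already appearing in the proofs of Theorem~\ref{thm_gen_bound_conf} and Corollary~\ref{cor_gen_bound} are required.
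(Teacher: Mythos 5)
Your proposal is correct and follows essentially the same route as the paper: the paper's own proof simply says to repeat the argument of Lemma~\ref{lemma:conf_upper_bnd} (and the PEHE decomposition with the $2\Delta$ term from Theorem~\ref{thm_gen_bound_conf}) while substituting the worst-case Gaussian complexity bound $\WGC{n^{\conf}}{\HC}\leq 2\log(n^{\conf})\,\alpha\,\Omega/\sqrt{n^{\conf}}$ for the linear hypothesis class, exactly as you do. Your additional bookkeeping of $L(\HC)=\mathcal{O}(\alpha)$ and its absorption into $\tilde{\mathcal{O}}$ matches how the paper handles the analogous constants.
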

\begin{proof}
	    Similar to the proof of Lemma~\labelcref{lemma:conf_upper_bnd}, but, again, using that $\WGC{n^{\conf}}{\mathcal{H}} = \max_{Z\in\mathcal{Z}} \hat{\mathcal{G}}_{Z}(\mathcal{H}) \leq 2 \log(n^{\conf})\frac{\alpha}{\sqrt{n^{\conf}}}\Omega$. 
\end{proof}

\subsection{Condition}
Similar to Proposition~\labelcref{prop:condition}, we derive the same conditions for our algorithm proposed in \Cref{alg:CORNet}.

\begin{proposition}\label{prop:condition_algo}
If the following conditions hold true,
    \begin{align}
       &\frac{\mathcal{C}_{\RC} + \alpha}{\mathit{\bar{d}}_\infty(p^{\conf}_\phi\mid p^{\unc}_\phi)(\mathcal{C}_{\RC} + \alpha) -\mathit{d}_\infty(p^{\conf}_\phi\mid p^{\unc}_\phi) \beta\sqrt{2\log d_\phi}} < \sqrt{\frac{n^{\conf}}{n^{\unc}}},\\
       &\frac{\mathit{d}_\infty(p^{\conf}_\phi\mid p^{\unc}_\phi)\,\beta\sqrt{2\log d_\phi}}{\sqrt{n^{\unc}}} < 2\Delta,
    \end{align}
    then the error of our estimator $\hat{\tau}_{\cor}$ can be substantially lower than the error of the baseline estimators, which only use either observational or randomized data.
\end{proposition}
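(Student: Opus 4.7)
The plan is to mirror the strategy used in the proof of Proposition~\labelcref{prop:condition} in \Cref{apx:condition}, but with the error bounds specialized to the CorNet function classes. The three ingredients I need are already in place: Corollary~\labelcref{cor_gen_bound} for $\hat{\tau}_{\cor}$ with CorNet's linear heads, \Cref{lemma:unconfounded_gen_bnd_algo} for $\hat{\tau}_{\unc}$, and \Cref{thm_gen_bound_conf_algo} for $\hat{\tau}_{\conf}$. Relative to the general-function-class version, the only substitutions are $\mathcal{C}_{\HC}\rightsquigarrow\alpha$ (from the linear hypothesis class, whose worst-case Gaussian complexity behaves like $\alpha/\sqrt{n^{\conf}}$) and $\mathcal{C}_{\mathcal{B}}\rightsquigarrow\beta\sqrt{2\log d_\phi}$ (from the $\ell_1$-bounded linear bias class, whose Rademacher complexity scales like $\beta\sqrt{2\log d_\phi}/\sqrt{n^{\unc}}$). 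With these in hand, the proposition reduces to showing that each of the two stated inequalities is equivalent to the corresponding comparison of error-bound expressions.

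I would proceed in two short algebraic steps. First, to compare $\hat{\tau}_{\cor}$ against $\hat{\tau}_{\unc}$, I would require
\begin{equation*}
\frac{\mathcal{C}_{\RC} + \alpha}{\sqrt{n^{\conf}}} + \frac{\mathit{d}_\infty(p^{\conf}_\phi\mid p^{\unc}_\phi)\,\beta\sqrt{2\log d_\phi}}{\sqrt{n^{\unc}}} < \mathit{\bar{d}}_\infty(p^{\conf}_\phi\mid p^{\unc}_\phi)\,\frac{\mathcal{C}_{\RC}+\alpha}{\sqrt{n^{\unc}}},
\end{equation*}
and then isolate $\sqrt{n^{\conf}/n^{\unc}}$ to recover the first condition. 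Second, for the comparison against $\hat{\tau}_{\conf}$, the term $(\mathcal{C}_{\RC}+\alpha)/\sqrt{n^{\conf}}$ cancels from both sides and I am left with $\mathit{d}_\infty(p^{\conf}_\phi\mid p^{\unc}_\phi)\,\beta\sqrt{2\log d_\phi}/\sqrt{n^{\unc}} < 2\Delta$, which is exactly the second condition.

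There is essentially no obstacle: the proof is a pair of one-line rearrangements once the bounds are invoked. The only subtle point to note explicitly is that the denominator in the first condition must be positive for the manipulation to be valid (i.e.\ that $\bar{\mathit{d}}_\infty(\mathcal{C}_{\RC}+\alpha)>\mathit{d}_\infty\beta\sqrt{2\log d_\phi}$), which is the natural regime since $\bar{\mathit{d}}_\infty\geq \mathit{d}_\infty$ by definition and since $\mathcal{C}_{\RC}+\alpha$ typically dominates $\beta\sqrt{2\log d_\phi}$ (sparse, $\ell_1$-regularized bias vs.\ a deep representation plus hypothesis); if the denominator is nonpositive, the first inequality holds vacuously for any $n^{\conf}, n^{\unc}$ and no condition needs to be checked. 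I would close the proof by noting that both inequalities hold simultaneously precisely in the regime advertised after Proposition~\labelcref{prop:condition}: large observational data, non-negligible distributional discrepancy, and a relatively simple (sparse) bias function.
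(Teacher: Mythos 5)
Your proposal is correct and follows essentially the same route as the paper: the paper's proof of Proposition~\labelcref{prop:condition_algo} simply points back to the algebraic rearrangement in \Cref{apx:condition} after substituting the worst-case Gaussian complexity bound $2\log(n^{\conf})\alpha\Omega/\sqrt{n^{\conf}}$ for the linear hypothesis class and $2\log(n^{\unc})\beta\sqrt{2\log d_\phi}\,\Omega/\sqrt{n^{\unc}}$ for the $\ell_1$-bounded bias class, which is exactly the substitution $\mathcal{C}_{\HC}\rightsquigarrow\alpha$, $\mathcal{C}_{\mathcal{B}}\rightsquigarrow\beta\sqrt{2\log d_\phi}$ you describe. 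Your added remark about positivity of the denominator in the first condition is a sensible clarification the paper leaves implicit, but does not constitute a different argument.
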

\begin{proof}
    Similar to \Cref{apx:condition}, where we use $\WGC{n^{\conf}}{\mathcal{H}} = \max_{Z\in\mathcal{Z}} \hat{\mathcal{G}}_{Z}(\mathcal{H}) \leq 2 \log(n^{\conf})\frac{\alpha}{\sqrt{n^{\conf}}}\Omega$ and $\hat{\mathcal{G}}_{Z}(\mathcal{B}) \leq 2 \log(n^{\unc})\frac{\beta \sqrt{2\log d_\phi}}{\sqrt{n^{\unc}}}\Omega$.
\end{proof}


\subsection{Proof of Proposition~\texorpdfstring{\labelcref{cor:cond}}{}}\label{apx:cor_cond}
We use a modified version of Lemma~6 in \citep{tripuraneni2020theory}.
For two representation functions $\hat{\phi}$ and $\phi^\ast$, we define the population covariance as
\begin{align}
    \boldsymbol{\Sigma}(\hat{\phi}, \phi^\ast) = 
\begin{pmatrix}
\E[\hat{\phi}(X^{\conf})\hat{\phi}(X^{\conf})^\top] & \E[\hat{\phi}(X^{\conf})\phi^\ast(X^{\conf})^\top]\\
\E[\phi^\ast(X^{\conf})\hat{\phi}(X^{\conf})^\top] & \E[\phi^\ast(X^{\conf})\phi^\ast(X^{\conf})^\top]
\end{pmatrix}=
\begin{pmatrix}
\boldsymbol{\Sigma}_{\hat{\phi},\hat{\phi}} & \boldsymbol{\Sigma}_{\hat{\phi},\phi^\ast}\\
\boldsymbol{\Sigma}_{\phi^\ast,\hat{\phi}} & 
\boldsymbol{\Sigma}_{\phi^\ast,\phi^\ast}
\end{pmatrix}.
\end{align}
We further define the generalized Schur complement as
\begin{align}
    \boldsymbol{\Sigma}_{sc} = \boldsymbol{\Sigma}_{\phi^\ast\phi^\ast} - \boldsymbol{\Sigma}_{\phi^\ast\hat{\phi}}\boldsymbol{\boldsymbol{\Sigma}}_{\hat{\phi}\hat{\phi}}^{-1}\boldsymbol{\Sigma}_{\hat{\phi}\phi^\ast}.
\end{align}
Then, 
\begin{align}
    d_{\mathcal{B}, \boldsymbol{\delta}}(\hat{\phi};\phi^\ast)
    &= \inf_{\boldsymbol{\delta}^\prime\in\Rl^{d_\phi\otimes 2}}
    \sum_{t=0}^{1}\E[(\hat{\phi}(X^{\conf})\boldsymbol{\delta}^\prime_t - \phi^\ast(X^{\conf})\boldsymbol{\delta}_t)^2]\\
    &= \inf_{\boldsymbol{\delta}^\prime\in\Rl^{d_\phi\otimes 2}}
    \sum_{t=0}^{1}(\boldsymbol{\delta}_t^\prime, - \boldsymbol{\delta}_t)\boldsymbol{\Sigma}(\hat{\phi}, \phi^\ast)(\boldsymbol{\delta}_t^\prime, - \boldsymbol{\delta}_t^\prime)^\top\\
    &=\inf_{\boldsymbol{\delta}^\prime\in\Rl^{d_\phi\otimes 2}}
    \sum_{t=0}^{1}(\boldsymbol{\delta}_t^\prime, \boldsymbol{\delta}_t)\boldsymbol{\Sigma}(\hat{\phi}, \phi^\ast)(\boldsymbol{\delta}_t^\prime, \boldsymbol{\delta}^\prime)^\top\\
    &= \sum_{t=0}^{1}\boldsymbol{\delta}_t\boldsymbol{\Sigma}_{\text{sc}}\boldsymbol{\delta}^\top,
\end{align}
where the last equality follows, since the infimum is a partial minimization of a convex quadratic form (see, for example, \citet{boyd2004convex}, Example 3.15). Finally, we can use the variational characterization of the singular values and that $\lVert\boldsymbol{\delta}_t\rVert_1\leq \alpha$ implies that $\lVert\boldsymbol{\delta}_t\rVert_2\leq \alpha$, which yields
\begin{align}
    \sum_{t=0}^{1}\boldsymbol{\delta}_t\boldsymbol{\Sigma}_{\text{sc}}\boldsymbol{\delta}_t^\top \leq  \sup_{\boldsymbol{\delta}\in\Rl^{d_\phi\otimes 2}, \lVert\boldsymbol{\delta}_t\rVert_1
    \leq \alpha}\sum_{t=0}^{1}\boldsymbol{\delta}_t\boldsymbol{\Sigma}_{\text{sc}} \boldsymbol{\delta}_t^\top\leq  \sup_{\boldsymbol{\delta}\in\Rl^{d_\phi\otimes 2}, \lVert\boldsymbol{\delta}_t\rVert_2
    \leq \alpha}\sum_{t=0}^{1}\boldsymbol{\delta}_t\boldsymbol{\Sigma}_{\text{sc}} \boldsymbol{\delta}_t^\top = 2\alpha\sigma_1(\boldsymbol{\Sigma}_{sc}),
\end{align}
where $\sigma_1(\Sigma_{sc})$ denotes the largest singular value of the generalized Schur complement. Hence,
\begin{align}
    d_{\mathcal{B}, \boldsymbol{\delta}}(\hat{\phi};\phi^\ast)\leq 2\alpha\sigma_1(\boldsymbol{\Sigma}_{sc}).
\end{align}
We proceed similarly for 
\begin{align}
    &d_{\HC, \mathbf{w}^c}(\hat{\phi};\phi^\ast)
    = \inf_{\mathbf{w}^\prime\in\Rl^{d_\phi\otimes 2}}
    \sum_{t=0}^{1}\E[(\hat{\phi}(X^{\conf})\mathbf{w}^\prime_t - \phi^\ast(X^{\conf})(\mathbf{w}^c_t))^2]= \sum_{t=0}^{1}\mathbf{w}^c_t \boldsymbol{\Sigma}_{\text{sc}} (\mathbf{w}^c_t)^\top\\
    &=2\textup{Tr}(\boldsymbol{\Sigma}_{sc}\frac{(\mathbf{w}^c)^\top \mathbf{w}^c}{2}).
\end{align}
Then, using a corollary of the Von-Neumann trace inequality yields

\begin{align}
    \textup{Tr}(\boldsymbol{\Sigma}_{sc}\frac{(\mathbf{w}^c)^\top \mathbf{w}^c}{2}) \geq \sum_{i=1}^{d_\phi}\sigma_i(\boldsymbol{\Sigma}_{sc})\sigma_{d_\phi-i+1}(\frac{(\mathbf{w}^c)^\top \mathbf{w}^c}{2})\geq\textup{Tr}(\boldsymbol{\Sigma}_{sc})\sigma_{d_\phi}(\frac{(\mathbf{w}^c)^\top \mathbf{w}^c}{2})\\
    \geq\sigma_1(\boldsymbol{\Sigma}_{sc})\sigma_{d_\phi}(\frac{(\mathbf{w}^c)^\top \mathbf{w}^c}{2}).
\end{align}
Hence,
\begin{align}
    d_{\mathcal{B}, \boldsymbol{\delta}}(\hat{\phi};\phi^\ast)\leq 2\alpha\sigma_1(\boldsymbol{\Sigma}_{sc})\leq \frac{\alpha}{\sigma_{d_\phi}(\frac{(\mathbf{w}^c)^\top \mathbf{w}^c}{2})}d_{\HC, \mathbf{w}^c}(\hat{\phi};\phi^\ast).
\end{align}


\section{Auxiliary Lemmas}
\begin{lemma}\label{lemma:loss_decomp}
For random variables $X$ and $Y$ and some appropriate function $\textup{f}$, we have that
\begin{align}
    \E[(\textup{f}(X)-Y)^2] = \E[(\textup{f}(X)-\E[Y\mid X])^2] + \E[(\E[Y\mid X]-Y)^2].
\end{align}
\end{lemma}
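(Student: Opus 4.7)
The plan is to carry out the standard add-and-subtract trick: insert $\E[Y\mid X]$ inside the squared residual, expand the square, and show that the cross term vanishes by conditioning on $X$.

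First, I would write
\begin{align}
\E[(\textup{f}(X)-Y)^2] = \E\big[\big((\textup{f}(X)-\E[Y\mid X]) + (\E[Y\mid X]-Y)\big)^2\big],
\end{align}
and expand the square into three terms: the two squared pieces that appear in the claimed identity, plus the cross term $2\,\E\big[(\textup{f}(X)-\E[Y\mid X])(\E[Y\mid X]-Y)\big]$.

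Next, I would kill the cross term by using the tower property. Conditioning on $X$, the factor $\textup{f}(X)-\E[Y\mid X]$ is $\sigma(X)$-measurable and can be pulled out, leaving
\begin{align}
\E\big[(\textup{f}(X)-\E[Y\mid X])(\E[Y\mid X]-Y)\big]
&= \E\Big[(\textup{f}(X)-\E[Y\mid X])\,\E\big[\E[Y\mid X]-Y\,\big|\,X\big]\Big] \\
&= \E\big[(\textup{f}(X)-\E[Y\mid X])\cdot 0\big] = 0,
\end{align}
since $\E[\E[Y\mid X]-Y\mid X] = \E[Y\mid X]-\E[Y\mid X]=0$. Combining with the expansion above yields the claimed decomposition.

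There is no real obstacle here; the only subtlety is the implicit integrability assumption (so that all expectations and conditional expectations are well-defined and the cross term can be split), which is covered by the phrase ``appropriate function $\textup{f}$'' in the statement. The argument is otherwise a one-line identity.
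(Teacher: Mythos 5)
Your proof is correct and follows exactly the same route as the paper's: insert $\E[Y\mid X]$, expand the square, and observe that the cross term vanishes by conditioning on $X$. You spell out the tower-property step that the paper leaves as an underbraced ``$=0$'', but there is no substantive difference.
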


\begin{proof}
\begin{align}
    &\E[(\textup{f}(X)-Y)^2] = \E[(\textup{f}(X)-\E[Y\mid X]+\E[Y\mid X]-Y)^2]\\ 
    &= \E[(\textup{f}(X)-\E[Y\mid X])^2] + \E[(\E[Y\mid X]-Y)^2] +\underbrace{\E[(\textup{f}(X)-\E[Y\mid X])(\E[Y\mid X]-Y)]}_{=0}\\
    &= \E[(\textup{f}(X)-\E[Y\mid X])^2] + \E[(\E[Y\mid X]-Y)^2].
\end{align}
\end{proof}

\begin{lemma}\label{lemma:change_of_variable}
 Let $\Psi:\mathcal{Z}\rightarrow\mathcal{X}$ be the inverse of $\phi:\mathcal{X}\rightarrow\mathcal{Z}$. Then, for some function $\textup{f}$, random variable $X\sim p$, and $Z=\phi(X)$ with $p_\phi$ being the push-forward of $p$ through $\phi$, the following holds true
 \begin{align}
       \E_{X\sim p}[f(X)] = \E_{Z\sim p_\phi}[\textup{f}(\Psi(Z))].
 \end{align}
\end{lemma}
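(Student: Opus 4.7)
The proof will be a direct application of the standard change-of-variables formula for push-forward measures, combined with the assumption that $\Psi$ is a left inverse of $\phi$. The plan is as follows.

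First, I would invoke the defining property of the push-forward measure $p_\phi$: for any Borel set $A \subseteq \mathcal{Z}$, $p_\phi(A) = p(\phi^{-1}(A))$. From this, the abstract change-of-variables formula (\emph{law of the unconscious statistician}) yields that for any measurable $g:\mathcal{Z} \rightarrow \Rl$ for which the expectations are well-defined,
\begin{equation}
\E_{Z\sim p_\phi}[g(Z)] = \E_{X\sim p}[g(\phi(X))].
\end{equation}
This identity is the workhorse; it can be established in the standard way by first verifying it for indicator functions $g = \mathbf{1}_A$ (where it reduces to the definition of the push-forward), then extending to simple functions by linearity, to nonnegative measurable functions by monotone convergence, and finally to general measurable functions by splitting into positive and negative parts.

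Second, I would apply this identity with the specific choice $g(z) = f(\Psi(z))$. This gives
\begin{equation}
\E_{Z\sim p_\phi}[f(\Psi(Z))] = \E_{X\sim p}[f(\Psi(\phi(X)))].
\end{equation}
Since $\Psi$ is the inverse of $\phi$, we have $\Psi \circ \phi = \mathrm{id}_{\mathcal{X}}$, so $f(\Psi(\phi(X))) = f(X)$ pointwise. Substituting yields the claim.

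There is essentially no hard step: the result is a two-line manipulation once the abstract change-of-variables formula is in hand. The only minor subtlety is measurability---one needs $\phi$ to be measurable so that $p_\phi$ is well-defined, and $\Psi$ to be measurable so that $f \circ \Psi$ is a valid integrand. Both are implicit in the setup, since $\phi$ and $\Psi$ are used as bijective maps between $\mathcal{X}$ and $\mathcal{Z}$ in the proof of Lemma~\ref{lemma:unc_upper_bnd} where this lemma is invoked, so I would simply state these as standing assumptions rather than belaboring them.
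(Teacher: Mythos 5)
Your proposal is correct and takes essentially the same approach as the paper: the paper's proof is a one-line density-level change of variables, $\int_{\mathcal{X}} \textup{f}(x)\,p(x)\,\mathrm{d}x = \int_{\mathcal{Z}} \textup{f}(\Psi(z))\,p_\phi(z)\,\mathrm{d}z$, which is exactly the push-forward identity you derive. Your version merely supplies the standard measure-theoretic justification (indicators, simple functions, monotone convergence) that the paper leaves implicit.
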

\begin{proof}
    \begin{align}
        \E_{X\sim p}[\textup{f}(X)] = \int_{\mathcal{X}} \textup{f}(x) p(x) \mathrm{d}x = \int_{\mathcal{Z}} \textup{f}(\Psi(z)) p_\phi(z) \mathit{d}z = \E_{Z\sim p_\phi}[\textup{f}(\Psi(Z))]
    \end{align}
\end{proof}

\section{Details on Optimization of \texorpdfstring{$H$}{}-Divergence}\label{apx:h_divergence_details}
Let $\phi:\Rl^d\rightarrow\Rl^{d_\phi}$ be a representation and $c:\Rl^{d_\phi}\rightarrow\{0,1\}$ be a classifier, which predicts, for a given sample $x$, it is an interpolated or observational sample. Then, the $H$-divergence can be written as
\begin{align}
    \hat{d}_H(\tilde{\mathcal{U}}, \mathcal{U}^{\conf}) = 2\left(1-\min_{c}\left(\frac{1}{m} \sm i m \mathbf{1}_{\{0\}}(c(\phi(\tilde{x}_i)))+ \frac{1}{n^{\conf}}\sm i {n^{\conf}}\mathbf{1}_{\{1\}}(c(\phi(x^{\conf}_i)))\right)\right)
\end{align}
Intuitively, if the distributional discrepancy is
large, the classifier could easily distinguish the observational from interpolated samples. As a consequence, the prediction errors would be small and the H-divergence large. Since we aim at reducing the distributional discrepancy between observational and interpolated data, $\hat{d}_H(\tilde{\mathcal{U}}, \mathcal{U}^{\conf})$ is minimized. This enforces the representation, $\phi$, to learn a representation space in which observational and randomized data are balanced. This is achieved by solving the following optimization problem:
\begin{align}\label{eq:max_min_problem}
    \argmin_{\phi\in\RC}\hat{d}_H(\tilde{\mathcal{U}}, \mathcal{U}^{\conf}) = \argmax_{\phi\in\RC}\min_{c}\left(\frac{1}{m} \sm i m \mathbf{1}_{\{0\}}(c(\phi(\tilde{x}_i)))+ \frac{1}{n^{\conf}}\sm i {n^{\conf}}\mathbf{1}_{\{1\}}(c(\phi(x^{\conf}_i)))\right).
\end{align}
We optimize the max-min problem in \labelcref{eq:max_min_problem} using adversarial learning. In particular, we use a gradient reverse layer (GRL) which automatically reverse the gradient after the classifier. As such, we can directly minimize the classification loss of the classifier $c$ \citep{ganin2015unsupervised}. Hence, this renders, as mentioned in the main paper, step 1 in Algorithm~\labelcref{alg:CORNet} as
\begin{align}
    \argmin_{\phi\in\RC, \mathbf{w}^c\in{\Rl^{d_\phi\otimes 2}}} \frac1{n^{\conf}}\sm i {n^{\conf}} (\phi(x^{\conf}_i)\mathbf{w}^c_{t_i^{\conf}} - y^{\conf}_i)^2
		+\lambda_d\, \hat{d}_H(\tilde{\mathcal{U}}, \mathcal{U}^{\conf}),
\end{align}
where $\lambda_d$ trades off the predictive accuracy and the distributional discrepancy. Therefore, augmented distribution alignment can be easily incorporated by appending a classifier with the gradient reversal layer (GRL), and adding the proposed interpolated samples during mini-batch data preparation.

\section{Details on Real-World Data}\label{apx:data_sets_details}

\textbf{Tennessee Student/Teacher Achievement Ratio (STAR) experiment:} The Tennessee Student/Teacher Achievement Ratio (STAR) experiment is a randomized experiment, which started in 1985 with the objective to study the effect of class size (\ie, treatment) on students' standardized test scores (\ie, outcome). In the first school year, students (as well as teachers) were randomly assigned to class size conditions, which were tried to keep the same for the duration of the experiment. This dataset was also used in \citet{kallus2018removing} for removing bias due to unmeasured confounding in observational data. 

Our setup follows the one in \citet{kallus2018removing}. As such, we focus on two conditions (\ie, treatments): small classes (\ie, 13--17 pupils) and regular classes (\ie, 22--25 pupils). We take as treatment the students class size condition at first grade, which yields 4,509 students. The student outcome $Y$ is measured as the sum of listening, reading, and math standardized test scores at the end of the first grade. In addition to class-size (\ie, treatment) and test scores (\ie, outcome), we use the following covariates for each student: gender, race, birth month, birthday, birth year, free lunch given or not, rural or not, and teacher ID. We remove students with missing outcomes or covariates, which yields a randomized samples of 4,139 students: 1,774 students assigned to treatment (\ie, small class, $T=1$) and 2,365 students assigned to control (\ie, regular class, $T=0$).

\textbf{AIDS Clinical Trial Group (ACTG) study 175:} The AIDS Clinical Trial Group (ACTG) study 175 is a clinical trial with the goal of comparing four treatments randomly to 2,139 subjects with human immunodeficiency virus type 1 (HIV-1), whose CD4 counts were 200--500 cells/mm$^3$ \citep{hammer1996trial}. The trial compared zidovudine (ZDV) monotherapy, the didanosine (ddI) monotherapy, the ZDV combined with ddI, and the ZDV combined with zalcitabine (ZAL). The ACTG study was used in \citet{hatt2021generalizing} for learning policies that generalize to the target population. Even though this is a different problem, the study is particularly suited for evaluating our method since HIV-positive females tend to be underrepresented in clinical trials, which makes these studies not representative of the target population (\ie, the HIV-positive population) \citep{gandhi2005eligibility, greenblatt2011priority}.

The outcome $Y$ is defined as the difference between the cluster of differentiation 4 (CD4) cell counts at the beginning of the study and the CD4 counts after $20\pm5$ weeks. The average treatment effects on the male and female subgroups are $-8.97$ and $-1.39$, respectively \citep{hatt2021generalizing}, which suggests a large discrepancy in the treatment effects between both subgroups. We consider two treatment arms: one treatment arm for both zidovudine (ZDV) and zalcitabine (ZAL) ($T = 1$) vs. one treatment arm for ZDV only ($T = 0$). In total, we consider 1,056 patients and 12 covariates. There are 5 continuous covariates: age (year),  weight (kg, coded as wtkg), CD4 count (cells/$\text{mm}^3$) at baseline, Karnofsky score (scale of 0--100, coded as karnof), CD8 count ($\text{mm}^3$) at baseline. They are centered and scaled before further analysis. In addition, there are 7 binary variables: gender ($1 =$ male, $0 =$ female), homosexual activity (homo, $1 =$ yes, $0 =$ no), race ($1 =$ nonwhite, $0 =$ white), history of intravenous drug use (drug, $1 =$ yes, $0 =$ no), symptomatic status (symptom, $1 =$ symptomatic, $0 =$ asymptomatic), antiretroviral history (str2, $1 =$ experienced, $0 =$ naive) and hemophilia (hemo, $1 =$ yes, $0 =$ no).

\textbf{The National Supported Work:} The National Supported Work (NSW) Demonstration was a transitional, subsidized work experience program that operated for 4 years at 15 locations throughout the United States. The program first provided trainees with work in a sheltered training environment and then assisted them in finding regular jobs. From April 1975 to August 1977, the NSW program operated in 10 locations as a randomized experiment with some program applicants being randomly assigned to a control group that was not allowed to participate in the program. The randomized sample includes 6616 treatment and control observations for which data were gathered through a retrospective baseline interview and four follow-up interviews. These interviews cover the two years prior to random assignment and up to 36 months thereafter.

For our experiments, we consider one randomized dataset from \citet{LaLonde1986}\footnote{The study by \citet{LaLonde1986} is a widely used dataset in the causal inference literature and is also known as the ``Jobs'' dataset \citep[\eg,][]{Shalit2017a, hatt2021estimating}}. Following \citet{Smith2005}, we combine randomized samples of 465 subjects (297 treated, 425 control) with the 2,490 PSID controls to create an observational dataset. The dataset consists of 297 treatment group (\ie, $T=1$) observations and 2,915 control group (\ie, $T=0$) observations. The presence of the experimental subgroup allows us to estimate the ``ground truth'' treatment effect (see explanation below). The study includes 8 covariates: age, level of education, ethnicity (split into two covariates), marital status, and educational degree.

\section{Discussion of the Baseline in \texorpdfstring{\citet{kallus2018removing}}{}}\label{apx:baseline_improvement_discussion}
In this section, we discuss the poor performance of the baselines based on \citet{kallus2018removing} in \Cref{sec:experiments}. For this, we briefly summarize the method proposed by \citep{kallus2018removing}. First, a function $\hat{\textup{f}}$ is estimated using the observational data, \ie, $\{(x_i^{\conf}, t_i^{\conf}, y_i^{\conf})\}_{i=1}^{n^{\conf}}$. For the propensity score $e^{\unc}(x) = \Prb{T=1\mid X^{\unc}=x}$, let $q(x^{\unc}_i) = \frac{t^{\unc}_i}{e^{\unc}(x^{\unc}_i)} - \frac{1-t^{\unc}_i}{1-e^{\unc}(x^{\unc}_i)}$ be a signed re-weighting function. Then, we use this re-weighting function to estimate the following:
\begin{equation}
    \hat{\boldsymbol{\theta}}=\argmin_{\boldsymbol{\theta}\in\Rl^d}\frac{1}{n^{\unc}}\sum_{i=1}^{n^{\unc}} \left(q(x^{\unc}_i)y^{\unc}_i - \textup{f}(x^{\unc}_i) - \boldsymbol{\theta}^\top x^{\unc}_i\right)^2.
\end{equation}
Finally, the proposed estimator for the CATE is given by
\begin{equation}
    \hat{\tau}(x) = \textup{f}(x) + \hat{\boldsymbol{\theta}}^\top x.
\end{equation}
The re-weighting is justified, since $\E[q(X^{\unc})Y^{\unc}\mid X^{\unc}] = \tau(X^{\unc})$ (see Lemma~1 in \citet{kallus2018removing}). Learning a biased function and then the difference is common in transfer learning, where it is called ``offset approach'' \citep[eg, ][]{wang2014active}. Moreover, in order to extrapolate and learn from few randomized samples, \citet{kallus2018removing} assume that the bias is linear in the covariates (hence, the linear regression in the second step yielding $\hat{\boldsymbol{\theta}}$). This linearity for extrapolation has been used before for extrapolating survival functions from clinical trials using external data \citep[\eg,][]{jackson2017extrapolating}.

As discussed in Remark~\labelcref{rmk:poor_baseline_performance}, the poor performance of these baselines originates from the re-weighting of the outcomes, which is well-known to yield high-variance estimates. This is particularly pronounced in a setting with small sample size. We can see this by increasing $n^{\unc}$, which we did in \Cref{sensitivity_results_n}. In this case, the performance of the three baselines improves. However, it remains inferior to the na{\"i}ve approach $\tau_{\unc}$, which improves as well by increasing $n^{\unc}$. In addition, the three baselines remain inferior to our method in any setting.

In order to show that the poor performance originates from re-weighting with the inverse propensity score, we propose two modifications for these methods: (1)~using the estimated propensity score instead of the true one and (2)~targeting the outcomes rather than the CATE directly. 

For~(1), it has been observed that it is often more beneficial to use an estimated propensity score for re-weighting than the true propensity score \citep{Kennedy2016}. Based on this observation, we estimate the propensity score using a logistic regression and, then, re-weight the outcome identically to the algorithm in \citet{kallus2018removing}. For~(2), in order to avoid re-weighting, we target the outcome functions directly, rather than targeting the CATE as in \citet{kallus2018removing}. In particular, we fit a function, $\hat{\textup{f}}(x, t)$, on the confounded data $\{(x_i^{\conf}, y_i^{\conf}, t_i^{\conf})\}_{i=1}^{n^{\conf}}$, \ie,
\begin{align}
    \hat{\textup{f}} = \argmin_{\textup{f}} \frac1{n^{\conf}} \sm i {n^{\conf}} \left(y_i^{\conf} - \textup{f}(x_i^{\conf}, t_i^{\conf})\right)^2.
\end{align}
Then, in the second step, similar to \citet{kallus2018removing}, we remove the confounding bias assuming that it is linear. However, we remove it from the outcome function directly instead of the CATE, \ie, for $n^{\unc}_1 = \lvert \{i\in\{1, \ldots, n^{\unc}\}: t_i^{\unc}=1\}\rvert$ and $n^{\unc}_0 = \lvert \{i\in\{1, \ldots, n^{\unc}\}: t_i^{\unc}=0\}\rvert$,
\begin{align}
    \hat{\boldsymbol{\theta}}_1 = \argmin_{\boldsymbol{\theta}_1} \frac1{n^{\unc}_1} \sm i {n^{\unc}_1} \left(y_i^{\unc} - \textup{f}(x_i^{\unc}, 1) - \boldsymbol{\theta}_1^\top x_i^{\unc}\right)^2,
\end{align}
and 
\begin{align}
    \hat{\boldsymbol{\theta}}_0 = \argmin_{\boldsymbol{\theta}_0} \frac1{n^{\unc}_0} \sm i {n^{\unc}_0} \left(y_i^{\unc} - \textup{f}(x_i^{\unc}, 0) - \boldsymbol{\theta}_0^\top x_i^{\unc}\right)^2,
\end{align}
where $\boldsymbol{\theta}_1$ and $\boldsymbol{\theta}_0$ are the linear regression vectors for each of the outcomes. The CATE is then estimated by $\hat{\tau}(x)= \hat{\textup{f}}(x, 1) - \hat{\boldsymbol{\theta}}_1^\top x - (\hat{\textup{f}}(x, 0) - \hat{\boldsymbol{\theta}}_0^\top x) = \hat{\textup{f}}(x, 1) - \hat{\textup{f}}(x, 0) + (\hat{\boldsymbol{\theta}}_0  - \hat{\boldsymbol{\theta}}_1)^\top x$.

In \Cref{tbl:sim_results_baseline_improved}, we present the results for these two modifications to the original method proposed in \citet{kallus2018removing}. We observe that both modifications improves the performance by a substantial margin. First, using the estimated propensity score ($\text{2-step ridge}^\text{{prop}}$, $\text{2-step RF}^\text{{prop}}$, and $\text{2-step NN}^\text{{prop}}$) improves over the original methods. However, the performance remains poorly compared to the other approaches presented in \Cref{sec:experiments}. Second, removing the bias in the outcome functions directly ($\text{2-step ridge}^\text{out}$, $\text{2-step RF}^\text{out}$, and $\text{2-step NN}^\text{out}$), yields a much larger improvement, placing it on par with the na{\"i}ve approaches, $\tau_{\unc}$ and $\tau_{\conf}$. However, the performance remains substantially inferior to our methods.
\begin{table*}[ht]
	\caption{\footnotesize\label{tbl:sim_results_baseline_improved} Results for the modifications of the baselines proposed in \citet{kallus2018removing} on the real-world datasets STAR, ACTG, and NSW. Results are obtained via 10 runs. Lower is better.
	}
	\begin{center}
		\scalebox{1}{
				\begin{tabular}{lccccc}
					\multicolumn{4}{l}{\bf{Results: Modifications of baselines from \citet{kallus2018removing}}}\\
					\toprule\addlinespace[0.75ex] &\multicolumn{3}{c}{\bf{$\sqrt{\epsilon_{\textup{PEHE}}}$ (Mean $\pm$ Std)}}\\
					\cmidrule{2-4}\addlinespace[0.75ex]
					\tikz{\node[below left, inner sep=1pt] (est) {Estimator};%
      \node[above right,inner sep=1pt] (set) {Dataset};%
      \draw (est.north west|-set.north west) -- (est.south east-|set.south east);}
                    & \bf{STAR}
					&\bf{ACTG}
					&\bf{NSW}\\
					\hline
					\addlinespace[0.75ex]
					\makecell[l]{2-step ridge} 
					&$\text{3.01} \pm \text{0.01}$ 
					& $\text{1.51} \pm \text{0.01}$
					& $\text{2.82} \pm \text{0.02}$\\
					\addlinespace[0.75ex]
					\makecell[l]{2-step RF} 
					&$\text{3.14} \pm \text{0.03}$ 
					& $\text{1.58} \pm \text{0.07}$
					& $\text{3.10} \pm \text{0.12}$\\
					\addlinespace[0.75ex]
					\makecell[l]{2-step NN} 
					&$\text{3.03} \pm \text{0.02}$ 
					& $\text{1.60} \pm \text{0.02}$
					& $\text{2.82} \pm \text{0.02}$\\\hline
                    \addlinespace[0.75ex]
				\makecell[l]{2-step ridge\textsuperscript{prop}}  	        &$\text{1.97} \pm \text{0.01}$ 
					& $\text{1.59} \pm \text{0.01}$
					& $\text{2.77} \pm \text{0.03}$\\
					\addlinespace[0.75ex]
					\makecell[l]{2-step RF\textsuperscript{prop}} 	&$\text{2.14} \pm \text{0.01}$ 
					& $\text{1.36} \pm \text{0.01}$
					& $\text{3.08} \pm \text{0.03}$\\
					\addlinespace[0.75ex]
					\makecell[l]{2-step NN\textsuperscript{prop}} 	&$\text{1.98} \pm \text{0.03}$ 
					& $\text{1.68} \pm \text{0.02}$
					& $\text{2.79} \pm \text{0.03}$\\\hline
                    \addlinespace[0.75ex]
			\makecell[l]{2-step ridge\textsuperscript{out}}  					    &$\text{1.85} \pm \text{0.01}$ 
					& $\text{0.92} \pm \text{0.01}$
					& $\text{0.16} \pm \text{0.02}$\\
					\addlinespace[0.75ex]
					\makecell[l]{2-step RF\textsuperscript{out}} 	&$\text{2.08} \pm \text{0.01}$ 
					& $\text{1.17} \pm \text{0.02}$
					& $\text{0.97} \pm \text{0.04}$\\
					\addlinespace[0.75ex]
					\makecell[l]{2-step NN\textsuperscript{out}} 	&$\text{1.90} \pm \text{0.03}$ 
					& $\text{0.84} \pm \text{0.03}$
					& $\text{0.12} \pm \text{0.04}$\\
					\bottomrule	
			\end{tabular}}
	\end{center}
\end{table*}

\section{Sensitivity of Ablation Study with respect to \texorpdfstring{$n^{\conf}$}{} and \texorpdfstring{$n^{\unc}$}{}}\label{apx:sensitivity_ablation}
In this section, we study the sensitivity of the ablation study in \Cref{sec:::exp_ablation_study} for different numbers of $n^{\conf}$ and $n^{\unc}$. In the ablation study, we chose an explicit size of the observational and randomized dataset. We chose $n^{\conf}$ to be the maximum available samples for the observational dataset after the processing described in \Cref{sec:::exp_os_rct}. In particular, this yields $n^{\conf}\in\{643, 552, 482\}$ for the datasets STAR, ACTG, and NSW. Moreover, we chose $n^{\unc}=2\,d$, where $d$ is the number of covariates. We did so, since the randomized datasets is usually data-scarce. Here, we study the robustness of our results from the ablation study in \Cref{sec:::exp_ablation_study} when $n^{\conf}$ and $n^{\unc}$ change.

\textbf{Sensitivity in $n^{\conf}$:} We conduct an ablation study on the different components of our method across different numbers of observational samples used, \ie, we vary $n^{\conf}$. In particular, we choose $n^{\conf} \in \{300, 400, 500, 600, 643\}$ for STAR, $n^{\conf} \in \{200, 300, 400, 500, 552\}$ for ACTG, and $n^{\conf} \in \{100, 200, 300, 400, 482\}$ for NSW. The results are presented in \Cref{fig:sensitivity_ablation} (top). We observe similar behavior as in the ablation study in \Cref{sec:::exp_ablation_study}: $\tau_{\textup{CorNet}^+}$ consistently improves upon $\tau_{\textup{CorNet}}$ across all numbers of observational and randomized samples for the datasets STAR and ACTG, but not on NSW. The reason for this seems to be that the bias function is not sparse on NSW. 

\textbf{Sensitivity in $n^{\unc}$:} We conduct an ablation study on the different components of our method across different numbers of randomized samples used, \ie, we vary $n^{\unc}$. In particular, we choose $n^{\unc} \in \{1\cdot\textup{d}, 2\cdot\textup{d}, 3\cdot\textup{d}, 4\cdot\textup{d}, 6\cdot\textup{d}, 8\cdot\textup{d}\}$ for STAR, ACTG, and NSW. The results are presented in \Cref{fig:sensitivity_ablation} (bottom). We observe similar behavior as in the ablation study in \Cref{sec:::exp_ablation_study}: $\tau_{\textup{CorNet}^+}$ consistently improves upon $\tau_{\textup{CorNet}}$ across all numbers of observational and randomized samples for the datasets STAR and ACTG and the differences between the methods becomes smaller, the more randomized samples we acquire. On NSW, the results are less clear. Although $\tau_{\textup{CorNet}}$ and $\tau_{\textup{CorNet}^+}$ consistently achieve low error, depending on the number of randomized samples, $\tau_{\textup{CorNet}}$ is superior to $\tau_{\textup{CorNet}^+}$ or vice versa. We attribute this behavior to the non-sparse bias function, which leads to greater variability of the estimates.
\begin{figure}[ht]
	\centering 
	\scalebox{0.425}{\includegraphics{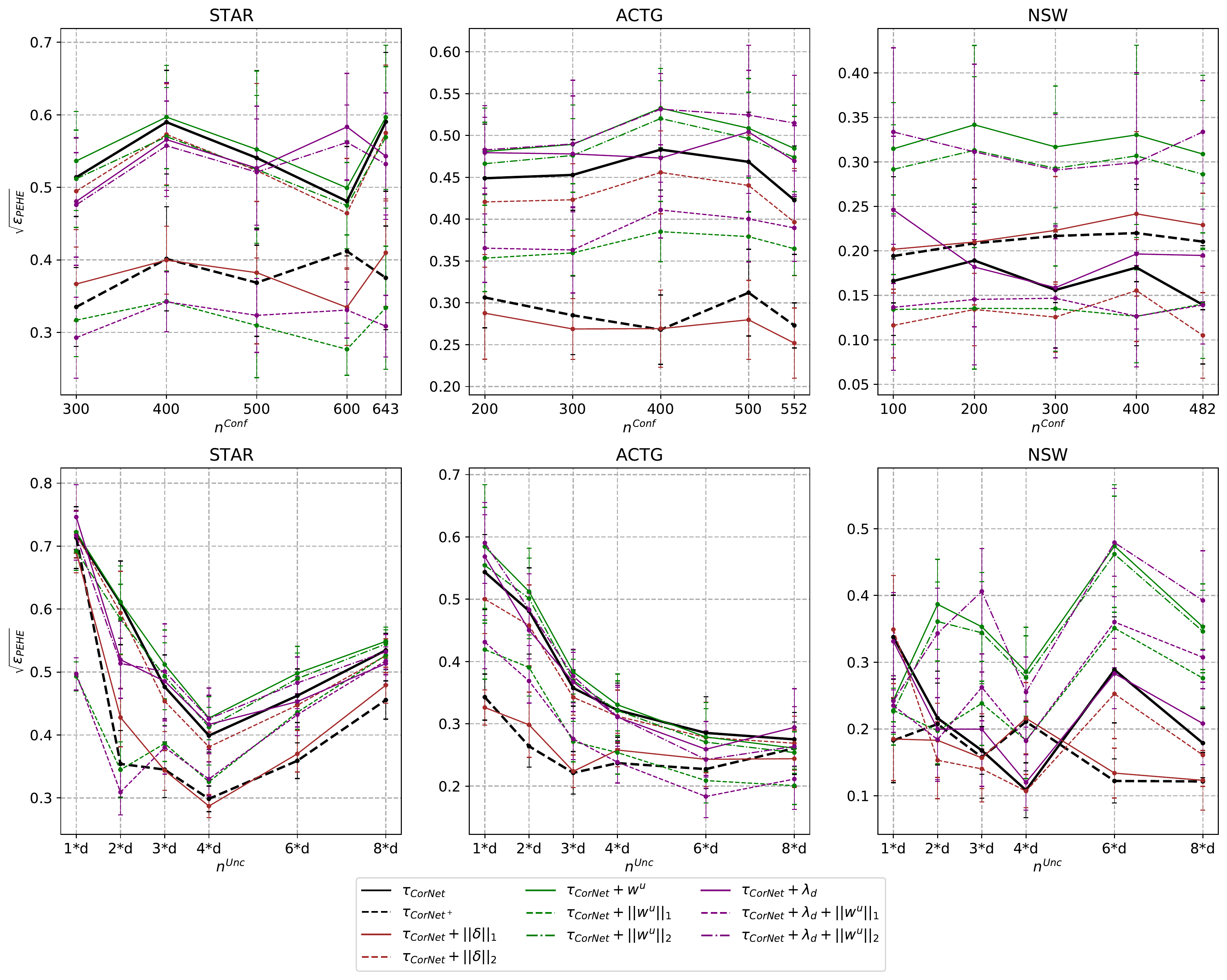}}
	\caption{\footnotesize Sensitivity study on ablation study for different numbers of $n^{\conf}$ (top) and $n^{\unc}$ (bottom) across all real-world datasets STAR, ACTG, and NSW. The findings are consistent across all number of observational (\ie, $n^{\conf}$) and randomized samples (\ie, $n^{\unc}$).}\label{fig:sensitivity_ablation}
\end{figure}

\section{Variants and Extensions of CorNet}\label{apx:variant_study}
In this section, we study multi-task variants of our method as follows. (i)~We simply estimate the representation function, $\phi$, and all four hypotheses, $\mathbf{h}^c=(\textup{h}^c_1, \textup{h}^c_0)$ and $\mathbf{h}^u=(\textup{h}^u_1, \textup{h}^u_0)$, in one step. We denote this 1-step multi-task learning (MTL) variant as $\tau_{\text{MTL}}$. (ii)~In addition this, we consider a 2-step MTL variant, which uses the representation, $\hat{\phi}$, and confounded hypotheses, $\hat{\mathbf{h}}^c$, from $\tau_{\text{MTL}}$ and, in a second step, estimate the bias function with a $L_1$-regularization. This resembles our method. (iii)~We also study the 2-step MTL variant that uses balancing and (iv)~the 2-step MTL variant with $L_1$-regularization and balancing.

The results are presented in \Cref{tbl:variants_study_results}. We observe that the multi-task learning variant, $\tau_{\text{MTL}}$, performs inferior to our method. Notably, it yields much higher variance than our method. This is, again, due to the small sample size of randomized data, which is not controlled for, since the method is learned in one step. The 2-step multi-task learning variants, which also learn the bias function in a second step and also apply $L_1$-regularization and balancing do not improve upon our methods. Similarly to the sensitivity study for $n^{\conf}$ and $n^{\unc}$ in \Cref{sec:::exp_sensitivity_n} and \Cref{apx:sensitivity_ablation}, we run such a sensitivity study for the multi-task learning variants. The findings are consistent across different numbers of observational and randomized samples. The results can be found in \Cref{apx:sensitivity_variants}.
\begin{table*}[ht!]
	\caption{\footnotesize\label{tbl:variants_study_results} Results for the multi-task learning variant study on the real-world datasets STAR, ACTG, and NSW. ``2-step'', $L_1$, and $\lambda_d$ indicate whether a 2-step procedure, $L_1$-regularization, or balancing was used. Results obtained via 10 runs. Lower is better.}
	\begin{center}
				\begin{tabular}{llcccccc}
					\multicolumn{7}{l}{\bf{Results: Variant study}}\\
					\toprule\addlinespace[0.75ex] &&&&\multicolumn{3}{c}{\bf{$\sqrt{\hat{\epsilon}_{\textup{PEHE}}}$ (Mean $\pm$ Std)}}\\
					\cmidrule{5-7}\addlinespace[0.75ex]
                    \multicolumn{2}{l}{Estimator}
                    & 2-step
                    & $L_1$
                    & $\lambda_d$
                    & \bf{STAR}
					&\bf{ACTG}
					&\bf{JOBS}\\\hline
					\addlinespace[0.75ex]
	    				\multirow{4}{*}{
				    \rotatebox[origin=c]{90}{$\tau_{\textup{MTL}}$}
					}&
	        \makecell[l]{(i)} 				
                    & \xmark
					& \xmark
					& \xmark
					&$\text{1.27} \pm \text{0.21}$ 
					& $\text{1.07} \pm \text{0.07}$
					& $\text{0.48} \pm \text{0.13}$\\
                    \addlinespace[0.75ex]
					&\makecell[l]{(ii)} 
					& \cmark
					& \cmark
					& \xmark
					&$\text{0.59} \pm \text{0.06}$ 
					& $\text{0.35} \pm \text{0.05}$
					& $\text{0.13} \pm \text{0.04}$\\
					\addlinespace[0.75ex]
					&\makecell[l]{(iii)} 
					& \cmark
					& \xmark
					& \cmark
					&$\text{1.21} \pm \text{0.25}$ 
					& $\text{1.12} \pm \text{0.09}$
					& $\text{0.47} \pm \text{0.10}$\\
					\addlinespace[0.75ex]
					&\makecell[l]{(iv)}
					& \cmark
					& \cmark
					& \cmark
					&$\text{1.16} \pm \text{0.18}$ 
					& $\text{0.89} \pm \text{0.06}$
					& $\text{0.14} \pm \text{0.06}$\\\hline		\addlinespace[0.75ex]
		    &\makecell[l]{(v)~$\tau_{\textup{CorNet}}$} 					& \cmark
					& \xmark
					& \xmark
					&$\text{0.59} \pm \text{0.10}$ 
					& $\text{0.42} \pm \text{0.06}$
					& $\text{0.14} \pm \text{0.07}$\\
					\addlinespace[0.75ex]
					&\makecell[l]{(vi)~$\tau_{\textup{CorNet}^+}$ } 
					& \cmark
					& \cmark
					& \cmark
					&$\text{0.38} \pm \text{0.07}$ 
					& $\text{0.27} \pm \text{0.03}$
					& $\text{0.21} \pm \text{0.08}$\\
					\bottomrule	
			\end{tabular}
	\end{center}
\end{table*}

\subsection{Sensitivity of Variants and Extensions with respect to \texorpdfstring{$n^{\conf}$}{} and \texorpdfstring{$n^{\unc}$}{}}\label{apx:sensitivity_variants}
In this section, we study the sensitivity of the variant study in \Cref{apx:variant_study} for different numbers of $n^{\conf}$ and $n^{\unc}$. In the variant study, we chose an explicit size of the observational and randomized dataset. We chose $n^{\conf}$ to be the maximum available samples for the observational dataset after the processing described in \Cref{sec:::exp_os_rct}. In particular, this yields $n^{\conf}\in\{643, 552, 482\}$ for the datasets STAR, ACTG, and NSW. Moreover, we chose $n^{\unc}=2\,d$, where $d$ is the number of covariates. We did so, since the randomized datasets is usually data-scarce. Here, we study the robustness of our results from the variant study in \Cref{apx:variant_study} when $n^{\conf}$ and $n^{\unc}$ change.

\textbf{Sensitivity in $n^{\conf}$:} We conduct an variant study on the different components of our method across different numbers of observational samples used, \ie, we vary $n^{\conf}$. In particular, we choose $n^{\conf} \in \{300, 400, 500, 600, 643\}$ for STAR, $n^{\conf} \in \{200, 300, 400, 500, 552\}$ for ACTG, and $n^{\conf} \in \{100, 200, 300, 400, 482\}$ for NSW. The results are presented in \Cref{fig:sensitivity_variants} (top). We observe similar behavior as in \Cref{apx:variant_study}: $\tau_{\text{MTL}}$ consistently performs inferior to our methods across all numbers of observational and randomized samples and across all datasets. Moreover, we find that the 2-step multi-task learning variant with $L_1$-regularization achieves similar performance on ACTG and NSW.

\textbf{Sensitivity in $n^{\unc}$:} We vary $n^{\unc}$. In particular, we choose $n^{\unc} \in \{1\cdot\textup{d}, 2\cdot\textup{d}, 3\cdot\textup{d}, 4\cdot\textup{d}, 6\cdot\textup{d}, 8\cdot\textup{d}\}$ for STAR, ACTG, and NSW. The results are presented in \Cref{fig:sensitivity_variants} (bottom). We observe similar behavior as in \Cref{apx:variant_study}: $\tau_{\text{MTL}}$ consistently performs inferior to our methods across all numbers of observational and randomized samples and across all datasets. However, the performance of all variants converges when the number of randomized samples increases. This is because once we exceed a certain number of randomized samples, the multi-task learning variant stops suffering from the data scarcity of the randomized samples. This behavior is less clear on NSW. Although we do not observe any benefit from the multi-task learning variant, $\tau_{\text{MTL}}$, and its covariate balanced extensions, $\tau_{\text{MTL}}+\lambda_d$ and $\tau_{\text{MTL}}+\lambda_d + \lVert \boldsymbol{\delta}\rVert_1$, we observe that the extension which re-estimates the bias function (and regularizes it) in a second step, $\tau_{\text{MTL}}+\lVert \boldsymbol{\delta}\rVert_1$, seems to have some advantage in some cases, but not consistently. However, it is doubtful how reasonable it is to re-estimate the bias function, if the unconfounded hypotheses are already estimated in the first step.
\begin{figure}[ht]
	\centering 
	\scalebox{0.415}{\includegraphics{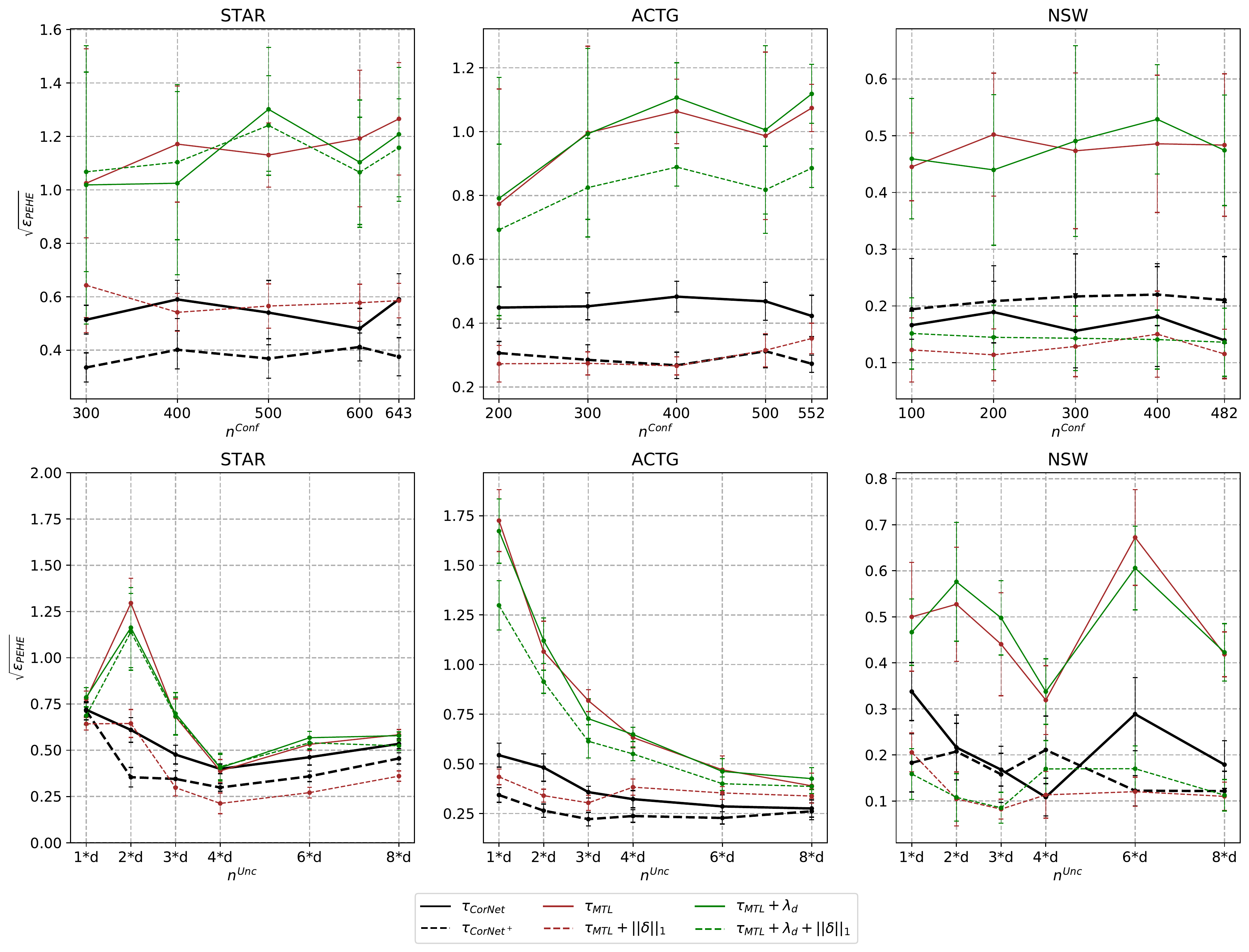}}
	\caption{\footnotesize Sensitivity study on the variant study for different numbers of $n^{\conf}$ (top) and $n^{\unc}$ (bottom) across all real-world datasets STAR, ACTG, and NSW. The results are mostly consistent across the number of observational (\ie, $n^{\conf}$) and randomized samples (\ie, $n^{\unc}$). For large number of randomized samples, the error of $\tau_{\text{MTL}}$ and its extensions approach the error of our methods, $\tau_{\cor}$ and $\tau_{\textup{CorNet}}$. This is expected, since once the randomized dataset is not small anymore. While $\tau_{\text{MTL}}$ and its covariate balanced variants perform consistently inferior than our method, $\tau_{\text{MTL}} + \lVert \boldsymbol{\delta}\rVert_1$ shows a slight advantage in some cases.}\label{fig:sensitivity_variants}
\end{figure}

\end{document}